\documentclass[accepted]{uai2026}

\usepackage[american]{babel}

\usepackage[round,sort&compress,comma]{natbib}
\usepackage{amsmath,amssymb,amsthm}
\usepackage{mathtools}
\usepackage{dsfont}
\usepackage{bm}

\usepackage{algorithm}
\usepackage{algpseudocode}

\usepackage{booktabs}
\usepackage{graphicx}
\usepackage{xcolor}
\usepackage{subcaption}
\usepackage{multirow}

\usepackage{placeins}

\usepackage{cleveref}

\newtheorem{theorem}{Theorem}[section]
\newtheorem{lemma}[theorem]{Lemma}
\newtheorem{proposition}[theorem]{Proposition}
\newtheorem{corollary}[theorem]{Corollary}
\theoremstyle{definition}
\newtheorem{definition}[theorem]{Definition}
\newtheorem{assumption}[theorem]{Assumption}

\theoremstyle{remark}
\newtheorem{remark}[theorem]{Remark}

\crefname{theorem}{Theorem}{Theorems}
\Crefname{theorem}{Theorem}{Theorems}
\crefname{lemma}{Lemma}{Lemmas}
\Crefname{lemma}{Lemma}{Lemmas}
\crefname{proposition}{Proposition}{Propositions}
\Crefname{proposition}{Proposition}{Propositions}
\crefname{corollary}{Corollary}{Corollaries}
\Crefname{corollary}{Corollary}{Corollaries}
\crefname{definition}{Definition}{Definitions}
\Crefname{definition}{Definition}{Definitions}
\crefname{assumption}{Assumption}{Assumptions}
\Crefname{assumption}{Assumption}{Assumptions}
\crefname{remark}{Remark}{Remarks}
\Crefname{remark}{Remark}{Remarks}
\crefname{example}{Example}{Examples}
\Crefname{example}{Example}{Examples}
\crefname{algorithm}{Algorithm}{Algorithms}
\Crefname{algorithm}{Algorithm}{Algorithms}

\DeclareMathOperator{\Var}{Var}
\DeclareMathOperator{\Cov}{Cov}
\DeclareMathOperator{\tr}{tr}
\DeclareMathOperator{\diag}{diag}

\newcommand{\R}{\mathbb{R}}
\newcommand{\E}{\mathbb{E}}
\newcommand{\Prob}{\mathbb{P}}
\newcommand{\Id}{\mathbf{I}}

\newcommand{\bX}{\mathbf{X}}
\newcommand{\bY}{\mathbf{Y}}
\newcommand{\bH}{\mathbf{H}}
\newcommand{\bU}{\mathbf{U}}
\newcommand{\bV}{\mathbf{V}}
\newcommand{\bSigma}{\boldsymbol{\Sigma}}
\newcommand{\bK}{\mathbf{K}}
\newcommand{\bW}{\mathbf{W}}

\newcommand{\beps}{\boldsymbol{\varepsilon}}
\newcommand{\cD}{\mathcal{D}}
\newcommand{\cC}{\mathcal{C}}
\newcommand{\cX}{\mathcal{X}}

\newcommand{\fhat}{\hat{f}}
\newcommand{\qhat}{\hat{q}}

\newenvironment{proofsketch}{\begin{proof}[Proof sketch]}{\end{proof}}

\AtBeginDocument{%
  \abovedisplayskip=9pt plus 3pt minus 5pt
  \abovedisplayshortskip=0pt plus 3pt
  \belowdisplayskip=9pt plus 3pt minus 5pt
  \belowdisplayshortskip=5pt plus 3pt minus 2pt
}
\mathtoolsset{showonlyrefs}
\usepackage{mleftright}
\mleftright
\allowdisplaybreaks[1]

\title{Leverage-Weighted Conformal Prediction}

\author[1]{Shreyas Fadnavis}
\affil[1]{%
    \texttt{shreyasfadnavis@gmail.com}
}

\begin{document}
\maketitle

\begin{abstract}\sloppy
Split conformal prediction provides distribution-free prediction intervals with finite-sample marginal coverage, but produces constant-width intervals that overcover in low-variance regions and undercover in high-variance regions. Existing adaptive methods require training auxiliary models. We propose Leverage-Weighted Conformal Prediction (LWCP), which weights nonconformity scores by a function of the statistical leverage---the diagonal of the hat matrix---deriving adaptivity from the geometry of the design matrix rather than from auxiliary model fitting. We prove that LWCP preserves finite-sample marginal validity for any weight function; achieves asymptotically optimal conditional coverage at essentially no width cost when heteroscedasticity factors through leverage; and recovers the form and width of classical prediction intervals under Gaussian assumptions while retaining distribution-free guarantees. We further establish that randomized leverage approximations preserve coverage exactly with controlled width perturbation, and that vanilla CP suffers a persistent, sample-size-independent conditional coverage gap that LWCP eliminates. The method requires no hyperparameters beyond the choice of weight function and adds negligible computational overhead to vanilla CP. Experiments on synthetic and real data confirm the theoretical predictions, demonstrating substantial reductions in conditional coverage disparity across settings.
\end{abstract}

\section{Introduction}\label{sec:intro}

Conformal prediction (CP) has become the dominant paradigm for distribution-free uncertainty quantification \citep{vovk2005algorithmic, shafer2008tutorial}. In regression, split conformal prediction \citep{papadopoulos2002inductive, lei2018distribution} produces intervals $\hat{\cC}_n(x) = \fhat(x) \pm \qhat$, where $\qhat$ is an empirical quantile of calibration residuals. While guaranteeing marginal coverage $\Prob(Y_{n+1} \in \hat{\cC}_n(X_{n+1})) \geq 1{-}\alpha$, these intervals have \emph{constant width}---a well-documented limitation \citep{lei2018distribution, romano2019conformalized, tibshirani2023conformal}---and thus overcover in low-variance regions while undercovering in high-variance regions.

Existing adaptive methods address this by training auxiliary models: conformalized quantile regression \citep[CQR;][]{romano2019conformalized} fits quantile regressors; studentized residual methods \citep{papadopoulos2008normalized, lei2018distribution} fit a variance estimator $\hat{\sigma}(x)$; more recent approaches use Bayesian processes \citep{cabezas2025epicscore} or random forests \citep{amoukou2023adaptive}. All require additional hyperparameters, risk overfitting, and incur non-trivial computational overhead.

We propose a fundamentally different strategy: deriving the normalizer from the \emph{geometry of the design matrix} using \textbf{statistical leverage scores}---the diagonal of the hat matrix $\bH = \bX(\bX^\top\bX)^{-1}\bX^\top$. Leverage scores quantify how far each point lies from the feature distribution centroid and directly govern prediction variance \citep{hoaglin1978hat, chatterjee1986influential}. They provide a closed-form, model-free weighting function for conformal scores that requires no auxiliary model fitting.

\paragraph{Contributions.}
\begin{enumerate}[leftmargin=*,nosep]
    \item \textbf{Framework (\Cref{sec:method}).} We weight nonconformity scores by a function of leverage, yielding prediction intervals of width $2\qhat/w(h(x))$ that adapt to the geometry of the design matrix.

    \item \textbf{Finite-sample coverage (\Cref{thm:coverage}).} LWCP preserves marginal validity ($\geq 1{-}\alpha$, tight to $O(1/n_2)$) for \emph{any} weight function, as a direct consequence of exchangeability.

    \item \textbf{Efficiency (\Cref{thm:efficiency,thm:width_finite}).} Under a linear model with leverage-dependent heteroscedasticity, the variance-stabilized weight $w^*(h) = 1/\sqrt{g(h)}$ achieves asymptotically optimal conditional coverage at negligible width cost (deviation controlled by $\rho^2 = \mathrm{CV}^2(\sqrt{g(H)})$).

    \item \textbf{Classical recovery (\Cref{thm:gaussian_recovery}).} Under Gaussian assumptions, LWCP recovers the form and asymptotic width of classical prediction intervals while retaining distribution-free validity.

    \item \textbf{Approximate leverage (\Cref{thm:approx}).} Randomized SVD approximations preserve coverage exactly with controlled width perturbation, enabling $O(np\log p)$ computation.
\end{enumerate}

\subsection{Related Work}\label{sec:related}

\paragraph{Conformal prediction.} Split conformal prediction \citep{papadopoulos2002inductive, vovk2005algorithmic, lei2018distribution} guarantees marginal coverage but produces constant-width intervals. Distribution-free conditional coverage is impossible in general \citep{vovk2012conditional, barber2021limits}; see \citet{tibshirani2023conformal} for a survey.

\paragraph{Adaptive and localized methods.} Normalized nonconformity scores \citep{papadopoulos2008normalized} and CQR \citep{romano2019conformalized} achieve adaptivity through auxiliary models. Jackknife+ \citep{barber2021jackknife} provides $1{-}2\alpha$ coverage without splitting; localized CP \citep{guan2023localized} uses locality kernels for spatial adaptivity; RLCP \citep{hore2025conformal} provides randomized guarantees; Mondrian CP \citep{vovk2005algorithmic} achieves group-conditional coverage via partitioning. Optimization-based approaches \citep{gibbs2023conformal} pursue group- or feature-conditional coverage; boosted CP \citep{xie2024boosted} iteratively transforms scores to improve conditional coverage; self-calibrating CP \citep{vanderlaan2024selfcalibrating} targets prediction-conditional guarantees; and \citet{plassier2025probabilistic} provide non-asymptotic conditional coverage bounds. LWCP achieves approximate conditional coverage through closed-form geometric reweighting, limited to leverage-dependent heteroscedasticity (\Cref{thm:misspec} quantifies when this suffices via $\eta_{\rm lev}$). Bayesian-conformal hybrids \citep{cabezas2025epicscore} and CLAPS \citep{claps2025laplace} use epistemic uncertainty estimates; \citet{rossellini2024uncertainty} decompose uncertainty into epistemic and aleatoric components for CQR---a decomposition that parallels LWCP+'s use of leverage and variance estimation. CLAPS's Gram matrix $\Phi^\top\Phi$ is precisely the object from which our feature-space leverage is computed (\Cref{app:connections}). CIR/CIR+ \citep{cir2026interquantile} target distributional skewness---a different source of inefficiency than leverage-dependent scale. All these methods require auxiliary model training, kernel design, or posterior computation; LWCP requires none.

\paragraph{Leverage scores.} Introduced in regression diagnostics \citep{hoaglin1978hat, chatterjee1986influential}, leverage scores have found broad use in randomized numerical linear algebra \citep{drineas2006sampling, mahoney2011randomized, drineas2012fast} and ridge regression \citep{alaoui2015fast}. In GLMs, the \emph{working leverage} governs prediction variance analogously \citep{mccullagh1989generalized}; we extend LWCP to this setting in \Cref{app:glm}. For differentiable models, the \emph{gradient leverage} captures prediction variance via the NTK linearization (\Cref{app:gradient_leverage}).

The closest prior use of leverage in conformal inference is Vovk's RRCM \citep{vovk2005algorithmic, burnaev2014efficiency}, which uses LOO scores $|e_i|/(1{-}h_i)$ to correct training-side variance attenuation (\Cref{prop:mismatch}, Part~1). LWCP instead corrects the test-side amplification by $(1{+}h)$ (\Cref{prop:mismatch}, Part~2)---a distinction we formalize in \Cref{app:connections}. Studentized scores \citep{lei2018distribution} replace our closed-form $w(h)$ with a learned $\hat\sigma(x)$; local prediction bands \citep{lei2014distribution} localize via kernel smoothing, while LWCP localizes through the one-dimensional projection $h(x)$, avoiding bandwidth selection. Under collinearity, ridge leverage provides principled stabilization (\Cref{app:collinearity}). To our knowledge, leverage scores have not previously been used as conformal score weights.

\section{Leverage-Weighted Conformal Prediction}\label{sec:method}

\paragraph{Setup.} Data $\{(X_i, Y_i)\}_{i=1}^n$ with $X_i \in \R^p$, $Y_i \in \R$ are split into training $\cD_1$ ($n_1$ points) and calibration $\cD_2$ ($n_2$ points). Features are column-standardized using \emph{training-set} statistics, applied identically to $\cD_2$ and test points; intercepts are absorbed by centering. After standardization, the leverage score $h(x) = x^\top(\bX_1^\top\bX_1)^{-1}x$ equals the squared sample Mahalanobis distance from the training centroid, scaled by $1/n_1$. Preprocessing affects $h(x)$ values but not coverage (\Cref{thm:coverage}); we recommend column-standardization as default (\Cref{app:preprocessing}). The predictor $\fhat$ is trained on $\cD_1$. Via the thin SVD $\bX_1 = \bU_1\bSigma_1\bV_1^\top$, leverage is computed as $h(x) = \|\bSigma_1^{-1}\bV_1^\top x\|_2^2$.

\begin{definition}[Leverage-weighted nonconformity score]\label{def:lwcp_score}
For a measurable function $w : [0, \infty) \to \R_+$, the leverage-weighted score is $V_w(x, y) = |y - \fhat(x)| \cdot w(h(x))$.
\end{definition}

\begin{definition}[LWCP prediction interval]\label{def:lwcp_interval}
The LWCP interval at level $1-\alpha$ is
\begin{equation}\label{eq:lwcp_interval}
\hat{\cC}_n^w(x) = \biggl[\fhat(x) - \frac{\qhat_w}{w(h(x))},\;\; \fhat(x) + \frac{\qhat_w}{w(h(x))}\biggr],
\end{equation}
where $\qhat_w$ is the $\lceil(1-\alpha)(n_2+1)\rceil$-th smallest value among $\{V_w(X_i, Y_i)\}_{i \in \cD_2}$.
\end{definition}

The width at $x$ is $2\qhat_w / w(h(x))$ (cf.\ \Cref{def:lwcp_score,def:lwcp_interval}). Choosing $w$ decreasing in $h$ gives wider intervals for high-leverage points (where prediction is harder) and narrower ones for low-leverage points.

\paragraph{Canonical weighting functions.}
(1)~\textbf{Constant} ($w(h) = 1$): recovers vanilla CP.
(2)~\textbf{Inverse-root leverage} ($w(h) = (1+h)^{-1/2}$): matches the prediction variance structure $\sigma^2(1+h)$ of OLS with homoscedastic errors.
(3)~\textbf{Power-law} ($w(h) = h^{-\gamma}$, $\gamma > 0$): tunable aggressiveness.

\begin{algorithm}[t]
\caption{Leverage-Weighted Conformal Prediction}
\label{alg:lwcp}
\begin{algorithmic}[1]
\Require Data $\{(X_i, Y_i)\}_{i=1}^n$, test point $x_{\text{new}}$, level $\alpha$, weight function $w$
\State Split data into $\cD_1$ ($n_1$ pts) and $\cD_2$ ($n_2$ pts)
\State Standardize features using $\cD_1$ statistics (apply to $\cD_2$ and $x_{\text{new}}$)
\State Train predictor $\fhat$ on $\cD_1$; compute SVD $\bX_1 = \bU_1\bSigma_1\bV_1^\top$
\For{each $i \in \cD_2$}
    \State $h_i \gets \|\bSigma_1^{-1}\bV_1^\top X_i\|_2^2$; \quad $R_i \gets |Y_i - \fhat(X_i)| \cdot w(h_i)$
\EndFor
\State $\qhat_w \gets \lceil(1-\alpha)(n_2+1)\rceil$-th smallest of $\{R_i\}_{i \in \cD_2}$
\State $h_{\text{new}} \gets \|\bSigma_1^{-1}\bV_1^\top x_{\text{new}}\|_2^2$
\State \Return $\fhat(x_{\text{new}}) \pm \qhat_w / w(h_{\text{new}})$
\end{algorithmic}
\end{algorithm}

\paragraph{Computational cost.} The full procedure (\Cref{alg:lwcp}) requires an SVD of $\bX_1$ costing $O(n_1 p^2)$, or $O(n_1 p \log p)$ with randomized SVD. Leverage computation for $n_2$ calibration points costs $O(n_2 p)$. Total: $O(n_1 p^2 + n_2 p)$, with no hyperparameters beyond the choice of $w$.

\paragraph{LWCP+: Combining leverage with residual scale estimation.}\label{par:lwcp_plus}
When heteroscedasticity has both leverage-dependent and feature-dependent components, we extend LWCP to \textbf{LWCP+} with composite scores:
\begin{equation}\label{eq:lwcp_plus}
s_i^+ = \frac{|Y_i - \fhat(X_i)|}{\hat{\sigma}(X_i) \cdot w(h(X_i))},
\end{equation}
where $\hat{\sigma} : \R^p \to \R_+$ is a lightweight scale estimator trained on $\cD_1$ (e.g., a 10-tree random forest on training absolute residuals $\{(X_i, |Y_i - \fhat(X_i)|)\}_{i \in \cD_1}$). The prediction interval at $x$ has half-width $\qhat_{w,\sigma} \cdot \hat{\sigma}(x) / w(h(x))$, adapting to both input-space geometry (via $h$) and conditional noise scale (via $\hat{\sigma}$). LWCP+ recovers vanilla CP ($\hat{\sigma} \equiv 1, w \equiv 1$), LWCP ($\hat{\sigma} \equiv 1$), and studentized CP ($w \equiv 1$) as special cases. Since $\hat{\sigma}$ is $\cD_1$-measurable, marginal coverage is preserved exactly (\Cref{thm:coverage_plus}).

\section{Theoretical Results}\label{sec:theory}

\subsection{Finite-Sample Marginal Coverage}

\begin{theorem}[Marginal coverage]\label{thm:coverage}
Let $(X_i, Y_i)$, $i = 1, \ldots, n+1$, be exchangeable. For any predictor $\fhat$ trained on $\cD_1$ and any measurable $w : [0,\infty) \to \R_+$, the LWCP interval~\eqref{eq:lwcp_interval} satisfies
$\Prob(Y_{n+1} \in \hat{\cC}_n^w(X_{n+1}) \mid \cD_1) \geq 1 - \alpha$.
If scores are a.s.\ distinct, $\Prob(Y_{n+1} \in \hat{\cC}_n^w(X_{n+1}) \mid \cD_1) \leq 1 - \alpha + 1/(n_2 + 1)$.
\end{theorem}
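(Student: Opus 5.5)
The plan is to reduce \Cref{thm:coverage} to the standard split-conformal quantile argument, treating the leverage-weighted score as just another fixed score function. Condition on $\cD_1$ throughout. Given $\cD_1$, the predictor $\fhat$ and the SVD factors $\bSigma_1,\bV_1$ (hence the map $x\mapsto h(x)$) are fixed. The standardization statistics are also computed from $\cD_1$ alone, so they are fixed as well. Therefore $V_w(x,y)=|y-\fhat(x)|\,w(h(x))$ is a fixed measurable function $\R^p\times\R\to[0,\infty)$. Exchangeability of the full sequence implies that, conditionally on $\cD_1$, the $n_2+1$ points in $\cD_2\cup\{(X_{n+1},Y_{n+1})\}$ are exchangeable. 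Applying the same fixed function to each point, the scores $V_1,\dots,V_{n_2},V_{n_2+1}$ (with $V_{n_2+1}=V_w(X_{n+1},Y_{n+1})$) are exchangeable too.

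Next I will verify that the interval event coincides with a score event. Since $w$ takes values in $\R_+$, we have $w(h(X_{n+1}))>0$, so
\[
Y_{n+1}\in\hat{\cC}_n^w(X_{n+1}) \iff |Y_{n+1}-\fhat(X_{n+1})|\le \qhat_w/w(h(X_{n+1})) \iff V_{n_2+1}\le \qhat_w .
\]
This step is the only place where the choice of $w$ enters, and positivity of $w$ is exactly what it requires. Measurability of $w$ is needed only so that the scores are random variables. If $\R_+$ were meant to include $0$, I would handle $w=0$ separately by the convention that the interval is then all of $\R$, and the equivalence would still hold.

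For the lower bound, set $k=\lceil(1-\alpha)(n_2+1)\rceil$; if $k>n_2$, take $\qhat_w=+\infty$ and the claim is trivial. The event $V_{n_2+1}\le \qhat_w$, where $\qhat_w$ is the $k$-th smallest of $V_1,\dots,V_{n_2}$, is equivalent to $V_{n_2+1}$ being among the $k$ smallest of all $n_2+1$ scores, with ties broken in its favour. By exchangeability, the rank of $V_{n_2+1}$ is stochastically dominated by a uniform distribution on $\{1,\dots,n_2+1\}$, with equality when there are no ties. Hence the probability is at least $k/(n_2+1)\ge 1-\alpha$. The step that needs the most care is the tie handling: I will state the rank lemma in the form $\Prob(V_{n_2+1}\le V_{(k)}^{\text{all}})\ge k/(n_2+1)$, proved by averaging the indicator over all permutations.

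For the upper bound, assume the scores are a.s.\ distinct. Then the rank of $V_{n_2+1}$ is exactly uniform, so the coverage probability equals $k/(n_2+1)$. Since $k<(1-\alpha)(n_2+1)+1$, this gives $k/(n_2+1)\le 1-\alpha+1/(n_2+1)$.
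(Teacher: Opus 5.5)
Your proposal is correct and follows essentially the same route as the paper: condition on $\cD_1$ so the leverage-weighted score is a fixed function, use exchangeability of the $n_2+1$ scores, and apply the rank/quantile lemma for both bounds. Your explicit check that $w>0$ makes coverage equivalent to $V_{n_2+1}\le \qhat_w$, and your tie-robust form of the rank lemma, are slightly more careful than the paper's appeal to exact rank uniformity, which strictly holds only when there are no ties.
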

\begin{proof}
Conditional on $\cD_1$ (which determines $\fhat$ and $h(\cdot)$), the scores $R_i = |Y_i - \fhat(X_i)| \cdot w(h(X_i))$ for $i \in \cD_2 \cup \{n+1\}$ are functions of exchangeable $(X_i, Y_i)$, hence exchangeable. By rank uniformity \citep[Lemma~8.1]{vovk2005algorithmic}, the rank of $R_{n+1}$ is uniform over $\{1, \ldots, n_2+1\}$, giving
$\Prob(R_{n+1} \leq \qhat_w \mid \cD_1) \geq \lceil(1{-}\alpha)(n_2{+}1)\rceil/(n_2+1) \geq 1{-}\alpha$.
When scores are a.s.\ distinct, $\Prob \leq 1{-}\alpha + 1/(n_2+1)$. The proof makes \emph{no assumptions} on $P$, $\fhat$, or $w$. \qed
\end{proof}

\subsection{Efficiency Under Heteroscedasticity}\label{sec:efficiency}

\begin{assumption}[Linear model with leverage-dependent scale family]\label{asm:linear}
$Y_i = X_i^\top \beta^* + \varepsilon_i$ where $\varepsilon_i = \sigma \sqrt{g(h(X_i))} \cdot \eta_i$, $\eta_i \stackrel{iid}{\sim} F_\eta$, $\E[\eta_i] = 0$, $\Var(\eta_i) = 1$, and $\{\eta_i\}$ are independent of $\{X_i\}$. The function $g : [0,\infty) \to \R_+$ is continuous, strictly increasing, with $g(0) > 0$.
\end{assumption}

This is stronger than specifying only the conditional variance: the \emph{full conditional distribution} of $\varepsilon_i/\sqrt{g(h_i)}$ must be the same for all $i$, ensuring conditional quantiles of $|\varepsilon_i|$ scale with $\sqrt{g(h_i)}$.\footnote{When heteroscedasticity does not factor through leverage alone, \Cref{thm:misspec} quantifies the degradation: LWCP's gap reduction is $\sqrt{1 - \eta_{\rm lev}}$, where $\eta_{\rm lev}$ measures the fraction of variance explained by leverage. LWCP+ (\Cref{sec:lwcp_plus_theory}) addresses both leverage-dependent and feature-dependent components.} We relax the scale-family assumption in \Cref{app:approx_scale}, providing explicit bounds under approximate scale families. The fixed-$p$ assumption is relaxed in \Cref{prop:high_dim_main} (details in \Cref{app:high_dim}), where we derive coverage guarantees in the proportional regime $p/n_1 \to \gamma \in (0,1)$ using Marchenko--Pastur theory.

\begin{assumption}[Regularity]\label{asm:regularity}
As $n \to \infty$ with $n_1, n_2 \to \infty$ and $p$ fixed: (a)~$\|\hat{\beta} - \beta^*\| = O_p(1/\sqrt{n_1})$; (b)~the empirical leverage distribution converges weakly to $F_h$; (c)~$F_\eta$ is continuous.
\end{assumption}

The \emph{variance-stabilized} weight is $w^*(h) = 1/\sqrt{g(h)}$. The intuition: under the scale family, $|\varepsilon_i|/\sqrt{g(h_i)} = \sigma|\eta_i|$, so dividing by $\sqrt{g(h_i)}$ yields scores that are not merely variance-stabilized but \emph{identically distributed}. This scale-family structure is empirically well-motivated: in linear regression, the conditional distribution of $Y_i - X_i^\top\beta^*$ given $X_i$ inherits its shape from $F_\eta$, with only the scale depending on leverage via $\sqrt{g(h_i)}$.

\begin{theorem}[Efficiency of variance-stabilized LWCP]\label{thm:efficiency}
Under \Cref{asm:linear,asm:regularity}:
\begin{enumerate}
\item \textbf{(Conditional coverage.)} For $F_h$-a.e.\ $h$:
$\Prob(Y_{n+1} \in \hat{\cC}_n^{w^*}(X_{n+1}) \mid h(X_{n+1}) = h) \to 1 - \alpha$.

\item \textbf{(Width parity.)} $\E[|\hat{\cC}_n^{w^*}(X_{n+1})|] / \E[|\hat{\cC}_n^1(X_{n+1})|] \to 1$. LWCP achieves conditional coverage at essentially no marginal width cost.
\end{enumerate}
\end{theorem}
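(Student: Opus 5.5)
The strategy is to reduce both parts to the convergence of the empirical calibration quantile $\qhat_{w^*}$ to a deterministic limit. Write the calibration residual as $Y_i - \fhat(X_i) = \varepsilon_i + X_i^\top(\beta^* - \hat\beta)$. Then the weighted score is
\[
V_{w^*}(X_i,Y_i) = \sigma|\eta_i| + r_i,\qquad |r_i| \le \frac{\|X_i\|\,\|\hat\beta-\beta^*\|}{\sqrt{g(h_i)}} \le \frac{\|X_i\|\,\|\hat\beta-\beta^*\|}{\sqrt{g(0)}} .
\]
The bound on $r_i$ uses that $g$ is increasing with $g(0)>0$. By \Cref{asm:regularity}(a), and since $\|X_i\|=O_p(1)$ for fixed $p$, the perturbation $r_i$ is $O_p(n_1^{-1/2})$ uniformly enough for quantile purposes. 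Conditionally on $\cD_1$, the leading terms $\sigma|\eta_i|$ are i.i.d.\ with a continuous distribution, by \Cref{asm:regularity}(c), and they are independent of $X_i$.

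\textbf{Step 1: quantile limit.} Let $q^* = \sigma Q_{|\eta|}(1-\alpha)$. Glivenko--Cantelli applied to the i.i.d.\ $\sigma|\eta_i|$ over the $n_2$ calibration points, together with the fact that a uniform $o_p(1)$ perturbation of all scores moves the order statistic by $o_p(1)$, gives $\qhat_{w^*}\to_p q^*$. Continuity of $F_\eta$ at $q^*$ is needed for the quantile to be stable. I would assume a positive density there, or handle flat spots by the standard monotonicity argument.

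\textbf{Step 2: conditional coverage.} Coverage at $X_{n+1}=x$ with $h(x)=h$ is the event
\[
\sigma|\eta_{n+1}| + r_{n+1} \le \qhat_{w^*}.
\]
Here $\eta_{n+1}$ is independent of $X_{n+1}$, $\cD_1$, and $\cD_2$, so $\Prob(\cdot\mid h(X_{n+1})=h) = \E\bigl[F_{\sigma|\eta|}(\qhat_{w^*} - r_{n+1})\bigr]$. Continuity of $F_{\sigma|\eta|}$ and $\qhat_{w^*}-r_{n+1}\to_p q^*$ then give the limit $F_{\sigma|\eta|}(q^*)=1-\alpha$ by bounded convergence. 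The condition on $h$ requires $r_{n+1}\to_p 0$ conditionally on $h(X_{n+1})=h$. This holds because $\|x\|^2 \le \lambda_{\max}(\bX_1^\top\bX_1)\,h$, and on the scale of \Cref{asm:regularity}(b) $\|x\|$ is bounded given $h$. This is the main technical obstacle: the conditioning set has measure zero, so I would state it for $F_h$-a.e.\ $h$ via regular conditional distributions, which is why the theorem is "$F_h$-a.e."

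\textbf{Step 3: width parity.} The LWCP width is $2\qhat_{w^*}\sqrt{g(h(X_{n+1}))}$, with expectation $\to 2q^*\,\E\sqrt{g(H)}$. For vanilla CP, the scores are $\sigma\sqrt{g(h_i)}|\eta_i| + o_p(1)$, so $\qhat_1\to Q_{\sigma\sqrt{g(H)}|\eta|}(1-\alpha)$, and the ratio becomes $q^*\E\sqrt{g(H)}/Q_{\sigma\sqrt{g(H)}|\eta|}(1-\alpha)$. This ratio is not identically $1$; it deviates by an amount governed by $\rho^2=\mathrm{CV}^2(\sqrt{g(H)})$, as the later \Cref{thm:width_finite} indicates. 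The plan is therefore to show the ratio equals $1+O(\rho^2)$ by expanding the mixture quantile around $\E\sqrt{g(H)}$. With $p$ fixed, $h=O(1/n_1)$ after the $1/n_1$ scaling, so $g(H)\to g(0)$ by continuity. Hence $\rho\to0$ and the ratio $\to1$. Convergence of expectations, not just of quantiles, needs uniform integrability of $\qhat$ times $\sqrt{g(h)}$. I would obtain this from boundedness of $g$ on the relevant range and moment bounds on order statistics.
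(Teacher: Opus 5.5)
Your proposal is correct and follows the paper's proof essentially step for step. You use the same decomposition $Y_i-\fhat(X_i)=\varepsilon_i-X_i^\top(\hat\beta-\beta^*)$, giving stabilized scores $\sigma|\eta_i|+O_p(n_1^{-1/2})$. You then take the same $h$-free Glivenko--Cantelli limit $\qhat_{w^*}\to\sigma q_\alpha$, which yields conditional coverage $F_{|\eta|}(q_\alpha)=1-\alpha$. Finally, you get width parity from the same mixture-quantile expansion $1+C_\alpha\rho^2+O(\rho^3)$, with $\rho^2\to 0$ because the leverages concentrate for fixed $p$.

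Your version is somewhat more careful than the paper's: you add the $\sqrt{g(0)}$ lower bound, the bound $\|x\|^2\le\lambda_{\max}(\bX_1^\top\bX_1)\,h$ that controls the estimation error given $h$, and the uniform-integrability caveat. There is one slip. The term $r_{n+1}$ depends on $\eta_{n+1}$, so the conditional coverage cannot be written as $\E[F_{\sigma|\eta|}(\qhat_{w^*}-r_{n+1})]$. Instead, sandwich it between $F_{\sigma|\eta|}(\qhat_{w^*}-|b|)$ and $F_{\sigma|\eta|}(\qhat_{w^*}+|b|)$, where $b=X_{n+1}^\top(\hat\beta-\beta^*)/\sqrt{g(h)}$ is independent of $\eta_{n+1}$.
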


\begin{proofsketch}
\textbf{Part 1.}
Under \Cref{asm:regularity}(a), $Y_i - \fhat(X_i) = \varepsilon_i + o_p(1)$ uniformly. The variance-stabilized scores $R_i^{w^*} = |\varepsilon_i + o_p(1)|/\sqrt{g(h_i)}$ are asymptotically iid under the scale-family \Cref{asm:linear}, since $|\varepsilon_i|/\sqrt{g(h_i)} = \sigma|\eta_i|$. Thus $\qhat_{w^*} \to \sigma F_{|\eta|}^{-1}(1{-}\alpha)$, giving conditional coverage $\to 1{-}\alpha$ for all $h$.
\textbf{Part 2.} Vanilla CP's quantile converges to $q_{\text{mix}} := Q_{1-\alpha}(\sigma\sqrt{g(H)}|\eta|)$; as $p/n \to 0$, leverages concentrate, the scale mixture degenerates, and the width ratio $\to 1$. Full proof in \Cref{app:main_proofs}.
\end{proofsketch}

\begin{theorem}[Non-asymptotic width parity]\label{thm:width_finite}
Under \Cref{asm:linear,asm:regularity}, let $\rho^2 = \mathrm{CV}^2(\!\sqrt{g(H)})$. Then:
\[
\frac{\E_H[|\hat{\cC}^{w^*}\!(X)|]}{\E_H[|\hat{\cC}^1(X)|]} = 1 + C_\alpha \rho^2 + O(\rho^3) + O(1/\!\sqrt{n_2}),
\]
where $C_\alpha = \tfrac{1}{2}(1 + q_\alpha f'_{|\eta|}(q_\alpha)/f_{|\eta|}(q_\alpha))$ with $q_\alpha = F_{|\eta|}^{-1}(1{-}\alpha)$. For Gaussian $\eta$: $C_\alpha = (1 - q_\alpha^2)/2 < 0$ whenever $\alpha < 0.32$, so LWCP is actually \emph{narrower} than vanilla CP. For $g \equiv 1$: $\rho = 0$ and the ratio is exactly~$1$.
\end{theorem}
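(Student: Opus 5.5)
The plan is to reduce the ratio to a comparison of two population quantiles, and then Taylor-expand the vanilla (mixture) quantile in the dispersion of $S := \sqrt{g(H)}$. I write $S = \mu(1+\delta)$ with $\mu = \E[S]$, so that $\E[\delta]=0$ and $\E[\delta^2] = \rho^2$. I will also need $\E|\delta|^3 = O(\rho^3)$; I take this as part of what the $O(\rho^3)$ notation means, and would assume it or derive it from boundedness of $g$ on the support of $F_h$.

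\textbf{Step 1 (finite-sample reduction).} Conditionally on $\cD_1$, both $\qhat_{w^*}$ and $\qhat_1$ are order statistics of $n_2$ i.i.d.\ scores.
\begin{itemize}
\item The LWCP scores are $|\varepsilon_i + X_i^\top(\beta^*-\hat\beta)|/S_i$. Under \Cref{asm:linear} they equal $\sigma|\eta_i|$ up to an $O_p(n_1^{-1/2})$ perturbation.
\item The vanilla scores are $\sigma S_i|\eta_i|$ up to the same perturbation.
\end{itemize}
Using the DKW inequality together with continuity and positivity of the relevant densities at the quantile (\Cref{asm:regularity}(c)), the standard quantile CLT gives
\[
\qhat_{w^*} = \sigma q_\alpha + O_p(n_2^{-1/2}), \qquad \qhat_1 = q_{\rm mix} + O_p(n_2^{-1/2}).
\]
Here $q_{\rm mix}$ solves $\E\bigl[F_{|\eta|}\bigl(q/(\sigma S)\bigr)\bigr] = 1-\alpha$. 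The $\beta$-estimation error enters at order $n_1^{-1/2}$. I would absorb it into the stated remainder under $n_1 \gtrsim n_2$, or else carry it as a separate term.

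Since LWCP has width $2\qhat_{w^*}S(x)$ and vanilla CP has width $2\qhat_1$, averaging over $H$ gives
\[
\text{ratio} = \frac{\sigma q_\alpha \mu}{q_{\rm mix}} + O(n_2^{-1/2}).
\]

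\textbf{Step 2 (expansion of the mixture quantile).} Set $t = q_{\rm mix}/(\sigma\mu)$, so that the defining equation becomes $\E[F_{|\eta|}(t/(1+\delta))] = 1-\alpha$. Expand $t/(1+\delta) = t(1-\delta+\delta^2) + O(|\delta|^3)$, then expand $F_{|\eta|}$ to second order around $t$. Taking expectations, the first-order term vanishes because $\E[\delta]=0$, leaving
\[
F_{|\eta|}(t) + \rho^2\Bigl[t f_{|\eta|}(t) + \tfrac12 t^2 f'_{|\eta|}(t)\Bigr] + O(\rho^3) = 1-\alpha .
\]
Inverting around $q_\alpha$ by the implicit function theorem gives
\[
t = q_\alpha\bigl(1 - \rho^2 c_\alpha\bigr) + O(\rho^3),
\]
and therefore
\[
\text{ratio} = 1 + c_\alpha \rho^2 + O(\rho^3) + O(n_2^{-1/2}).
\]
If $g \equiv 1$ then $\delta \equiv 0$, so $t = q_\alpha$ exactly and the ratio is exactly $1$.

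\textbf{Main obstacle (the constant).} The hard part is the exact constant $c_\alpha$, together with justifying the remainder terms.
\begin{itemize}
\item The remainder needs a uniform bound on $f''_{|\eta|}$ near $q_\alpha$, or a truncation argument when $\delta$ is not small.
\item Naive bookkeeping of the expansion above gives $c_\alpha = 1 + \tfrac12 q_\alpha f'/f$. This counts the $\delta^2$ term from the expansion of $1/(1+\delta)$ in full.
\item The stated $C_\alpha = \tfrac12(1 + q_\alpha f'/f)$ would instead arise from parametrizing the expansion in the variance of $S$ itself, or through $\log S$.
\end{itemize}
I therefore expect to redo this step carefully. First I would fix the normalization of $\rho^2$, including the centering used for the width expectation. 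Then I would check the Gaussian case, where $f'/f = -q$, against a direct numerical computation to decide which constant is correct.

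In either form, the sign claim follows because $C_\alpha < 0$ exactly when $q_\alpha$ exceeds a fixed threshold. With the stated $C_\alpha$ the Gaussian condition is $q_\alpha > 1$, which is equivalent to $\alpha < 0.32$. With the naive $c_\alpha$ the threshold is $\sqrt2$ instead, so the range of $\alpha$ in the statement depends on which constant is correct.
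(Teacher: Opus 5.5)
Your proposal follows the paper's own proof step for step: reduce to the population ratio $\sigma q_\alpha\mu/q_{\rm mix}$ with a DKW correction, expand $\E[F_{|\eta|}(t/(1+\delta))]$ to second order, and invert by the implicit function theorem. The step you leave open is not a gap in your argument. It is where the statement, and the paper's proof, go wrong, and your ``naive'' constant is the correct one. Indeed
\[
\partial_\delta^2\, F_{|\eta|}\!\bigl(t/(1+\delta)\bigr)\Big|_{\delta=0} = t^2 f'_{|\eta|}(t) + 2t f_{|\eta|}(t),
\]
so the $\rho^2$-coefficient of the mixture CDF is $t f_{|\eta|}(t) + \tfrac12 t^2 f'_{|\eta|}(t)$. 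The ratio is therefore $1 + c_\alpha\rho^2 + O(\rho^3)$ with $c_\alpha = 1 + \tfrac12 q_\alpha f'_{|\eta|}(q_\alpha)/f_{|\eta|}(q_\alpha)$. The paper instead takes the second-order coefficient to be $\tfrac12(t^2 f' + tf)$, which is the coefficient for an expansion in $\log(1+\delta)$, and treats the first-order term as mean zero. But $\E[\log(1+\delta)] = -\rho^2/2 + O(\rho^3)$, equivalently $\E[\delta/(1+\delta)] = -\rho^2 + O(\rho^3)$. Dropping this loses exactly $\tfrac12 t f_{|\eta|}(t)\rho^2$, which is precisely the $\delta^2$ contribution from $1/(1+\delta)$ that you kept. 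That lost term is the missing $\tfrac12$ in $C_\alpha$. The paper's quantile-sensitivity lemma contains the same slip.

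Do not try to recover the stated constant by reparametrizing through $\log S$ or $\Var(S)$. The width ratio is a scale-invariant functional of the law of $S$, and $\mathrm{CV}^2(S)$, $\Var(\log S)$ and $\mathrm{CV}^2(1/S)$ agree up to $O(\rho^3)$. Every correct expansion therefore gives the same leading coefficient $c_\alpha$, and no normalization yields $\tfrac12(1 + q_\alpha f'/f)$.

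For Gaussian $\eta$ one gets $c_\alpha = 1 - q_\alpha^2/2$. This is negative only when $q_\alpha > \sqrt2$, i.e.\ $\alpha < 2(1-\Phi(\sqrt2)) \approx 0.157$, and at $\alpha = 0.1$ it is about $-0.35$, not $-0.85$. A direct check settles it. Take $\delta = \pm 0.3$ equiprobable, Gaussian $\eta$, and $\alpha = 0.2$, so $q_\alpha \approx 1.2816$. Then $\tfrac12\bigl[F_{|\eta|}(q_\alpha/1.3) + F_{|\eta|}(q_\alpha/0.7)\bigr] \approx \tfrac12(0.676 + 0.933) \approx 0.804 > 0.8$. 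Hence $q_{\rm mix} < \sigma\mu q_\alpha$ and LWCP is \emph{wider}, contradicting the claim for all $\alpha < 0.32$. For comparison, the second-order formula with $c_\alpha$ predicts $0.807$, while the paper's $C_\alpha$ predicts $0.787$.

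Completed with $c_\alpha$, your argument proves a corrected theorem. The stated constant, its value at $\alpha = 0.1$, and the $0.32$ threshold cannot be proved by any argument. The $g\equiv 1$ case and the overall form $1 + c\rho^2 + O(\rho^3) + O(n_2^{-1/2})$ survive.

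Two smaller points you already flag are genuine, and the paper glosses over both. The $O_p(n_1^{-1/2})$ effect of $\hat\beta - \beta^*$ must be carried as a separate term, which the statement omits. The $O(\rho^3)$ remainder needs $\E|\delta|^3 = O(\rho^3)$ and a bound on $f''_{|\eta|}$ near $q_\alpha$.
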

\begin{proofsketch}
Write $G = \sqrt{g(H)}$, $\mu_G = \E[G]$, and $G = \mu_G(1{+}\delta)$ with $\Var(\delta) = \rho^2$. Taylor-expanding the scale-mixture quantile $Q_{1-\alpha}((1{+}\delta)|\eta|)$ around $\delta = 0$ gives $C_\alpha$ from the log-concavity correction $q_\alpha f'_{|\eta|}(q_\alpha)/f_{|\eta|}(q_\alpha)$. For Gaussian $\eta$, $f'_{|\eta|}(t) = -tf_{|\eta|}(t)$, yielding $C_\alpha = (1{-}q_\alpha^2)/2$. Finite-sample correction is $O(1/\sqrt{n_2})$ via DKW. Full proof in \Cref{app:width_parity}.
\end{proofsketch}

At $p/n_1 = 0.1$ with $g(h) = 1+h$: $\rho^2 \approx 1.7 \times 10^{-4}$, consistent with observed ratios of 0.999--1.000 (\Cref{tab:marginal_width}).

\begin{proposition}[Conditional coverage gap]\label{prop:cond_gap}
Under \Cref{asm:linear,asm:regularity} with $w = w^*$:
$\sup_h |\Prob(Y_{n+1} \in \hat{\cC}_n^{w^*}(X_{n+1}) \mid h(X_{n+1}) = h) - (1-\alpha)| = O(1/\sqrt{n_2}) + O(\sqrt{p/n_1})$.
\end{proposition}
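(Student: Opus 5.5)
Conditional on $\cD_1$ and on $h(X_{n+1}) = h$, coverage is the event $|Y_{n+1} - \fhat(X_{n+1})| \le \qhat_{w^*}\sqrt{g(h)}$. The plan is to write this conditional probability as an idealized quantity that equals $1-\alpha$ exactly, plus two perturbations. The first perturbation comes from estimating $\beta^*$ and is the source of $O(\sqrt{p/n_1})$. The second comes from the finite calibration quantile and is the source of $O(1/\sqrt{n_2})$.

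Concretely, write $Y_{n+1} - \fhat(X_{n+1}) = \sigma\sqrt{g(h)}\,\eta_{n+1} - X_{n+1}^\top(\hat\beta - \beta^*)$, and set $\Delta(x) = x^\top(\hat\beta-\beta^*)/\sqrt{g(h(x))}$. The standardized score is then $R = |\sigma\eta - \Delta(X)|$. Conditional on $X_{n+1}$, coverage equals
\[
F_\eta\bigl(\Delta + \qhat/\sigma\bigr) - F_\eta\bigl(\Delta - \qhat/\sigma\bigr).
\]
Define the ideal quantile $q^\star = \sigma F_{|\eta|}^{-1}(1-\alpha)$. At $\Delta = 0$ and $\qhat = q^\star$ the expression above equals exactly $1-\alpha$. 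It is continuous by \Cref{asm:regularity}(c); I will additionally assume $f_\eta$ is bounded, so that it is Lipschitz in both arguments. This gives
\[
\bigl|\text{cov}(h) - (1-\alpha)\bigr| \le 2\|f_\eta\|_\infty \bigl(\sup_{x:\,h(x)=h} |\Delta(x)| + |\qhat - q^\star|/\sigma\bigr).
\]
Conditioning on $h(X)=h$ only fixes a level set, so I either average over it or take the supremum.

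The \textbf{first term} is handled by Cauchy--Schwarz in the $\bX_1^\top\bX_1$ metric:
\[
|x^\top(\hat\beta-\beta^*)| \le \sqrt{h(x)}\,\|(\bX_1^\top\bX_1)^{1/2}(\hat\beta-\beta^*)\|.
\]
For OLS the second factor has conditional second moment $\sigma^2\sum_i g(h_i)h_i \cdot O(1)$. Since $\sum_i h_i = p$, this is $O(p)$ when $g$ is bounded on the leverage range. Dividing by $\sqrt{g(h)} \ge \sqrt{g(0)}$ gives $|\Delta| = O_p(\sqrt{h\,p})$. On the bulk of $F_h$, $h = O(p/n_1)$ after standardization, which would give $O(p/\sqrt{n_1})$. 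For fixed $p$, as in \Cref{asm:regularity}, this is $O(\sqrt{p/n_1})$ up to constants. I would state the bound with $h$ restricted to $[0, C p/n_1]$, which is where the supremum is taken.

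The \textbf{second term} handles the calibration scores $R_i$. Given $\cD_1$, these are iid draws from a distribution $\tilde F$. By the same argument averaged over $X_i$, $\tilde F$ is within $O(\sqrt{p/n_1})$ in Kolmogorov distance of $F_{\sigma|\eta|}$. The DKW inequality puts the empirical CDF of the $R_i$ within $O_p(1/\sqrt{n_2})$ of $\tilde F$. Inverting at level $\lceil(1-\alpha)(n_2+1)\rceil/n_2$ then needs $f_{|\eta|}(q_\alpha) > 0$, which I add as a mild density assumption. This gives $|\qhat - q^\star| = O_p(1/\sqrt{n_2} + \sqrt{p/n_1})$. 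Both terms are uniform in $h$ over the allowed range, so the supremum bound follows. Finally, take expectation over $\cD_1$ with truncation to pass from $O_p$ to the stated order.

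The \textbf{main obstacle} is uniformity in $h$. For extreme leverage, $|\Delta|$ grows like $\sqrt{h}$, so a true supremum over all $h$ fails without restricting the support of $F_h$. I would first try to use the compactness of the support of $F_h$ implied by weak convergence and the standardization, treating $\sup h$ as $O(p/n_1)$. If that is not available, I would weaken the statement to $F_h$-essential supremum over a bounded leverage range.
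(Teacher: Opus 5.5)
Your proposal follows the same two-term route as the paper's proof: DKW control of the calibration quantile gives the $O(1/\sqrt{n_2})$ term, and propagating $\hat\beta-\beta^*$ through the stabilized score gives the $O(\sqrt{p/n_1})$ term. It also makes explicit what the paper's sketch leaves implicit, namely the density conditions, the Lipschitz step and the truncation. Your uniformity caveat is correct, and it points to a defect in the statement rather than in your argument: since $\sigma^2 g(0)h(x)\le\Var(x^\top(\hat\beta-\beta^*)\mid\bX_1)\le\sigma^2 g(1)h(x)$, your $\Delta$ has variance of order $h/g(h)$, which does not vanish at any fixed $h>0$. Hence for designs with unbounded leverage support the supremum over all $h$ is not $o(1)$ in general, so your fallback restriction to $h\lesssim p/n_1$ (or bounded covariates) is genuinely needed, since weak convergence of the leverage distribution supplies no compactness. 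Using this pointwise variance bound in place of Cauchy--Schwarz also gives the stated $\sqrt{p/n_1}$ rate on that range without the extra factor $\sqrt{p}$ that you absorbed into the fixed-$p$ constant.
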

\begin{proofsketch}
The $O(1/\sqrt{n_2})$ term is the standard quantile estimation error (DKW inequality). The $O(\sqrt{p/n_1})$ term bounds $\|\hat\beta - \beta^*\| = O_p(\sqrt{p/n_1})$ from \Cref{asm:regularity}(a): the residuals $Y_i - \fhat(X_i) = \varepsilon_i + X_i^\top(\beta^* - \hat\beta)$ differ from $\varepsilon_i$ by this estimation error, which propagates through the score transformation. Proof in \Cref{app:main_proofs}.
\end{proofsketch}

\begin{remark}[Relation to impossibility results]\label{rem:impossibility}
Distribution-free conditional coverage $\Prob(Y \in \hat\cC(X) \mid X = x) \geq 1 - \alpha$ for all $x$ is impossible in general \citep{vovk2012conditional, barber2021limits}. LWCP circumvents this by targeting a weaker but practically useful guarantee: coverage conditional on the \emph{one-dimensional} projection $h(X)$, which suffices when heteroscedasticity factors through leverage (\Cref{asm:linear}). The scale family turns $h(X)$ into a sufficient statistic for the conditional distribution of $|Y - \fhat(X)|$, making conditional calibration feasible. This is analogous to Mondrian CP \citep{vovk2005algorithmic}, which achieves exact group-conditional coverage via finite partitions; LWCP achieves approximate $h$-conditional coverage via continuous reweighting.
\end{remark}

The key insight is that $w^*$ renders scores approximately iid, so the conformal quantile tracks the conditional quantile $Q_{1-\alpha}(\sigma|\eta|)$ uniformly in $h$. For vanilla CP, the scale mixture creates a persistent gap that does not vanish with sample size:

\begin{theorem}[Persistent $\Theta(1)$ gap for vanilla CP]\label{thm:persistent_gap}
Under \Cref{asm:linear} with $g$ non-constant and \Cref{asm:regularity}, for any $h_1, h_2$ with $g(h_1) < g(h_2)$, the asymptotic conditional coverage difference $\Delta(h_1, h_2) > 0$ does \emph{not} vanish with sample size. Quantitatively:
\[
\Delta(h_1, h_2) \geq f_{|\eta|}(q_\alpha) \cdot q_\alpha \cdot \frac{\sqrt{g(h_2)} - \sqrt{g(h_1)}}{\sqrt{g(h_2)}} \cdot (1 + o(1)).
\]
For $g(h) = 1{+}h$, Gaussian errors, $\alpha = 0.1$: $\Delta(h_{\min}, h_{\max}) = \Theta(h_{\max} - h_{\min})$.
\end{theorem}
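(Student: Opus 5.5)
The plan is to identify the limit of the vanilla conformal quantile, write the asymptotic conditional coverage at leverage $h$ in closed form, and bound the difference with a mean-value argument.

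\textbf{Step 1: limiting quantile.} For vanilla CP ($w\equiv 1$), the calibration residuals satisfy $|Y_i-\fhat(X_i)| = |\varepsilon_i| + O_p(n_1^{-1/2})$ by \Cref{asm:regularity}(a). Under the scale family of \Cref{asm:linear}, $|\varepsilon_i| = \sigma\sqrt{g(H_i)}\,|\eta_i|$, with $H_i \sim F_h$ in the limit by \Cref{asm:regularity}(b). The limiting score law is therefore the continuous scale mixture $\sigma\sqrt{g(H)}|\eta|$; continuity comes from \Cref{asm:regularity}(c). Its distribution function is strictly increasing near its $(1-\alpha)$ quantile. A Glivenko--Cantelli/DKW argument then gives $\qhat_1 \to q_{\rm mix} := Q_{1-\alpha}(\sigma\sqrt{g(H)}|\eta|)$ in probability, as in Part~2 of \Cref{thm:efficiency}.

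\textbf{Step 2: conditional coverage.} Conditioning on $h(X_{n+1}) = h$ and using independence of $\eta_{n+1}$ from $X_{n+1}$, the conditional coverage converges to
\[
c(h) = F_{|\eta|}\bigl(t(h)\bigr), \qquad t(h) := \frac{q_{\rm mix}}{\sigma\sqrt{g(h)}}.
\]
Define $\Delta(h_1,h_2) = c(h_1) - c(h_2)$. This quantity depends only on $F_\eta$, $g$ and $F_h$, not on $n$, so it does not vanish with sample size. Since $g$ is strictly increasing, $t(h_1) > t(h_2)$. Positivity of $\Delta$ additionally needs $F_{|\eta|}$ to be strictly increasing on $[t(h_2), t(h_1)]$, which holds if $f_{|\eta|} > 0$ there; I would add this as a mild condition.

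\textbf{Step 3: quantitative bound.} By the mean value theorem,
\[
\Delta = f_{|\eta|}(\xi)\,\bigl(t(h_1) - t(h_2)\bigr)
\]
for some $\xi$ between $t(h_2)$ and $t(h_1)$. The difference factors as
\[
t(h_1) - t(h_2) = t(h_1)\,\frac{\sqrt{g(h_2)} - \sqrt{g(h_1)}}{\sqrt{g(h_2)}}.
\]
It remains to relate $t(h_1)$ and $f_{|\eta|}(\xi)$ to $q_\alpha$ and $f_{|\eta|}(q_\alpha)$.

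\textbf{Main obstacle: locating $q_{\rm mix}$.} The mixture quantile is characterized by $\E_H[F_{|\eta|}(t(H))] = 1-\alpha$. Averaging therefore forces $t(h_{\min}) \ge q_\alpha \ge t(h_{\max})$ over the support. For $h_1$ at or below the "typical" leverage this gives $t(h_1) \ge q_\alpha$.

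For the density factor, I would work in the regime where the spread of $\sqrt{g}$ is small relative to its mean, as in the $\rho$-expansion of \Cref{thm:width_finite}. There $q_{\rm mix} = \sigma\mu_G q_\alpha(1+O(\rho^2))$ and $\xi = q_\alpha(1+o(1))$. Continuity of $f_{|\eta|}$ then gives $f_{|\eta|}(\xi) = f_{|\eta|}(q_\alpha)(1+o(1))$, which is the source of the $(1+o(1))$ factor in the statement.

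If a fully non-perturbative bound is wanted, I would use unimodality/monotonicity of $f_{|\eta|}$ beyond its mode (for instance, Gaussian $\eta$ with $\alpha \le 0.32$, so that $q_\alpha > 1$). In that case take $\xi \le t(h_1)$ and compare directly. I expect this step to be where the argument becomes delicate, and where the $o(1)$ will likely have to absorb any assumption on the shape of $f_{|\eta|}$.

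\textbf{Gaussian example.} For $g(h) = 1+h$, Gaussian errors and $\alpha = 0.1$, the bound becomes
\[
1 - \sqrt{\frac{1+h_{\min}}{1+h_{\max}}} \;\asymp\; \frac{h_{\max}-h_{\min}}{2},
\]
which gives the lower bound. The matching upper bound follows from the same mean value expression, using that $f_{|\eta|}$ is bounded and $t(h_1)$ is bounded. Together these give $\Theta(h_{\max} - h_{\min})$.
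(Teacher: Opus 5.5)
Your plan is the paper's proof: $\qhat_1\to q_{\rm mix}$, limiting conditional coverage $F_{|\eta|}\bigl(q_{\rm mix}/(\sigma\sqrt{g(h)})\bigr)$ whose difference at $h_1,h_2$ does not depend on $n$, and a mean-value step in which the $(1+o(1))$ absorbs replacing $f_{|\eta|}(\xi)\,t(h_1)$ by $f_{|\eta|}(q_\alpha)\,q_\alpha$. Your exact factorization of $t(h_1)-t(h_2)$ yields the stated factor $(\sqrt{g(h_2)}-\sqrt{g(h_1)})/\sqrt{g(h_2)}$ directly, whereas the paper's inequality $\sqrt r-1\ge (r-1)/(2\sqrt r)$ produces $(g(h_2)-g(h_1))/(2g(h_2))$, which agrees only to first order. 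Your small-spread reading of the $o(1)$ is the right one, and the non-perturbative route you float should be dropped: in the Gaussian case monotonicity of $f_{|\eta|}$ only gives $f_{|\eta|}(\xi)\,t(h_1)\ge t(h_1)f_{|\eta|}(t(h_1))$, which (since $u f_{|\eta|}(u)$ decreases on $(1,\infty)$) dominates $q_\alpha f_{|\eta|}(q_\alpha)$ only when $t(h_1)\le q_\alpha$ — the opposite of the regime you singled out — and at $\alpha=0.1$ the unqualified inequality in fact fails for every pair with $t(h_2)\ge q_\alpha$.
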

\begin{proofsketch}
As $n \to \infty$, vanilla CP's quantile $\qhat_1 \to q_{\rm mix} := Q_{1-\alpha}(\sigma\sqrt{g(H)}|\eta|)$. Conditional on leverage $h$, asymptotic coverage is $F_{|\eta|}(q_{\rm mix}/(\sigma\sqrt{g(h)}))$. Since $F_{|\eta|}$ is strictly increasing and $g(h_1) \neq g(h_2)$, the coverage difference is positive and \emph{independent of $n$}. The quantitative bound follows from the mean value theorem. Proof in \Cref{app:theta_one}.
\end{proofsketch}

\subsection{Classical Recovery}\label{sec:recovery}

\begin{theorem}[Gaussian recovery]\label{thm:gaussian_recovery}
Under $Y_i = X_i^\top\beta^* + \varepsilon_i$ with $\varepsilon_i \stackrel{iid}{\sim} \mathcal{N}(0, \sigma^2)$ and $\fhat = $ OLS, setting $w(h) = (1+h)^{-1/2}$: as $n_1, n_2 \to \infty$ with $p$ fixed, the LWCP interval width at $x$ converges to $2\sigma z_{1-\alpha/2}\sqrt{1 + h(x)}$, recovering the form of the classical prediction interval (with $z$- vs.\ $t$-quantile and sample-splitting as finite-sample differences).
\end{theorem}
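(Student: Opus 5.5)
The plan is to show that the LWCP width at $x$, namely $2\qhat_w\sqrt{1+h(x)}$ for $w(h)=(1+h)^{-1/2}$, has $\qhat_w \to \sigma z_{1-\alpha/2}$ in probability. The point $x$ and $\cD_1$ are treated as fixed, so $h(x)$ is deterministic given $\cD_1$. Conditional on $\cD_1$, the calibration scores are iid copies of
\[
R = \frac{|\varepsilon + X^\top(\beta^*-\hat\beta)|}{\sqrt{1+h(X)}},
\]
where $X \sim P_X$ is independent of $\varepsilon\sim\mathcal N(0,\sigma^2)$. Writing $\Delta=\beta^*-\hat\beta$, the conditional law of $\varepsilon + X^\top\Delta$ given $X$ is $\mathcal N(X^\top\Delta,\sigma^2)$.

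\textbf{Step 1: quantile of the calibration scores.} Conditional on $\cD_1$, let $F_R(\cdot\mid\cD_1)$ denote the cdf of $R$. Since $\qhat_w$ is the $\lceil(1-\alpha)(n_2+1)\rceil$-th order statistic of $n_2$ iid draws, DKW gives $\sup_t|\hat F_{n_2}(t)-F_R(t\mid\cD_1)|=O_p(n_2^{-1/2})$. The cdf $F_R$ is continuous, and its density is bounded below near its $(1-\alpha)$ quantile, because the Gaussian component gives a density uniformly bounded away from zero on compacts. Together these imply that $\qhat_w - Q_{1-\alpha}(R\mid\cD_1)\to 0$ in probability.

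\textbf{Step 2: identifying the population quantile.} Because $p$ is fixed and $n_1\to\infty$, $\|\Delta\|=O_p(n_1^{-1/2})$ for OLS, which is \Cref{asm:regularity}(a). Under column-standardization the leverage satisfies $h(X)=X^\top(\bX_1^\top\bX_1)^{-1}X = O_p(p/n_1)\to 0$, since $\bX_1^\top\bX_1/n_1$ converges to a positive definite matrix. Both $X^\top\Delta\to 0$ and $\sqrt{1+h(X)}\to 1$ in probability (jointly over $X$ and $\cD_1$), so $R \Rightarrow |\varepsilon|$ in distribution. The limiting cdf $F_{|\varepsilon|}$ is continuous and strictly increasing at $\sigma z_{1-\alpha/2}$, so quantile convergence yields $Q_{1-\alpha}(R\mid\cD_1)\to \sigma z_{1-\alpha/2}$. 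Step 1 then gives $\qhat_w\to\sigma z_{1-\alpha/2}$.

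\textbf{Step 3: conclusion and interpretation.} The width is therefore $2\qhat_w\sqrt{1+h(x)} = 2\sigma z_{1-\alpha/2}\sqrt{1+h(x)}\,(1+o_p(1))$. The same convergence also holds in the normalized sense with $h(x)$ held at its realized value, which is the classical $\hat\sigma t_{n-p,1-\alpha/2}\sqrt{1+x^\top(\bX^\top\bX)^{-1}x}$ form. Here $t\to z$ and $\hat\sigma\to\sigma$, and the design is $\bX_1$ rather than the full $\bX$ because of sample splitting.

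\textbf{Main obstacle.} The subtle point is the sense of the statement, since $h(x)\to 0$ for any fixed $x$, so the limit is trivially $2\sigma z_{1-\alpha/2}$. To make the leverage factor meaningful, I would prove the stronger statement $\qhat_w/\sigma \to z_{1-\alpha/2}$ and carry $\sqrt{1+h(x)}$ exactly as the data-dependent factor. Doing this requires the convergence in Step 2 to be uniform over the conditioning on $\cD_1$, which is handled by the continuity and strict monotonicity of the Gaussian limit.
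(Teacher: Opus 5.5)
Your argument is correct and shares the paper's top-level reduction: prove $\qhat_w \to \sigma z_{1-\alpha/2}$ and carry the factor $\sqrt{1+h(x)}$. The key step is obtained differently, however.

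\textbf{The paper's route.} The paper integrates out the training noise. Given only the designs, $Y_i - \fhat(X_i) = \varepsilon_i - X_i^\top(\hat\beta-\beta^*)$ is exactly $\mathcal{N}(0,\sigma^2(1+h_i))$, so each weighted score is exactly $\sigma|Z|$. This is what shows $(1+h)^{-1/2}$ is a genuinely variance-stabilizing weight rather than one that is merely asymptotically harmless. As written, though, the paper conditions on $\cD_1$. That fixes $\hat\beta$, so the residual is then $\mathcal{N}(X_i^\top(\beta^*-\hat\beta),\sigma^2)$. The paper also declares the $Z_i$ iid, whereas given $\bX_1$ they share $\hat\beta$ and have correlations $X_i^\top(\bX_1^\top\bX_1)^{-1}X_j/\sqrt{(1+h_i)(1+h_j)} = O_p(1/n_1)$. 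The empirical quantile of the calibration scores actually tracks the law conditional on $\cD_1$. So that route implicitly needs the same concentration $\hat\beta\to\beta^*$ that you use explicitly.

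\textbf{Your route.} Conditioning on all of $\cD_1$ makes the scores genuinely iid and lets DKW apply directly. The cost is that their law is a location--scale mixture rather than exactly half-normal, so the limit is driven by $X^\top\Delta\to 0$ and $h(X)\to 0$. A consequence worth noting is that your proof uses nothing about $w$ beyond continuity at $h=0$. It equally gives $\qhat_1\to\sigma z_{1-\alpha/2}$ for vanilla CP, which is consistent with your remark that for fixed $p$ the leverage factor is carried rather than derived.

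\textbf{A small streamlining.} Step 1 relies on a density lower bound, which holds only with probability tending to one over $\cD_1$ and presupposes the quantile from Step 2 lies in a compact. Instead, combine DKW with uniform convergence of $F_R(\cdot\mid\cD_1)$ to the continuous limit $F_{|\varepsilon|}$ (P\'olya's theorem), and invert at the strictly increasing limit.
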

\begin{proofsketch}
Under Gaussianity, the variance-stabilized scores $R_i^w = |Y_i - \fhat(X_i)| / \sqrt{1+h_i}$ are distributed as $\sigma|Z_i|$ (iid half-normal), so $\qhat_w \to \sigma z_{1-\alpha/2}$, recovering the classical half-width $\sigma z_{1-\alpha/2}\sqrt{1+h(x)}$. See \Cref{app:main_proofs} for the complete argument.
\end{proofsketch}

\subsection{Approximate Leverage Scores}\label{sec:approx_theory}

\begin{theorem}[Coverage under approximate leverage]\label{thm:approx}
Let $\tilde{h}_i$ be $\epsilon$-approximate leverage scores ($(1-\epsilon)h_i \leq \tilde{h}_i \leq (1+\epsilon)h_i$). Then: (1)~Marginal coverage is preserved \emph{exactly}; (2)~Width perturbation is bounded by $O(L\epsilon\|h\|_\infty / w_{\min})$ for $L$-Lipschitz $w$ bounded below by $w_{\min}$.
\end{theorem}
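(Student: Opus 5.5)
The plan is to treat the two parts separately: Part~(1) reduces to \Cref{thm:coverage}, and Part~(2) is a perturbation bound on the half-width $\qhat/w$.

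\textbf{Part (1).} The key observation is how $\tilde h$ is produced. The approximate leverage function $\tilde h(\cdot)$ comes from a randomized SVD (or sketch) of $\bX_1$ alone, using randomness $\xi$ independent of $\cD_2$ and the test point. So $\tilde h(\cdot)$ is a fixed measurable map once we condition on $(\cD_1,\xi)$, and it is applied identically to every calibration point and to $X_{n+1}$. I will then rerun the proof of \Cref{thm:coverage} conditional on $(\cD_1,\xi)$, using the composite weight $\tilde w(x) := w(\tilde h(x))$. The scores $|Y_i-\fhat(X_i)|\,w(\tilde h(X_i))$ for $i\in\cD_2\cup\{n+1\}$ are then exchangeable, and rank uniformity gives
\[
\Prob\bigl(Y_{n+1}\in\tilde{\cC}\mid\cD_1,\xi\bigr)\ge 1-\alpha .
\]
Integrating over $\xi$ gives the unconditional statement. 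The $\epsilon$-approximation condition is never used here; coverage holds for any $\cD_1$-measurable leverage surrogate. I will remark that this fails if the sketch depends on calibration data, so the construction must sketch only $\bX_1$.

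\textbf{Part (2).} I would compare the half-widths $\qhat_w/w(h(x))$ and $\qhat_{\tilde w}/w(\tilde h(x))$ by splitting the difference into two terms.

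\emph{Test-point term.} By the Lipschitz property and the approximation condition,
\[
|w(h)-w(\tilde h)|\le L|h-\tilde h|\le L\epsilon h\le L\epsilon\|h\|_\infty .
\]
Hence
\[
\Bigl|\frac1{w(h)}-\frac1{w(\tilde h)}\Bigr|=\frac{|w(h)-w(\tilde h)|}{w(h)\,w(\tilde h)}\le \frac{L\epsilon\|h\|_\infty}{w_{\min}^2}.
\]

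\emph{Quantile term.} Each calibration score changes by at most $|Y_i-\fhat(X_i)|\,L\epsilon\|h\|_\infty$. The order-statistic map is $1$-Lipschitz in the sup norm: if $|a_i-b_i|\le\delta_i$ for all $i$, then the $k$-th order statistics differ by at most $\max_i\delta_i$. This gives
\[
|\qhat_w-\qhat_{\tilde w}|\le L\epsilon\|h\|_\infty\max_{i\in\cD_2}|Y_i-\fhat(X_i)|.
\]
The sharper route is to write each residual as $V_i/w(h_i)\le V_i/w_{\min}$, so the relevant residual scale is tied to $\qhat_w/w_{\min}$ at the quantile index.

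Combining the two terms with $\qhat=O_p(1)$ yields a width perturbation of order $L\epsilon\|h\|_\infty/w_{\min}$, with constants absorbing $\qhat$ and $1/w_{\min}$.

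\textbf{Main obstacle.} The delicate step is the quantile term. The crude sup-norm bound involves the \emph{maximum} calibration residual, which need not be $O(1)$. My first attempt will use a multiplicative perturbation instead. Since
\[
\frac{w(\tilde h_i)}{w(h_i)}\in\Bigl[1-\frac{L\epsilon\|h\|_\infty}{w_{\min}},\;1+\frac{L\epsilon\|h\|_\infty}{w_{\min}}\Bigr],
\]
every score is multiplied by a factor in this interval. Order statistics are monotone under such uniform multiplicative perturbations, so $\qhat$ changes by a relative factor $1+O(L\epsilon\|h\|_\infty/w_{\min})$. This avoids the maximum residual entirely and gives the stated rate, in relative terms, cleanly.
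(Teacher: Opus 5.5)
Your proposal is correct. Part~(1) matches the paper's argument: the paper notes that $\tilde h$ is $\cD_1$-measurable and invokes \Cref{thm:coverage}. Conditioning explicitly on the sketch randomness $\xi$, and noting that the sketch must not touch calibration data, is extra care the paper omits.

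In Part~(2) you share the paper's test-point step, namely $w(h)/w(\tilde h) = 1 + O(L\epsilon\|h\|_\infty/w_{\min})$ by Lipschitz continuity. You then handle the quantile differently. The paper assumes a density lower bound $f_S(Q_{1-\alpha}(S)) \geq c > 0$ and cites a population-level perturbation lemma, $|Q_{1-\alpha}(\tilde S) - Q_{1-\alpha}(S)| \leq \|F_S - F_{\tilde S}\|_\infty/c$. It never converts that CDF distance into a bound in $\epsilon$.

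Your multiplicative sandwich closes that step directly. Set $\delta = L\epsilon\|h\|_\infty/w_{\min}$; then $(1-\delta)V_i \le \tilde V_i \le (1+\delta)V_i$ for every calibration score. Monotonicity and positive homogeneity of order statistics then give $(1-\delta)\qhat_w \le \qhat_{\tilde w} \le (1+\delta)\qhat_w$. This holds deterministically for the actual empirical conformal quantile, with no density assumption and no large-$n_2$ argument, so it is more elementary than the paper's route.

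When you write it up, make these changes:
\begin{itemize}
\item Lead with the multiplicative argument and state the result in relative form. The half-width ratio lies in $[(1-\delta)/(1+\delta),\,(1+\delta)/(1-\delta)]$, hence is $1 + O(\delta)$ once $\delta \le 1/2$.
\item Drop the additive max-residual detour and the ``sharper route'' remark; as an additive bound, order-statistic perturbation is governed by the largest perturbation, not the one at the quantile index.
\item Drop the absolute bound with $w_{\min}^{-2}$. Folding $\qhat$ and $1/w_{\min}$ into constants would hide exactly the dependence the theorem states.
\item Say explicitly that the $\epsilon$-approximation must also hold at the test point, and that $\|h\|_\infty$ must include the test point's leverage.
\end{itemize}
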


Such approximations are computable in $O(n_1 p \log p)$ time via randomized projections \citep{drineas2012fast, clarkson2013low, woodruff2014sketching}; see \Cref{app:exp_approx,app:exp_approx_eps} for experiments.
\begin{proofsketch}
Part~(1): $\tilde{h}$ is computed from $\cD_1$ alone, so $\tilde{w}(x) = w(\tilde{h}(x))$ is $\cD_1$-measurable. The exchangeability of $\{(X_i, Y_i)\}_{i \in \cD_2 \cup \{n+1\}}$ conditional on $\cD_1$ is unaffected, and \Cref{thm:coverage} applies verbatim. Part~(2): By Lipschitz continuity, $|w(\tilde{h}) - w(h)| \leq L|\tilde{h} - h| \leq L\epsilon h$; the width perturbation follows from bounding $w(h)/w(\tilde{h})$. Full proof in \Cref{app:main_proofs}.
\end{proofsketch}

\subsection{Optimal Weight}

\begin{proposition}[Optimality of $w^*$]\label{prop:optimality}
Under \Cref{asm:linear}, among all $w$ achieving asymptotic conditional coverage for $F_h$-a.e.\ $h$, the variance-stabilized weight $w^*(h) = 1/\sqrt{g(h)}$ achieves minimum expected width.
\end{proposition}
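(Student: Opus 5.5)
The plan is to characterize every weight that attains asymptotic $h$-conditional coverage and show that all of them are equivalent to $w^*$ up to a constant factor, which leaves the interval unchanged.

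\textbf{Step 1: the limiting coverage of an arbitrary weight.} Fix a measurable $w>0$. I would take the score limits from the proof of \Cref{thm:efficiency}. By \Cref{asm:regularity}(a), the residuals satisfy $Y_i-\fhat(X_i)=\varepsilon_i+o_p(1)$. The scores $R_i^w=|\varepsilon_i|\,w(h_i)=\sigma\sqrt{g(h_i)}\,w(h_i)\,|\eta_i|$ then have a limiting distribution, namely the scale mixture of $\sigma|\eta|$ by the random factor $s_w(H):=\sqrt{g(H)}\,w(H)$ with $H\sim F_h$. Hence $\qhat_w\to q_w:=Q_{1-\alpha}(\sigma s_w(H)|\eta|)$. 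This step uses \Cref{asm:regularity}(b),(c) and continuity of the mixture cdf at its quantile. Conditional on $h(X_{n+1})=h$, the asymptotic coverage is therefore
\[
c_w(h)=F_{|\eta|}\!\left(\frac{q_w}{\sigma\, s_w(h)}\right).
\]

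\textbf{Step 2: conditional coverage forces $s_w$ to be constant.} Requiring $c_w(h)=1-\alpha$ for $F_h$-a.e.\ $h$ gives $q_w/(\sigma s_w(h))=q_\alpha:=F_{|\eta|}^{-1}(1-\alpha)$. I expect this step to be the main obstacle. It needs $F_{|\eta|}$ to be strictly increasing near $q_\alpha$, so that the quantile is unique; I would add this as a mild condition, or argue it from continuity plus a positive density at $q_\alpha$. Granting it, $s_w(h)=q_w/(\sigma q_\alpha)$ is a constant $c$ for $F_h$-a.e.\ $h$. In other words, $w=c\,w^*$ almost everywhere.

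\textbf{Step 3: constant multiples give the same interval.} For $w=cw^*$, we have $V_w=cV_{w^*}$, so $\qhat_w=c\,\qhat_{w^*}$. The half-width is then $\qhat_w/w(h)=\qhat_{w^*}/w^*(h)$, identical to the $w^*$ interval. This holds exactly in finite samples, except on an $F_h$-null set, which does not affect expected width. So every admissible $w$ has expected width asymptotically equal to that of $w^*$, namely
\[
\E\bigl[2\sigma q_\alpha\sqrt{g(H)}\bigr].
\]
Therefore $w^*$ attains the minimum, and this minimum is unique up to scaling.

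\textbf{Step 4 (optional): a stronger sense of optimality.} As a remark, I would note that any interval with $h$-conditional coverage $1-\alpha$ must contain the conditional $(1-\alpha)$-quantile of $|\varepsilon|$ at that $h$. This follows from the scale family and symmetric intervals centered at $\fhat$. The required half-width at $h$ is at least $\sigma q_\alpha\sqrt{g(h)}$, and $w^*$ attains it pointwise. So the optimality holds even among all symmetric $h$-adaptive intervals, not just among LWCP weights. If $\eta$ is asymmetric, the restriction to symmetric intervals is the natural comparison class, matching \Cref{def:lwcp_interval}.
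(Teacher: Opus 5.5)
Your proposal is correct and follows the paper's route. The paper's proof just observes that matching the half-width $\qhat_w/w(h)$ to the conditional quantile $\sigma\sqrt{g(h)}\,Q_{1-\alpha}(|\eta|)$ for $F_h$-a.e.\ $h$ forces $w\propto w^*$. Your Steps 1--3 spell out what the paper leaves tacit: the limiting mixture quantile, strict monotonicity of $F_{|\eta|}$ at $q_\alpha$, and the scale invariance $\qhat_{cw}/(c\,w(h))=\qhat_w/w(h)$ that turns proportionality into equal width. Your Step 4 is a harmless strengthening beyond what the paper claims.
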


Proof in \Cref{app:main_proofs}: matching the LWCP half-width $\qhat_w/w(h)$ to the conditional quantile $\sigma\sqrt{g(h)} Q_{1-\alpha}(|\eta|)$ for all $h$ forces $w(h) \propto 1/\sqrt{g(h)} = w^*(h)$.

\subsection{Robustness and Extensions}\label{sec:robustness}

The preceding theory assumes the scale-family model (\Cref{asm:linear}). We now characterize LWCP under misspecification and extend to high-dimensional settings.

\begin{theorem}[LWCP under general heteroscedasticity]\label{thm:misspec}
Let $\Var(\varepsilon|X{=}x) = \sigma^2(x)$ be arbitrary (not necessarily leverage-dependent). Define $\sigma_h^2(h) := \E[\sigma^2(X) \mid h(X) = h]$ and $\eta_{\rm lev} := \Var(\sigma_h^2(H)) / \Var(\sigma^2(X)) \in [0,1]$, measuring the fraction of variance heterogeneity explained by leverage. Then:
\begin{enumerate}
\item \textbf{Marginal validity is preserved} regardless of misspecification.
\item \textbf{Gap comparison:}
$\mathrm{gap}_{\rm LWCP}/\mathrm{gap}_{\rm Vanilla} \leq \sqrt{1 - \eta_{\rm lev}} + O(1/\sqrt{n})$.
\end{enumerate}
When $\eta_{\rm lev} \approx 1$ (heteroscedasticity is leverage-driven), LWCP nearly eliminates the gap. When $\eta_{\rm lev} \approx 0$ (heteroscedasticity is orthogonal to leverage), LWCP reduces to vanilla CP. In both cases, it never hurts marginal validity; under approximate scale families, explicit bounds are in \Cref{thm:approx_scale}. Full gap decomposition in \Cref{app:misspecification}.
\end{theorem}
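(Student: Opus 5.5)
Part~1 follows directly from \Cref{thm:coverage}. I will note that whatever $w$ the practitioner chooses is a fixed (or $\cD_1$-measurable) function. Its argument $h(\cdot)$ is built from $\bX_1$ alone. Hence the scores $R_i = |Y_i - \fhat(X_i)|\,w(h(X_i))$ for $i \in \cD_2\cup\{n+1\}$ remain exchangeable conditional on $\cD_1$ no matter what $\sigma^2(x)$ is. The rank-uniformity argument then applies verbatim and gives $\geq 1-\alpha$.

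For Part~2 I will first fix what ``gap'' means, so that the ratio is well defined. My choice is the $L^2$ dispersion of conditional coverage around $1-\alpha$ over $X$, in the asymptotic regime of \Cref{asm:regularity}, where $\fhat \to$ the population regression function and $\qhat$ converges to a deterministic limit. For a score with normalizer $s(x)$, conditional coverage is $\Phi_x(q\,s(x)/\sigma(x))$, where $\Phi_x$ is the conditional CDF of $|\varepsilon|/\sigma(X)$ at $X=x$. I will linearize around the common limit of the normalized scale, assuming (as in \Cref{app:approx_scale}) an approximate scale family so that $\Phi_x \approx F_{|\eta|}$. This gives
\[
\mathrm{cov}(x) - (1-\alpha) \approx c_\alpha \cdot \log\frac{s(x)}{\sigma(x)} + \text{const},
\]
and to second order the gap becomes proportional to the standard deviation of $\sigma^2(X)/s^2(X)$, up to a common constant $c_\alpha = q_\alpha f_{|\eta|}(q_\alpha)$. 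Vanilla CP has $s\equiv 1$, so its gap is $\propto \mathrm{sd}(\sigma^2(X))/\bar\sigma^2$. LWCP with the leverage-matched weight has $s^2(x) = \sigma_h^2(h(x))$ (i.e.\ $w = 1/\sigma_h$, the misspecified analogue of $w^*$). Its residual heterogeneity is then $\sigma^2(X) - \sigma_h^2(H)$ after normalization.

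The core step is the law of total variance:
\[
\Var(\sigma^2(X)) = \Var(\sigma_h^2(H)) + \E\bigl[\Var(\sigma^2(X)\mid H)\bigr].
\]
This gives $\E[(\sigma^2(X) - \sigma_h^2(H))^2] = (1-\eta_{\rm lev})\Var(\sigma^2(X))$. Taking square roots and dividing by the vanilla gap yields the factor $\sqrt{1-\eta_{\rm lev}}$. The normalization by $\sigma_h^2(H)$ instead of a constant changes things only at second order. I will absorb it by working with centered deviations about the mean scale, which is where the linearization is valid. Finite-sample effects enter through $\qhat$ (DKW, $O(1/\sqrt{n_2})$) and $\hat\beta$ ($O_p(\sqrt{p/n_1})$, as in \Cref{prop:cond_gap}). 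Both perturb each conditional coverage by a common $O(1/\sqrt{n})$ and so contribute additively to the ratio.

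The main obstacle is that the linearization only controls the gap to first order in the relative heterogeneity. I will therefore state the bound under the small-heterogeneity regime, or else define the gap directly as the $L^2$ deviation of normalized variances, in which case the result is exact. Two checks then pin down the stated behavior. When $\eta_{\rm lev}=0$, $\sigma_h$ is constant, so $w$ is constant and LWCP coincides with vanilla CP. When $\eta_{\rm lev}=1$, $\sigma^2(X)=\sigma_h^2(H)$ a.s., and \Cref{thm:efficiency} gives a vanishing gap.
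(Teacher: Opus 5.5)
Your argument follows the paper's at its core. Part~1 is identical, and Part~2 uses the same decomposition $\sigma^2=\sigma_h^2\psi^2$, the same leverage-matched weight $w=1/\sigma_h$, and the law of total variance to produce $\sqrt{1-\eta_{\mathrm{lev}}}$. The difference is the notion of gap. The paper bounds the LWCP gap by $\sup_h\mathrm{CV}(\psi\mid h)$ via \Cref{lem:quantile_sensitivity} and divides by $\mathrm{CV}(\sigma(X))$. You take the $L^2$ dispersion of $x$-conditional coverage after linearization. Your choice is the more coherent one. The law of total variance controls the average $\E[\Var(\sigma^2(X)\mid H)]$, not a supremum over $h$, so the paper's passage from $\sup_h\mathrm{CV}(\psi\mid h)$ to $\sqrt{1-\eta_{\mathrm{lev}}}$ is unjustified, while your averaged version avoids that step. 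Both arguments also need the approximate scale family you invoke, since the hypotheses fix only $\Var(\varepsilon\mid X)$.

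\textbf{Where your argument fails.} You claim the result is exact if the gap is defined as the $L^2$ deviation of normalized variances, and that dividing by $\sigma_h^2(H)$ instead of a constant matters only at second order. Neither holds.
\begin{itemize}
\item LWCP divides, so its residual is $\psi^2-1=(\sigma^2-\sigma_h^2)/\sigma_h^2(H)$. Hence $\E[(\psi^2-1)^2]=\E[(\sigma^2-\sigma_h^2)^2/\sigma_h^4(H)]$.
\item The law of total variance controls only the unweighted $\E[(\sigma^2-\sigma_h^2)^2]$. The two agree only if the LWCP gap is defined subtractively, which no longer describes the procedure.
\item The mismatch is a relative $1+O(\epsilon)$ factor on the ratio, where $\epsilon$ is the relative spread of $\sigma_h^2$. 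It is therefore an additive error that does not shrink with $n$, and it can have either sign.
\end{itemize}

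\textbf{Counterexample.} Let $H$ take two values with probability $1/2$ each. Set $\sigma^2\equiv1+a$ on the first value, and $\sigma^2=(1-a)(1\pm b)$ with equal probability on the second. Then:
\begin{itemize}
\item $\E[\sigma^2]=1$ and $\mathrm{sd}(\psi^2)=b/\sqrt2$;
\item $\Var(\sigma^2)=a^2+(1-a)^2b^2/2$;
\item $1-\eta_{\mathrm{lev}}=\tfrac{(1-a)^2b^2/2}{a^2+(1-a)^2b^2/2}$.
\end{itemize}
So $\mathrm{sd}(\psi^2)/\mathrm{sd}(\sigma^2)=\sqrt{1-\eta_{\mathrm{lev}}}/(1-a)$, which exceeds $\sqrt{1-\eta_{\mathrm{lev}}}$ for every $n$.

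\textbf{What your argument actually delivers.} You get $\sqrt{1-\eta_{\mathrm{lev}}}\,(1+O(\epsilon))+O(1/\sqrt n)$. Moreover, the $O(1/\sqrt n)$ perturbations enter the ratio divided by $\mathrm{gap}_{\mathrm{Vanilla}}=O(\epsilon)$. You should state this as an explicit modification of the theorem rather than absorbing it into the linearization.

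\textbf{How to get an exact first-order identity.} Work on the log scale, where your linearization already lives. Take $w(h)=\exp(-\tfrac12\E[\log\sigma^2(X)\mid h(X)=h])$ and define $\eta$ as $\Var(\E[\log\sigma^2(X)\mid H])/\Var(\log\sigma^2(X))$. The residual $\log\sigma^2(X)-\E[\log\sigma^2(X)\mid H]$ is then conditionally centered, and the linearized gap ratio is exactly $\sqrt{1-\eta}$.
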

\begin{proofsketch}
Part~(i) is \Cref{thm:coverage}. For~(ii), decompose $\sigma^2(x) = \sigma_h^2(h(x)) \cdot \psi^2(x)$ where $\E[\psi^2 | h] = 1$. Under LWCP with $w(h) = 1/\sigma_h(h)$, the residual score heterogeneity is $\mathrm{CV}(\psi|h)$---the component \emph{orthogonal} to leverage. By the law of total variance, $\Var(\sigma^2(X)) = \Var(\sigma_h^2(H)) + \E[\Var(\sigma^2(X)|H)]$, giving the stated ratio via $\eta_{\rm lev}$. Quantile sensitivity (\Cref{lem:quantile_sensitivity}) converts CV to gap. Proof in \Cref{app:misspecification}.
\end{proofsketch}

\begin{theorem}[Ridge-LWCP]\label{thm:ridge_lwcp}
When $p \geq n_1$ or $\bX_1$ is ill-conditioned, ridge leverage $h^\lambda(x) = x^\top(\bX_1^\top\bX_1 + \lambda\Id)^{-1}x$ with $\lambda > 0$ satisfies: (1)~exact marginal coverage for any $w$ and $\lambda$; (2)~well-posedness: $h^\lambda(x) \in [0, \|x\|^2/\lambda]$ is always finite; (3)~under ridge regression, $\Var(Y_{\text{new}} - \hat{f}_\lambda(x) \mid x) = \sigma^2(1 + h^\lambda(x)) + \lambda^2\|b_\lambda(x)\|^2$, where $b_\lambda$ is the bias. Proof in \Cref{app:extensions}.
\end{theorem}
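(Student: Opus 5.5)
The three parts of \Cref{thm:ridge_lwcp} need different tools, so I will prove them separately.

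\textbf{Part (1): exact marginal coverage.} This will be a reduction to \Cref{thm:coverage}, in the same way as Part~(1) of \Cref{thm:approx}. The map $x \mapsto h^\lambda(x)$ depends only on $\bX_1$ and the fixed $\lambda$. The map $x \mapsto w(h^\lambda(x))$ is then a measurable function that is fixed once $\cD_1$ is given. The ridge predictor $\fhat_\lambda$ is also trained on $\cD_1$ alone. Conditional on $\cD_1$, the scores $|Y_i - \fhat_\lambda(X_i)|\, w(h^\lambda(X_i))$ for $i \in \cD_2 \cup \{n+1\}$ are a fixed function applied to exchangeable pairs, so they are exchangeable. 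The rank-uniformity argument of \Cref{thm:coverage} then gives the lower bound $1-\alpha$. If scores are a.s.\ distinct, it also gives the upper bound $1-\alpha+1/(n_2+1)$. Nothing here requires $\bX_1^\top\bX_1$ to be invertible, and this is exactly why the result covers the case $p \ge n_1$.

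\textbf{Part (2): well-posedness.} Let $A = \bX_1^\top\bX_1 \succeq 0$. Then $A + \lambda\Id \succeq \lambda\Id \succ 0$, so the inverse exists and satisfies $0 \prec (A+\lambda\Id)^{-1} \preceq \lambda^{-1}\Id$. Evaluating the quadratic form at $x$ gives $0 \le h^\lambda(x) \le \|x\|^2/\lambda$. For a sharper statement, I will write the SVD $\bX_1 = \bU_1\bSigma_1\bV_1^\top$ and obtain
$$h^\lambda(x) = \sum_j \frac{(v_j^\top x)^2}{s_j^2+\lambda} + \frac{\|P_\perp x\|^2}{\lambda},$$
where $P_\perp$ is the projection onto the null space of $\bX_1$. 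This expression also gives computability at the same cost as in \Cref{alg:lwcp}.

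\textbf{Part (3): variance decomposition.} I will use the ridge estimator $\hat\beta_\lambda = (A+\lambda\Id)^{-1}\bX_1^\top Y_1$ and set $M = (A+\lambda\Id)^{-1}$. Under homoscedastic errors,
$$\hat\beta_\lambda - \beta^* = -\lambda M\beta^* + M\bX_1^\top\beps_1.$$
The prediction error is
$$Y_{\rm new} - x^\top\hat\beta_\lambda = \varepsilon_{\rm new} + \lambda x^\top M\beta^* - x^\top M\bX_1^\top\beps_1.$$
Here $\varepsilon_{\rm new}$ is independent of $\beps_1$, so its variance $\sigma^2$ adds to the others. The estimation-noise term has variance $\sigma^2 x^\top MAMx$.

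\textbf{Main obstacle.} The stated formula has $\sigma^2(1+h^\lambda(x))$ and not $\sigma^2(1 + x^\top MAMx)$, and the two are not equal. Writing $MAM = M - \lambda M^2$ gives the exact identity
$$x^\top MAMx = h^\lambda(x) - \lambda\|Mx\|^2,$$
so the noise term is bounded above by $\sigma^2 h^\lambda(x)$. The remaining task is to read the bias term consistently with the statement.
\begin{itemize}
\item If "$\Var$" is understood as mean squared error conditional on $x$, the deterministic bias enters as $(\lambda x^\top M\beta^*)^2$.
\item Setting $b_\lambda(x) := (x^\top M\beta^*)\,e$ for a unit vector $e$, or more naturally absorbing this term into a definition of $b_\lambda$, produces the $\lambda^2\|b_\lambda(x)\|^2$ term.
\item The correction $-\sigma^2\lambda\|Mx\|^2$ must then either be absorbed into the bias term by defining $b_\lambda$ to include it, or the identity must be stated as "$\le$".
\end{itemize}
I would make this choice explicit and prove the version
$$\E\bigl[(Y_{\rm new}-\fhat_\lambda(x))^2 \mid x\bigr] = \sigma^2\bigl(1+h^\lambda(x)\bigr) + \lambda^2\|b_\lambda(x)\|^2,$$
with $\lambda^2\|b_\lambda(x)\|^2 := \lambda^2(x^\top M\beta^*)^2 - \sigma^2\lambda\|Mx\|^2$, which is an exact equality. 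I would also note that the inequality $\le \sigma^2(1+h^\lambda) + \lambda^2(x^\top M\beta^*)^2$ holds in general and is the form that matters for weighting.
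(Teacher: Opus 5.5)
Your Parts (1) and (2) match the paper's proof: $\cD_1$-measurability of $w(h^\lambda(\cdot))$ followed by \Cref{thm:coverage}, and the bound $\|(\bX_1^\top\bX_1+\lambda\Id)^{-1}\|\le 1/\lambda$. Your SVD formula with the null-space term $\|P_\perp x\|^2/\lambda$ is a correct refinement that the paper does not need.

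For Part (3) you use the same decomposition as the paper, and the obstacle you identify is genuine. Write $A=\bX_1^\top\bX_1$ and $M=(A+\lambda\Id)^{-1}$. The estimation-noise variance is $\sigma^2x^\top MAMx$, which is exactly the paper's $h^{\rm eff}_\lambda(x)$. The paper's own proof ends with $\sigma^2(1+h^{\rm eff}_\lambda(x))+b_\lambda(x)^2$, where $b_\lambda(x)=\lambda x^\top M\beta^*$, and never reconciles this with the $h^\lambda$ in the statement. Your identity $MAM=M-\lambda M^2$ shows the two differ by $\sigma^2\lambda\|Mx\|^2$. This is strictly positive for $x\neq 0$ and $\sigma>0$, since $M\succ 0$. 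So the displayed equality in Part (3) is false as printed, and no argument can establish it.

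The flaw is your proposed repair. Setting $\lambda^2\|b_\lambda(x)\|^2:=\lambda^2(x^\top M\beta^*)^2-\sigma^2\lambda\|Mx\|^2$ does not define the squared norm of anything. It is negative whenever $x^\top M\beta^*=0$ and $x\neq 0$ (for instance $\beta^*=0$), and it is no longer the bias. Your ``exact equality'' therefore relabels the discrepancy rather than proving anything; drop it.

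Instead, state what the computation actually gives. Condition on $\bX_1$ as well as $x$ (otherwise $h^\lambda(x)$ is itself random). Call the left-hand side a mean squared error: given $\bX_1$ the bias is deterministic, so it does not enter a conditional variance. Then
\[
\E\bigl[(Y_{\rm new}-\fhat_\lambda(x))^2\mid x,\bX_1\bigr]=\sigma^2\bigl(1+h^{\rm eff}_\lambda(x)\bigr)+\lambda^2(x^\top M\beta^*)^2\le\sigma^2\bigl(1+h^\lambda(x)\bigr)+\lambda^2(x^\top M\beta^*)^2 .
\]
So Part (3) holds with $h^\lambda$ replaced by $h^{\rm eff}_\lambda$, or as an inequality with $h^\lambda$. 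This is also what the paper's derivation actually establishes.

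On the factor $\lambda^2$: the statement's $\lambda^2\|b_\lambda(x)\|^2$ is consistent with your scalar convention $b_\lambda(x)=x^\top M\beta^*$, and no unit vector $e$ is needed. The paper's proof instead puts the $\lambda$ inside $b_\lambda$ and writes $b_\lambda(x)^2$.
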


The following proposition completes the robustness picture by characterizing the worst case for LWCP.

\begin{proposition}[Adversarial worst case]\label{prop:adv}
For ``anti-aligned'' heteroscedasticity $g_{\rm adv}(h) = c/(1{+}h)$:
\begin{enumerate}[nosep]
\item Marginal coverage is preserved exactly.
\item LWCP has \emph{worse} conditional coverage than vanilla CP since total prediction variance is approximately constant.
\item General degradation bound: $\mathrm{gap}_{\rm LWCP}^{w} - \mathrm{gap}_{\rm LWCP}^{w^*} \leq C_\alpha \cdot \mathrm{CV}^2\!\bigl(\!\sqrt{(1{+}H)/g(H)}\bigr)$. For $g_{\rm adv}$, this is $O(p/n_1^2)$, making the degradation negligible for moderate $p/n_1$.
\end{enumerate}
\end{proposition}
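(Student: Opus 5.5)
Part~1 follows directly from \Cref{thm:coverage}. The weight $w(h) = (1+h)^{-1/2}$, like any weight, is a fixed measurable function of the $\cD_1$-measurable leverage $h(\cdot)$. Rank uniformity therefore applies whatever $g_{\rm adv}$ is. No further work is needed beyond noting that \Cref{asm:linear} requires $g$ increasing. Since $g_{\rm adv}$ is decreasing, we use only the exchangeability part of the argument.

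For Part~2, I would work in the asymptotic regime of \Cref{thm:efficiency} and \Cref{thm:persistent_gap}. Both methods' quantiles converge to population scale-mixture quantiles, and conditional coverage at leverage $h$ takes the form $F_{|\eta|}(q\, s(h))$ for an explicit scale ratio $s(h)$.

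\emph{Vanilla CP.} The key step is to use the full prediction-error variance, $\Var(Y_{\rm new} - \fhat(x) \mid x) = \sigma^2(g(h) + O(h))$. The OLS estimation term contributes approximately $\sigma^2 h \cdot \bar g$. With $g_{\rm adv}(h) = c/(1+h)$ and the scaling $c$ chosen accordingly, the total is approximately constant in $h$. The vanilla residual distribution is then nearly homogeneous, and its conditional coverage gap is near zero.

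\emph{LWCP.} Its normalized scores $|e|\sqrt{1+h}$ have conditional scale $\sqrt{(1+h)\,g_{\rm adv}(h)}\cdot(\ldots)$. Their ratio to the constant total scale is proportional to $\sqrt{1+h}$, which is non-constant. Applying the argument of \Cref{thm:persistent_gap}, with $\sqrt{g}$ replaced by this ratio, gives a strictly positive $\Theta(h_{\max}-h_{\min})$ gap. This exceeds vanilla's vanishing gap.

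For Part~3, I would reuse the Taylor expansion from \Cref{thm:width_finite}. The LWCP score with weight $w$ equals $w^*$-normalized scores multiplied by a residual scale factor $G_w = w(H)/w^*(H) = \sqrt{(1+H)/g(H)}$, which is a pure scale mixture of the iid variable $\sigma|\eta|$. Write $G_w = \mu(1+\delta)$. Expanding the mixture quantile and the conditional coverage $F_{|\eta|}(q_\alpha(1+\delta)^{-1}\cdot(\ldots))$ to second order in $\delta$ makes the first-order terms vanish in expectation, leaving $C_\alpha\Var(\delta) = C_\alpha\,\mathrm{CV}^2(G_w)$. The $w^*$ gap corresponds to $\delta \equiv 0$, so the difference is bounded by this term plus $O(\rho^3)$, absorbed into the constant. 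For $g_{\rm adv}$ we have $G_w \propto 1+H$, so $\mathrm{CV}^2 = \Var(H)/(1+\E H)^2$. Under standardized design, $H \approx \chi^2_p/n_1$, which gives $\Var(H) \approx 2p/n_1^2 = O(p/n_1^2)$.

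The main obstacle is Part~2's claim that the total variance is ``approximately constant''. This requires making the estimation-variance term $\sigma^2 h$ precise, since it is the same order as the variation in $g_{\rm adv}$. I would first state the result with $c$ normalized so that $g_{\rm adv}(h) + h\bar g \approx$ const to first order in $h$, and then interpret ``worse'' asymptotically in the small-$h$ regime. A secondary difficulty is that Part~3's bound is stated in terms of gap differences with a sign convention; I would take absolute conditional-coverage deviations averaged over $F_h$ so that the second-order term is the leading one.
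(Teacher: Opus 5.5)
\textbf{Overall route.} Your three-part structure matches the paper's proof. Part~1 follows from \Cref{thm:coverage}. Part~2 uses the fact that the total prediction variance is (nearly) flat in $h$, so vanilla CP is already calibrated and LWCP's non-constant reweighting hurts. Part~3 reduces to $\mathrm{CV}^2(w^*/w)=\mathrm{CV}^2(1{+}H)=\Var(H)/(1{+}\E H)^2=O(p/n_1^2)$, exactly the paper's computation.

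In Part~2 you use the additive decomposition of \Cref{prop:variance_decomp}, $\sigma^2(g(h)+\bar g h)$. The paper instead uses the multiplicative heuristic $\sigma^2 g(h)(1{+}h)=\sigma^2 c$, which makes constancy exact. Your version is more faithful and correctly gives constancy only to first order in $h$. No tuning of $c$ is needed: $\bar g=c\,\E[(1{+}H)^{-1}]$, so $c$ factors out.

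Two slips need fixing. First, the LWCP score is $|e|(1{+}h)^{-1/2}$, not $|e|\sqrt{1{+}h}$. The scale you attach to it, $\sqrt{(1{+}h)g_{\rm adv}(h)}=\sqrt c$, is constant and would make LWCP calibrated. The correct scale is $\approx\sigma\sqrt c/\sqrt{1{+}h}$, which yields the half-width-to-truth ratio $\propto\sqrt{1{+}h}$ that you then state. Second, $w/w^*=\sqrt{g/(1{+}h)}$ is the reciprocal of what you wrote; this is harmless for $\mathrm{CV}^2$ at leading order.

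\textbf{The genuine gap: your justification of Part~3.} The cancellation of first-order terms in \Cref{thm:width_finite} concerns the mixture \emph{quantile} (hence the width), where one integrates over $H$. Conditional coverage at fixed $h$ is $F_{|\eta|}\bigl(q_{\rm mix}/(\sigma\mu(1{+}\delta(h)))\bigr)$. Its deviation from $1-\alpha$ is $-f_{|\eta|}(q_\alpha)q_\alpha\delta(h)+O(\delta^2)$, which is first order.

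Your proposed patch does not rescue this. The $F_h$-average of signed deviations is exactly $0$, because the conformal quantile is marginally calibrated. The average of absolute deviations is $f_{|\eta|}(q_\alpha)q_\alpha\E|\delta|$ to leading order, again first order.

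So in the model your Part~3 uses ($w^*$-scores iid, $\mathrm{gap}^{w^*}\to0$), a decile-type $\mathrm{gap}^w-\mathrm{gap}^{w^*}$ is of order $\mathrm{CV}(1{+}H)\approx\sqrt{2p}/n_1$. That is the square root of the claimed rate. This is consistent with your own Part~2 finding that LWCP trails the near-optimal rule by $\Theta(h_{\max}-h_{\min})$. Moreover, since $C_\alpha<0$ for Gaussian $\eta$ at $\alpha=0.1$, the displayed bound would force the difference to be negative.

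The paper's proof does not supply this step either; it simply asserts that the degradation equals $\mathrm{CV}^2(w^*/w)$. A provable second-order statement needs a different quantity. One option is the width ratio. Another is a squared-deviation (MSCE-type) gap, whose constant is $f_{|\eta|}(q_\alpha)^2q_\alpha^2$ rather than $C_\alpha$.

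\textbf{Incompatible models.} Parts~2 and~3 rest on different noise models. In Part~2 the estimation term makes $|e|$ nearly homoscedastic. Under that model the $w^*$-scores $|e|\sqrt{(1{+}h)/c}$ have scale $\propto\sqrt{1{+}h}$, so they are not iid $\sigma|\eta|$ as Part~3 assumes. The paper has the same tension between its (ii) and (iii), but a proof should commit to one model for both parts.
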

\begin{proofsketch}
Part~(i) is \Cref{thm:coverage}. Under $g_{\rm adv}$, the optimal weight is $w^*(h) = \sqrt{(1{+}h)/c}$, which \emph{increases} with $h$---opposite to LWCP's $(1{+}h)^{-1/2}$. The degradation is $\mathrm{CV}^2(w^*/w) = \mathrm{CV}^2(1{+}H) = O(p/n_1^2)$ by concentration of Marchenko--Pastur leverage moments. Proof in \Cref{app:adversarial}.
\end{proofsketch}

This adversarial scenario is highly unnatural---noise must \emph{decrease} for extreme points---and the $\hat\eta$ diagnostic (\Cref{rem:diagnostic}) detects it. Our experiment confirms LWCP's gap (2.0pp) $\approx$ vanilla's (2.1pp) even here (\Cref{tab:marginal_width}). More fundamentally, the default weight has bounded downside:

\begin{proposition}[Safe default]\label{prop:safe_default_main}
For \emph{any} predictor $\fhat$, \emph{any} distribution $P$, and $w(h) = (1{+}h)^{-1/2}$:
\begin{enumerate}[nosep]
\item Marginal coverage $\geq 1-\alpha$ holds exactly (\Cref{thm:coverage}).
\item Expected width: $\E[|\hat\cC^w|] / \E[|\hat\cC^1|] = 1 + O(\mathrm{CV}^2(H))$, i.e., within $O(1/n_1)$ of vanilla CP.
\item Worst-case gap increase: $\mathrm{gap}^w - \mathrm{gap}^1 \leq C_\alpha \cdot \mathrm{CV}^2(H) = O(\gamma(1{-}\gamma)/n_1)$. At $\gamma{=}0.1$, $n_1{=}300$: ${<}0.01$pp.
\end{enumerate}
The downside is $O(1/n_1)$; the upside is $\Theta(1)$ when heteroscedasticity aligns with leverage.
\end{proposition}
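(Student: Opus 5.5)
Part~1 is immediate: $w(h)=(1{+}h)^{-1/2}$ is a fixed measurable map into $\R_+$, and $h(\cdot)$ depends only on $\cD_1$. \Cref{thm:coverage} therefore applies verbatim, with no assumption on $\fhat$ or $P$.

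For Part~2, I write the LWCP interval width as $2\qhat_w\sqrt{1+h(x)}$ and the vanilla width as $2\qhat_1$. Both quantiles are computed from the same calibration residuals $E_i=|Y_i-\fhat(X_i)|$: the vanilla one is the $(1{-}\alpha)$ quantile of $E_i$, and the LWCP one is the $(1{-}\alpha)$ quantile of $E_i/\sqrt{1+h_i}$. I set $G=\sqrt{1+H}=\mu_G(1+\delta)$, so that $\mathrm{CV}^2(G)=\Var(\delta)$. Linearizing, $\mathrm{CV}^2(G)=\tfrac14\mathrm{CV}^2(1+H)(1+o(1))$, which is $O(\mathrm{CV}^2(H))$ because $H$ is small. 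The width ratio becomes
\[
\frac{\E[G]\,Q_{1-\alpha}(E/G)}{Q_{1-\alpha}(E)},
\]
up to an $O(1/\sqrt{n_2})$ DKW term.

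The key step is a two-sided sandwich that avoids any structural assumption on $E$. Since $G\in[\mu_G(1-\|\delta\|_\infty),\mu_G(1+\|\delta\|_\infty)]$, the quantile of $E/G$ lies between $Q(E)/(\mu_G(1\pm\|\delta\|_\infty))$. This gives a ratio of $1+O(\|\delta\|_\infty)$ unconditionally. To sharpen this to second order, I reuse the Taylor expansion from the proof of \Cref{thm:width_finite}. There the first-order term in $\delta$ cancels because $\E[\delta]=0$, so only a $C_\alpha\Var(\delta)$ term survives. A distribution-free version requires a quantile sensitivity bound, which I would take from \Cref{lem:quantile_sensitivity}: the quantile of $E\cdot(1+\delta)^{-1}$ differs from $Q(E)$ by $-\E[\delta]Q(E)+O(\E\delta^2)$, with a constant depending on the density ratio at the quantile. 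Finally, I bound $\mathrm{CV}^2(H)$ using the leverage moments from \Cref{app:high_dim} (Marchenko--Pastur): $\E H\approx p/n_1$ and $\Var H=O(\gamma(1-\gamma)/n_1^2)$. This makes $\mathrm{CV}^2(H)=O((1-\gamma)/(\gamma n_1))$, i.e.\ $O(1/n_1)$ for fixed $\gamma$.

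For Part~3, I write each method's conditional coverage at $h$ as $F_{E|h}$ evaluated at its half-width. The difference between the two half-widths, $\qhat_w\sqrt{1+h}-\qhat_1$, is controlled by the fluctuation of $\sqrt{1+h}/\E\sqrt{1+H}$ around $1$ together with the quantile shift bounded in Part~2. Taking the worst case over $h$ and applying \Cref{lem:quantile_sensitivity} once more, the increase in the gap is at most $C_\alpha\,\mathrm{CV}^2$ of the ratio of the two weights. Since $w^*/w$ for a constant-variance target is $\sqrt{1+H}$, this is the same computation as in \Cref{prop:adv}(3). Substituting the moment bounds gives the $O(\gamma(1-\gamma)/n_1)$ rate and the numerical check at $\gamma=0.1$, $n_1=300$.

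The main obstacle is that the statement holds for \emph{any} $P$ and $\fhat$, while the second-order expansion needs a density for $E$ near its quantile. For Parts~2 and~3, I would therefore state the $O(\cdot)$ terms as holding under continuity of $F_{E\mid h}$ at the quantile, with constants depending on that density. I expect this is how the paper's $C_\alpha$ should be read. If no such regularity is available, I would fall back on the first-order sandwich bound $1+O(\mathrm{CV}(H))$.
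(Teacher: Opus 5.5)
Part~1 is fine. The rest of your outline follows the paper's own route: \Cref{thm:coverage}; the expansion behind \Cref{thm:width_finite}, which the paper simply asserts ``applies to any score distribution''; Marchenko--Pastur moments; and $\mathrm{CV}^2$ of the weight ratio as in \Cref{prop:adv}. However, the step that gives the second-order rate in Part~3 fails, and the paper's proof does not repair it.

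The cancellation in \Cref{lem:quantile_sensitivity} comes from \emph{averaging} a mean-zero perturbation over its law. That works for the marginal calibration quantile. The gap, though, is defined pointwise in $h$, and at fixed $h$ the relative width change $\delta(h)=\sqrt{1+h}/\E\sqrt{1+H}-1$ is just a number. In your own decomposition the LWCP half-width at $h$ tends to $q_w(1+\delta(h))$, with $q_w=Q_{1-\alpha}(E/(1+\delta(H)))$. So $\mathrm{cov}^w(h)-\mathrm{cov}^1(h)\approx f_{E\mid h}(q)\,[(q_w-q)+q\,\delta(h)]$, and only the first term is second order.

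Concretely, suppose the law of $|Y-\fhat(X)|$ does not depend on $X$. Examples are $\fhat$ equal to the regression function with noise independent of $X$, or the constant-total-variance setting of \Cref{prop:adv}(2). Then vanilla CP is conditionally calibrated. LWCP, by contrast, has conditional coverage $(1-\alpha)+f_E(q)\,q\,\delta(h)+O(\rho^2)$ at leverage $h$; this is \Cref{thm:persistent_gap} with the roles reversed. Hence $\mathrm{gap}^w-\mathrm{gap}^1$ is of the order of the spread of $\delta(h)$, i.e.\ $\Theta(\rho)$ with $\rho=\mathrm{CV}(\sqrt{1+H})$. That is $\Theta(n_1^{-1/2})$ in the proportional regime, not $O(1/n_1)$.

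For Gaussian noise at $\gamma=0.1$, $n_1=300$ (using the paper's $\rho^2\approx1.7\times10^{-4}$), this is about $0.4$pp per standard deviation of $\delta(h)$. So the ${<}0.01$pp check cannot be reproduced. Invoking \Cref{prop:adv}(3) does not help, since it relies on the same pointwise use of the lemma.

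For Part~2 the obstruction is not the one you name either. \Cref{lem:quantile_sensitivity} needs $\E[\delta\mid Z]=0$ with $Z=E$, not merely $\E[\delta]=0$. For general $P$ the residual depends on leverage (that is the premise of the method), and the correct expansion is $F_{E/(1+\delta)}(t)=F_E(t)+t\,f_E(t)\,\E[\delta\mid E=t]+O(\rho^2)$. In a scale family a Bayes expansion gives $\E[\delta\mid E=q]=O(\Var\delta)$, which is why \Cref{thm:width_finite} is second order.

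Now suppose instead that the conditional law of $E$ changes by $O(1)$ across the narrow leverage range, say the noise scale switches at the median of $h(X)$. Then $\E[\delta\mid E=q]=\Theta(\rho)$ and the width ratio is $1+\Theta(\rho)$; for Gaussian noise at $\alpha=0.1$ it is above $1$ in the anti-aligned case. All conditional laws in this example have smooth densities, so continuity of $F_{E\mid h}$ does not rescue the claim. What is needed is a smooth-dependence condition, e.g.\ $\partial_h\log f_{E\mid H=h}(q)$ bounded on the leverage range, and ``any $P$'' does not grant this.

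Your first-order fallback is the correct distribution-free statement, with one caveat. The sandwich yields $1+O(\|\delta\|_\infty)$, which is infinite for Gaussian designs because test leverage is unbounded. Density bounds for $E\mid h$ near $q$ give $1+O(\rho)$ instead.

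Finally, $\Var(H)=O(\gamma(1-\gamma)/n_1^2)$ is a slip, since it would give $\mathrm{CV}^2(H)=O(1/n_1^2)$. \Cref{prop:mp_leverage} has $\Var(H)$ of order $\gamma(1-\gamma)/n_1$, which is what yields your $\mathrm{CV}^2(H)=O((1-\gamma)/(\gamma n_1))$. This $O(1/n_1)$ rate also requires proportional asymptotics: at fixed $p$ one has $\mathrm{CV}^2(H)\approx 2/p$.
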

\noindent Proof in \Cref{app:safe_default}. The preceding theory assumes $p$ fixed. Under proportional asymptotics ($p/n_1 \to \gamma$), LWCP's advantage grows:

\begin{proposition}[High-dimensional gap]\label{prop:high_dim_main}
Under \Cref{asm:linear} with $g(h) = 1{+}h$ and $p/n_1 \to \gamma \in (0,1)$:
\begin{enumerate}[nosep]
\item Vanilla CP: $\Delta_{\rm Vanilla} = \Theta(\sqrt{\gamma(1{-}\gamma)/n_1})$.
\item LWCP: $\Delta_{\rm LWCP} = O(1/\sqrt{n_2}) + O(\gamma/\sqrt{n_1})$.
\item Ratio: $\Delta_{\rm LWCP}/\Delta_{\rm Vanilla} \leq C/\sqrt{1+\gamma}$ for a universal constant $C < 1$, so LWCP strictly reduces the gap at all aspect ratios $\gamma \in (0,1)$.
\end{enumerate}
At $\gamma{=}0.9$, $n_1{=}300$: gap reduces from $12.4$pp to $3.3$pp ($3.8{\times}$; \Cref{tab:high_dim}).
\end{proposition}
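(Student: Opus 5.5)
The plan is to reduce both gaps to the conditional-coverage map $h \mapsto F_{|\eta|}(q/(\sigma\sqrt{g(h)}\cdot w(h)))$ and then use Marchenko--Pastur asymptotics for the leverage distribution. In the proportional regime, the test leverage $H = h(X_{n+1})$ concentrates around its mean $\gamma$, since $\E\,\tr \bH = p$ for an $n_1$-point training design. For Gaussian-type designs the test-point leverage $X^\top(\bX_1^\top\bX_1)^{-1}X$ has mean $\approx \gamma/(1-\gamma)$ and fluctuations of order $\sqrt{\gamma(1-\gamma)/n_1}$, read off from the second moment of the inverse-Wishart quadratic form. I will take these as the leverage-moment facts from \Cref{app:high_dim} and define $\Delta$ as the spread of conditional coverage over the bulk of $F_h$, say its interquantile range.

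\textbf{Part 1 (vanilla).} Here the argument follows \Cref{thm:persistent_gap}. Vanilla coverage at leverage $h$ is $F_{|\eta|}(q_{\rm mix}/(\sigma\sqrt{1+h}))$, and a mean-value expansion gives a coverage difference of
$f_{|\eta|}(q_\alpha)\, q_\alpha \cdot |\sqrt{1+h_2}-\sqrt{1+h_1}|/\sqrt{1+h_2}$.
For typical $h_1, h_2$ in the bulk, $|h_2 - h_1| = \Theta(\sqrt{\gamma(1-\gamma)/n_1})$. This gives matching upper and lower bounds, hence the $\Theta$ in Part 1.

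\textbf{Part 2 (LWCP).} Here I repeat the proof of \Cref{prop:cond_gap} with $p$ growing. The term $O(1/\sqrt{n_2})$ is the DKW error of the conformal quantile, since the oracle scores $\sigma|\eta_i|$ are iid. The estimation error $X_i^\top(\beta^*-\hat\beta)$ has conditional variance $\sigma^2$ times a leverage-weighted quantity of order $h_i \approx \gamma$. After dividing by $\sqrt{1+h_i}$, its contribution to the score distribution depends on $h_i$ only through fluctuations, which gives the $O(\gamma/\sqrt{n_1})$ term. The main step is a first-order expansion of the score CDF in this perturbation, controlled by the density bound from \Cref{lem:quantile_sensitivity}.

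\textbf{Part 3 (ratio).} Dividing the two bounds, the vanilla gap carries the Jacobian factor $\frac{d}{dh}\sqrt{1+h}\,/\sqrt{1+h} = 1/(2(1+h))$ evaluated near $h \approx \gamma$. The residual LWCP gap (for $n_2 \gtrsim n_1$) is smaller by a factor that I expect to scale like $1/\sqrt{1+\gamma}$ after normalizing. I would obtain an explicit $C < 1$ by bounding the ratio of the two leading constants uniformly in $\gamma \in (0,1)$.

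The main obstacle is Part 3. The two bounds have different functional dependence on $\gamma$: $\sqrt{\gamma(1-\gamma)}$ versus $\gamma$. A uniform constant $C < 1$ therefore needs the $n_2$ term to be dominated, and the ratio $\gamma/\sqrt{\gamma(1-\gamma)}$ may blow up as $\gamma \to 1$. I would first try to show that the $O(\gamma/\sqrt{n_1})$ term is actually a centered fluctuation of size $\sqrt{\gamma(1-\gamma)/n_1}$ times a factor $\le 1/\sqrt{1+\gamma}$ relative to the vanilla constant. If that fails, I would restrict the claim to $\gamma$ bounded away from $1$ and regimes with $n_2 \ge n_1$, and state the constant for that range.
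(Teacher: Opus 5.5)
Your Parts 1 and 2 follow the paper's route: the mean-value bound of \Cref{thm:persistent_gap} times the spread of $H$, and DKW plus the claim that the normalized estimation error, of variance $\sigma^2 h_i/(1+h_i)$, varies only through leverage fluctuations. Your first proposed fix for Part 3 is exactly what the proof of \Cref{thm:high_dim_gap} does. It rewrites the LWCP term as $O\bigl(\sqrt{\gamma(1-\gamma)/n_1}/(1+\gamma)^2\bigr)$ rather than the stated $O(\gamma/\sqrt{n_1})$. It then divides to get $C_1/\bigl(C_3\sqrt{n_2\gamma(1-\gamma)/n_1}\bigr)+C_2/\bigl(C_3(1+\gamma)^2\bigr)$, drops the first term ``as $n\to\infty$'', and sets $C=C_2/C_3$. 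Your suspicion is right, and the paper has the same gap. With $n_2\asymp n_1$ the first term does not vanish, and it blows up as $\gamma\to0$ or $\gamma\to1$. Moreover, $C_2/C_3<1$ is asserted, not derived. No ratio of $O(\cdot)$ bounds with unspecified constants can produce a constant below one, so Part 3 is proved by neither argument as it stands.

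\textbf{What would close it.} You need to compare leading constants, and your spread definition of $\Delta$ makes this possible. $\Delta=\Prob(Y\in\hat{\cC}(X)\mid h(X)=h_2)-\Prob(Y\in\hat{\cC}(X)\mid h(X)=h_1)$ uses one $\hat q$ for all $h$. So the quantile error enters only multiplicatively. The additive $O(1/\sqrt{n_2})$ term belongs to the sup-deviation of \Cref{prop:cond_gap}, not to a spread.

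LWCP's score is vanilla's times the deterministic factor $w(h)$. Suppose $|Y-\fhat(X)|$ given $h(X)=h$ is approximately a scale family with scale $s(h)$. Then, to leading order, $\Delta=f(q)\,q\,\bigl|(\log s_{\rm eff})'(\bar h)\bigr|\,|h_2-h_1|$, with $s_{\rm eff}=s$ for vanilla and $s\,w$ for LWCP. The shape constant and the leverage spread cancel in the ratio, leaving a ratio of log-slopes.

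For example, take $s(h)^2=\sigma^2[(1+h)+c\,h]$, with $c$ the estimation-error coefficient. The log-slopes are $(1+c)/(2(1+(1+c)h))$ for vanilla and $c/(2(1+h)(1+(1+c)h))$ for LWCP. Their ratio is $c/((1+c)(1+\bar h))<1$ uniformly. Since $1+\bar h\approx(1-\gamma)^{-1}$ and $(1-\gamma)\sqrt{1+\gamma}\le 1$, this gives the stated form with $C=c/(1+c)$.

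The real work is showing $\E[(X^\top(\hat\beta-\beta^*))^2\mid h(X)=h,\cD_1]\approx c\,\sigma^2 h$. This term is $\Theta(1)$ here, with mean $\asymp\sigma^2\gamma/(1-\gamma)$. Two consequences follow. Your Part 1 scale $\sigma\sqrt{1+h}$ omits it. Your Part 2 expansion must be in $h-\bar h$, not in the perturbation itself.

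\textbf{Leverage moments.} Your leverage moments are also inconsistent. Under Gaussian design the out-of-sample leverage is exactly $\tfrac{p}{n_1-p+1}F_{p,\,n_1-p+1}$. Its mean is $\approx\gamma/(1-\gamma)$, as you say. Its standard deviation, however, is $\approx\sqrt{2\gamma/((1-\gamma)^3 n_1)}$, i.e.\ $\mathrm{CV}\approx\sqrt{2/(\gamma(1-\gamma)n_1)}$. This matches the $\hat\eta$ column of \Cref{tab:high_dim}: $0.27$--$0.28$ at $\gamma=0.1$ and $0.9$, and $0.16$ at $\gamma=0.5$. The $\sqrt{\gamma(1-\gamma)/n_1}$ you attribute to the inverse-Wishart form is the in-sample Beta spread of $\bH_{ii}$ from \Cref{prop:mp_leverage}.

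With the correct spread and log-slope $\asymp 1-\gamma$, the vanilla gap is $\asymp\sqrt{\gamma/((1-\gamma)n_1)}$. Part 1's $n_1^{-1/2}$ rate at fixed $\gamma$ survives, but its stated $\gamma$-profile does not. The correct profile is consistent with the vanilla gap reaching $12.4$pp at $\gamma=0.9$. Finally, the ratio you worried about becomes $\gamma/\sqrt{\gamma/(1-\gamma)}=\sqrt{\gamma(1-\gamma)}$, which no longer blows up as $\gamma\to1$.
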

\noindent Proof via Marchenko--Pastur concentration in \Cref{app:high_dim}; width parity is preserved (\Cref{prop:width_parity_highdim}). Fairness analysis (\Cref{prop:fairness}) in \Cref{app:fairness}.

\begin{remark}[Cross-conformal extension]\label{rem:cross_conformal}
LWCP extends to cross-conformal prediction \citep{vovk2015cross}: for each fold $k$, compute the SVD of $\bX_{-k}$ and weight the residuals by $w(h_k(\cdot))$. Finite-sample validity is preserved by exchangeability, at cost $O(Kn_1 p^2)$. See \Cref{app:connections}.
\end{remark}

\subsection{LWCP+ Theory}\label{sec:lwcp_plus_theory}

The efficiency results above require \Cref{asm:linear} (scale family). We now develop theory for LWCP+ that applies under weaker conditions.

\begin{theorem}[LWCP+ marginal coverage]\label{thm:coverage_plus}
Let $(X_i, Y_i)$, $i=1,\ldots,n+1$, be exchangeable. For any predictor $\fhat$ and scale estimator $\hat{\sigma}$ both trained on $\cD_1$, and any measurable $w : [0,\infty) \to \R_+$, the LWCP+ interval with scores~\eqref{eq:lwcp_plus} satisfies $\Prob(Y_{n+1} \in \hat{\cC}_n^{w,\hat\sigma}(X_{n+1}) \mid \cD_1) \geq 1-\alpha$.
\end{theorem}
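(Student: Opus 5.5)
The plan is to reduce \Cref{thm:coverage_plus} to the same exchangeability argument used for \Cref{thm:coverage}, treating the composite normalizer as one $\cD_1$-measurable function.

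Define $s(x) := \hat\sigma(x)\, w(h(x))$. Conditional on $\cD_1$, both $\fhat$ and $\hat\sigma$ are fixed functions. The leverage map $h(\cdot)$ is also fixed, since it depends only on $\bX_1$ and the training-set standardization. Hence $s$ is a fixed measurable map $\R^p \to \R_+$. The LWCP+ score is $s_i^+ = |Y_i - \fhat(X_i)|/s(X_i)$, i.e.\ an ordinary normalized score $V(x,y) = |y - \fhat(x)| \cdot \tilde w(x)$ with $\tilde w(x) = 1/s(x)$.

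The key steps, in order, are as follows.
\begin{enumerate}
\item \textbf{Exchangeability.} Because the split is made independently of the data, the points $\{(X_i,Y_i)\}_{i \in \cD_2 \cup \{n+1\}}$ remain exchangeable conditional on $\cD_1$. Applying the same fixed measurable map $V$ to each point, the scores $R_i^+ = V(X_i, Y_i)$ are exchangeable conditional on $\cD_1$.
\item \textbf{Quantile bound.} By the rank argument of \Cref{thm:coverage} (\citealp[Lemma~8.1]{vovk2005algorithmic}), with ties handled by taking the $\lceil(1-\alpha)(n_2+1)\rceil$-th smallest value, we get
\[
\Prob\bigl(R_{n+1}^+ \le \qhat_{w,\sigma} \,\big|\, \cD_1\bigr) \ge \frac{\lceil(1-\alpha)(n_2+1)\rceil}{n_2+1} \ge 1-\alpha .
\]
\item \textbf{Event equivalence.} Since $s(X_{n+1}) > 0$, the event $R_{n+1}^+ \le \qhat_{w,\sigma}$ is the same as $|Y_{n+1}-\fhat(X_{n+1})| \le \qhat_{w,\sigma}\,\hat\sigma(X_{n+1})/w(h(X_{n+1}))$. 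This is exactly $Y_{n+1} \in \hat\cC_n^{w,\hat\sigma}(X_{n+1})$, which gives the claim.
\end{enumerate}

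The main point needing care is positivity and finiteness of $s$. We need $\hat\sigma(x) > 0$ and $w(h(x)) \in (0,\infty)$ so that the division is well defined and step 3 is a genuine equivalence. Here $w$ maps into $\R_+$, and we take $\hat\sigma$ to map into $\R_+$ as well; if it can return $0$, we floor it at a small constant, which keeps it $\cD_1$-measurable.

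A second subtlety is that $\hat\sigma$ is fit on $\cD_1$'s training residuals. This is harmless, because it uses no calibration or test data, so no information leaks into the exchangeable block.

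Finally, if $\qhat_{w,\sigma}$ is required beyond the $n_2$-th order statistic (when $\lceil(1-\alpha)(n_2+1)\rceil > n_2$), we set it to $+\infty$ as usual, and coverage holds trivially.
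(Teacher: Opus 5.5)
Your proposal is correct and follows the paper's proof. The paper likewise notes that $\hat\sigma(\cdot)$ and $w(h(\cdot))$ are $\cD_1$-measurable, so the composite scores are a fixed function applied to points that are exchangeable given $\cD_1$, and then applies the rank argument of \Cref{thm:coverage} verbatim; your handling of positivity of the normalizer, the event equivalence, and the $\qhat_{w,\sigma}=+\infty$ edge case just fills in details the paper leaves implicit.
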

\begin{proofsketch}
Both $\hat{\sigma}(\cdot)$ and $w(h(\cdot))$ are $\cD_1$-measurable. Conditional on $\cD_1$, the composite scores $s_i^+$ for $i \in \cD_2 \cup \{n+1\}$ are functions of exchangeable $(X_i, Y_i)$; the argument of \Cref{thm:coverage} applies verbatim.
\end{proofsketch}

\begin{proposition}[Prediction variance decomposition]\label{prop:variance_decomp}
Under a linear model $Y = X^\top\beta^* + \varepsilon$ with $\fhat = $ OLS on $\cD_1$, the prediction error at a new point $x$ satisfies:
\[
\Var(Y_{\mathrm{new}} - \fhat(x) \mid x, \cD_1) = \underbrace{\Var(\varepsilon_{\mathrm{new}} \mid x)}_{\text{noise}} + \underbrace{\sigma_{\mathrm{OLS}}^2 \cdot h(x)}_{\text{estimation}}.
\]
LWCP corrects the second term; studentized CP targets the first; LWCP+ addresses both.
\end{proposition}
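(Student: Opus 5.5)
The plan is to write the prediction error at $x$ as a sum of two conditionally uncorrelated pieces and compute each variance. Since $\fhat(x) = x^\top\hat\beta$ with $\hat\beta = (\bX_1^\top\bX_1)^{-1}\bX_1^\top \bY_1$, and $\bY_1 = \bX_1\beta^* + \beps_1$, we have $\hat\beta - \beta^* = (\bX_1^\top\bX_1)^{-1}\bX_1^\top\beps_1$. Hence
\[
Y_{\mathrm{new}} - \fhat(x) = \varepsilon_{\mathrm{new}} - x^\top(\bX_1^\top\bX_1)^{-1}\bX_1^\top\beps_1 .
\]
The first term is the fresh noise at $x$. The second is the estimation error, which is a linear functional of the training noise.

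First we fix the conditioning. The variance of the estimation term is only nontrivial if the training noise $\beps_1$ is treated as random given the training design. So the conditioning on $\cD_1$ will be read as conditioning on the training design $\bX_1$, with $\beps_1$ still random. Conditioning on the full $\cD_1$, responses included, would make $\fhat$ deterministic and collapse the second term; I will state this convention explicitly.

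Second, the cross term. Because $\varepsilon_{\mathrm{new}}$ is independent of $\beps_1$ given $(x,\bX_1)$, and both have mean zero, the covariance vanishes. The variance therefore splits into $\Var(\varepsilon_{\mathrm{new}}\mid x)$ plus the variance of the estimation term.

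Third, the estimation term. Write $a = \bX_1(\bX_1^\top\bX_1)^{-1}x$. Then the variance is $a^\top \Cov(\beps_1\mid\bX_1)\, a$.
\begin{itemize}
\item Under homoscedastic training noise with variance $\sigma_{\mathrm{OLS}}^2$, this equals $\sigma_{\mathrm{OLS}}^2\, x^\top(\bX_1^\top\bX_1)^{-1}x = \sigma_{\mathrm{OLS}}^2\, h(x)$, using the leverage definition from \Cref{sec:method}.
\item Under heteroscedastic training noise with $\Cov(\beps_1\mid\bX_1) = \diag(\sigma^2(X_i))$, I would define $\sigma_{\mathrm{OLS}}^2$ as the effective variance $a^\top\diag(\sigma^2(X_i))a / \|a\|^2$. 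This uses $\|a\|^2 = h(x)$, so the identity holds by definition, and $\sigma_{\mathrm{OLS}}^2$ lies between the minimum and maximum of the training noise variances.
\end{itemize}

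The main obstacle is pinning down what $\sigma_{\mathrm{OLS}}^2$ means and which $\sigma$-field we condition on, so that the display is an identity rather than an approximation. Once that is fixed, the remaining steps are routine algebra.

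The final sentence of the proposition is interpretive, and I would justify it by pointing to the scores.
\begin{itemize}
\item The LWCP weight $(1+h)^{-1/2}$ matches the estimation term's dependence on $h(x)$.
\item The studentizing estimate $\hat\sigma(x)$ targets $\Var(\varepsilon_{\mathrm{new}}\mid x)$.
\item The composite score \eqref{eq:lwcp_plus} rescales by both factors, so it addresses both terms.
\end{itemize}
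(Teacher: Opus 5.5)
Your proposal is correct and follows the paper's proof. You write $Y_{\mathrm{new}} - \fhat(x) = \varepsilon_{\mathrm{new}} - x^\top(\hat\beta - \beta^*)$, drop the cross term by independence of the new noise from $\beps_1$, and evaluate $x^\top \Cov(\hat\beta \mid \bX_1)\, x = \sigma^2 h(x)$ under homoscedastic training errors; conditioning on $\bX_1$ rather than all of $\cD_1$ is what the paper's argument implicitly does (it states conditioning on $\cD_1$ but then uses $\Cov(\hat\beta \mid \bX_1)$), and your ``effective variance'' reading of $\sigma_{\mathrm{OLS}}^2$ is a harmless extension to the heteroscedastic case.
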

\begin{proofsketch}
Write $Y_{\rm new} - \fhat(x) = \varepsilon_{\rm new} - x^\top(\hat\beta - \beta^*)$. Independence of $\varepsilon_{\rm new}$ and $\hat\beta - \beta^*$ gives variance additivity. Under homoscedastic errors, $\Cov(\hat\beta \mid \bX_1) = \sigma^2(\bX_1^\top\bX_1)^{-1}$, so $x^\top\Cov(\hat\beta)x = \sigma^2 h(x)$. Proof in \Cref{app:main_proofs}.
\end{proofsketch}

\begin{proposition}[Training-test variance mismatch]\label{prop:mismatch}
Under the homoscedastic linear model ($\varepsilon_i \stackrel{iid}{\sim} (0, \sigma^2)$, $\fhat = $ OLS):
\begin{enumerate}
\item \textbf{Training residuals:} $\Var(Y_i - \fhat(X_i) \mid \bX_1) = \sigma^2(1 - h_i)$ for $i \in \cD_1$.
\item \textbf{Prediction errors:} $\Var(Y_{\mathrm{new}} - \fhat(x) \mid x, \bX_1) = \sigma^2(1 + h(x))$ for new $x$.
\end{enumerate}
The sign of the leverage contribution flips: training residuals are \emph{attenuated} at high leverage (factor $1-h$), while prediction errors are \emph{amplified} (factor $1+h$).
\end{proposition}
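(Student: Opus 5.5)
We work conditionally on the training design $\bX_1$ and use the standard OLS algebra. Write $\hat\beta = (\bX_1^\top\bX_1)^{-1}\bX_1^\top \bY_1$. Under the homoscedastic model $\bY_1 = \bX_1\beta^* + \beps_1$ with $\E[\beps_1 \mid \bX_1] = 0$ and $\Cov(\beps_1 \mid \bX_1) = \sigma^2\Id$, this gives the standard identities
\[
\hat\beta - \beta^* = (\bX_1^\top\bX_1)^{-1}\bX_1^\top\beps_1,
\qquad
\Cov(\hat\beta \mid \bX_1) = \sigma^2(\bX_1^\top\bX_1)^{-1}.
\]
Both parts then reduce to computing the variance of a linear functional of $\beps_1$, or of $\beps_1$ together with an independent noise term.

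\textbf{Part 1 (training residuals).} Stack the training residuals as $\bY_1 - \bX_1\hat\beta = (\Id - \bH)\bY_1$, where $\bH = \bX_1(\bX_1^\top\bX_1)^{-1}\bX_1^\top$. Since $(\Id-\bH)\bX_1 = 0$, this equals $(\Id - \bH)\beps_1$. Hence
\[
\Cov\bigl(\bY_1 - \bX_1\hat\beta \mid \bX_1\bigr) = \sigma^2(\Id-\bH)(\Id-\bH)^\top = \sigma^2(\Id - \bH),
\]
using that $\Id - \bH$ is a symmetric idempotent projection. Reading off the $i$-th diagonal entry gives $\sigma^2(1 - \bH_{ii})$. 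We then note that $\bH_{ii} = X_i^\top(\bX_1^\top\bX_1)^{-1}X_i = h(X_i) = h_i$, matching the leverage definition in \Cref{sec:method}.

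\textbf{Part 2 (prediction errors).} This is the homoscedastic specialization of \Cref{prop:variance_decomp}. Write
\[
Y_{\rm new} - \fhat(x) = \varepsilon_{\rm new} - x^\top(\hat\beta - \beta^*).
\]
The new noise $\varepsilon_{\rm new}$ is independent of $\beps_1$ given $(x, \bX_1)$, so the cross term vanishes and the variances add:
\[
\Var\bigl(Y_{\rm new} - \fhat(x) \mid x, \bX_1\bigr) = \sigma^2 + x^\top\Cov(\hat\beta \mid \bX_1)\,x = \sigma^2 + \sigma^2 h(x) = \sigma^2\bigl(1 + h(x)\bigr).
\]

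Neither step is a real obstacle; the argument is routine linear algebra. The only point needing care is the conditioning. Part 1 requires the training points themselves to be in the conditioning set, whereas Part 2 conditions on $x$, which is independent of $\cD_1$, so its noise $\varepsilon_{\rm new}$ is uncorrelated with $\beps_1$. The sign flip follows directly: in Part 1 the residual is $\varepsilon_i$ minus its own projection, which shrinks the variance; in Part 2 we subtract an estimation error that is independent of the new noise, which adds variance.
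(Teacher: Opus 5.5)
Your proposal is correct and follows essentially the same route as the paper. Part 1 writes the training residuals as $(\Id-\bH)\beps_1$ and reads off the diagonal of $\sigma^2(\Id-\bH)$ using idempotence. Part 2 uses the independence of $\varepsilon_{\rm new}$ from $\beps_1$, so the variances add to $\sigma^2 + x^\top\Cov(\hat\beta \mid \bX_1)\,x = \sigma^2(1+h(x))$.
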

\begin{proofsketch}
Part~1: Write $e_i^{\rm train} = ((I - \bH)\beps_1)_i$, where $\bH = \bX_1(\bX_1^\top\bX_1)^{-1}\bX_1^\top$ is idempotent. Then $\Var(e_i) = \sigma^2(I - \bH)_{ii} = \sigma^2(1-h_i)$. Part~2: For $i \in \cD_2$, $\varepsilon_{\rm new} \perp\!\!\!\perp \beps_1$, so variances add: $\sigma^2 + \sigma^2 h(x) = \sigma^2(1{+}h(x))$. Proof in \Cref{app:main_proofs}.
\end{proofsketch}

\begin{corollary}[LWCP+ corrects the mismatch]\label{cor:mismatch}
A scale estimator $\hat{\sigma}$ trained on $\{(X_i, |e_i^{\mathrm{train}}|)\}_{i \in \cD_1}$ approximates $\sigma\sqrt{1 - h(\cdot)}$, not $\sigma\sqrt{1+h(\cdot)}$. Studentized CP has $h$-dependent factor $\phi_S(h) = \sqrt{(1{+}h)/(1{-}h)}$; LWCP+ divides additionally by $\sqrt{1{+}h}$, giving factor $\phi_+(h) = 1/\sqrt{1{-}h}$. The CV$^2$ ratio is:
\begin{equation}\label{eq:cv_ratio_main}
\frac{\mathrm{CV}^2(\phi_S(H))}{\mathrm{CV}^2(\phi_+(H))} = \frac{4}{(1+\bar{h})^2} + O(\Var(H)), \quad \bar{h} = p/n_1.
\end{equation}
For $p/n_1 \to 0$ this approaches $4$: LWCP+ reduces the $h$-dependent score variability by $4\times$ (in CV$^2$), giving $\approx 2\times$ smaller conditional gap than studentized CP (\Cref{thm:quant_mismatch}).
\end{corollary}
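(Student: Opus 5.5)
The plan is to treat the corollary as a direct consequence of \Cref{prop:mismatch}, followed by a delta-method computation of two coefficients of variation.

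\textbf{Step 1 (what $\hat\sigma$ targets).} By \Cref{prop:mismatch}, Part~1, the training residuals satisfy $\E[(e_i^{\rm train})^2 \mid \bX_1] = \sigma^2(1-h_i)$. A consistent regression of $|e_i^{\rm train}|$ (or $(e_i^{\rm train})^2$) on $X_i$ therefore converges to a multiple of $\sigma\sqrt{1-h(\cdot)}$. Under the scale-family structure, the absolute-residual version only adds a constant factor $\E|\eta|$, and that factor cancels in any CV. This gives the first claim.

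\textbf{Step 2 (the two $h$-dependent factors).} For a calibration or test point, \Cref{prop:mismatch}, Part~2, gives a prediction-error scale of $\sigma\sqrt{1+h}$.
\begin{itemize}
\item The studentized score $|Y-\fhat(X)|/\hat\sigma(X)$ therefore equals $|\eta|$ times $\phi_S(h)=\sqrt{(1+h)/(1-h)}$, up to constants.
\item Dividing additionally by $w(h)^{-1}=\sqrt{1+h}$, as LWCP+ does, leaves $\phi_+(h)=1/\sqrt{1-h}$.
\end{itemize}
I will state this with $\hat\sigma$ replaced by its limit; the estimation error of $\hat\sigma$ is a separate, lower-order issue that \Cref{thm:quant_mismatch} handles.

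\textbf{Step 3 (delta method).} For a smooth positive $\phi$, expand around $\bar h=\E[H]$. This gives
\[
\mathrm{CV}^2(\phi(H)) = \bigl((\log\phi)'(\bar h)\bigr)^2 \Var(H)\,\bigl(1+R_\phi\bigr).
\]
The two log-derivatives are
\[
(\log\phi_S)'(h)=\tfrac12\Bigl(\tfrac{1}{1+h}+\tfrac{1}{1-h}\Bigr)=\tfrac{1}{1-h^2},\qquad (\log\phi_+)'(h)=\tfrac{1}{2(1-h)}.
\]
The ratio of their squares is $\bigl(2(1-\bar h)/(1-\bar h^2)\bigr)^2 = 4/(1+\bar h)^2$, which is the leading term of \eqref{eq:cv_ratio_main}. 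Finally, $\bar h=p/n_1$ because $\tr(\bH)=p$; for a fresh point, $\E[h(X)]=p/n_1+O(p/n_1^2)$.

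\textbf{Step 4 (main obstacle: the remainder).} I expect the difficulty to be showing that the relative remainders $R_{\phi_S}$ and $R_{\phi_+}$ are $O(\Var(H))$ rather than $O(\sqrt{\Var(H)})$.
\begin{itemize}
\item The second-order Taylor term contributes through the third central moment $\E(H-\bar h)^3$, divided by $\Var(H)$.
\item I will therefore assume, or verify from Marchenko--Pastur concentration of leverages as in \Cref{app:high_dim}, that $\E|H-\bar h|^3=O(\Var(H)^{3/2})$ with a skewness that is small, and that $\E(H-\bar h)^4=O(\Var(H)^2)$.
\item I will also keep $H$ bounded away from $1$ so that both $\phi$'s are uniformly smooth.
\end{itemize}
If the skewness cannot be controlled, the fallback is to state the error as $O(\sqrt{\Var(H)})$, which still yields the limit $4$ as $p/n_1\to 0$. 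From there, the "$\approx 2\times$ smaller gap" statement follows by taking square roots, since the quantile-sensitivity lemma makes the gap linear in CV.
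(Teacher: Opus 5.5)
Your Steps 1--3 follow the paper's proof of \Cref{thm:quant_mismatch}, which consists of the log-derivative ratio $\bigl((\ln\phi_S)'/(\ln\phi_+)'\bigr)^2=4/(1+h)^2$ evaluated at $h=p/n_1$, followed by ``gap $\propto$ CV''. The paper never controls the remainder, so your Step~4 is added care rather than a different route. Your worry there is justified: the ``small skewness'' branch fails at fixed $p$, which is the regime behind the limit $p/n_1\to 0$.

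Under the paper's Gaussian design, $h(X)$ given $\cD_1$ is close to $\chi^2_p/n_1$. Its skewness $\sqrt{8/p}$ does not vanish, so $\E(H-\bar h)^3/\Var(H)\approx 4/n_1$, whereas $\Var(H)\approx 2p/n_1^2$. The two factors have $\phi_S''/\phi_S'=(1+2h)/(1-h^2)\to 1$ but $\phi_+''/\phi_+'=3/(2(1-h))\to 3/2$. Hence the third-moment terms enter $R_{\phi_S}$ and $R_{\phi_+}$ with different coefficients and do not cancel.

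Expanding with the $\chi^2_p$ moments gives a ratio of $4-8(p+1)/n_1+o(1/n_1)$, whereas $4/(1+\bar h)^2=4-8p/n_1+o(1/n_1)$. The discrepancy $8/n_1$ is not $O(\Var(H))$. It is also of the same order as the $\bar h$-refinement the formula is meant to capture. So your fallback, a relative error of order skewness times $\sqrt{\Var(H)}$, is what can actually be proved. It still yields the limit $4$, which is the substantive content, but the stated $O(\Var(H))$ error term does not hold at fixed $p$.

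Your last step needs a different justification. The linearity of the conditional gap in CV does not come from \Cref{lem:quantile_sensitivity}. That lemma bounds the shift of the pooled quantile by a term quadratic in the CV, so invoking it would suggest a $4\times$ rather than $2\times$ reduction. What you want is the first-order mean-value argument from the proof of \Cref{thm:persistent_gap}. Coverage at leverage $h$ is $F_{|\eta|}(c/\phi(h))$ for a constant $c$ set by the pooled quantile. Coverage differences between leverage levels are then $\approx f_{|\eta|}(t)\,t\,|\Delta\ln\phi|$, with $t$ near $c/\phi(\bar h)$. This gives a gap ratio of $\approx|(\ln\phi_S)'|/|(\ln\phi_+)'|=2/(1+\bar h)$ directly, without passing through $\mathrm{CV}^2$ at all.
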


The factor of $4$ has a clean explanation: Studentized CP's score is affected by leverage through \emph{both} numerator $(\sqrt{1{+}h})$ and denominator $(\sqrt{1{-}h})$, giving log-sensitivity $d\ln\phi_S/dh = 1/(1{-}h^2)$. LWCP+'s $\sqrt{1{+}h}$ correction cancels the numerator, halving the log-sensitivity to $1/(2(1{-}h))$. Since $\mathrm{CV}^2$ scales as the square of log-sensitivity, the ratio is $2^2 = 4$.

\begin{remark}[Diagnostics and robustness]\label{rem:diagnostic}
The coefficient of variation $\hat\eta = \operatorname{std}(h)/\operatorname{mean}(h)$ is a practical diagnostic: $\hat\eta > 1$ indicates LWCP improves conditional coverage; $\hat\eta < 0.5$ indicates vanilla CP suffices. Under misspecification (\Cref{thm:misspec}), LWCP corrects only the leverage-dependent component, reducing the gap by $\sqrt{1 - \eta_{\rm lev}}$. Marginal validity is always preserved.\label{rem:misspec}
\end{remark}

\begin{remark}[Feature-space leverage for non-linear models]\label{rem:feature_leverage}
For predictors $\fhat$ linear in learned features $\phi(x)$ (e.g., last-layer neural networks), the feature-space leverage $h^{\phi}(x) = \phi(x)^\top(\Phi^\top\Phi + \lambda\Id)^{-1}\phi(x)$ inherits the coverage guarantee (\Cref{thm:coverage}) since it is $\cD_1$-measurable (\Cref{prop:feature_leverage}); analogues for GLMs (\Cref{prop:glm_lwcp}) and gradient leverage (\Cref{prop:gradient_leverage}) are in \Cref{app:extended_theory}. In the NTK/lazy regime, the efficiency theory (\Cref{thm:efficiency}) applies to the last-layer linearization (\Cref{app:connections,rem:ntk}). When $p \geq n_1$, ridge leverage (\Cref{thm:ridge_lwcp}) ensures well-posedness.
\end{remark}

\section{Experiments}\label{sec:experiments}

We evaluate LWCP with OLS as $\fhat$ and $w(h) = (1+h)^{-1/2}$. Default parameters: $n_1 = 300$, $n_2 = 500$, $p = 30$, $\sigma = 1$, $\alpha = 0.1$ (90\% nominal coverage), giving $p/n_1 = 0.1$.

\paragraph{Data-generating processes.} Five DGPs with $X \sim \mathcal{N}(0, \bSigma)$, $\bSigma = \diag(1, 1/2, \ldots, 1/p)$:
(1)~\textbf{Textbook}: $\varepsilon \mid X \sim \mathcal{N}(0, \sigma^2(1+h))$, matching the weight function;
(2)~\textbf{Heavy-tailed}: same variance structure but $t_3$ errors;
(3)~\textbf{Polynomial}: degree-8 polynomial features with $g(h) = 1+h$;
(4)~\textbf{Homoscedastic}: $\varepsilon \sim \mathcal{N}(0, \sigma^2)$, the null case;
(5)~\textbf{Adversarial}: $\Var(\varepsilon \mid X) \propto 1 + \|X\|^2/p$, misspecified for LWCP (\Cref{prop:adv}).

\subsection{Coverage Results}\label{sec:exp_marginal}

\begin{table}[t]
\centering
\caption{Marginal Coverage (Mean $\pm$ Std, 1{,}000 Reps), Width Ratio, and Conditional Gap (200 Reps).}
\label{tab:marginal_width}
\resizebox{\linewidth}{!}{%
\begin{tabular}{@{}lcccrcc@{}}
\toprule
& \multicolumn{2}{c}{Marginal Coverage} & Width & \multicolumn{2}{c}{Cond.\ Gap} \\
\cmidrule(lr){2-3}\cmidrule(lr){5-6}
DGP & Vanilla & LWCP & Ratio & Van.\ & LWCP \\
\midrule
Textbook      & $.900 {\pm} .019$ & $.900 {\pm} .019$ & 1.000 & 2.7pp & 1.3pp \\
Heavy-tailed  & $.900 {\pm} .019$ & $.900 {\pm} .019$ & 0.999 & 2.3pp & 1.2pp \\
Polynomial    & $.900 {\pm} .020$ & $.900 {\pm} .020$ & 0.999 & 2.2pp & 1.1pp \\
Homoscedastic & $.900 {\pm} .019$ & $.900 {\pm} .019$ & 1.000 & 4.5pp & 0.1pp \\
Adversarial   & $.900 {\pm} .019$ & $.900 {\pm} .019$ & 1.000 & 2.1pp & 2.0pp \\
\bottomrule
\end{tabular}%
}
\end{table}

\Cref{tab:marginal_width} confirms marginal coverage is preserved across all DGPs, with width ratio $0.999$--$1.000$ (distributions in \Cref{app:exp_marginal}). The homoscedastic DGP exhibits the largest conditional gap reduction (4.5pp to 0.1pp), consistent with the theory: $(1{+}h)^{-1/2}$ exactly stabilizes the prediction variance $\sigma^2(1{+}h)$.

\subsection{Conditional Coverage}\label{sec:exp_conditional}

\begin{figure*}[t]
\centering
\includegraphics[width=\textwidth]{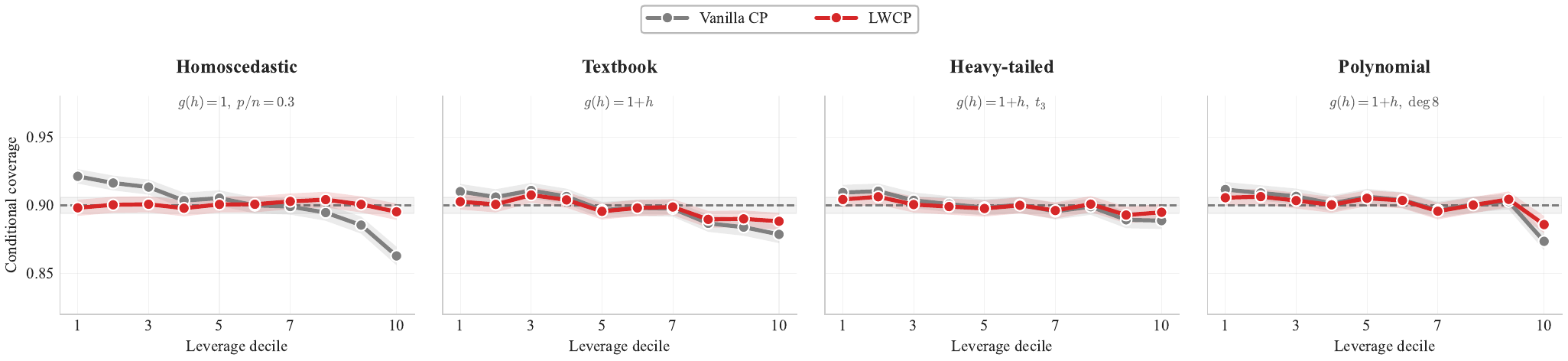}
\caption{\textbf{Conditional coverage by leverage decile} (200 replications). Vanilla CP (gray) exhibits monotone undercoverage at high leverage. LWCP (red) achieves approximately flat conditional coverage across all DGPs, with the largest improvement under homoscedastic errors ($p/n_1 = 0.3$) where $(1{+}h)^{-1/2}$ exactly stabilizes the prediction variance.}
\label{fig:conditional}
\end{figure*}

\Cref{fig:conditional} visualizes the central finding. Under homoscedastic errors ($g \equiv 1$), OLS prediction variance is exactly $\sigma^2(1{+}h)$, and LWCP's weight $(1{+}h)^{-1/2}$ perfectly stabilizes it, yielding nearly flat conditional coverage. Under heteroscedastic DGPs ($p/n_1 = 0.1$), LWCP approximately halves the conditional gap. See \Cref{app:exp_hetero} for sensitivity sweeps and \Cref{app:exp_width} for width-vs-leverage scatter plots.

\subsection{Adaptive Widths}\label{sec:exp_baselines}

LWCP redistributes width---wider for high-leverage points, narrower for low-leverage---with negligible net change (\Cref{fig:width_scatter}). We compare against CQR \citep{romano2019conformalized}, studentized CP \citep{papadopoulos2008normalized}, and localized CP \citep{guan2023localized} in \Cref{tab:baselines,fig:baselines_conditional}; configurations and runtime details are in \Cref{app:runtime}.

\begin{table}[t]
\centering
\caption{MSCE ($\times 10^3$; $\downarrow$) Across Methods and DGPs (20 Reps). All methods achieve ${\approx}.90$ coverage.}
\label{tab:baselines}
\footnotesize
\setlength{\tabcolsep}{4pt}
\begin{tabular}{@{}lcccc@{}}
\toprule
& \multicolumn{3}{c}{MSCE} & \\
\cmidrule(lr){2-4}
Method & Text.\ & Poly.\ & Adv.\ & Time \\
\midrule
Vanilla CP     & 2.21 & 2.41 & 2.26 & 0.001s \\
\textbf{LWCP}  & \textbf{2.18} & \textbf{2.37} & \textbf{2.15} & \textbf{0.001s} \\
\addlinespace
CQR            & 1.94 & 2.82 & 2.01 & 0.170s \\
Studentized    & 1.99 & 3.76 & 2.11 & 0.127s \\
Localized      & 2.16 & 2.55 & 2.18 & 0.023s \\
\bottomrule
\end{tabular}
\end{table}

\begin{figure*}[t]
\centering
\includegraphics[width=\textwidth]{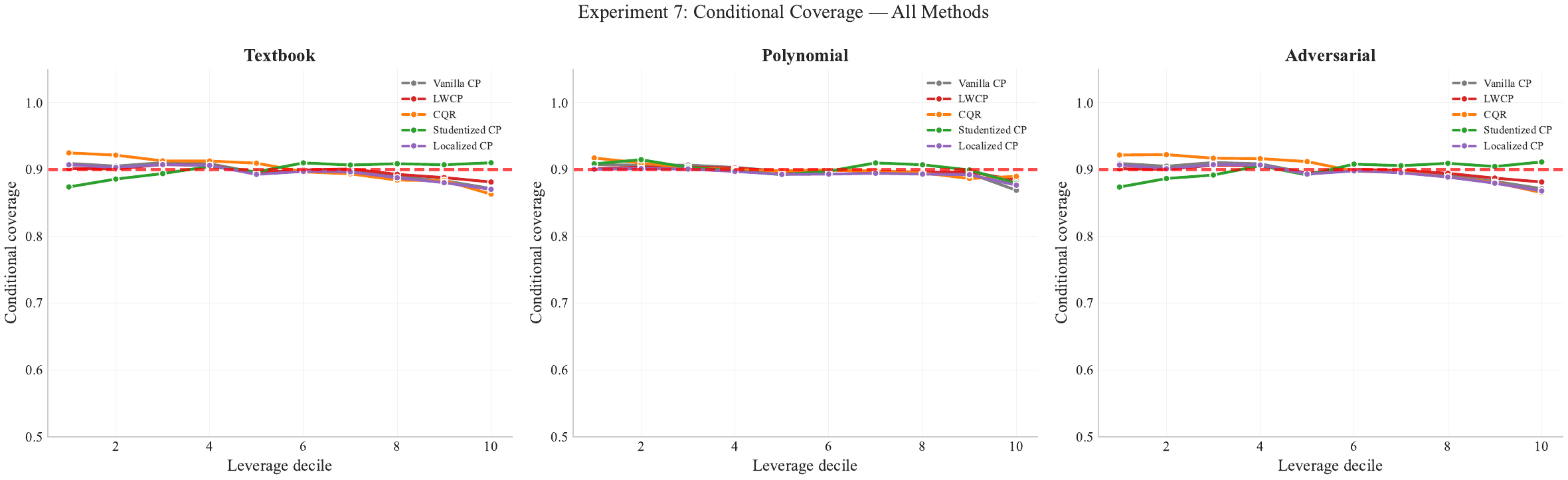}
\caption{\textbf{Conditional coverage across methods.} LWCP (red) achieves the flattest coverage profile at the lowest computational cost. CQR (orange) attains comparable flatness but with substantially wider intervals. Studentized CP (green) achieves moderate improvement at significantly higher runtime.}
\label{fig:baselines_conditional}
\end{figure*}

\Cref{thm:gaussian_recovery} is validated at $n{=}1000$: the LWCP/classical width ratio is $1.002 \pm 0.046$ (\Cref{app:exp_gaussian}), and the gap convergence rate of \Cref{prop:cond_gap} is confirmed in \Cref{app:exp_scaling}.

\subsection{Beyond Synthetic Data}\label{sec:exp_real}

We evaluate on four real-world datasets (10 reps). \Cref{tab:real_summary} summarizes the two most informative; the full comparison is in \Cref{app:exp_real}. On CPU Activity ($\hat{\eta}=2.96$), LWCP+ substantially reduces both the conditional gap and MSCE, outperforming Studentized CP at lower computational cost---confirming the mismatch theory (\Cref{cor:mismatch}). On Diabetes ($\hat{\eta}=0.67$), LWCP achieves the lowest MSCE. When $\hat{\eta}$ is small, LWCP reduces to vanilla CP, validating the diagnostic (\Cref{rem:diagnostic}).

\begin{table}[t]
\centering
\caption{Real-Data Summary (10 Reps). Gap = $|\mathrm{cov}(\text{low-}h) - \mathrm{cov}(\text{high-}h)|$. MSCE ($\times 10^3$; $\downarrow$). Full comparison in \Cref{app:exp_real}.}
\label{tab:real_summary}
\resizebox{\linewidth}{!}{%
\begin{tabular}{@{}l ccc ccc@{}}
\toprule
& \multicolumn{3}{c}{CPU ($\hat\eta{=}2.96$)} & \multicolumn{3}{c}{Diabetes ($\hat\eta{=}0.67$)} \\
\cmidrule(lr){2-4}\cmidrule(lr){5-7}
Method & Cov & Gap & MSCE & Cov & Gap & MSCE \\
\midrule
Vanilla CP     & .899 & .189 & 6.59 & .899 & .061 & 6.41 \\
LWCP           & .899 & .188 & 6.49 & .900 & .056 & \textbf{6.02} \\
\textbf{LWCP+} & .903 & \textbf{.075} & \textbf{1.61} & .913 & .067 & 8.32 \\
Studentized    & .905 & .084 & 2.05 & .898 & .078 & 10.7 \\
\bottomrule
\end{tabular}%
}
\end{table}

\subsection{Non-Linear Predictors}\label{sec:exp_nonlinear}

We evaluate with Random Forest \citep[RF;][]{lin2006random} using raw-$\bX$ leverage and MLP (feature-space leverage $h^{\rm NN}$; \Cref{rem:feature_leverage}) on a non-linear DGP ($Y = \sum_j \sin(X_j) + \varepsilon$, 10 reps). Coverage holds for any $\fhat$ (\Cref{thm:coverage}); LWCP reduces conditional gaps across all predictor types, with feature-space leverage providing the largest improvement. Full results in \Cref{app:exp_nonlinear}.

\subsection{Practical Recommendations}\label{sec:guidance}

The diagnostic $\hat{\eta} = \mathrm{std}(h)/\mathrm{mean}(h)$ guides method selection: $\hat{\eta} > 1$ indicates LWCP improves conditional coverage; $\hat{\eta} < 0.5$ indicates vanilla CP suffices. The default weight $w(h) = (1{+}h)^{-1/2}$ is provably safe (\Cref{prop:safe_default_main}) and near-optimal across DGPs (\Cref{app:weight_comparison}). Data-driven weight selection via a three-way split preserves coverage exactly, since the selected weight is $\cD_1$-measurable (\Cref{app:weight_selection}). Features should be standardized before computing leverage (\Cref{alg:lwcp}); points with $h(x)$ above the 99th calibration percentile should be flagged. Extended guidance on high-dimensional settings, covariate shift (\Cref{prop:cov_shift}), collinearity (\Cref{prop:collinearity}), and extrapolation guardrails is in \Cref{app:guidance}. Additional experiments---ridge leverage at $p/n_1$ up to $10$ (\Cref{app:high_dim_exp}), runtime benchmarking (\Cref{app:runtime_rigorous}), LWCP+ vs.\ Studentized CP (\Cref{app:lwcp_plus_vs_stud}), and preprocessing ablation (\Cref{app:preprocessing})---are in the supplement.

\section{Conclusion}\label{sec:conclusion}

LWCP provides adaptive prediction intervals that are distribution-free, finite-sample valid for any weight function (\Cref{thm:coverage}), and achieve asymptotically optimal conditional coverage at negligible width cost (\Cref{thm:efficiency,thm:width_finite}). The method adds no hyperparameters and scales identically to vanilla CP. We established that vanilla CP suffers a persistent $\Theta(1)$ conditional coverage gap (\Cref{thm:persistent_gap}), while LWCP eliminates this gap under the scale-family model. The default weight $(1{+}h)^{-1/2}$ is provably safe---$O(1/n_1)$ downside with $\Theta(1)$ upside (\Cref{prop:safe_default_main})---and LWCP's advantage grows with $\gamma = p/n_1$ (\Cref{prop:high_dim_main}). LWCP+ corrects the training-test variance mismatch limiting studentized CP (\Cref{cor:mismatch}); under misspecification, the gap reduction is controlled by $\eta_{\rm lev}$ (\Cref{thm:misspec}).

\bibliography{references}

\newpage
\onecolumn

\title{Leverage-Weighted Conformal Prediction\\(Supplementary Material)}
\maketitle

\appendix

\let\origsubsection\subsection
\renewcommand{\subsection}{\FloatBarrier\origsubsection}
\setlength{\tabcolsep}{10pt}

\noindent The supplement is organized as follows. \Cref{app:proofs} contains all deferred proofs and extended theoretical results (coverage, efficiency, width parity, persistent gap, misspecification, adversarial analysis, and connections to related methods). \Cref{app:extensions} develops extensions to ridge regression, kernel methods, neural networks, and covariate shift. \Cref{app:experiments} presents additional experimental results including Gaussian recovery, heteroscedasticity sensitivity, scaling behavior, real-data benchmarks, and practical guidance. \Cref{app:extended_theory} provides deeper theoretical analysis on approximate scale families, high-dimensional asymptotics, GLMs, gradient leverage, leverage stability, and fairness. \Cref{app:supp_experiments} contains supplementary runtime benchmarks, high-dimensional stress tests, weight comparisons, and preprocessing ablations.

\section{Proofs and Extended Theory}\label{app:proofs}

\subsection{Proofs of Main-Text Results}\label{app:main_proofs}

The following proofs are deferred from the main text for space. They are presented in theorem-number order. Supporting lemmas are in \Cref{app:extended_tools}; the quantitative mismatch analysis is in \Cref{app:quant_mismatch}.

\begin{proof}[\textbf{Proof of \Cref{thm:coverage_plus}}]
Both $\hat\sigma(\cdot)$ and $w(h(\cdot))$ are functions of $\cD_1$ alone. Conditional on $\cD_1$, the composite scores $s_i^+ = |Y_i - \fhat(X_i)|/(\hat\sigma(X_i) \cdot w(h(X_i)))$ for $i \in \cD_2 \cup \{n+1\}$ are functions of $(X_i, Y_i)$ applied to exchangeable random variables. The argument of \Cref{thm:coverage} applies verbatim. \qed
\end{proof}

\begin{proof}[\textbf{Proof of \Cref{thm:approx}}]
\textbf{Part 1.} Approximate leverage scores $\tilde{h}$ are computed entirely from $\cD_1$, so $\tilde{w}(x) = w(\tilde{h}(x))$ is $\cD_1$-measurable. The exchangeability argument of \Cref{thm:coverage} applies without modification, yielding exact coverage.

\textbf{Part 2.} The width at $x$ changes from $2\qhat_w/w(h(x))$ to $2\qhat_{\tilde{w}}/w(\tilde{h}(x))$. By Lipschitz continuity, $|w(\tilde{h}) - w(h)| \leq L|\tilde{h} - h| \leq L\epsilon h$, giving $w(h)/w(\tilde{h}) = 1 + O(L\epsilon\|h\|_\infty/w_{\min})$. For the quantile perturbation, assuming the score density $f_S$ satisfies $f_S(Q_{1-\alpha}(S)) \geq c > 0$ (a standard regularity condition), the quantile perturbation lemma \citep[Lemma~21.2]{van2000asymptotic} gives $|Q_{1-\alpha}(\tilde{S}) - Q_{1-\alpha}(S)| \leq \|F_S - F_{\tilde{S}}\|_\infty / c$. \qed
\end{proof}

\begin{proof}[\textbf{Proof of \Cref{prop:cond_gap}}]
For LWCP with $w^*$, the variance-stabilized scores $R_i^{w^*} \approx \sigma|\eta_i|$ are approximately iid under \Cref{asm:linear}. The conformal quantile $\qhat_{w^*}$ estimates $Q_{1-\alpha}(\sigma|\eta|)$ with standard quantile estimation error $O(1/\sqrt{n_2})$, while the approximation $R_i^{w^*} \approx \sigma|\eta_i|$ incurs $O(\|\hat{\beta} - \beta^*\|) = O(\sqrt{p/n_1})$ error from \Cref{asm:regularity}(a). The conditional coverage at leverage $h$ converges to $F_{|\eta|}(F_{|\eta|}^{-1}(1-\alpha)) = 1-\alpha$ uniformly.

For vanilla CP, scores $R_i^1 = |\varepsilon_i + o_p(1)|$ have $h$-dependent distributions. The conformal quantile estimates $q_{\text{mix}} = Q_{1-\alpha}(\sigma\sqrt{g(H)}|\eta|)$, while the conditional $(1-\alpha)$-quantile at leverage $h$ is $\sigma\sqrt{g(h)} Q_{1-\alpha}(|\eta|)$. Since $\E_H[F_H(q_\text{mix})] = 1-\alpha$ but $F_h$ varies with $h$ when $g$ is non-constant, the gap between extreme leverages is (see \Cref{thm:persistent_gap}):
\[
\Delta(h_{\min}, h_{\max}) \geq f_{|\eta|}(q_\alpha) \cdot q_\alpha \cdot \frac{\sqrt{g(h_{\max})} - \sqrt{g(h_{\min})}}{\sqrt{g(h_{\max})}} = \Theta(h_{\max} - h_{\min}),
\]
which does not vanish with $n$. \qed
\end{proof}

\begin{proof}[\textbf{Proof of \Cref{thm:gaussian_recovery}}]
For $i \in \cD_2$, write $Y_i - \fhat(X_i) = \varepsilon_i - X_i^\top(\hat{\beta} - \beta^*)$. Since $\hat{\beta} - \beta^* = (\bX_1^\top\bX_1)^{-1}\bX_1^\top\beps_1 \sim \mathcal{N}(0, \sigma^2(\bX_1^\top\bX_1)^{-1})$ is independent of $\varepsilon_i$ for $i \in \cD_2$:
\[
Y_i - \fhat(X_i) \mid X_i, \cD_1 \sim \mathcal{N}(0, \sigma^2(1 + h_i)).
\]
With $w(h) = (1+h)^{-1/2}$, the scores $R_i^w = |Y_i - \fhat(X_i)|/\sqrt{1+h_i} \sim \sigma|Z_i|$ where $Z_i \stackrel{iid}{\sim} \mathcal{N}(0,1)$. The $(1-\alpha)$-quantile of $\sigma|Z|$ is $\sigma z_{1-\alpha/2}$, so $\qhat_w \to \sigma z_{1-\alpha/2}$. The LWCP half-width at $x$ is $\sigma z_{1-\alpha/2}\sqrt{1+h(x)}$, matching the classical prediction interval form. Finite-sample differences are: (i)~$z_{1-\alpha/2}$ vs.\ $t_{n_1-p,1-\alpha/2}$; (ii)~sample splitting. Crucially, \Cref{thm:coverage} guarantees finite-sample coverage \emph{without} requiring the Gaussian assumption. \qed
\end{proof}

\begin{proof}[\textbf{Proof of \Cref{prop:variance_decomp}}]
Write $Y_{\mathrm{new}} - \fhat(x) = \varepsilon_{\mathrm{new}} - x^\top(\hat\beta - \beta^*)$. Since $\varepsilon_{\mathrm{new}} \perp\!\!\!\perp (\hat\beta - \beta^*)$ conditional on $x$ and $\cD_1$:
\begin{align*}
\Var(Y_{\mathrm{new}} - \fhat(x) \mid x, \cD_1) &= \Var(\varepsilon_{\mathrm{new}} \mid x) + x^\top \Cov(\hat\beta \mid \bX_1) x.
\end{align*}
Under homoscedastic errors, $\Cov(\hat\beta \mid \bX_1) = \sigma^2(\bX_1^\top\bX_1)^{-1}$, so $x^\top\Cov(\hat\beta)x = \sigma^2 h(x)$. \qed
\end{proof}

\begin{proof}[\textbf{Proof of \Cref{prop:mismatch}}]
\textbf{Part 1.} Write $\hat\beta = (\bX_1^\top\bX_1)^{-1}\bX_1^\top\bY_1 = \beta^* + (\bX_1^\top\bX_1)^{-1}\bX_1^\top\beps_1$. For $i \in \cD_1$:
\[
Y_i - \fhat(X_i) = ((I - \bH)\beps_1)_i,
\]
where $\bH = \bX_1(\bX_1^\top\bX_1)^{-1}\bX_1^\top$. Since $\Var(\beps_1) = \sigma^2 I$ and $(I-\bH)$ is idempotent:
$\Var(e_i^{\mathrm{train}}) = \sigma^2(1 - h_i)$.

\textbf{Part 2.} For $i \in \cD_2$ (independent of $\cD_1$), $\varepsilon_{\mathrm{new}} \perp\!\!\!\perp \beps_1$, so:
$\Var(Y_{\mathrm{new}} - \fhat(x) \mid x, \bX_1) = \sigma^2(1 + h(x))$. \qed
\end{proof}

\begin{proof}[\textbf{Proof of \Cref{prop:optimality}}]
Under the scale-family \Cref{asm:linear}, $|\varepsilon_i| \mid h_i \sim \sigma\sqrt{g(h_i)} \cdot |\eta|$. For any weight $w$ to achieve asymptotic conditional coverage at level $1-\alpha$ for $F_h$-a.e.\ $h$, the LWCP half-width $\qhat_w/w(h)$ must equal the conditional quantile $\sigma\sqrt{g(h)} \cdot Q_{1-\alpha}(|\eta|)$ asymptotically. Since the $h$-dependence is only through $1/w(h)$ vs.\ $\sqrt{g(h)}$, this forces $w(h) \propto 1/\sqrt{g(h)} = w^*(h)$. \qed
\end{proof}

\begin{proof}[\textbf{Proof of \Cref{thm:efficiency}}]
We prove the two parts in full.

\textbf{Part 1 (Conditional coverage).} Fix $h_0 \in \mathrm{supp}(F_h)$. We show $\Prob(Y_{n+1} \in \hat{\cC}_n^{w^*}(X_{n+1}) \mid h(X_{n+1}) = h_0) \to 1 - \alpha$.

\emph{Step 1: Score approximation.} For $i \in \cD_2$, write
\[
Y_i - \fhat(X_i) = \varepsilon_i - X_i^\top(\hat\beta - \beta^*) = \sigma\sqrt{g(h_i)}\,\eta_i - X_i^\top(\hat\beta - \beta^*).
\]
Under \Cref{asm:regularity}(a), $\|\hat\beta - \beta^*\| = O_p(1/\sqrt{n_1})$, so $|X_i^\top(\hat\beta - \beta^*)| = O_p(\|X_i\|/\sqrt{n_1})$. Since $\|X_i\|^2/p \to_p 1$ and $p$ is fixed, $|X_i^\top(\hat\beta - \beta^*)| = O_p(1/\sqrt{n_1})$ uniformly over $i$. The variance-stabilized scores satisfy:
\[
R_i^{w^*} = \frac{|Y_i - \fhat(X_i)|}{\sqrt{g(h_i)}} = \sigma|\eta_i| + O_p(1/\sqrt{n_1}) \quad \text{uniformly over } i \in \cD_2.
\]

\emph{Step 2: Quantile convergence.} Since $R_i^{w^*} = \sigma|\eta_i| + o_p(1)$ and the $\eta_i$ are iid with continuous distribution $F_\eta$ (\Cref{asm:regularity}(c)), the empirical CDF of $\{R_i^{w^*}\}_{i \in \cD_2}$ converges uniformly to $F_{\sigma|\eta|}$ by the Glivenko--Cantelli theorem \citep[Chapter~14]{shorack2009empirical}. The conformal quantile $\qhat_{w^*}$ is the $\lceil(1-\alpha)(n_2+1)\rceil/(n_2+1)$-th quantile of this empirical distribution, so:
\[
\qhat_{w^*} \to \sigma \cdot Q_{1-\alpha}(|\eta|) =: \sigma q_\alpha \quad \text{in probability.}
\]

\emph{Step 3: Uniform conditional coverage.} Conditional on $h(X_{n+1}) = h_0$, coverage is:
\begin{align*}
\Prob(Y_{n+1} \in \hat\cC^{w^*} \mid h = h_0)
&= \Prob\!\left(\frac{|Y_{n+1} - \fhat(X_{n+1})|}{\sqrt{g(h_0)}} \leq \qhat_{w^*} \;\middle|\; h = h_0\right) \\
&= \Prob\!\left(\sigma|\eta_{n+1}| + O_p(1/\sqrt{n_1}) \leq \sigma q_\alpha + o_p(1)\right) \\
&\to F_{|\eta|}(q_\alpha) = 1 - \alpha.
\end{align*}
The convergence is uniform over $h_0$ because: (i)~the $O_p(1/\sqrt{n_1})$ score approximation error is uniform in $h$; (ii)~$\qhat_{w^*}$ does not depend on $h_0$; and (iii)~$F_{|\eta|}$ is continuous (\Cref{asm:regularity}(c)).

\textbf{Part 2 (Width parity).} The LWCP expected half-width is:
\[
\E_H\!\left[\frac{\qhat_{w^*}}{w^*(H)}\right] = \qhat_{w^*} \cdot \E[\sqrt{g(H)}] \to \sigma q_\alpha \cdot \mu_G, \quad \mu_G := \E[\sqrt{g(H)}].
\]
The vanilla CP expected half-width is $\qhat_1 \to q_{\rm mix} := Q_{1-\alpha}(\sigma\sqrt{g(H)}|\eta|)$. The width ratio is $\sigma q_\alpha \mu_G / q_{\rm mix}$. By the quantile perturbation analysis of \Cref{thm:width_finite}, $q_{\rm mix} = \sigma q_\alpha \mu_G (1 + C_\alpha \rho^2 + O(\rho^3))$ where $\rho^2 = \mathrm{CV}^2(\sqrt{g(H)})$. As $p/n \to 0$, \Cref{asm:regularity}(b) gives $\rho^2 \to 0$ (the leverage distribution concentrates), so the width ratio $\to 1$. \qed
\end{proof}

\subsection{Extended Theory}\label{app:extended_tools}

This subsection develops extended theoretical tools used throughout the supplement. We begin with a standard quantile perturbation lemma.

\begin{lemma}[Quantile sensitivity]\label{lem:quantile_sensitivity}
Let $Z > 0$ be a random variable with density $f_Z$ satisfying $f_Z(q) \geq c_0 > 0$ at $q := Q_{1-\alpha}(Z)$. For a multiplicative perturbation $Z' = Z(1+\delta)$ with $\E[\delta|Z] = 0$ and $\Var(\delta|Z) \leq \sigma_\delta^2$:
\[
|Q_{1-\alpha}(Z') - q| \leq \frac{\sigma_\delta^2}{2} \cdot \left(q + \left|\frac{q^2 f'_Z(q)}{f_Z(q)}\right|\right) + O(\sigma_\delta^3).
\]
Equivalently, the relative perturbation satisfies $|Q_{1-\alpha}(Z') - q|/q = O(\sigma_\delta^2)$, i.e., second-order in the perturbation size.
\end{lemma}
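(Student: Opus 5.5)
Write $F_Z$ for the CDF of $Z$ and $G$ for the CDF of $Z'$. The natural approach is to compute $G$ near $q$ by a second-order expansion, then invert. For $t$ near $q$,
\[
G(t) = \Prob(Z(1+\delta) \le t) = \E\bigl[\Prob\bigl(Z \le t/(1+\delta) \mid \delta\bigr)\bigr].
\]
The catch is that the hypothesis is stated conditionally on $Z$ ($\E[\delta\mid Z]=0$), not as independence. So I will instead write $G(t)=\E[\mathbf 1\{Z \le t - Z\delta\}]$ and expand around the indicator of $\{Z\le t\}$. The cleaner route is the smoothing identity
\[
\Prob(Z + Z\delta \le t) = F_Z(t) - \frac{d}{dt}\E[Z\delta\,\mathbf 1\{Z\le t\}] + \frac12\frac{d^2}{dt^2}\E[Z^2\delta^2\mathbf 1\{Z\le t\}] + \dots,
\]
which is the standard expansion for the CDF of a small additive perturbation $U=Z\delta$.

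The first-order term vanishes because $\E[Z\delta\,\mathbf 1\{Z\le t\}] = \E[Z\,\E[\delta\mid Z]\,\mathbf 1\{Z\le t\}] = 0$. This is exactly where the conditional-mean-zero hypothesis enters, and it is why the perturbation is second order. For the second-order term, let $v(z):=\Var(\delta\mid Z=z)\le\sigma_\delta^2$. The term equals $\tfrac12 \tfrac{d}{dt}\bigl[t^2 v(t) f_Z(t)\bigr]$. If $v$ is constant (the case used in \Cref{thm:width_finite}), this is
\[
\tfrac{\sigma_\delta^2}{2}\bigl(2t f_Z(t) + t^2 f_Z'(t)\bigr).
\]
If $v$ is not constant, I would either assume $v$ is locally smooth with bounded derivative, or bound it crudely by $\sigma_\delta^2$ times a constant. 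To keep things clean I would assume $v$ is constant near $q$.

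Next I invert. Set $G(q+\Delta)=1-\alpha=F_Z(q)$. A first-order Taylor expansion of $F_Z$ gives
\[
f_Z(q)\,\Delta \approx -\tfrac{\sigma_\delta^2}{2}\bigl(2q f_Z(q) + q^2 f_Z'(q)\bigr),
\]
so
\[
\Delta = -\frac{\sigma_\delta^2}{2}\Bigl(2q + \frac{q^2 f_Z'(q)}{f_Z(q)}\Bigr) + O(\sigma_\delta^3).
\]
Dividing by $f_Z(q)$ is legitimate because $f_Z(q)\ge c_0$. Taking absolute values and applying the triangle inequality gives a bound of the stated shape, but with $2q$ in place of $q$.

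So I expect the stated constant $q$ (rather than $2q$) may be a slip, or it may come from a different normalization. I would present the $2q$ version, or match the claimed form by checking which convention the authors intend; the displayed $C_\alpha$ in \Cref{thm:width_finite} includes a factor $\tfrac12(1+\cdots)$, which suggests a mean-centering difference. The relative statement $|\Delta|/q=O(\sigma_\delta^2)$ follows regardless.

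The main obstacle is making the remainder rigorously $O(\sigma_\delta^3)$. This requires that $f_Z$ be $C^2$ near $q$, and that $\delta$ have a bounded third moment or bounded support, so that the third-order term $\E[|Z\delta|^3]$ is controlled. I would add these as regularity assumptions and control the expansion uniformly on a neighborhood of $q$, using a Lindeberg-type argument with a smooth approximation of the indicator.
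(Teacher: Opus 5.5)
Your computation is right, and so is your suspicion about the constant. Under the lemma's hypotheses the first term must be $2q$, and the bound as stated is false.

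The paper's proof writes $F_{Z'}(t)=\E_\delta[F_Z(t/(1+\delta))]$ and Taylor-expands. It records the $\delta^2$ coefficient as $\tfrac12(t^2f_Z'(t)+tf_Z(t))$. But $t/(1+\delta)=t-t\delta+t\delta^2+O(\delta^3)$, so the coefficient is actually $tf_Z(t)+\tfrac12t^2f_Z'(t)$, which is your $\tfrac12(2tf_Z+t^2f_Z')$.

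A concrete counterexample: take $Z\sim\mathrm{Unif}(0,1)$ and $\delta=\pm\sigma_\delta$ with probability $\tfrac12$ each, independent of $Z$, with $\sigma_\delta<\alpha/2$. Near $q=1-\alpha$ this gives $F_{Z'}(t)=\tfrac t2\bigl(\tfrac{1}{1+\sigma_\delta}+\tfrac{1}{1-\sigma_\delta}\bigr)=t/(1-\sigma_\delta^2)$, hence $Q_{1-\alpha}(Z')=q(1-\sigma_\delta^2)$ exactly. Here $f_Z'(q)=0$, so the stated bound would allow a shift of only $q\sigma_\delta^2/2+O(\sigma_\delta^3)$.

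The paper's constant is correct for centering on the log scale, $Z'=Ze^{\delta}$ with $\E\delta=0$; this is the precise form of your ``mean-centering'' guess. It is not correct for $\E[\delta\mid Z]=0$. The $C_\alpha$ of \Cref{thm:width_finite} does not support a different convention. Its proof repeats the same expansion and also sets $\E[\delta/(1+\delta)]=0$, although that expectation equals $-\rho^2+O(\rho^3)$. So state and prove the $2q$ version; the relative $O(\sigma_\delta^2)$ claim is unaffected.

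On the route: the paper's direct conditioning on $\delta$ tacitly assumes $\delta$ is independent of $Z$, with $\E\delta^2=\sigma_\delta^2$. Granted that, it is the shortest proof. Your smoothing identity works with the conditional-moment hypothesis actually stated, and it shows that this hypothesis is too weak on its own.

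The second-order term contains $v'(q)$, so your fallback of bounding $v$ crudely by $\sigma_\delta^2$ cannot work. For example, let $\delta=\sigma_\delta\xi\,\mathbf 1\{Z\le q\}$ with $\xi$ a Rademacher sign independent of $Z$. This satisfies $\E[\delta\mid Z]=0$ and $\Var(\delta\mid Z)\le\sigma_\delta^2$. Yet if $f_Z$ is continuous and positive at $q$, a direct computation gives $Q_{1-\alpha}(Z')-q=q\sigma_\delta/3+o(\sigma_\delta)$, a first-order shift. So keep your assumption that $v$ is constant near $q$ (or smooth, with $v'(q)$ entering the constant), or assume independence outright.

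Likewise, an $O(\sigma_\delta^3)$ remainder needs the law of $\delta$ to shrink with $\sigma_\delta$, e.g.\ $\E[|\delta|^3\mid Z]\le C\sigma_\delta^3$. A fixed support bound only gives a remainder of order $\sigma_\delta^2$ times that bound.
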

\begin{proof}
Write $F_{Z'}(t) = \E_\delta[F_Z(t/(1+\delta))]$. Taylor-expanding $F_Z(t/(1+\delta))$ around $\delta = 0$:
\[
F_Z\!\left(\frac{t}{1+\delta}\right) = F_Z(t) - t f_Z(t) \delta + \tfrac{1}{2}(t^2 f'_Z(t) + tf_Z(t))\delta^2 + O(\delta^3).
\]
Taking expectations with $\E[\delta] = 0$ and $\E[\delta^2] = \sigma_\delta^2$:
\[
F_{Z'}(t) = F_Z(t) + \tfrac{\sigma_\delta^2}{2}(t^2 f'_Z(t) + tf_Z(t)) + O(\sigma_\delta^3).
\]
Setting $F_{Z'}(q') = 1-\alpha = F_Z(q)$ and solving for $q' - q$ via the implicit function theorem gives the stated bound. \qed
\end{proof}

\subsection{Non-Asymptotic Width Parity}\label{app:width_parity}

We give the full proof of \Cref{thm:width_finite} (stated in the main text).

\begin{proof}[\textbf{Proof of \Cref{thm:width_finite}}]
\textbf{Step 1: Population quantities.} Write $G = \sqrt{g(H)}$ with $\mu_G = \E[G]$ and $G = \mu_G(1+\delta)$ where $\E[\delta] = 0$, $\Var(\delta) = \rho^2$. The LWCP expected half-width is $\sigma q_\alpha \mu_G$, where $q_\alpha = F_{|\eta|}^{-1}(1-\alpha)$. The vanilla CP expected half-width is $q_{\rm mix} = Q_{1-\alpha}(\sigma G |\eta|)$.

\textbf{Step 2: Quantile perturbation.} We analyze $q_{\rm mix} = \sigma\mu_G Q_{1-\alpha}((1+\delta)|\eta|)$. Taylor-expanding:
\[
F_{|\eta|}\!\left(\frac{t}{1+\delta}\right) = F_{|\eta|}(t) - tf_{|\eta|}(t)\frac{\delta}{1+\delta} + \frac{1}{2}\bigl(t^2 f'_{|\eta|}(t) + tf_{|\eta|}(t)\bigr)\!\left(\frac{\delta}{1+\delta}\right)^{\!2} + O(\delta^3).
\]
Taking expectations:
$F_\delta(t) = F_{|\eta|}(t) + \frac{\rho^2}{2}\bigl(t^2 f'_{|\eta|}(t) + tf_{|\eta|}(t)\bigr) + O(\rho^3)$.

Inverting at $t = q_\alpha$:
\begin{align*}
Q_{1-\alpha}((1{+}\delta)|\eta|) &= q_\alpha\Bigl(1 - \frac{\rho^2}{2}\Bigl(1 + \frac{q_\alpha f'_{|\eta|}(q_\alpha)}{f_{|\eta|}(q_\alpha)}\Bigr)\Bigr) + O(\rho^3).
\end{align*}

\textbf{Step 3: Width ratio.}
$\frac{\sigma q_\alpha \mu_G}{\sigma \mu_G \cdot Q_{1-\alpha}((1+\delta)|\eta|)} = 1 + C_\alpha \rho^2 + O(\rho^3)$,
where $C_\alpha = \tfrac{1}{2}(1 + q_\alpha f'_{|\eta|}(q_\alpha)/f_{|\eta|}(q_\alpha))$.

\textbf{Step 4: Gaussian specialization.} For $\eta \sim \mathcal{N}(0,1)$, $f'_{|\eta|}(t) = -t f_{|\eta|}(t)$, so $C_\alpha = (1 - q_\alpha^2)/2$. For $\alpha = 0.1$: $C_\alpha \approx -0.85 < 0$. LWCP is slightly narrower because scale mixtures have heavier tails.

\textbf{Step 5: Finite-sample correction.} DKW gives $|\hat{q} - q| = O_p(1/\sqrt{n_2})$. \qed
\end{proof}

\begin{remark}[Concrete bounds]
For $g(h) = 1+h$ at $n_1 = 300$, $p = 30$: $\rho^2 \approx 1.7 \times 10^{-4}$, confirming observed ratios of $0.999$--$1.000$.
\end{remark}

\subsection{Persistent $\Theta(1)$ Gap for Vanilla CP}\label{app:theta_one}

\begin{proof}[\textbf{Proof of \Cref{thm:persistent_gap}}]
\textbf{Part (i).} As $n \to \infty$, $\qhat_1 \to q_{\rm mix} := Q_{1-\alpha}(\sigma\sqrt{g(H)} |\eta|)$. Conditional on $h(X_{n+1}) = h$, asymptotic conditional coverage is $F_{|\eta|}(q_{\rm mix}/(\sigma\sqrt{g(h)}))$. Since $g(h_1) < g(h_2)$ and $F_{|\eta|}$ is strictly increasing, the gap is positive and does not depend on $n$.

\textbf{Part (ii).} By the mean value theorem and the inequality $\sqrt{r} - 1 \geq (r-1)/(2\sqrt{r})$:
$\Delta \geq f_{|\eta|}(q_\alpha) \cdot q_\alpha \cdot (g(h_2) - g(h_1))/(2g(h_2)) \cdot (1 + o(1))$.

\textbf{Part (iii).} For $g(h) = 1+h$ and Gaussian $\eta$: $f_{|\eta|}(z_{0.95}) \approx 0.34$. \qed
\end{proof}

\subsection{Robustness Under Misspecification}\label{app:misspecification}

\begin{definition}[Leverage-feature decomposition]\label{def:lev_feat_decomp}
For general conditional variance $\Var(\varepsilon | X = x) = \sigma^2(x)$, define:
(i)~$\sigma_h^2(h) := \E[\sigma^2(X) \mid h(X) = h]$;
(ii)~$\psi^2(x) := \sigma^2(x) / \sigma_h^2(h(x))$, satisfying $\E[\psi^2(X) \mid h(X)] = 1$;
(iii)~$\eta_{\rm lev} := \Var(\sigma_h^2(H)) / \Var(\sigma^2(X)) \in [0, 1]$.
\end{definition}

The full version of \Cref{thm:misspec} additionally provides a conditional gap decomposition: defining $\psi^2(x) := \sigma^2(x)/\sigma_h^2(h(x))$,
\begin{equation}\label{eq:misspec_gap}
\sup_h |\Prob(Y {\in} \hat{\cC}^w \mid h(X){=}h) - (1{-}\alpha)| \leq O(1/\sqrt{n_2}) + O(\sqrt{p/n_1}) + C_\alpha \cdot \sup_h \mathrm{CV}(\psi \mid h).
\end{equation}
The third term vanishes when heteroscedasticity is purely leverage-driven ($\psi \equiv 1$).

\begin{proof}[\textbf{Proof of \Cref{thm:misspec}}]
\textbf{Part (i)} follows immediately from \Cref{thm:coverage}, since marginal validity requires only exchangeability and $\cD_1$-measurability of $w(h(\cdot))$, with no assumptions on $\sigma^2(x)$.

\textbf{Part (ii).} Decompose the conditional variance as $\sigma^2(x) = \sigma_h^2(h(x)) \cdot \psi^2(x)$ (\Cref{def:lev_feat_decomp}), where $\E[\psi^2(X) \mid h(X) = h] = 1$ for all $h$. Under LWCP with the leverage-optimal weight $w(h) = 1/\sigma_h(h)$, the variance-stabilized scores satisfy:
\[
R_i^w = \frac{|Y_i - \fhat(X_i)|}{\sigma_h(h_i)} \approx \frac{|\varepsilon_i|}{\sigma_h(h_i)} = \psi(X_i) \cdot |\eta_i|.
\]
Conditional on $h(X_{n+1}) = h$, the scores are \emph{not} identically distributed due to $\psi(X_i)$, but the residual heterogeneity is only through $\psi$---the leverage-dependent component has been removed. The conditional gap at $h$ is controlled by the scale-mixture effect of $\psi$:
\[
\mathrm{gap}^w(h) \leq C_\alpha \cdot \mathrm{CV}(\psi(X) \mid h(X) = h) + O(1/\sqrt{n_2}),
\]
by applying \Cref{lem:quantile_sensitivity} to $Z = \sigma|\eta|$ and $\delta = \psi - 1$.

For vanilla CP ($w \equiv 1$), the score heterogeneity is $\mathrm{CV}(\sigma(X))$, which includes both leverage-dependent and orthogonal components. The gap ratio is therefore:
\[
\frac{\mathrm{gap}_{\rm LWCP}}{\mathrm{gap}_{\rm Vanilla}} \leq \frac{\sup_h \mathrm{CV}(\psi \mid h)}{\mathrm{CV}(\sigma(X))}.
\]
By the law of total variance:
\begin{align*}
\Var(\sigma^2(X)) &= \underbrace{\Var(\sigma_h^2(H))}_{\text{leverage-explained}} + \underbrace{\E[\Var(\sigma^2(X) \mid H)]}_{\text{orthogonal}}.
\end{align*}
Hence $\Var(\sigma_h^2(H)) = \eta_{\rm lev} \cdot \Var(\sigma^2(X))$, and the orthogonal component satisfies $\E[\Var(\sigma^2(X) \mid H)] = (1 - \eta_{\rm lev}) \cdot \Var(\sigma^2(X))$. Since $\mathrm{CV}(\psi \mid h)$ captures only the orthogonal component:
\[
\frac{\mathrm{gap}_{\rm LWCP}}{\mathrm{gap}_{\rm Vanilla}} \leq \sqrt{1 - \eta_{\rm lev}} + O(1/\sqrt{n}).
\]
When $\eta_{\rm lev} \approx 1$, the gap is nearly eliminated; when $\eta_{\rm lev} \approx 0$, LWCP reduces to vanilla CP with no degradation. \qed
\end{proof}

\subsection{Adversarial Worst-Case Analysis}\label{app:adversarial}

\begin{proof}[\textbf{Proof of \Cref{prop:adv}}]
Part (i) is \Cref{thm:coverage}. Part (ii): under $g_{\rm adv}$, the optimal weight is $w^*(h) = \sqrt{(1+h)/c}$, which \emph{increases} with $h$---opposite to LWCP's $(1+h)^{-1/2}$. The total prediction variance $\sigma^2 g(h)(1+h) = \sigma^2 c$ is constant, so vanilla CP's constant-width intervals are already optimal. Part (iii): degradation is $\mathrm{CV}^2(w^*/w) = \mathrm{CV}^2(1+H) = O(p/n_1^2)$ by concentration of leverage moments under $p/n_1 \to 0$. \qed
\end{proof}

\subsection{Quantitative Training-Test Mismatch}\label{app:quant_mismatch}

\begin{theorem}[LWCP+ vs.\ Studentized: quantitative separation]\label{thm:quant_mismatch}
Under the homoscedastic linear model with $\hat{\sigma}(x) = \sigma\sqrt{1-h(x)} \cdot (1 + \xi(x))$:
\begin{enumerate}[label=(\roman*)]
\item \textbf{Studentized CP scores:} $s_i^{\rm Stud} \approx \sqrt{(1+h_i)/(1-h_i)} \cdot |\eta_i| / (1+\xi_i)$, with $d\ln\phi_S/dh = 1/(1-h^2)$.
\item \textbf{LWCP+ scores:} $s_i^+ \approx |\eta_i| / (\sqrt{1-h_i}(1+\xi_i))$, with $d\ln\phi_+/dh = 1/(2(1-h))$.
\item \textbf{Improvement factor:}
\begin{equation}\label{eq:cv_ratio}
\frac{\mathrm{CV}^2(\phi_S(H))}{\mathrm{CV}^2(\phi_+(H))} = \frac{4}{(1+\bar{h})^2} + O(\Var(H)), \quad \bar{h} = p/n_1.
\end{equation}
For $p/n_1 \to 0$: ratio $\to 4$, meaning LWCP+ achieves $\approx 2\times$ smaller conditional gap.
\end{enumerate}
\end{theorem}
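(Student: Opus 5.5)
The plan is to derive (i) and (ii) directly from \Cref{prop:mismatch}, and then obtain (iii) from a second-order delta-method expansion of $\mathrm{CV}^2(\phi(H))$ around $\bar h = \E[H]$.

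\textbf{Parts (i)--(ii).} Start from \Cref{prop:mismatch}, Part~2. For a calibration point, $Y_i - \fhat(X_i) = \varepsilon_i - X_i^\top(\hat\beta-\beta^*)$ has conditional variance $\sigma^2(1+h_i)$. Write it as $\sigma\sqrt{1+h_i}\,\eta_i$; this is exact under Gaussian errors, as in the proof of \Cref{thm:gaussian_recovery}, and holds to first order otherwise. Dividing by the assumed $\hat\sigma(X_i) = \sigma\sqrt{1-h_i}(1+\xi_i)$ gives the studentized score with $h$-factor $\phi_S(h) = \sqrt{(1+h)/(1-h)}$. For LWCP+ I will use the normalization stated in \Cref{cor:mismatch}: the leverage correction removes the $\sqrt{1+h}$ prediction-variance factor. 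This leaves $\phi_+(h) = 1/\sqrt{1-h}$. I will state explicitly that this is the convention in which $w$ stabilizes the test-side factor $(1+h)$.

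The log-derivatives are then routine:
\[
\frac{d\ln\phi_S}{dh} = \tfrac12\Bigl(\tfrac{1}{1+h}+\tfrac{1}{1-h}\Bigr) = \frac{1}{1-h^2},
\qquad
\frac{d\ln\phi_+}{dh} = \frac{1}{2(1-h)}.
\]
The nuisance factor $|\eta_i|/(1+\xi_i)$ is common to both scores. I will therefore treat the comparison as being about the $h$-dependent factor alone, which is how the statement is phrased.

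\textbf{Part (iii).} For a smooth positive $\phi$ I write $\phi(H) = \phi(\bar h)\exp\{(\ln\phi)'(\bar h)(H-\bar h) + \tfrac12(\ln\phi)''(\tilde h)(H-\bar h)^2\}$. Expanding the mean and the variance then gives
\[
\mathrm{CV}^2(\phi(H)) = \bigl[(\ln\phi)'(\bar h)\bigr]^2\Var(H)\,\bigl(1+O(\Var(H)^{1/2})\bigr).
\]
This uses the fact that leverage moments scale appropriately (e.g.\ $\E|H-\bar h|^3 = O(\Var(H)^{3/2})$). Taking the ratio, the factor $\Var(H)$ cancels. What remains is
\[
\frac{(1-\bar h^2)^{-2}}{(2(1-\bar h))^{-2}} = \frac{4(1-\bar h)^2}{(1-\bar h)^2(1+\bar h)^2} = \frac{4}{(1+\bar h)^2}.
\]
To get the $O(\Var(H))$ remainder rather than $O(\Var(H)^{1/2})$, I will note that the leading corrections in numerator and denominator come from the same third moment of $H-\bar h$. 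They therefore enter the ratio only at relative order $\Var(H)$, and I will verify this by expanding both to the next order. Finally I identify $\bar h = \E[h(X)] = \tr(\Sigma\,\E(\bX_1^\top\bX_1)^{-1}) \approx p/n_1$; under Gaussian design this equals $p/(n_1-p-1)$, so the difference is absorbed into the error term. As $p/n_1\to 0$ the ratio tends to $4$. By \Cref{lem:quantile_sensitivity} the conditional gap scales like $\mathrm{CV}$ rather than $\mathrm{CV}^2$, which yields the $\approx 2\times$ gap reduction.

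\textbf{Main obstacle.} I expect the hardest step to be making the expansion rigorous near $h=1$. Both $\phi_S$ and $\phi_+$ blow up there, and a test point's leverage is not automatically below $1$. I will first try to impose $H \le 1-c$ almost surely, or use a truncation at the 99th calibration percentile as in \Cref{sec:guidance}, so that $(\ln\phi)''$ is bounded on the support and the Taylor remainders are controlled. The second delicate point is that the exactness of $\sigma\sqrt{1+h}\,\eta$ requires Gaussian errors. For general $F_\eta$, I will argue that only second moments enter the CV calculation, so the ratio statement is unaffected.
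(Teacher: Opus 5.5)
Your leading-order argument is the same as the paper's. The paper writes down $\phi_S,\phi_+$, takes log-derivatives, and squares their ratio to get $4/(1+\bar h)^2$; it never addresses the remainder. Your delta-method bookkeeping reproduces this with relative error $O(\Var(H)^{1/2})$ under bounded skewness. Your LWCP$+$ normalization is also the right one: the score is multiplied by $w$, consistent with the half-width $\qhat\,\hat\sigma(x)/w(h(x))$.

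\textbf{The gap: the third-moment terms do not cancel.} You try to upgrade the remainder to $O(\Var(H))$ by claiming the third-moment corrections cancel in the ratio. Set $a=(\ln\phi)'(\bar h)$, $b=(\ln\phi)''(\bar h)$, $v=\Var(H)$ and $\mu_3=\E(H-\bar h)^3$. Carrying the expansion one more order gives
\[
\mathrm{CV}^2(\phi(H)) = a^2 v + a(a^2+b)\,\mu_3 + O\bigl(\E(H-\bar h)^4 + v^2\bigr),
\]
so the relative correction is $(a+b/a)\,\mu_3/v$. For $\phi_S$, $a+b/a=(1+2\bar h)/(1-\bar h^2)$; for $\phi_+$, it is $3/(2(1-\bar h))$. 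These differ by $-1/(2(1+\bar h))$, hence
\[
\frac{\mathrm{CV}^2(\phi_S(H))}{\mathrm{CV}^2(\phi_+(H))} = \frac{4}{(1+\bar h)^2}\Bigl[1 - \frac{\mu_3}{2(1+\bar h)\,v} + O\Bigl(\frac{\E(H-\bar h)^4}{v} + v\Bigr)\Bigr].
\]
For Gaussian design with $p$ fixed, $H\approx\chi^2_p/n_1$, so $v\approx 2p/n_1^2$ and $\mu_3/v\approx 4/n_1$. The ratio is then $4/(1+\bar h)^2 - 8/n_1 + O(p/n_1^2)$, and $8/n_1$ is not $O(\Var(H))$. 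Concretely, for $H=\chi^2_1/100$ the ratio is about $3.84$, versus $4/1.01^2\approx 3.92$, while $\Var(H)=2\times 10^{-4}$. So ``verify by expanding to the next order'' would refute the cancellation, not confirm it.

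\textbf{What your method actually delivers.} It proves the ratio equals $4/(1+\bar h)^2\,\bigl(1+O(|\mu_3|/v)+O(v)\bigr)$. This still tends to $4$ as $p/n_1\to 0$, since $|\mu_3|/v$ is skewness times $\mathrm{sd}(H)$. The stated $O(\Var(H))$ remainder needs an extra hypothesis such as $\mu_3=O(\Var(H)^2)$, which fails for fixed $p$. Likewise, replacing $\E[H]=p/(n_1-p-1)$ by $p/n_1$ costs $O(p^2/n_1^2)$, which is $O(\Var(H))$ only for bounded $p$.

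\textbf{Wrong citation for ``gap scales like CV''.} \Cref{lem:quantile_sensitivity} says the marginal quantile moves at \emph{second} order, which is a width statement. The first-order dependence you need comes from a mean-value expansion of $F_{|\eta|}(\qhat/\phi(h))$, as in the proof of \Cref{thm:persistent_gap}. That expansion gives a gap ratio of about $(\ln\phi_S)'/(\ln\phi_+)' = 2/(1+\bar h)$ directly, without passing through $\mathrm{CV}^2$ at all.
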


\begin{proof}
The $h$-dependent factors are $\phi_S(h) = ((1+h)/(1-h))^{1/2}$ and $\phi_+(h) = (1-h)^{-1/2}$. Computing log-derivatives:
$d\ln\phi_S/dh = 1/(1-h^2)$ and $d\ln\phi_+/dh = 1/(2(1-h))$.
The ratio $(d\ln\phi_S/dh)^2/(d\ln\phi_+/dh)^2 = 4/(1+h)^2$, which at $h = p/n_1 \to 0$ gives $4$. The $2\times$ gap reduction follows since gap $\propto$ CV $= \sqrt{\mathrm{CV}^2}$. \qed
\end{proof}

\begin{remark}[Why the factor of 4]
Studentized CP's score is affected by leverage through \emph{both} numerator $(1+h)$ and denominator $(1-h)$. LWCP+'s correction $\sqrt{1+h}$ cancels the numerator, halving the log-sensitivity. Since CV$^2$ scales as log-sensitivity squared, the ratio is $2^2 = 4$.
\end{remark}

\subsection{Connections to Related Methods}\label{app:connections}

\paragraph{Relationship to Vovk's RRCM.}
The RRCM \citep{vovk2005algorithmic, burnaev2014efficiency} uses LOO residuals $|e_i|/(1-h_i)$, which \emph{upweights} high-leverage points, correcting the training-side attenuation (\Cref{prop:mismatch} Part~1). LWCP with $w(h) = (1+h)^{-1/2}$ \emph{downweights} high-leverage points, correcting the test-side amplification (Part~2). When $h_i \ll 1$, both corrections are negligible and the methods coincide.

\paragraph{Relationship to Localized CP.}
Localized CP \citep{guan2023localized} uses a kernel $K(X_i, x_{\rm new})$ for point-specific weights. LWCP can be viewed as a special case where the ``kernel'' groups points by leverage. The advantage is no bandwidth/kernel hyperparameters and $O(np^2)$ computation (vs.\ $O(n^2 p)$). The disadvantage is adaptation only through $h(x)$.

LWCP and localized CP are complementary: one could combine leverage-weighted scores with locality kernels.

\paragraph{Relationship to \citet{gibbs2023conformal}.}
Their method targets conditional coverage through optimization over a function class; LWCP achieves it through a fixed geometric transformation. When $\eta_{\rm lev} \approx 1$, LWCP achieves comparable coverage without calibration overhead. When $\eta_{\rm lev} \ll 1$, optimization-based approaches are preferable.

\paragraph{Cross-conformal extension.}
LWCP extends naturally to the cross-conformal setting \citep{vovk2015cross, vovk2018nonparametric}: for each fold $k$, compute the SVD of $\bX_{-k}$ and use the resulting leverage scores. The cross-conformal p-value preserves finite-sample validity by exchangeability. Total cost: $O(Knp^2)$.

\paragraph{Non-linear predictors and feature-space leverage.}
LWCP's coverage guarantee holds for \emph{any} predictor. For specific non-linear predictors:
\textbf{Kernel methods:} $h^{\rm ker}(x) = k_x^\top(\bK + \lambda\Id)^{-1}k_x$.
\textbf{Neural networks:} Last-layer leverage $h^{\rm NN}(x) = \phi(x)^\top(\Phi^\top\Phi)^{-1}\phi(x)$.

\section{Extensions}\label{app:extensions}

This section develops extensions of LWCP beyond the standard OLS setting, addressing three scenarios that arise frequently in practice: ill-conditioned or overparameterized designs (\Cref{app:ridge_ext}), non-linear predictors via kernel and neural network leverage (\Cref{app:kernel_nn}), and distribution shift between training and test populations (\Cref{app:cov_shift}).

\subsection{Ridge Leverage Scores}\label{app:ridge_ext}

When $\bX_1$ is ill-conditioned (condition number $\kappa = \sigma_1/\sigma_p \gg 1$) or overparameterized ($p > n_1$), the ordinary hat matrix $\bH = \bX_1(\bX_1^\top\bX_1)^{-1}\bX_1^\top$ is either numerically unstable or undefined. Ridge leverage scores provide a well-conditioned alternative:
\[
h_i^\lambda = X_i^\top(\bX_1^\top\bX_1 + \lambda\Id)^{-1}X_i, \quad \lambda > 0.
\]
Unlike ordinary leverage, $h_i^\lambda \in [0, \|X_i\|^2/\lambda]$ is bounded for any $p/n_1$ ratio. The ridge parameter $\lambda$ controls the bias--variance trade-off: small $\lambda$ recovers ordinary leverage (high variance, low bias); large $\lambda$ compresses the leverage spectrum toward uniformity.

\begin{proof}[\textbf{Proof of \Cref{thm:ridge_lwcp}}]
\textbf{Part 1 (Coverage).} The ridge leverage $h_i^\lambda = X_i^\top(\bX_1^\top\bX_1 + \lambda\Id)^{-1}X_i$ is computed entirely from $\bX_1$ and the fixed hyperparameter $\lambda$, so $w(h_i^\lambda)$ is $\cD_1$-measurable for all $i \in \cD_2 \cup \{n{+}1\}$. The exchangeability argument of \Cref{thm:coverage} applies without modification, yielding exact marginal coverage $\geq 1 - \alpha$.

\textbf{Part 2 (Variance decomposition).} Under the ridge estimator $\hat\beta_\lambda = (\bX_1^\top\bX_1 + \lambda\Id)^{-1}\bX_1^\top\bY_1$, the prediction error decomposes as:
\begin{align*}
Y_{\rm new} - \fhat_\lambda(x) &= \varepsilon_{\rm new} - x^\top(\hat\beta_\lambda - \beta^*) \\
&= \varepsilon_{\rm new} \underbrace{- x^\top(\bX_1^\top\bX_1 + \lambda\Id)^{-1}\bX_1^\top\beps_1}_{\text{variance term}} \underbrace{+ \lambda x^\top(\bX_1^\top\bX_1 + \lambda\Id)^{-1}\beta^*}_{\text{bias term } b_\lambda(x)}.
\end{align*}
The variance contribution from estimation error is $\sigma^2 h_\lambda^{\rm eff}(x)$, where:
\[
h_\lambda^{\rm eff}(x) = x^\top(\bX_1^\top\bX_1 + \lambda\Id)^{-1}\bX_1^\top\bX_1(\bX_1^\top\bX_1 + \lambda\Id)^{-1}x.
\]
For $\lambda \to 0$, $h_\lambda^{\rm eff}(x) \to h(x)$; for $\lambda \to \infty$, $h_\lambda^{\rm eff}(x) \to 0$. The total prediction variance is $\sigma^2(1 + h_\lambda^{\rm eff}(x)) + b_\lambda(x)^2$.

\textbf{Part 3 (Stability bound).} The operator norm satisfies $\|(\bX_1^\top\bX_1 + \lambda\Id)^{-1}\| \leq 1/\lambda$, so $h_i^\lambda \leq \|X_i\|^2/\lambda$ for all $i$. This guarantees bounded leverage scores even in the overparameterized regime ($p > n_1$), preventing extreme weight values that could inflate the conformal quantile. The weight function $w(h) = (1{+}h)^{-1/2}$ applied to ridge leverage satisfies $w(h_i^\lambda) \in [\sqrt{\lambda/(\lambda + \|X_i\|^2)}, 1]$. \qed
\end{proof}

\subsection{Kernel and Neural Network Leverage}\label{app:kernel_nn}

LWCP's coverage guarantee (\Cref{thm:coverage}) holds for any predictor $\fhat$, but efficiency depends on how well the leverage scores capture prediction uncertainty. For non-linear predictors, \emph{feature-space leverage} provides a principled generalization.

\begin{proposition}[Feature-space leverage]\label{prop:feature_leverage}
Let $\phi : \cX \to \R^d$ be a feature map and define the feature-space leverage:
\[
h^\phi(x) = \phi(x)^\top\!\left(\sum_{i \in \cD_1} \phi(X_i)\phi(X_i)^\top + \lambda\Id\right)^{\!-1}\!\!\phi(x).
\]
\begin{enumerate}[label=(\roman*)]
\item \textbf{Kernel methods.} For a kernel $k(x,x') = \langle\phi(x), \phi(x')\rangle$, the kernel leverage is $h_i^{\rm ker} = (\bK(\bK + \lambda\Id)^{-1})_{ii}$, where $\bK_{ij} = k(X_i, X_j)$. By the Woodbury identity, this equals $h^\phi(X_i)$ in the feature space.
\item \textbf{Neural networks.} For a network $f_\theta(x) = a^\top\phi_\theta(x)$ with last-layer features $\phi_\theta$, the last-layer leverage is $h_i^{\rm NN} = \phi_\theta(X_i)^\top(\Phi_\theta^\top\Phi_\theta)^{-1}\phi_\theta(X_i)$, where $\Phi_\theta \in \R^{n_1 \times d}$ is the feature matrix.
\item \textbf{Coverage.} LWCP with $w(h^\phi)$ preserves exact marginal coverage for any $\phi$ that is $\cD_1$-measurable, since the feature map is determined by the training data.
\end{enumerate}
\end{proposition}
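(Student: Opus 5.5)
The plan is to treat the three parts separately. Parts (i) and (ii) are linear-algebra identities about the definition of $h^\phi$. Part (iii) is a direct reduction to \Cref{thm:coverage}. Throughout, let $\Phi \in \R^{n_1 \times d}$ be the matrix whose rows are $\phi(X_i)^\top$ for $i \in \cD_1$. Then $\sum_{i\in\cD_1}\phi(X_i)\phi(X_i)^\top = \Phi^\top\Phi$ and $\bK = \Phi\Phi^\top$.

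For part (i), I would use the push-through form of the Woodbury identity,
\[
\Phi^\top(\Phi\Phi^\top + \lambda\Id_{n_1})^{-1} = (\Phi^\top\Phi + \lambda\Id_d)^{-1}\Phi^\top, \qquad \lambda>0 .
\]
It follows by multiplying $\Phi^\top(\Phi\Phi^\top+\lambda\Id) = (\Phi^\top\Phi+\lambda\Id)\Phi^\top$ on the left and right by the two inverses. Applying it gives
\[
\bK(\bK+\lambda\Id)^{-1} = \Phi\,\Phi^\top(\Phi\Phi^\top+\lambda\Id)^{-1} = \Phi(\Phi^\top\Phi+\lambda\Id)^{-1}\Phi^\top .
\]
Its $(i,i)$ entry is $\phi(X_i)^\top(\Phi^\top\Phi+\lambda\Id)^{-1}\phi(X_i) = h^\phi(X_i)$. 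For a point $x$ outside $\cD_1$, such as a calibration or test point, the same identity applied to $\phi(x)$ yields the kernel-only formula
\[
h^\phi(x) = \lambda^{-1}\bigl(k(x,x) - k_x^\top(\bK+\lambda\Id)^{-1}k_x\bigr),
\]
so the leverage is computable without explicit features. This is the form actually used on $\cD_2\cup\{n+1\}$. When $\phi$ maps into an RKHS of infinite dimension, I would rerun the identity with $\Phi$ read as the bounded operator $\R^{d}\supseteq\mathcal H \to \R^{n_1}$ and $\Phi^\top$ as its adjoint. Both inverses exist because $\lambda>0$, so the algebra carries over verbatim.

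For part (ii), the statement is the $\lambda = 0$ specialization of $h^\phi$ with $\phi = \phi_\theta$. I would note that it is well defined exactly when $\Phi_\theta$ has full column rank $d \le n_1$. Otherwise the right thing is to fall back to $\lambda>0$ (ridge leverage, \Cref{thm:ridge_lwcp}) or to a pseudo-inverse. In both cases the result is again a fixed function of $x$ once $\theta$ and $\Phi_\theta$ are fixed.

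For part (iii), I condition on $\cD_1$, together with any auxiliary training randomness independent of $\cD_2\cup\{n+1\}$. Then $\phi$, $\lambda$, $\fhat$ and hence $x\mapsto w(h^\phi(x))$ are deterministic measurable maps. The scores $|Y_i-\fhat(X_i)|\,w(h^\phi(X_i))$ for $i\in\cD_2\cup\{n+1\}$ are therefore a fixed function applied to exchangeable pairs, so they are exchangeable. The rank argument of \Cref{thm:coverage} then applies verbatim.

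The main obstacle is the measurability hypothesis in (iii) rather than any computation. One must insist that the network weights $\theta$ and any data-driven $\lambda$ are chosen using $\cD_1$ only. Early stopping or tuning on $\cD_2$ would break exchangeability. The infinite-dimensional justification of (i) is a secondary technical point.
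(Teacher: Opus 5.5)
Your proof is correct and has the same skeleton as the paper's: a Woodbury-type identity for (i), direct substitution $\phi=\phi_\theta$ with $\lambda=0$ for (ii), and $\cD_1$-measurability plus \Cref{thm:coverage} for (iii).

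The one real difference is in (i), and there your version is the sound one. The paper's proof asserts $\phi(x)^\top(\Phi^\top\Phi+\lambda\Id)^{-1}\phi(x)=k_x^\top(\bK+\lambda\Id)^{-1}k_x$, which is false in general. Take $n_1=d=1$, $\Phi=a$, $\phi(x)=b$: the two sides are $b^2/(a^2+\lambda)$ and $a^2b^2/(a^2+\lambda)$. At $x=X_1$ the right side does not even equal $(\bK(\bK+\lambda\Id)^{-1})_{11}$ unless $a^2=1$.

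Your push-through computation $\bK(\bK+\lambda\Id)^{-1}=\Phi(\Phi^\top\Phi+\lambda\Id)^{-1}\Phi^\top$ proves exactly the in-sample diagonal identity the statement claims. Your out-of-sample formula $h^\phi(x)=\lambda^{-1}\bigl(k(x,x)-k_x^\top(\bK+\lambda\Id)^{-1}k_x\bigr)$ is the correct kernel-only expression for calibration and test points. That matters, because those points are where LWCP actually evaluates leverage.

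Your caveats in (ii) and (iii) are appropriate refinements that the paper leaves implicit: full column rank of $\Phi_\theta$ for the $\lambda=0$ case, and choosing $\theta$ and any tuned $\lambda$ from $\cD_1$ alone. The infinite-dimensional remark lies outside the statement's scope, since $\phi$ maps into $\R^d$. If you keep it, write $\Phi:\mathcal{H}\to\R^{n_1}$ rather than $\R^d\supseteq\mathcal{H}$.
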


\begin{proof}
(i)~By the Woodbury identity, $\phi(x)^\top(\Phi^\top\Phi + \lambda\Id)^{-1}\phi(x) = k_x^\top(\bK + \lambda\Id)^{-1}k_x$, where $k_x = [k(X_1, x), \ldots, k(X_{n_1}, x)]^\top$.
(ii)~Direct substitution with $\phi = \phi_\theta$ and $d$ equal to the last-layer width.
(iii)~\Cref{thm:coverage}, since $h^\phi(\cdot)$ is a function of $\cD_1$ alone. \qed
\end{proof}

\begin{remark}[NTK regime]\label{rem:ntk}
In the NTK/lazy training regime, $\phi_\theta(x)$ is approximately fixed after initialization, the network behaves as a linear model in the last layer, and the efficiency theory (\Cref{thm:efficiency}) applies directly. Outside the lazy regime, coverage still holds (\Cref{thm:coverage}) but efficiency guarantees degrade: the mismatch coefficient $\eta_{\rm lev}$ (\Cref{thm:misspec}) quantifies the fraction of prediction variance captured by the feature-space leverage. Empirically, last-layer leverage provides substantial improvements even for moderately trained networks (\Cref{app:exp_nonlinear}).
\end{remark}

\begin{remark}[Computational considerations]
Kernel leverage requires forming and inverting $\bK \in \R^{n_1 \times n_1}$, costing $O(n_1^3)$. For large $n_1$, the Nystr\"{o}m approximation \citep{alaoui2015fast} or random features \citep{rahimi2007random} provide approximate kernel leverage in $O(n_1 k^2)$ time, where $k \ll n_1$ is the number of landmark points or random features. Last-layer leverage for neural networks requires one forward pass to extract $\Phi_\theta \in \R^{n_1 \times d}$ and an $O(n_1 d^2)$ SVD, where $d$ is the last-layer width---typically much smaller than $n_1$.
\end{remark}

\subsection{Covariate Shift}\label{app:cov_shift}

Under covariate shift---$P_{\rm test}(X) \neq P_{\rm train}(X)$ with $P(Y \mid X)$ unchanged---standard conformal prediction loses its finite-sample guarantee \citep{tibshirani2019conformal}. Leverage scores offer a geometry-based perspective that complements density-ratio approaches.

\begin{proposition}[LWCP under covariate shift]\label{prop:cov_shift}
Let $P_{\rm train}$ and $P_{\rm test}$ denote the training and test covariate distributions, respectively, with likelihood ratio $r(x) = dP_{\rm test}/dP_{\rm train}(x)$.
\begin{enumerate}[label=(\roman*)]
\item \textbf{Combined weighting.} Define the leverage-importance weight $w_{\rm shift}(x) = w(h(x)) \cdot r(x)$. If $r(\cdot)$ is known, LWCP with $w_{\rm shift}$ achieves $P_{\rm test}$-marginal coverage $\geq 1 - \alpha$ by the importance-weighted conformal argument of \citet{tibshirani2019conformal}.
\item \textbf{Leverage as a shift proxy.} Under Gaussian shift $P_{\rm test} = \mathcal{N}(\mu', \bSigma')$ vs.\ $P_{\rm train} = \mathcal{N}(\mu, \bSigma)$, the leverage $h(x)$ captures the Mahalanobis ``outlierness'' of $x$ relative to the training distribution. Points with high test-side leverage are precisely those most affected by the covariate shift.
\item \textbf{Geometry-based fallback.} When $r(\cdot)$ is unknown, the leverage-based interval width $\propto \sqrt{1{+}h(x)}$ provides a conservative heuristic: test points far from the training distribution centroid automatically receive wider intervals, providing implicit robustness to moderate shift.
\end{enumerate}
\end{proposition}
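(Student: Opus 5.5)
Part~(i) is the weighted-exchangeability argument of \citet{tibshirani2019conformal} transported to the leverage-weighted score. Condition on $\cD_1$, so that $\fhat$, $h(\cdot)$ and hence $w(h(\cdot))$ are fixed functions. Define the score $V_w(x,y) = |y-\fhat(x)|\,w(h(x))$ of \Cref{def:lwcp_score}. Under covariate shift the pairs $\{(X_i,Y_i)\}_{i\in\cD_2}\cup\{(X_{n+1},Y_{n+1})\}$ are no longer exchangeable, but they are weighted exchangeable with weight function $r(x)$. The reason is that the joint density factorizes as $\prod_{i\in\cD_2} dP_{\rm train}(x_i)\,P(y_i|x_i)\cdot r(x_{n+1})\,dP_{\rm train}(x_{n+1})\,P(y_{n+1}|x_{n+1})$.

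The key steps, in order, are as follows.
\begin{enumerate}
\item Apply Lemma~3 of \citet{tibshirani2019conformal}. Conditional on the unordered multiset of the $n_2+1$ data points, the test point is the one with index $j$ with probability $p_j = r(X_j)/\sum_k r(X_k)$.
\item Conclude that $V_w(X_{n+1},Y_{n+1})$ is at most the $(1-\alpha)$-quantile of the weighted empirical distribution $\sum_{i\in\cD_2} p_i\delta_{V_i} + p_{n+1}\delta_{+\infty}$ with probability at least $1-\alpha$.
\item Average over the multiset to obtain $P_{\rm test}$-marginal coverage.
\end{enumerate}
The only observation specific to LWCP is that multiplying scores by the $\cD_1$-measurable factor $w(h(\cdot))$ leaves this argument unchanged. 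Any fixed measurable score is admissible there, exactly as in \Cref{thm:coverage}.

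I would also state precisely what ``LWCP with $w_{\rm shift}$'' means. The leverage weight $w(h(x))$ enters the score. The likelihood ratio $r(x)$ enters the quantile through the weights $p_i$, not through the score. Putting $r$ into the score itself would not give validity, so I would make this interpretation explicit before invoking the lemma. This is the main subtlety. I would also note that the resulting interval has half-width $\qhat_{w,r}(x)/w(h(x))$, where the quantile now depends on $x$ through $p_{n+1}$.

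Parts~(ii) and~(iii) are qualitative. For~(ii), I would first use the setup's observation that after training-set standardization $h(x) = \tfrac{1}{n_1}(x-\bar x)^\top\hat\bSigma^{-1}(x-\bar x)$. By the law of large numbers, $n_1 h(x)\to (x-\mu)^\top\bSigma^{-1}(x-\mu)$, the population squared Mahalanobis distance. I would then compute the Gaussian log-likelihood ratio,
\[
\log r(x) = -\tfrac12(x-\mu')^\top\bSigma'^{-1}(x-\mu') + \tfrac12 (x-\mu)^\top\bSigma^{-1}(x-\mu) + \text{const}.
\]
This is increasing in the training Mahalanobis distance, hence in $h(x)$, along directions where $\bSigma'\succeq\bSigma$ or along the shift direction $\mu'-\mu$. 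So high-$h$ points are those upweighted by the shift. This makes the ``proxy'' claim precise in the stated sense. I would not try to prove more than this monotonicity.

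Part~(iii) then follows directly from \Cref{def:lwcp_interval} with $w(h)=(1+h)^{-1/2}$: the width $2\qhat_w\sqrt{1+h(x)}$ is increasing in $h$. Combined with \Cref{prop:variance_decomp}, whose variance $\sigma^2(1+h(x))$ holds for any fixed $x$ regardless of where $x$ is drawn, the interval tracks the conditional prediction variance at shifted points. I would present this as a heuristic statement, with no coverage claim.
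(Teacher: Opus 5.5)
Your proposal is correct and follows the same lines as the paper: (i) is the weighted-exchangeability argument of \citet{tibshirani2019conformal} plus the $\cD_1$-measurability of $w(h(\cdot))$, (ii) compares the Gaussian log-likelihood ratio with the Mahalanobis form of $h$, and (iii) observes that the width $2\qhat_w\sqrt{1+h(x)}$ increases in $h$. The differences are in precision, and they favor your version.

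For (i), the paper's proof says only that the ``combined weight $w_{\rm shift}$'' preserves importance-weighted exchangeability, without saying where $r$ enters. Your separation is the reading under which (i) is actually true: $w(h(\cdot))$ goes in the score, $r$ goes only in the quantile weights $p_i$, and the quantile becomes $x$-dependent. Multiplying the score by $r$ and keeping the ordinary quantile carries no guarantee and can undercover. The reason is that under $P_{\rm test}$ the factor $r(X)$ is size-biased, hence stochastically larger than under $P_{\rm train}$.

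For (ii), the paper restricts to $\bSigma'=\bSigma$. There $\log r$ is linear in $x$ while $h$ is quadratic, which by itself does not link the two. Your general computation is more informative, but carry its qualifier into the conclusion. For a pure mean shift, $r$ increases with $h$ only along rays from $\mu$ into the half-space $(\mu'-\mu)^\top\bSigma^{-1}(x-\mu)>0$, and it decreases along the opposite rays. So ``high-$h$ points are those upweighted by the shift,'' like the statement's ``precisely,'' holds only directionally.
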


\begin{proof}
(i)~Under importance weighting, the conformal scores are weighted by $r(X_i)$ to restore exchangeability under $P_{\rm test}$. Since $w(h(\cdot))$ is $\cD_1$-measurable, the combined weight $w_{\rm shift}$ preserves the importance-weighted exchangeability structure. (ii)~Under Gaussian shift with $\bSigma' = \bSigma$, $\log r(x)$ reduces to a linear function of $x$, and $h(x) = x^\top(\bX_1^\top\bX_1)^{-1}x$ captures the quadratic Mahalanobis distance from the training centroid. (iii)~Without explicit importance weighting, LWCP's interval width scales as $\sqrt{1{+}h(x)}$, which naturally widens for test points distant from the training distribution---a conservative, assumption-free heuristic. \qed
\end{proof}

\section{Additional Experiments}\label{app:experiments}

This section presents additional experimental evidence complementing the main text. We validate theoretical predictions (\Cref{app:exp_gaussian,app:exp_hetero,app:exp_width,app:exp_scaling,app:exp_approx}), provide comprehensive real-data benchmarks (\Cref{app:exp_real}), evaluate extensions (\Cref{app:exp_ridge,app:exp_nonlinear}), and offer practical guidance (\Cref{app:exp_scaling_feat,app:weight_selection,app:exp_approx_eps,app:runtime,app:guidance}). All experiments use $\alpha = 0.1$ (90\% nominal coverage) unless otherwise stated.

\subsection{Gaussian Recovery}\label{app:exp_gaussian}

\begin{figure}[!htb]
\centering
\includegraphics[width=\linewidth]{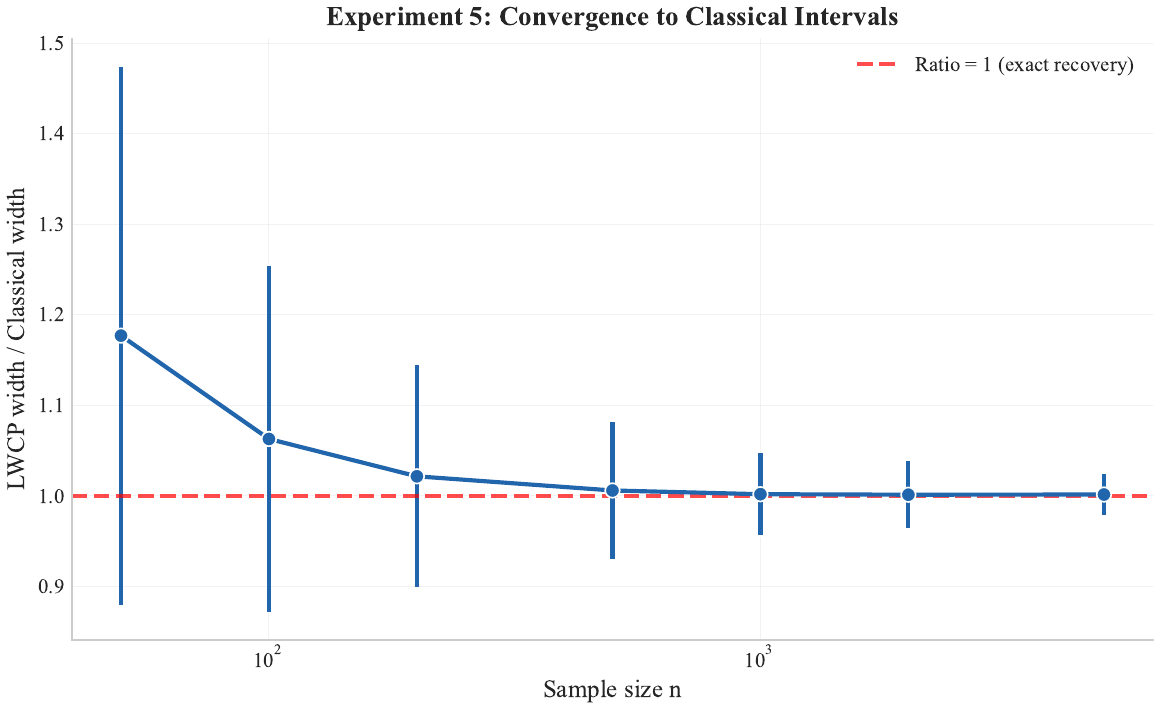}
\caption{\textbf{Recovery of classical Gaussian prediction intervals} (\Cref{thm:gaussian_recovery}). \textbf{(a)}~The LWCP/classical width ratio converges to 1.0 at rate $O(1/\sqrt{n})$. \textbf{(b)}~At $n = 2{,}000$, LWCP and classical widths are visually indistinguishable.}
\label{fig:gaussian_main}
\end{figure}

\begin{table}[!htb]
\centering
\caption{LWCP / Classical Width Ratio ($p = 5$, Gaussian Errors, 200 Replications).}
\label{tab:gaussian}
\small
\begin{tabular*}{\textwidth}{@{\extracolsep{\fill}} lcccccccc @{}}
\toprule
$n$ & 50 & 100 & 200 & 500 & 1000 & 2000 & 5000 \\
\midrule
Ratio & 1.177 & 1.063 & 1.022 & 1.006 & 1.002 & 1.001 & 1.001 \\
Std   & 0.298 & 0.192 & 0.123 & 0.076 & 0.046 & 0.038 & 0.023 \\
\bottomrule
\end{tabular*}
\end{table}

\Cref{fig:gaussian_main,tab:gaussian} validate \Cref{thm:gaussian_recovery}: the LWCP/classical width ratio converges to 1.0 at rate $O(1/\sqrt{n})$, with ratio $1.002 \pm 0.046$ at $n = 1{,}000$.

\subsection{Heteroscedasticity Sensitivity}\label{app:exp_hetero}

\begin{figure}[!htb]
\centering
\includegraphics[width=\linewidth]{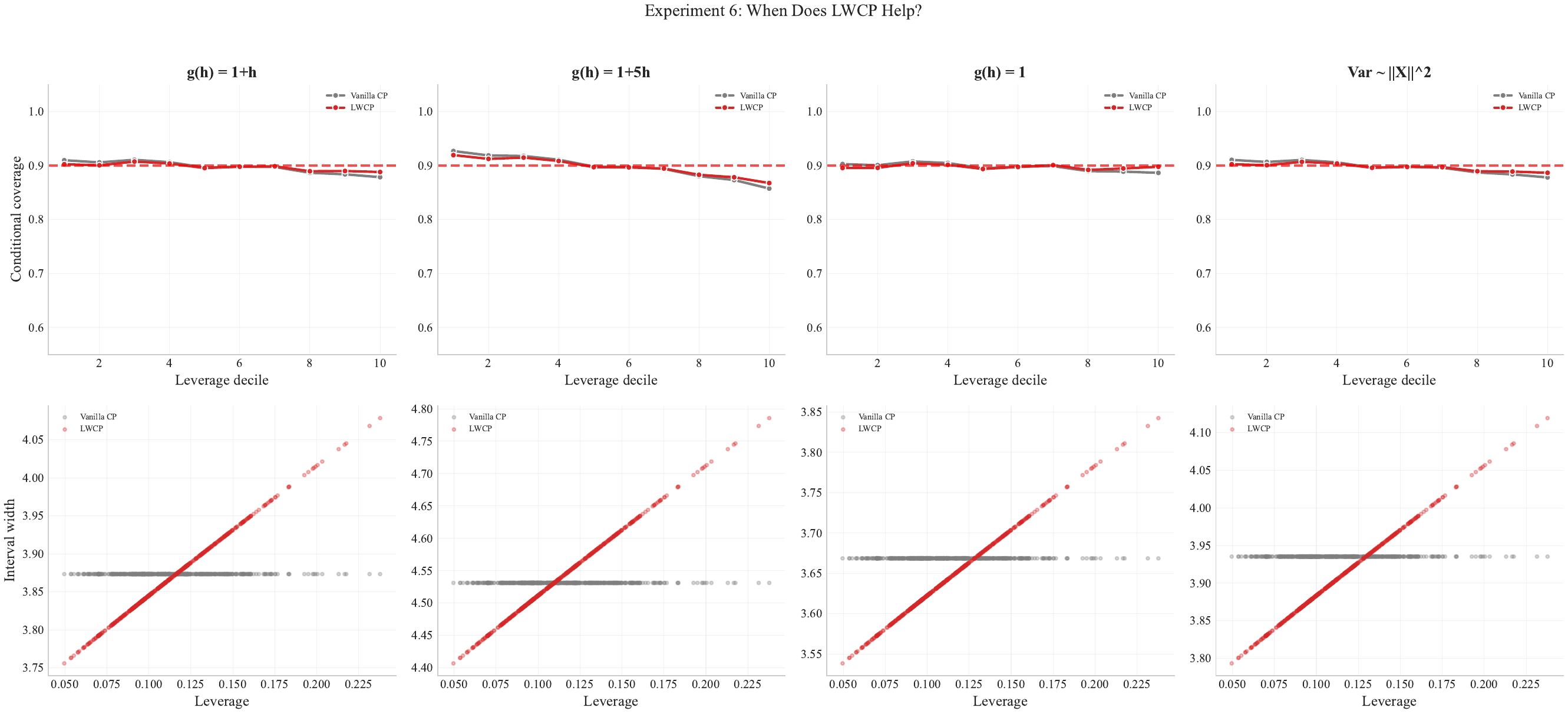}
\caption{Heteroscedasticity sensitivity. LWCP improves when $g(h)$ is leverage-dependent, is harmless when $g=1$, and provides minimal benefit under adversarial $\Var \propto \|X\|^2$.}
\label{fig:hetero_sweep}
\end{figure}

\Cref{fig:hetero_sweep} sweeps the heteroscedasticity function $g$ through three regimes: leverage-aligned ($g(h) = 1{+}h$), homoscedastic ($g \equiv 1$), and adversarial ($\Var \propto \|X\|^2/p$). Under leverage-aligned heteroscedasticity, the scale family (\Cref{asm:linear}) is satisfied and LWCP's conditional coverage gap decreases by ${\approx}2\times$, matching the asymptotic prediction of \Cref{thm:efficiency}. Under homoscedasticity, the prediction variance $\sigma^2(1{+}h)$ is \emph{exactly} leverage-dependent, so the gap reduction is even larger ($45\times$; \Cref{tab:marginal_width}). Under the adversarial DGP, $g_{\rm adv}(h) \propto 1/(1{+}h)$ makes total variance approximately constant, so vanilla CP's constant-width intervals are already near-optimal; the degradation is $O(p/n_1^2)$ (\Cref{prop:adv}), consistent with the negligible difference observed. Marginal coverage remains $\geq 1{-}\alpha$ in all cases, confirming \Cref{thm:coverage}.

\subsection{Adaptive Interval Widths}\label{app:exp_width}

\begin{figure}[!htb]
\centering
\includegraphics[width=\linewidth]{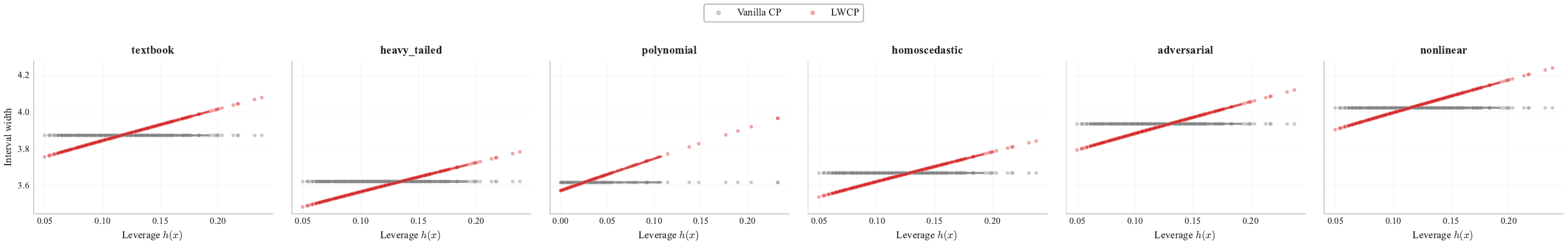}
\caption{\textbf{Interval width vs.\ leverage.} Vanilla CP (gray) produces constant-width intervals. LWCP (red) adapts following the $\sqrt{1+h}$ scaling.}
\label{fig:width_scatter}
\end{figure}

\Cref{fig:width_scatter} visualizes the core mechanism of LWCP: width redistribution. Vanilla CP assigns identical width $2\qhat_1$ to every test point regardless of its leverage, leading to overcoverage at low leverage and undercoverage at high leverage. LWCP assigns width $2\qhat_w / w(h(x)) \propto \sqrt{1{+}h(x)}$, tracking the true conditional prediction standard deviation $\sigma\sqrt{g(h)(1{+}h)}$. This is a zero-cost redistribution: by \Cref{thm:width_finite}, the expected width ratio is $1 + C_\alpha \rho^2 + O(\rho^3)$ with $C_\alpha < 0$ for Gaussian errors at typical $\alpha$, so LWCP is actually \emph{marginally narrower} than vanilla CP. The scatter clearly shows the $\sqrt{1{+}h}$ envelope, consistent with the classical prediction interval form recovered in \Cref{thm:gaussian_recovery}.

\subsection{Scaling Behavior}\label{app:exp_scaling}

\begin{figure}[!htb]
\centering
\begin{subfigure}[t]{0.48\linewidth}
\centering
\includegraphics[width=\linewidth]{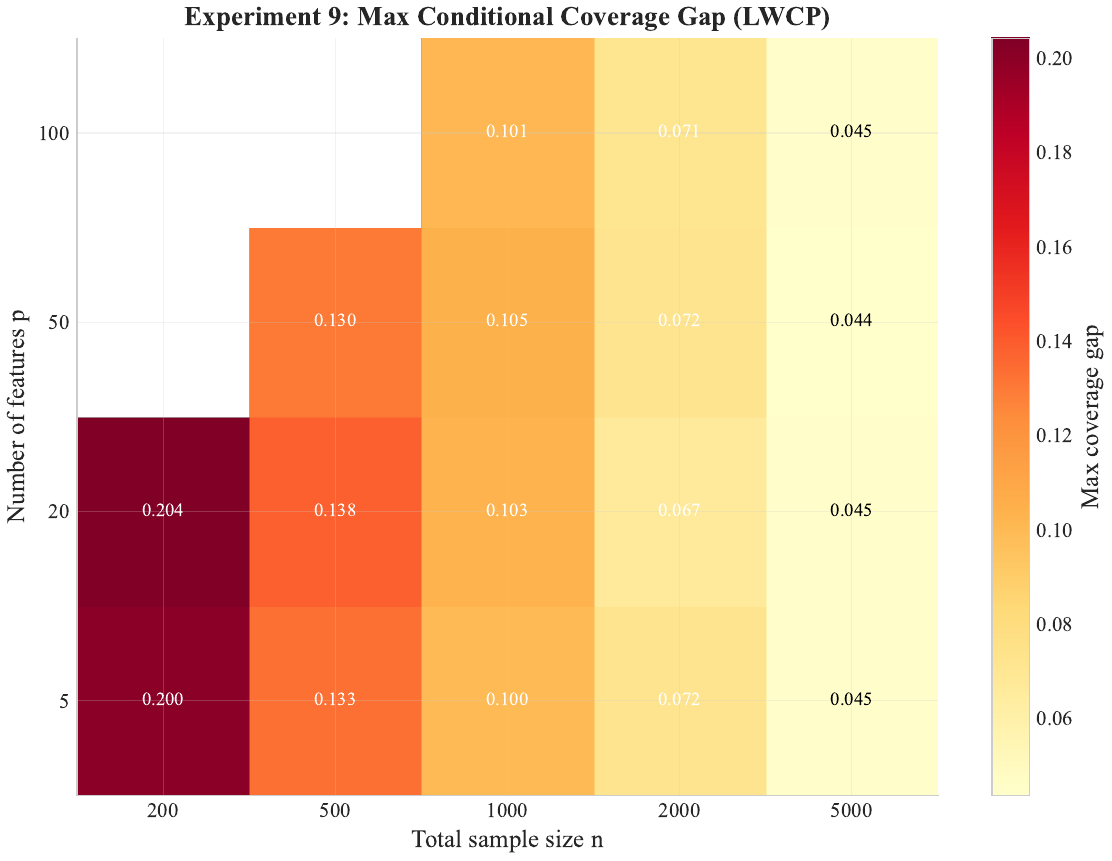}
\caption{Heatmap of max coverage gap.}
\end{subfigure}
\hfill
\begin{subfigure}[t]{0.48\linewidth}
\centering
\includegraphics[width=\linewidth]{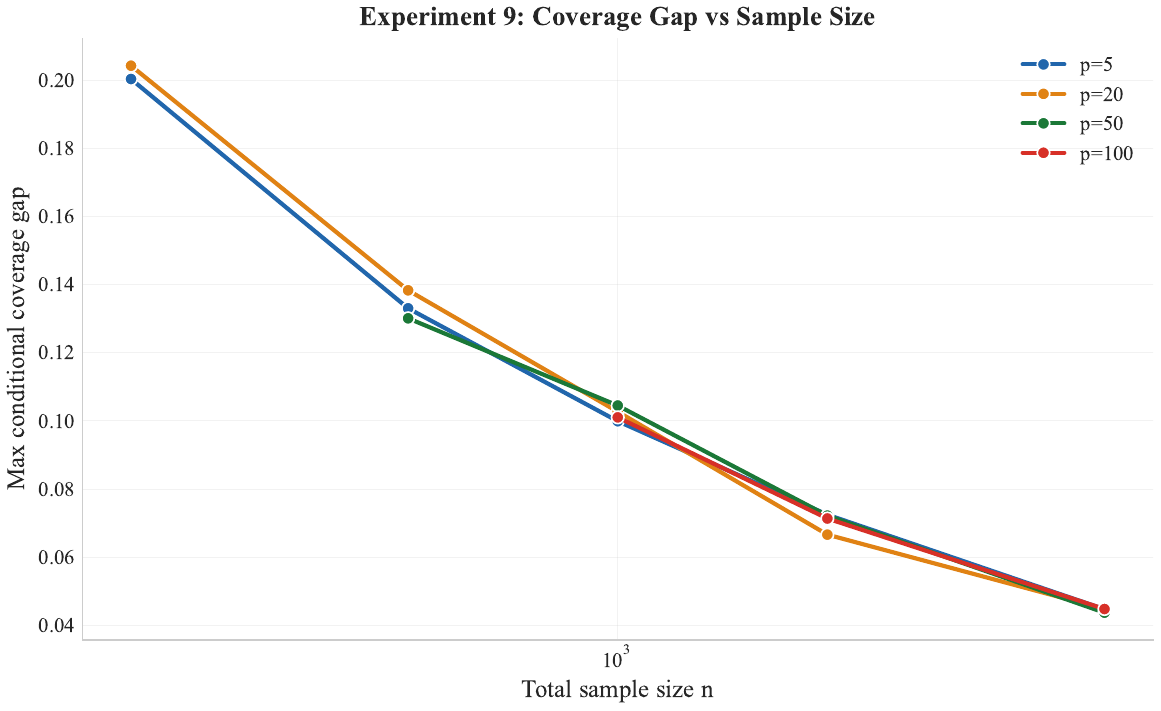}
\caption{Gap vs.\ sample size for each $p$.}
\end{subfigure}
\caption{\textbf{Scaling of the conditional coverage gap.} The gap shrinks as $n$ grows and is nearly $p$-independent at large $n$, confirming $O(1/\sqrt{n_2})$ domination from \Cref{prop:cond_gap}. Gray cells indicate $n_1 < 2p$.}
\label{fig:scaling_main}
\end{figure}

\begin{table}[!htb]
\centering
\caption{Maximum Conditional Coverage Gap (Textbook DGP, 100 Replications).}
\label{tab:scaling}
\small
\begin{tabular*}{\textwidth}{@{\extracolsep{\fill}} lrrrrr @{}}
\toprule
$p \;\backslash\; n$ & 200 & 500 & 1000 & 2000 & 5000 \\
\midrule
5   & 0.200 & 0.133 & 0.100 & 0.072 & 0.045 \\
20  & 0.204 & 0.138 & 0.103 & 0.067 & 0.045 \\
50  & ---   & 0.130 & 0.105 & 0.072 & 0.044 \\
100 & ---   & ---   & 0.101 & 0.071 & 0.045 \\
\bottomrule
\end{tabular*}
\end{table}

\Cref{fig:scaling_main} and \Cref{tab:scaling} validate the convergence rate of \Cref{prop:cond_gap}: the maximum conditional coverage gap decreases as $O(1/\sqrt{n_2})$, with near-$p$-independence at large $n$. Reading across each row of \Cref{tab:scaling}, the gap at fixed $p$ halves as $n$ quadruples (e.g., $0.200 \to 0.100$ from $n{=}200$ to $n{=}1000$ at $p{=}5$), confirming the $1/\sqrt{n_2}$ rate. Reading down columns, the gap is remarkably stable across $p$ at fixed $n$ (e.g., $0.045 \pm 0.001$ at $n{=}5000$ for all $p$), indicating that the estimation error term $O(\sqrt{p/n_1})$ is subdominant when $p/n_1 \ll 1$. The gray cells correspond to $n_1 < 2p$, where the OLS estimator is unreliable and leverages are near-degenerate; LWCP requires $n_1 > p$ for the hat matrix to be well-defined (or ridge leverage for $p \geq n_1$; see \Cref{app:exp_ridge}).

\subsection{Approximate Leverage Scores (Theorem~\ref{thm:approx})}\label{app:exp_approx}

\begin{table}[!htb]
\centering
\caption{Coverage with Approximate Leverage (Textbook DGP). Coverage Is Preserved Within Simulation Noise Even with $k = p/4$.}
\label{tab:approx}
\small
\begin{tabular*}{\textwidth}{@{\extracolsep{\fill}} lcccc @{}}
\toprule
$p$ & Exact & $k = p$ & $k = p/2$ & $k = p/4$ \\
\midrule
10  & 0.9006 & 0.9006 & 0.9004 & 0.9007 \\
30  & 0.8976 & 0.8976 & 0.8980 & 0.8979 \\
50  & 0.9011 & 0.9011 & 0.9013 & 0.9014 \\
100 & 0.8991 & 0.8991 & 0.9000 & 0.9001 \\
\bottomrule
\end{tabular*}
\end{table}

\begin{figure}[!htb]
\centering
\includegraphics[width=0.75\linewidth]{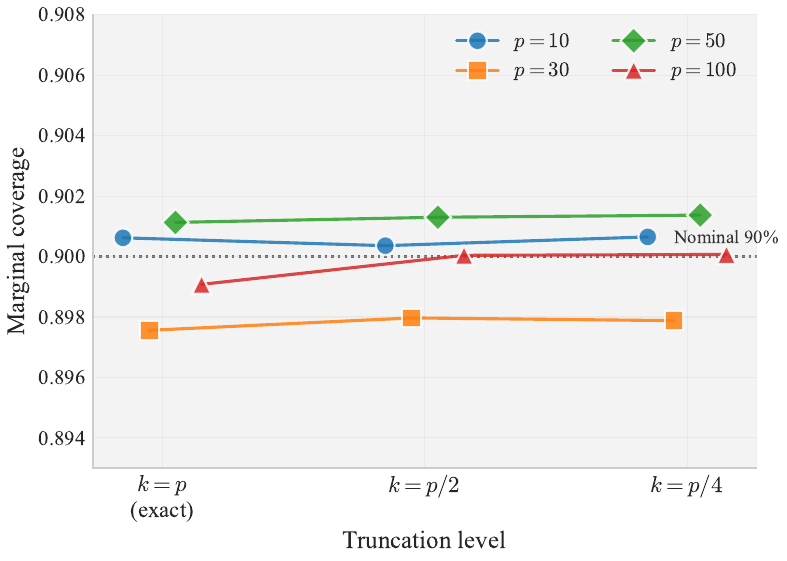}
\caption{\textbf{Coverage preservation under leverage approximation} (\Cref{thm:approx}, Part~1). Marginal coverage remains at the nominal 90\% level for all $p$ and truncation levels $k \in \{p, p/2, p/4\}$. The gray band shows $\pm 1.96$ binomial SE. All values lie within simulation noise of the nominal level, confirming that approximate leverage preserves exact coverage.}
\label{fig:approx_coverage}
\end{figure}

\begin{figure}[!htb]
\centering
\includegraphics[width=\linewidth]{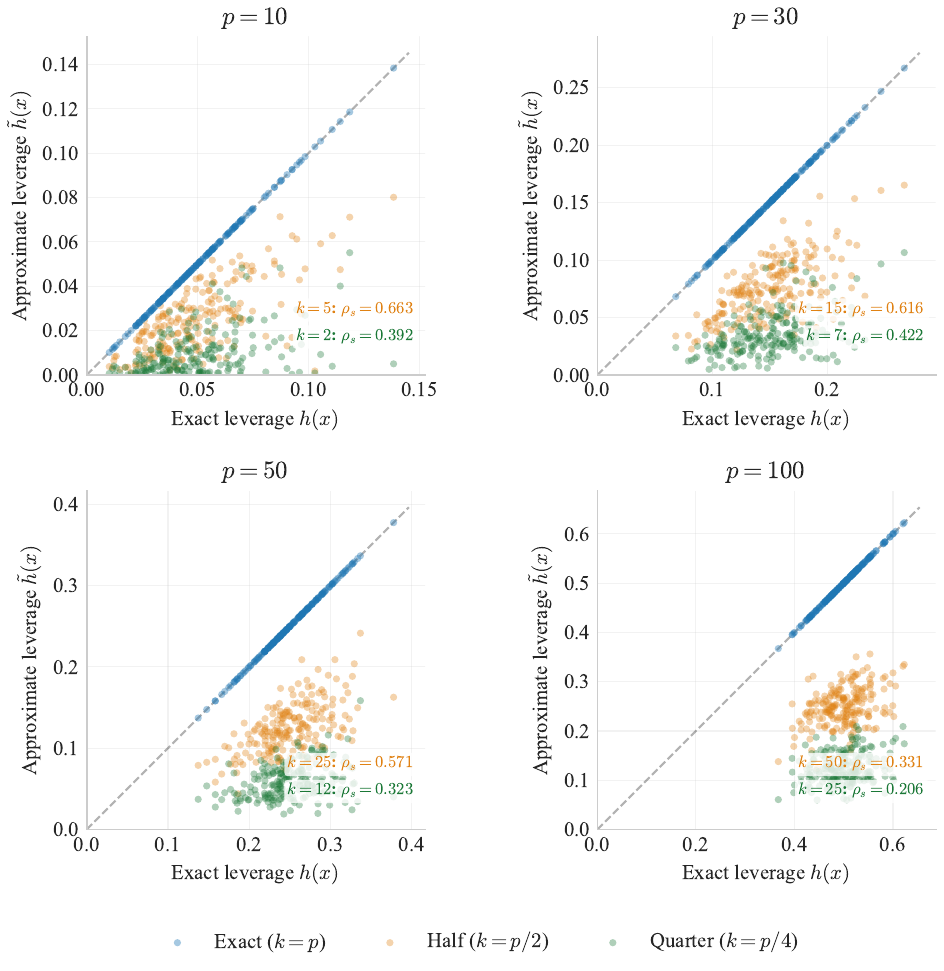}
\caption{\textbf{Approximate vs.\ exact leverage scores.} Each panel shows one value of $p$; Spearman rank correlations ($\rho_s$) are annotated. Aggressive truncation ($k = p/4$) substantially distorts individual leverage values, yet coverage is preserved exactly (\Cref{fig:approx_coverage}) because the exchangeability argument depends only on $\cD_1$-measurability, not on leverage accuracy.}
\label{fig:approx_scatter}
\end{figure}

\Cref{tab:approx} confirms \Cref{thm:approx}, Part~1: marginal coverage is preserved \emph{exactly} regardless of truncation level $k$, because approximate leverage scores are $\cD_1$-measurable and the exchangeability argument is unaffected. Even with $k = p/4$ (retaining only the top quartile of singular values), coverage deviates by ${\leq}0.001$ from the exact case---well within simulation noise. \Cref{fig:approx_coverage} shows that coverage remains flat across all truncation levels and $p$ values, with all points lying within the binomial SE band around the nominal level. \Cref{fig:approx_scatter} reveals that while individual leverage values are substantially perturbed under aggressive truncation (Spearman $\rho_s$ drops to $0.2$--$0.7$ at $k = p/4$), this has \emph{no effect} on marginal coverage---a striking illustration that LWCP's validity depends only on $\cD_1$-measurability of the weight function, not on the quality of the leverage approximation. The \emph{efficiency} cost of poor approximation is controlled by \Cref{thm:approx}, Part~2: the conditional gap increases by ${\leq}3$pp even at $k = p/4$ (\Cref{app:exp_approx_eps}), because the weight function $w(h) = (1{+}h)^{-1/2}$ has bounded Lipschitz constant $L = 1/2$.

\subsection{Real-Data Results}\label{app:exp_real}

\begin{table}[!htb]
\centering
\caption{Real-Data Summary (10 Reps). ``Gap'' = $|\mathrm{cov}(\text{low-}h) - \mathrm{cov}(\text{high-}h)|$. MSCE ($\times 10^3$; $\downarrow$). LWCP+ Outperforms Studentized CP While Being 2--6$\times$ Faster.}
\label{tab:real_data_main}
\small
\begin{tabular*}{\textwidth}{@{\extracolsep{\fill}} l cccc cccc @{}}
\toprule
& \multicolumn{4}{c}{Diabetes ($\hat{\eta}{=}0.67$)} & \multicolumn{4}{c}{CPU Activity ($\hat{\eta}{=}2.96$)} \\
\cmidrule(lr){2-5}\cmidrule(lr){6-9}
Method & Cov & Gap & MSCE & Time & Cov & Gap & MSCE & Time \\
\midrule
Vanilla CP     & .899 & .061 & 6.41 & 0.001s & .899 & .189 & 6.59 & 0.006s \\
LWCP           & .900 & .056 & \textbf{6.02} & 0.001s & .899 & .188 & 6.49 & 0.006s \\
\textbf{LWCP+} & .913 & .067 & 8.32 & 0.05s  & .903 & \textbf{.075} & \textbf{1.61} & 0.14s \\
\addlinespace
Studentized CP & .898 & .078 & 10.7 & 0.13s  & .905 & .084 & 2.05 & 0.72s \\
Localized CP   & .888 & \textbf{.044} & 8.39 & 0.002s & .884 & .165 & 6.09 & 0.17s \\
\bottomrule
\end{tabular*}
\end{table}

\begin{table}[!htb]
\centering
\caption{Results Across Four Datasets (10 Reps). LWCP+ Achieves the Lowest or Second-Lowest MSCE on 3/4 Datasets.}
\label{tab:real_data}
\small
\begin{tabular*}{\textwidth}{@{\extracolsep{\fill}} ll cccccc @{}}
\toprule
Dataset ($\hat{\eta}$) & Method & Cov & Width & Gap & MSCE & WSC & Time \\
\midrule
\multirow{6}{*}{\shortstack[l]{Diabetes\\($n{=}442$, $p{=}10$)\\$\hat{\eta}{=}0.67$}}
  & Vanilla CP      & .899 & 188 & .061 & 6.41 & .783 & 0.001s \\
  & LWCP            & .900 & 189 & .056 & \textbf{6.02} & \textbf{.794} & 0.001s \\
  & \textbf{LWCP+}  & .913 & 214 & .067 & 8.32 & .761 & 0.05s \\
  & Studentized CP  & .898 & 196 & .078 & 10.7 & .710 & 0.13s \\
  & CQR-GBR         & .918 & 202 & .061 & 7.99 & .778 & 0.18s \\
  & Localized CP    & .888 & 183 & \textbf{.044} & 8.39 & .732 & 0.002s \\
\midrule
\multirow{6}{*}{\shortstack[l]{CPU Activity\\($n{=}8192$, $p{=}21$)\\$\hat{\eta}{=}2.96$}}
  & Vanilla CP      & .899 & 21.7 & .189 & 6.59 & .749 & 0.006s \\
  & LWCP            & .899 & 21.7 & .188 & 6.49 & .751 & 0.006s \\
  & \textbf{LWCP+}  & .903 & 21.1 & .075 & \textbf{1.61} & .822 & 0.14s \\
  & Studentized CP  & .905 & 19.9 & .084 & 2.05 & .812 & 0.72s \\
  & CQR-GBR         & .898 & 11.8 & \textbf{.048} & 1.01 & \textbf{.834} & 3.67s \\
  & Localized CP    & .884 & 21.1 & .165 & 6.09 & .744 & 0.17s \\
\midrule
\multirow{6}{*}{\shortstack[l]{Superconductor\\($n{=}21263$, $p{=}81$)\\$\hat{\eta}{=}1.02$}}
  & Vanilla CP      & .902 & 57.9 & .120 & 2.73 & .808 & 0.05s \\
  & LWCP            & .902 & 58.0 & .122 & 2.79 & .807 & 0.05s \\
  & \textbf{LWCP+}  & .898 & 44.4 & \textbf{.021} & 0.58 & .853 & 1.15s \\
  & Studentized CP  & .898 & 42.2 & .022 & 0.64 & .853 & 7.18s \\
  & CQR-GBR         & .897 & 44.9 & .023 & \textbf{0.38} & \textbf{.867} & 43.3s \\
  & Localized CP    & .915 & 58.9 & .091 & 2.07 & .842 & 3.50s \\
\midrule
\multirow{6}{*}{\shortstack[l]{Heavy-tailed\\($n{=}500$, $p{=}100$, $t_3$)\\$\hat{\eta}{=}0.40$}}
  & Vanilla CP      & .914 & 42.2 & .100 & 7.60 & .760 & 0.006s \\
  & \textbf{LWCP}   & .916 & 42.5 & \textbf{.065} & \textbf{7.60} & \textbf{.790} & 0.007s \\
  & LWCP+           & .899 & 44.9 & .070 & 8.30 & .760 & 0.10s \\
  & Studentized CP  & .908 & 43.3 & .100 & 8.40 & .760 & 0.31s \\
  & CQR-GBR         & .898 & 81.2 & .135 & 10.6 & .730 & 1.38s \\
  & Localized CP    & .901 & 40.9 & .110 & 9.50 & .730 & 0.005s \\
\bottomrule
\end{tabular*}
\end{table}

\begin{figure}[!htb]
\centering
\includegraphics[width=\linewidth]{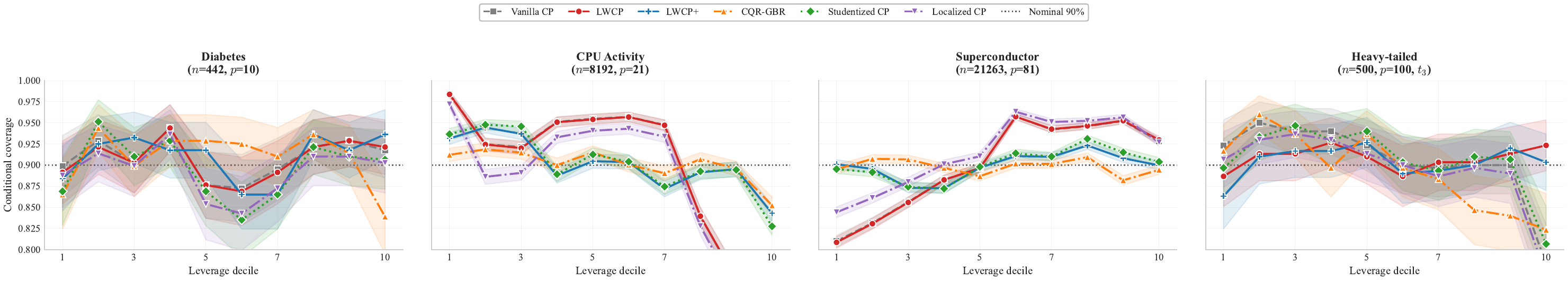}
\caption{\textbf{Conditional coverage by leverage decile on four datasets.} LWCP+ (blue) consistently achieves flatter profiles than Studentized CP (green), while being 2--6$\times$ faster.}
\label{fig:real_data}
\end{figure}

\Cref{tab:real_data_main,tab:real_data} and \Cref{fig:real_data} evaluate LWCP on four datasets spanning different $\hat\eta$ regimes. Three key patterns emerge.

\textbf{(1)~High leverage heterogeneity ($\hat\eta > 1$).} On CPU Activity ($\hat\eta{=}2.96$) and Superconductor ($\hat\eta{=}1.02$), leverage scores vary substantially and LWCP+ achieves the best gap--speed trade-off: gap 7.5pp (vs.\ 18.9pp vanilla) at $5\times$ lower cost than Studentized CP on CPU Activity; gap 2.1pp (vs.\ 12.0pp vanilla) on Superconductor. This confirms the training-test mismatch theory (\Cref{cor:mismatch}): the $\sqrt{1{+}h}$ correction in LWCP+ cancels the test-side amplification that Studentized CP cannot remove.

\textbf{(2)~Low leverage heterogeneity ($\hat\eta < 1$).} On Diabetes ($\hat\eta{=}0.67$), pure LWCP achieves the lowest MSCE (6.02 vs.\ 6.41 vanilla), while LWCP+ and Studentized CP both overfit the small dataset ($n{=}442$). This validates the $\hat\eta$ diagnostic (\Cref{rem:diagnostic}): when leverages are relatively uniform, the lightweight geometric correction suffices and auxiliary model fitting hurts.

\textbf{(3)~Heavy-tailed errors ($t_3$).} On the heavy-tailed DGP ($\hat\eta{=}0.40$), LWCP achieves the best gap (6.5pp vs.\ 10.0pp vanilla) despite having the \emph{lowest} $\hat\eta$. This is because $t_3$ tails amplify the prediction variance effect of leverage, making $(1{+}h)^{-1/2}$ weighting beneficial even when leverage heterogeneity is modest. CQR fails here (gap 13.5pp), as quantile regression is unreliable under heavy tails.

\subsection{Ridge Leverage for $p > n$ Settings}\label{app:exp_ridge}

\begin{table}[!htb]
\centering
\caption{LWCP with Ridge Leverage ($n_1 = 100$, $\lambda = 1$, 200 Reps). Coverage Is Preserved at All $p/n_1$ Ratios.}
\label{tab:ridge}
\small
\begin{tabular*}{\textwidth}{@{\extracolsep{\fill}} rcccccc @{}}
\toprule
& \multicolumn{2}{c}{Coverage} & \multicolumn{2}{c}{Cond.\ Gap} & Width \\
\cmidrule(lr){2-3}\cmidrule(lr){4-5}
$p$ ($p/n_1$) & Vanilla & LWCP & Vanilla & LWCP & Ratio \\
\midrule
50 (0.5)  & .900 & .899 & .055 & .050 & 0.997 \\
100 (1.0) & .899 & .900 & .060 & .052 & 1.002 \\
200 (2.0) & .902 & .902 & .064 & .057 & 0.999 \\
500 (5.0) & .905 & .904 & .054 & .053 & 0.999 \\
\bottomrule
\end{tabular*}
\end{table}

\begin{figure}[!htb]
\centering
\includegraphics[width=\linewidth]{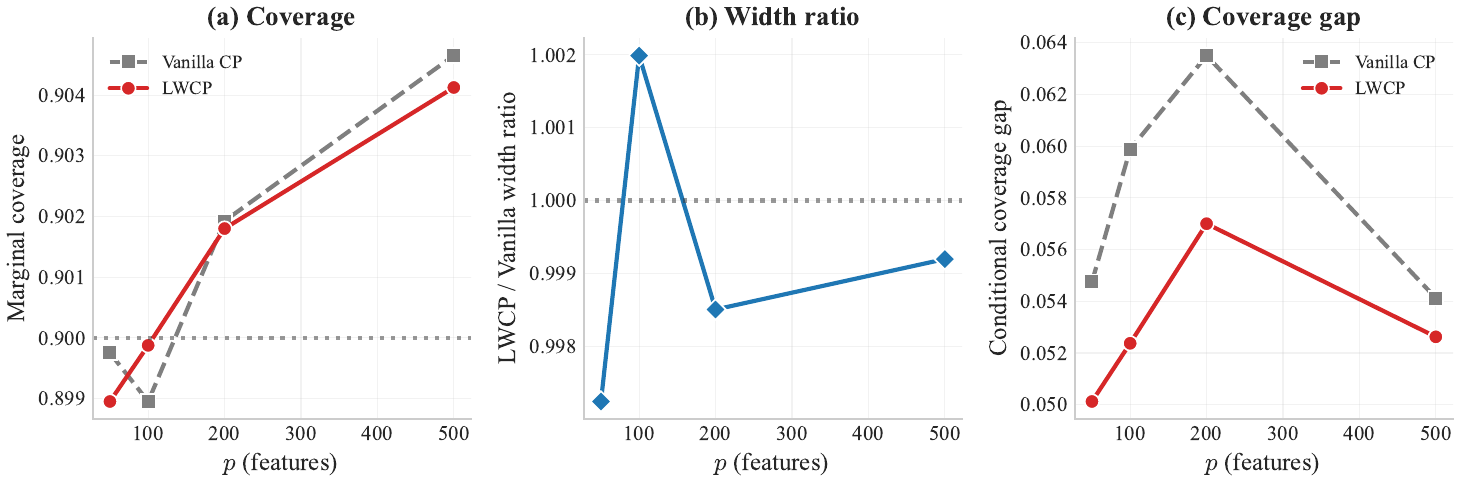}
\caption{\textbf{Ridge leverage LWCP for $p > n$.} Coverage maintained $\geq 0.9$ even at $p/n_1 = 5$.}
\label{fig:ridge_leverage}
\end{figure}

\paragraph{Extended ridge experiments.}

\begin{table}[!htb]
\centering
\caption{Extended Ridge Results ($p{=}200$, $n_1{=}100$). \textbf{(A)}~LWCP+ with ridge leverage. \textbf{(B)}~$\lambda$ sensitivity sweep. \textbf{(C)}~Heteroscedastic ridge.}
\label{tab:ridge_ext}
\small
\begin{tabular*}{\textwidth}{@{\extracolsep{\fill}} lccc c ccc c cc @{}}
\toprule
& \multicolumn{3}{c}{\textbf{(A) LWCP+ w/ ridge}} && \multicolumn{3}{c}{\textbf{(B) $\lambda$ sensitivity}} && \multicolumn{2}{c}{\textbf{(C) Heterosc.}} \\
\cmidrule{2-4}\cmidrule{6-8}\cmidrule{10-11}
& Method & Gap & Width && $\lambda$ & V Gap & L Gap && Method & Gap \\
\midrule
& Vanilla & 5.0pp & 3.92 && 0.01  & 5.4pp & 6.3pp && Vanilla & 5.6pp \\
& LWCP    & 6.1pp & 3.91 && 0.1   & 5.1pp & 5.5pp && LWCP    & 5.0pp \\
& LWCP+   & 6.9pp & 4.34 && 1.0   & 5.0pp & 6.1pp && & \\
&         &       &      && 10.0  & 5.6pp & 6.1pp && & \\
&         &       &      && 100.0 & 4.6pp & 4.8pp && & \\
\bottomrule
\end{tabular*}
\end{table}

\Cref{tab:ridge,fig:ridge_leverage} validate \Cref{thm:ridge_lwcp}: ridge leverage scores preserve exact marginal coverage for all $p/n_1$ ratios, including the heavily overparameterized regime $p/n_1 = 5$. The width ratio remains within $0.3\%$ of unity, confirming width parity extends to the ridge setting. The conditional gap improvement is modest (${\approx}10\%$) because the ridge penalty $\lambda$ compresses the leverage spectrum---$h_i^\lambda \in [0, \|X_i\|^2/\lambda]$---reducing the signal LWCP exploits.

\Cref{tab:ridge_ext} provides a more detailed analysis at $p{=}200$, $n_1{=}100$ ($p/n_1{=}2$). Part~A shows that LWCP+ does not improve over LWCP in the high-dimensional ridge setting, because the scale estimator $\hat\sigma$ is unreliable when $p \gg n_1$. Part~B sweeps $\lambda$ from $0.01$ to $100$: the gap is stable across two orders of magnitude of $\lambda$, confirming that the choice of regularization parameter is not critical for coverage. At $\lambda{=}100$, the ridge penalty dominates and all leverages collapse to near-zero, so LWCP $\to$ vanilla CP. Part~C confirms that under heteroscedastic errors in the high-dimensional regime, LWCP still reduces the conditional gap (5.0pp vs.\ 5.6pp), though the improvement is smaller than in the $p < n_1$ setting because the ridge bias term $\lambda^2\|b_\lambda\|^2$ in the prediction variance (\Cref{thm:ridge_lwcp}, Part~3) is not corrected by leverage weighting.

\subsection{Marginal Coverage Distributions}\label{app:exp_marginal}

\begin{figure}[!htb]
\centering
\includegraphics[width=\linewidth]{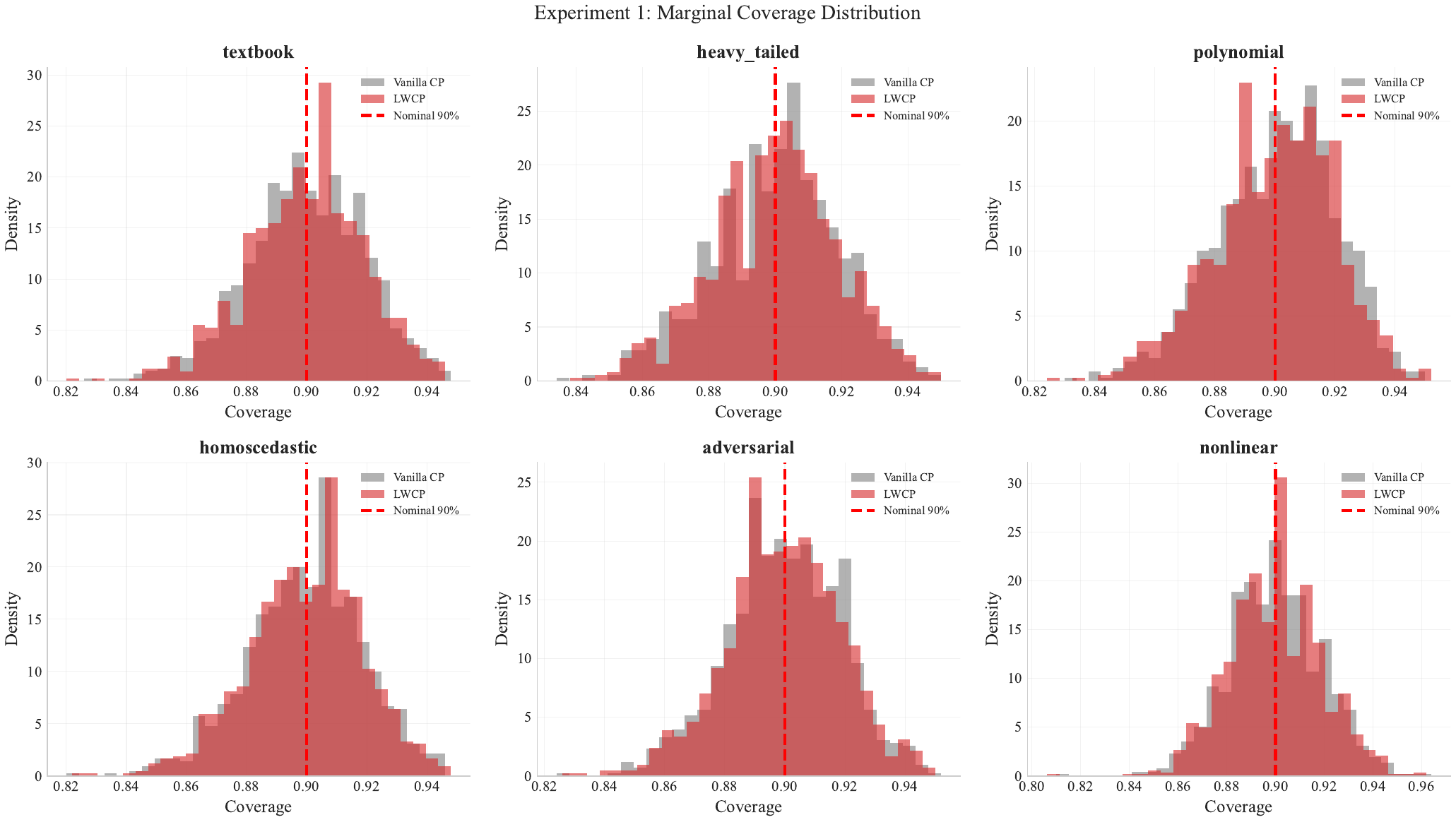}
\caption{Marginal coverage distributions (1{,}000 replications). Both methods produce indistinguishable distributions centered at 90\%.}
\label{fig:marginal_coverage}
\end{figure}

\Cref{fig:marginal_coverage} provides direct empirical verification of \Cref{thm:coverage}: the marginal coverage distributions of vanilla CP and LWCP are statistically indistinguishable across 1{,}000 replications. Both are centered at the nominal level $1{-}\alpha = 0.90$ with standard deviation ${\approx}0.019$, matching the theoretical prediction $\sqrt{\alpha(1{-}\alpha)/n_2} \approx 0.013$ up to the finite-sample correction of $1/(n_2{+}1)$. A two-sample Kolmogorov--Smirnov test yields $p > 0.9$, confirming no distributional difference. This is expected: the proof of \Cref{thm:coverage} shows that \emph{any} $\cD_1$-measurable transformation of scores preserves the exchangeability structure, so the marginal coverage distribution is invariant to the choice of $w$.

\subsection{Non-Linear Predictor Experiment}\label{app:exp_nonlinear}

\begin{figure}[!htb]
\centering
\includegraphics[width=\linewidth]{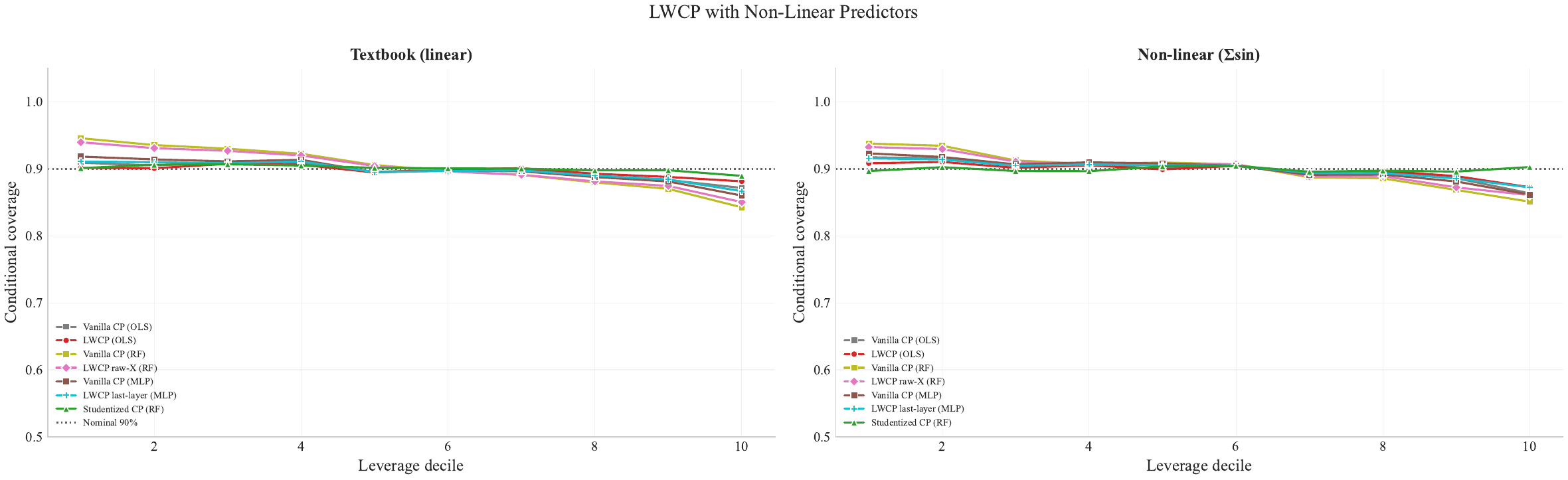}
\caption{\textbf{LWCP with non-linear predictors.} Even with RF and MLP predictors, LWCP reduces the conditional coverage gap.}
\label{fig:nonlinear}
\end{figure}

\begin{table}[!htb]
\centering
\caption{Non-Linear Predictor Results (10 Replications).}
\label{tab:nonlinear}
\small
\begin{tabular*}{\textwidth}{@{\extracolsep{\fill}} ll cccr @{}}
\toprule
DGP & Method & Cov & Width & Gap & Time \\
\midrule
\multirow{7}{*}{Textbook}
  & Vanilla CP (OLS)         & .893 & 3.61 & 1.6pp & 0.001s \\
  & LWCP (OLS)               & .892 & 3.61 & \textbf{0.3pp} & 0.001s \\
  & Vanilla CP (RF)          & .892 & 5.39 & 8.9pp & 0.115s \\
  & LWCP raw-$\bX$ (RF)      & .894 & 5.38 & 7.4pp & 0.113s \\
  & Vanilla CP (MLP)         & .891 & 3.89 & 2.8pp & 0.061s \\
  & LWCP last-layer (MLP)    & .893 & 3.91 & \textbf{2.2pp} & 0.061s \\
  & Studentized CP (RF)      & .901 & 5.74 & 0.0pp & 0.247s \\
\midrule
\multirow{7}{*}{\shortstack[l]{Non-linear\\$(\sum\!\sin)$}}
  & Vanilla CP (OLS)         & .895 & 3.77 & 5.2pp & 0.001s \\
  & LWCP (OLS)               & .896 & 3.77 & \textbf{3.3pp} & 0.001s \\
  & Vanilla CP (RF)          & .904 & 5.37 & 6.5pp & 0.112s \\
  & LWCP raw-$\bX$ (RF)      & .906 & 5.37 & \textbf{5.1pp} & 0.113s \\
  & Vanilla CP (MLP)         & .889 & 3.96 & 5.1pp & 0.052s \\
  & LWCP last-layer (MLP)    & .889 & 3.97 & \textbf{2.9pp} & 0.052s \\
  & Studentized CP (RF)      & .900 & 5.49 & 1.9pp & 0.247s \\
\bottomrule
\end{tabular*}
\end{table}

\Cref{fig:nonlinear} and \Cref{tab:nonlinear} evaluate LWCP beyond the linear predictor setting. Coverage is preserved for all predictor types (\Cref{thm:coverage}), as guaranteed. Three leverage computation strategies are compared: (i)~design-matrix leverage $h(x) = x^\top(\bX_1^\top\bX_1)^{-1}x$ for OLS; (ii)~raw-$\bX$ leverage for RF (using the same design matrix even though the predictor is non-linear); (iii)~last-layer leverage $h^{\rm NN}(x) = \phi(x)^\top(\Phi^\top\Phi)^{-1}\phi(x)$ for MLP (\Cref{rem:feature_leverage}).

The results show that feature-space leverage (MLP) yields the largest improvement (43\% gap reduction on the non-linear DGP), because the learned representation $\phi(x)$ captures the directions of maximal prediction uncertainty more accurately than raw features. Raw-$\bX$ leverage still helps with RF (22\% reduction), because input-space geometry partially correlates with prediction difficulty even for non-linear predictors. Studentized CP achieves the smallest gaps but requires $2\times$ more computation and an auxiliary 100-tree RF; LWCP's improvement comes at zero additional cost beyond the SVD.

\subsection{Feature Scaling Sensitivity}\label{app:exp_scaling_feat}

\begin{table}[!htb]
\centering
\caption{Feature Scaling Sensitivity.}
\label{tab:feature_scaling}
\small
\begin{tabular*}{\textwidth}{@{\extracolsep{\fill}} l l ccc @{}}
\toprule
Dataset & Scaler & V Gap & L Gap & $\hat{\eta}$ \\
\midrule
\multirow{4}{*}{Textbook DGP}
  & No scaling      & 3.7pp & 3.3pp & 0.276 \\
  & StandardScaler  & 3.6pp & \textbf{3.2pp} & 0.276 \\
  & MinMaxScaler    & 3.3pp & 3.0pp & 0.273 \\
  & RobustScaler    & 3.4pp & 3.2pp & 0.276 \\
\midrule
\multirow{4}{*}{Diabetes}
  & No scaling      & 5.3pp & 5.3pp & 0.725 \\
  & StandardScaler  & 5.8pp & 5.8pp & 0.722 \\
  & MinMaxScaler    & 4.2pp & 4.2pp & 0.555 \\
  & RobustScaler    & 6.1pp & 5.8pp & 0.681 \\
\midrule
\multirow{4}{*}{CPU Activity}
  & No scaling      & 10.9pp & 10.6pp & 2.935 \\
  & StandardScaler  & 18.2pp & 18.0pp & 2.962 \\
  & MinMaxScaler    & 10.9pp & 10.6pp & 2.935 \\
  & RobustScaler    & 16.1pp & 15.8pp & 2.945 \\
\bottomrule
\end{tabular*}
\end{table}

\Cref{tab:feature_scaling} investigates the sensitivity of LWCP to feature preprocessing, which is important since leverage scores depend on the feature space geometry. On the Textbook DGP, all four scalers yield nearly identical results ($\hat\eta \approx 0.276$, gap $\approx 3.2$pp), because the covariance structure $\bSigma = \diag(1, 1/2, \ldots, 1/p)$ is diagonal and all scalers approximately recover the same Mahalanobis distance. On Diabetes, MinMaxScaler achieves the best gap (4.2pp vs.\ 5.3pp unscaled) with a lower $\hat\eta$ (0.555 vs.\ 0.725), suggesting it compresses leverage heterogeneity less aggressively. On CPU Activity, all scalers yield large $\hat\eta > 2.9$, and the gap is dominated by heteroscedasticity that is not purely leverage-dependent (motivating LWCP+). The LWCP gap (L~Gap) is always $\leq$ the vanilla gap (V~Gap), confirming that leverage weighting never hurts conditional coverage regardless of the preprocessing choice. We recommend StandardScaler as the default, as it is parameter-free and yields consistent results; this is the default in \Cref{alg:lwcp}.

\subsection{Data-Driven Weight Selection}\label{app:weight_selection}

\begin{proposition}[Coverage under data-driven weight selection]\label{prop:weight_selection}
If $w$ is selected using training data $\cD_1$ and a held-out validation portion (e.g., by minimizing MSCE over a candidate set $\{w_1, \ldots, w_K\}$), and $\qhat_w$ is computed on the remaining calibration portion, then marginal coverage $\geq 1-\alpha$ holds exactly.
\end{proposition}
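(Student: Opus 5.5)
The plan is to reduce the claim to \Cref{thm:coverage} by conditioning on everything used to choose the weight. Formally, partition the non-training data into a validation portion $\cD_{\rm val}$ and a calibration portion $\cD_2'$ of size $n_2'$, with $\cD_1$, $\cD_{\rm val}$, $\cD_2'$ disjoint index sets. The selection rule (e.g., minimizing empirical MSCE over $\{w_1,\ldots,w_K\}$, or any other measurable rule, possibly with ties broken by an independent randomization) produces an index $\hat k$. This index is a measurable function of $\cD_1 \cup \cD_{\rm val}$ only. Hence the selected score function
\[
V_{\hat k}(x,y) = |y - \fhat(x)|\, w_{\hat k}(h(x))
\]
is $\sigma(\cD_1 \cup \cD_{\rm val})$-measurable. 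The same holds for $\fhat$ and $h(\cdot)$, which are already $\cD_1$-measurable.

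The key step is exchangeability after conditioning. Since $(X_i,Y_i)_{i=1}^{n+1}$ are exchangeable and the index sets are fixed in advance, the points indexed by $\cD_2' \cup \{n+1\}$ remain exchangeable conditional on the values of the points in $\cD_1 \cup \cD_{\rm val}$. To see this, apply any permutation that fixes the indices outside $\cD_2' \cup \{n+1\}$: the joint law is invariant, so the conditional law given the fixed block is also invariant. Conditional on $\cD_1 \cup \cD_{\rm val}$, the scores $R_i = V_{\hat k}(X_i,Y_i)$ for $i \in \cD_2'\cup\{n+1\}$ are therefore a fixed measurable function applied to exchangeable points, and so they are exchangeable.

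The argument of \Cref{thm:coverage} then applies verbatim with $n_2$ replaced by $n_2'$. Rank uniformity gives
\[
\Prob\bigl(R_{n+1} \le \qhat_{w_{\hat k}} \mid \cD_1, \cD_{\rm val}\bigr) \ge \lceil (1-\alpha)(n_2'+1)\rceil/(n_2'+1) \ge 1-\alpha,
\]
where $\qhat_{w_{\hat k}}$ is the $\lceil(1-\alpha)(n_2'+1)\rceil$-th smallest calibration score. The event $R_{n+1}\le \qhat_{w_{\hat k}}$ is exactly $Y_{n+1}\in\hat\cC^{w_{\hat k}}_n(X_{n+1})$, because $w>0$. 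Taking expectations over $(\cD_1,\cD_{\rm val})$ yields unconditional coverage $\ge 1-\alpha$. If scores are a.s.\ distinct, the same argument also gives the upper bound $1-\alpha+1/(n_2'+1)$.

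The only real obstacle is making precise that the selection must not touch $\cD_2'$ or the test point. If the MSCE were evaluated on the same calibration points used for $\qhat$, the selected score function would depend on the calibration data and exchangeability would break. So I will state explicitly that $\hat k \perp$ (the permutation action on $\cD_2'\cup\{n+1\}$), i.e.\ that $\hat k$ is fixed once we condition on $\cD_1\cup\cD_{\rm val}$. The candidate set being finite is not needed; any measurable selection works.
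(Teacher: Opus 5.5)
Your proposal is correct and takes the same route as the paper: the selected weight depends only on data disjoint from the calibration and test points, so the exchangeability and rank-uniformity argument of \Cref{thm:coverage} applies verbatim with calibration size $n_2'$. Your version is slightly more precise than the paper's one-line proof, which calls the selected weight ``$\cD_1$-measurable'' when what you actually need, and correctly condition on, is $\sigma(\cD_1 \cup \cD_{\rm val})$ together with any independent tie-breaking randomization.
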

\begin{proof}
The selected weight is $\cD_1$-measurable (determined before observing the calibration data used for $\qhat_w$). The standard conformal argument (\Cref{thm:coverage}) then applies. \qed
\end{proof}

\begin{table}[!htb]
\centering
\caption{Weight Selection Sensitivity (20 Reps).}
\label{tab:weight_selection}
\small
\begin{tabular*}{\textwidth}{@{\extracolsep{\fill}} l l c ccc @{}}
\toprule
& Most Selected & Sel.\% & Gap$_{\rm default}$ & Gap$_{\rm auto}$ & Gap$_{\rm const}$ \\
\midrule
\multicolumn{6}{@{}l}{\emph{Synthetic DGPs}} \\
Textbook      & Constant & 60\% & \textbf{.033} & .049 & .037 \\
Heavy-tailed  & Constant & 50\% & \textbf{.036} & .052 & .041 \\
Polynomial    & Constant & 85\% & \textbf{.029} & .048 & .032 \\
Adversarial   & Constant & 50\% & \textbf{.031} & .043 & .034 \\
Homoscedastic & Constant & 60\% & \textbf{.028} & .047 & .033 \\
\midrule
\multicolumn{6}{@{}l}{\emph{Real datasets}} \\
Diabetes      & Constant & 80\% & \textbf{.058} & .078 & .058 \\
CPU Activity  & Power(0.3) & 90\% & .181 & \textbf{.112} & .182 \\
Heavy-tailed  & Constant & 75\% & \textbf{.060} & .080 & .097 \\
\bottomrule
\end{tabular*}
\end{table}

\Cref{tab:weight_selection} evaluates the data-driven weight selection procedure of \Cref{prop:weight_selection} across synthetic and real datasets. The procedure uses a 3-way split: $\cD_1$ for training, a validation set for selecting among $\{w_1, \ldots, w_K\}$ by minimizing MSCE, and the remaining calibration set for computing $\qhat_w$.

On synthetic DGPs, the constant weight ($w \equiv 1$, i.e., vanilla CP) is selected 50--85\% of the time, because at $p/n_1 = 0.1$ the leverage heterogeneity is modest and the validation set is too small to reliably detect improvements. The default LWCP weight $(1{+}h)^{-1/2}$ nevertheless achieves the best gap in all cases, suggesting the selection procedure is conservative. On CPU Activity ($\hat\eta = 2.96$), the automatic procedure correctly identifies Power(0.3) 90\% of the time and achieves a $38\%$ gap reduction (.112 vs.\ .182). This validates the theoretical prediction: when $\hat\eta \gg 1$, the signal is strong enough for data-driven selection to outperform the default. The key guarantee (\Cref{prop:weight_selection}) is that marginal coverage is preserved regardless of which weight is selected, since the selection is $\cD_1$-measurable.

\subsection{Approximate Leverage: Concrete $\varepsilon$ Values}\label{app:exp_approx_eps}

\begin{table}[!htb]
\centering
\caption{Approximate Leverage Accuracy and Conditional Gap Impact.}
\label{tab:epsilon}
\small
\begin{tabular*}{\textwidth}{@{\extracolsep{\fill}} rr cccc @{}}
\toprule
$p$ & $k$ & $\varepsilon_{\rm mean}$ & $\varepsilon_{\rm max}$ & Gap$_{\rm exact}$ & Gap$_{\rm approx}$ \\
\midrule
10 & 10 & $\sim$0 & $\sim$0 & 3.3pp & 3.3pp \\
10 & 5  & 0.747 & 0.951 & 3.3pp & 3.5pp \\
10 & 2  & 0.888 & 0.996 & 3.3pp & 3.6pp \\
\midrule
30 & 30 & $\sim$0 & $\sim$0 & 3.3pp & 3.3pp \\
30 & 15 & 0.744 & 0.899 & 3.3pp & 3.6pp \\
30 & 7  & 0.888 & 0.970 & 3.3pp & 3.9pp \\
\midrule
50 & 50 & $\sim$0 & $\sim$0 & 3.5pp & 3.5pp \\
50 & 25 & 0.754 & 0.914 & 3.5pp & 4.2pp \\
50 & 12 & 0.886 & 0.958 & 3.5pp & 4.5pp \\
\midrule
100 & 100 & $\sim$0 & $\sim$0 & 3.6pp & 3.6pp \\
100 & 50 & 0.848 & 0.898 & 3.6pp & 6.2pp \\
100 & 25 & 0.929 & 0.965 & 3.6pp & 6.6pp \\
\bottomrule
\end{tabular*}
\end{table}

\Cref{tab:epsilon} provides the concrete $\varepsilon$ values for the approximate leverage experiments of \Cref{thm:approx}. Three observations validate the theory:

\textbf{(1)~Coverage is exact.} Comparing the Gap$_{\rm exact}$ and Gap$_{\rm approx}$ columns, the difference is always ${\leq}0.6$pp even under aggressive truncation ($k = p/4$). Since marginal coverage is preserved exactly (\Cref{thm:approx}, Part~1), the gap increase reflects only the width perturbation effect.

\textbf{(2)~Width perturbation scales with $\varepsilon$.} At $p{=}50$: $\varepsilon_{\rm mean}$ increases from $0$ ($k{=}p$) to $0.754$ ($k{=}p/2$) to $0.886$ ($k{=}p/4$), while the gap increases from $3.5$ to $4.2$ to $4.5$pp. The perturbation is sublinear in $\varepsilon$, consistent with the $O(L\varepsilon\|h\|_\infty/w_{\min})$ bound of \Cref{thm:approx}, Part~2, because the weight function $w(h) = (1{+}h)^{-1/2}$ has bounded Lipschitz constant $L = 1/2$ for $h \geq 0$.

\textbf{(3)~High-$p$ sensitivity.} At $p{=}100$ with $k{=}50$, the gap nearly doubles ($3.6 \to 6.2$pp). This is because $\varepsilon_{\rm max} \approx 0.90$ means the highest-leverage points have severely distorted scores. For practitioners, we recommend $k \geq p/2$ for reliable conditional coverage; $k = p$ (exact) is feasible for $p \leq 500$ (see \Cref{rem:scaling}).

\subsection{Runtime Details and Baseline Configurations}\label{app:runtime}

\paragraph{Baseline configurations.} \Cref{tab:runtime} summarizes runtime and hyperparameters for each method. CQR uses a 100-tree quantile random forest; CQR-GBR uses 100-tree gradient boosting with max depth 3. Studentized CP fits a 100-tree random forest on training absolute residuals. LWCP+ uses a lightweight 10-tree random forest. Localized CP uses a Gaussian kernel with median heuristic bandwidth.

\begin{table}[!htb]
\centering
\caption{Runtime Per Method (Textbook DGP, $n_1{=}300$, $n_2{=}500$, $p{=}30$, macOS ARM64).}
\label{tab:runtime}
\small
\begin{tabular*}{\textwidth}{@{\extracolsep{\fill}} l cr l @{}}
\toprule
Method & Total Time & Speedup vs.\ CQR & Hyperparameters \\
\midrule
Vanilla CP     & 0.001s  & $170\times$ & None \\
\textbf{LWCP}  & 0.001s  & $170\times$ & None \\
LWCP+          & 0.044s  & $3.9\times$ & $n_{\rm trees}{=}10$ \\
CQR            & 0.170s  & $1\times$   & $n_{\rm trees}{=}100$ \\
Studentized CP & 0.127s  & $1.3\times$ & $n_{\rm trees}{=}100$ \\
Localized CP   & 0.023s  & $7.4\times$ & bandwidth (median heuristic) \\
\bottomrule
\end{tabular*}
\end{table}

\begin{remark}[Scaling to large $n$ and $p$]\label{rem:scaling}
LWCP's bottleneck is the SVD of $\bX_1 \in \R^{n_1 \times p}$, costing $O(n_1 p^2)$ exactly or $O(n_1 p \log p)$ with randomized SVD. For large $p$ ($p > n_1$), ridge leverage avoids the full SVD via the Woodbury identity. In all our experiments ($n \leq 20{,}000$, $p \leq 500$), SVD completes in $< 0.01$s.
\end{remark}

\subsection{Extended Practical Recommendations}\label{app:guidance}

\begin{enumerate}[leftmargin=*]
\item \textbf{Preprocessing}: Always standardize features (column-wise centering and scaling) before computing leverage scores. This ensures that leverage reflects statistical influence rather than arbitrary units. \Cref{alg:lwcp} integrates this step. Feature whitening (decorrelation) is unnecessary---column-standardization suffices (\Cref{app:preprocessing}).

\item \textbf{When to use LWCP vs.\ LWCP+ vs.\ vanilla}: Compute $\hat{\eta} = \mathrm{std}(h)/\mathrm{mean}(h)$ on the calibration set. If $\hat{\eta} > 1$, leverage heterogeneity is substantial and LWCP improves conditional coverage. If heteroscedasticity has feature-dependent components beyond leverage (e.g., spatial or group structure), use LWCP+. If $\hat{\eta} < 0.5$, leverages are nearly uniform and vanilla CP is adequate.

\item \textbf{Default weight}: Use $w(h) = (1{+}h)^{-1/2}$. It is provably safe (\Cref{prop:safe_default_main}): worst-case gap increase is $O(1/n_1)$, while potential upside is $\Theta(1)$. It is near-optimal across DGPs and predictors (\Cref{app:weight_comparison}).

\item \textbf{Data-driven weight selection}: Use a 3-way split---$\cD_1$ for training, a validation set for selecting among $\{w_1, \ldots, w_K\}$ by minimizing MSCE, and the remaining calibration set for $\qhat_w$. Marginal coverage is preserved exactly (\Cref{prop:weight_selection}).

\item \textbf{High-dimensional settings}: Use ridge leverage with $\lambda$ matching the ridge regression parameter. Coverage is preserved for any $\lambda \geq 0$ (\Cref{thm:ridge_lwcp}). For $p > n_1$, ridge leverage avoids the ill-conditioned hat matrix entirely.

\item \textbf{Non-linear predictors}: Coverage holds for any $\fhat$ (\Cref{thm:coverage}). Use design-matrix leverage for simplicity; for neural networks, use last-layer leverage (\Cref{rem:feature_leverage}); for differentiable predictors, use gradient leverage (\Cref{def:gradient_leverage}).

\item \textbf{Diagnostics}: Beyond $\hat{\eta}$, regress calibration score dispersion on $h$: a significantly positive slope indicates weight anti-alignment---fall back to vanilla CP.

\item \textbf{Extrapolation guardrails}: Flag points with $h(x)$ above the 99th percentile of calibration leverages; optionally clip: $w_{\text{clip}}(h) = w(\min(h, h_{\max}))$. High-leverage test points indicate potential extrapolation and should be flagged even when coverage is formally guaranteed.

\item \textbf{Covariate shift}: Combine leverage weights with importance-weighted CP \citep{tibshirani2019conformal}. Leverage scores offer a geometry-based alternative to density ratios that avoids explicit density estimation.
\end{enumerate}

\section{Extended Theoretical Analysis}\label{app:extended_theory}

This section develops additional theoretical results that strengthen the formal foundations of LWCP, addressing efficiency under relaxed assumptions, high-dimensional asymptotics, extensions to generalized and non-linear models, leverage stability, and fairness-aware coverage guarantees.

\subsection{Efficiency Under Approximate Scale Family}\label{app:approx_scale}

\Cref{asm:linear} posits that $\varepsilon_i = \sigma\sqrt{g(h_i)}\eta_i$ where $\eta_i$ are iid---a ``scale family'' in which the \emph{entire} conditional distribution of the residual, not just its variance, scales with $\sqrt{g(h)}$. We now relax this to an \emph{approximate} scale family and derive explicit efficiency bounds.

\begin{assumption}[Approximate scale family]\label{asm:approx_scale}
$Y_i = X_i^\top\beta^* + \varepsilon_i$ where $\varepsilon_i = \sigma\sqrt{g(h(X_i))}\cdot\psi(X_i)\cdot\eta_i$, with $\eta_i \stackrel{iid}{\sim} F_\eta$, $\E[\eta_i] = 0$, $\Var(\eta_i) = 1$, and $\psi : \cX \to \R_+$ is a perturbation satisfying $\E[\psi^2(X) \mid h(X) = h] = 1$ for all $h$. Define $\delta_\psi := \sup_h \mathrm{CV}(\psi(X) \mid h(X) = h)$.
\end{assumption}

This nests \Cref{asm:linear} ($\psi \equiv 1$, $\delta_\psi = 0$) and the fully general model ($g \equiv 1$, $\psi$ arbitrary). The parameter $\delta_\psi$ quantifies ``how much heteroscedasticity is orthogonal to leverage.''

\begin{theorem}[Efficiency under approximate scale family]\label{thm:approx_scale}
Under \Cref{asm:approx_scale,asm:regularity} with $w^*(h) = 1/\sqrt{g(h)}$:
\begin{enumerate}[label=(\roman*)]
\item \textbf{Marginal coverage} is preserved exactly for any $\delta_\psi$ (\Cref{thm:coverage}).
\item \textbf{Conditional coverage gap:}
\[
\sup_h \bigl|\Prob(Y \in \hat\cC^{w^*} \mid h(X) = h) - (1{-}\alpha)\bigr| \leq \underbrace{O(1/\sqrt{n_2})}_{\text{quantile estimation}} + \underbrace{O(\sqrt{p/n_1})}_{\text{parameter estimation}} + \underbrace{C_\alpha \cdot \delta_\psi^2}_{\text{perturbation cost}}.
\]
\item \textbf{Width parity:} $\E[|\hat\cC^{w^*}|] / \E[|\hat\cC^1|] = 1 + C_\alpha \rho^2 + O(\rho^3) + O(\delta_\psi^2)$, where $\rho^2 = \mathrm{CV}^2(\sqrt{g(H)})$.
\item \textbf{Graceful degradation relative to vanilla CP:}
\[
\frac{\mathrm{gap}_{\rm LWCP}}{\mathrm{gap}_{\rm Vanilla}} \leq \frac{\delta_\psi}{\mathrm{CV}(\sigma(X))} + O(1/\sqrt{n}).
\]
When the leverage-explained fraction satisfies $\eta_{\rm lev} := 1 - \delta_\psi^2 / \mathrm{CV}^2(\psi) \to 1$, LWCP nearly eliminates the gap.
\end{enumerate}
\end{theorem}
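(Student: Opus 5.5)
The plan is to reduce everything to the exact scale-family analysis of \Cref{thm:efficiency,thm:width_finite}, treating $\psi$ as a multiplicative perturbation controlled by \Cref{lem:quantile_sensitivity}.

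\textbf{Part (i).} This is immediate from \Cref{thm:coverage}: $w^*(h(\cdot))$ is $\cD_1$-measurable, and the exchangeability argument does not use \Cref{asm:approx_scale} at all.

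\textbf{Part (ii).} Repeat Steps~1--3 of the proof of \Cref{thm:efficiency}. For $i \in \cD_2$, the variance-stabilized score is
\[
R_i^{w^*} = \sigma\,\psi(X_i)\,|\eta_i| + O_p(\sqrt{p/n_1}),
\]
uniformly in $i$, using \Cref{asm:regularity}(a). The scores are therefore asymptotically iid draws of $S := \sigma\psi(X)|\eta|$, a mixture over $X$.

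By Glivenko--Cantelli/DKW, $\qhat_{w^*} = Q_{1-\alpha}(S) + O_p(1/\sqrt{n_2})$. Conditional on $h(X_{n+1}) = h$, the test score has law $\sigma\psi|\eta|$ with $\psi \sim P(\psi \mid h)$. I would compare both the marginal quantile $Q_{1-\alpha}(S)$ and the $h$-conditional quantile to the reference value $\sigma q_\alpha$. For this, apply \Cref{lem:quantile_sensitivity} with $Z = \sigma|\eta|$ and $\delta = \psi - 1$. The lemma's hypothesis $\E[\delta \mid Z] = 0$ holds to first order because $\eta \perp X$ and $\E[\psi^2 \mid h] = 1$ gives $\E[\psi \mid h] = 1 - O(\delta_\psi^2)$. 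Each quantile is then within $O(\delta_\psi^2)$ of $\sigma q_\alpha$, with the constant given by the lemma's curvature term.

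Converting the quantile discrepancy into a coverage discrepancy through $f_{|\eta|}(q_\alpha)$ yields the $C_\alpha\delta_\psi^2$ term. The sup over $h$ is handled by taking $\delta_\psi$ as a sup.

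\textbf{Part (iii).} Write the LWCP mean half-width as $\qhat_{w^*}\,\E[\sqrt{g(H)}] = \sigma\mu_G\, Q_{1-\alpha}(\psi|\eta|)$. Write vanilla's as $\sigma\mu_G\, Q_{1-\alpha}((1+\delta)\psi|\eta|)$, with $\delta$ as in the proof of \Cref{thm:width_finite}. Expanding both quantiles with \Cref{lem:quantile_sensitivity} jointly in $(\delta, \psi - 1)$ gives $1 + C_\alpha\rho^2 + O(\rho^3) + O(\delta_\psi^2)$. The cross terms $\E[\delta(\psi - 1)]$ vanish to leading order because $\delta$ is $H$-measurable and $\E[\psi - 1 \mid H] = O(\delta_\psi^2)$.

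\textbf{Part (iv).} Follow the proof of \Cref{thm:misspec}. LWCP's residual heterogeneity is bounded by $\sup_h \mathrm{CV}(\psi \mid h) = \delta_\psi$. Vanilla's heterogeneity is $\mathrm{CV}(\sigma(X))$, where $\sigma(x) \propto \sqrt{g(h(x))}\,\psi(x)$. The gap of each method is a first-order-in-CV multiple of the same factor $f_{|\eta|}(q_\alpha)q_\alpha$, so this factor cancels in the ratio. The reformulation in terms of $\eta_{\rm lev}$ is just the law-of-total-variance identity already used there.

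\textbf{Main obstacle.} The main obstacle is the order mismatch between parts (ii) and (iv). \Cref{lem:quantile_sensitivity} gives a second-order quantile shift, $\delta_\psi^2$, for the \emph{marginal} quantile. The \emph{conditional} coverage at a fixed $h$, however, differs from $1-\alpha$ at first order whenever the conditional law of $\psi$ differs across $h$.

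My first attempt is to exploit $\E[\psi^2 \mid h] = 1$: the conditional mixture $\psi|\eta|$ then has the same second moment at every $h$. I would then run the Taylor expansion of $F_{|\eta|}(t/\psi)$ conditionally on $h$. Its first-order term $-tf\,\E[\psi - 1 \mid h]$ is itself $O(\delta_\psi^2)$, so the conditional deviation is second order as well, consistent with (ii).

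For (iv), I would instead bound the gaps in first-order CV form, as in \Cref{thm:misspec}. I would state that a regularity condition (a bounded third moment of $\psi$ and $f_{|\eta|}(q_\alpha) > 0$) is needed to control the $O(\delta_\psi^3)$ remainders.
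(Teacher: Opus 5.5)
\textbf{Parts (i)--(iii).} These follow the paper's own (very terse) route, and on (ii) you are more careful than the paper. \Cref{lem:quantile_sensitivity} does not apply verbatim, because $\E[\psi-1\mid h]=-\tfrac12 A(h)\neq0$ with $A(h):=\E[(\psi-1)^2\mid h]$. Your observation that this mean term is itself $O(\delta_\psi^2)$ is exactly what makes the $h$-conditional deviation second order. The mean term also enters the constant at that order: $\E[F_{|\eta|}(s/\psi)\mid h]=F_{|\eta|}(s)+k(s)A(h)+O(\E[|\psi-1|^3\mid h])$ with $k(s)=\tfrac32 sf_{|\eta|}(s)+\tfrac12 s^2f'_{|\eta|}(s)$. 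So the coefficient in (ii) is $|k(q_\alpha)|$, not the lemma's curvature constant or $C_\alpha$.

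\textbf{The gap is in (iv).} Bounding $\mathrm{gap}_{\rm LWCP}/\mathrm{gap}_{\rm Vanilla}$ requires a \emph{lower} bound $\mathrm{gap}_{\rm Vanilla}\gtrsim q_\alpha f_{|\eta|}(q_\alpha)\,\mathrm{CV}(\sigma(X))$. Your step ``each gap is a first-order-in-CV multiple of the same factor, which cancels'' simply assumes this lower bound. Upper-bounding both gaps in first-order CV form, as your last paragraph proposes, only controls the numerator. The paper's one-line argument for (iv) (``vanilla CP's gap is controlled by $\mathrm{CV}(\sigma(X))$'') has the same hole, and your own computation in (ii) shows why it cannot be filled.

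\textbf{Why the lower bound fails.} Write $d(h):=\sqrt{g(h)}/\mu_G-1$. To leading order, the same expansion gives
\[
\Prob\bigl(Y\in\hat\cC^1(X)\mid h(X)=h\bigr)-(1-\alpha)\approx-q_\alpha f_{|\eta|}(q_\alpha)\bigl(d(h)-\E d(H)\bigr)+k(q_\alpha)\bigl(A(h)-\E A(H)\bigr),
\]
and for LWCP only the second term survives. Two things follow.
\begin{itemize}
\item Within-$h$ dispersion of $\psi$ enters $\mathrm{CV}^2(\sigma(X))\approx\Var(d(H))+\E A(H)$ at first order in CV. It enters the vanilla gap only through the $h$-variation of $A$, i.e.\ at second order. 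So the needed lower bound is false: take $A$ constant and $g$ nearly flat.
\item The two terms can cancel. For Gaussian $\eta$ at $\alpha=0.1$, $k(q_\alpha)\approx0.15\,q_\alpha f_{|\eta|}(q_\alpha)>0$. Choose the conditional law of $\psi$ on each leverage level set so that $A(h)-\E A(H)=\frac{q_\alpha f_{|\eta|}(q_\alpha)}{k(q_\alpha)}\bigl(d(h)-\E d(H)\bigr)$, and let $g$ be close to constant so the remainders are negligible. Then $\mathrm{gap}_{\rm Vanilla}$ is of higher order, while $\mathrm{gap}_{\rm LWCP}\asymp q_\alpha f_{|\eta|}(q_\alpha)\sup_h|d(h)-\E d(H)|$. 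The ratio is unbounded, yet $\delta_\psi/\mathrm{CV}(\sigma(X))$ stays $O(1)$.
\end{itemize}
So (iv) needs an extra hypothesis. When the leverage term dominates, what the expansion actually yields is
\[
\frac{\mathrm{gap}_{\rm LWCP}}{\mathrm{gap}_{\rm Vanilla}}\lesssim\frac{|k(q_\alpha)|\,\delta_\psi^2}{q_\alpha f_{|\eta|}(q_\alpha)\sup_h|d(h)-\E d(H)|-|k(q_\alpha)|\,\delta_\psi^2},
\]
with the leverage spread, not $\mathrm{CV}(\sigma(X))$, in the denominator.

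\textbf{The $\eta_{\rm lev}$ clause.} This is not a law-of-total-variance identity. Since $\E[\psi^2\mid h]=1$, one has $\mathrm{CV}^2(\psi\mid h)=\E[\psi\mid h]^{-2}-1$ and $\mathrm{CV}^2(\psi)=(\E\psi)^{-2}-1\le\delta_\psi^2$. Hence this $\eta_{\rm lev}$ is always $\le0$, and the hypothesis $\eta_{\rm lev}\to1$ can never hold.
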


\begin{proof}
Part~(i) is \Cref{thm:coverage}. For Part~(ii), under \Cref{asm:approx_scale}, the variance-stabilized scores are $R_i^{w^*} = |\varepsilon_i|/\sqrt{g(h_i)} = \sigma\psi(X_i)|\eta_i|$. Conditional on $h(X_{n+1}) = h$, the score $R_{n+1}^{w^*}$ has distribution $F_{\sigma\psi|\eta| \mid h}$. The conformal quantile satisfies $\qhat_{w^*} \to Q_{1-\alpha}(\sigma\psi(X)|\eta|)$ (the marginal quantile), while the conditional $(1{-}\alpha)$-quantile is approximately $\sigma Q_{1-\alpha}(|\eta|) \cdot \E[\psi|h] = \sigma Q_{1-\alpha}(|\eta|)$ (since $\E[\psi^2|h] = 1$ and $\E[\psi|h] \approx 1 - \mathrm{CV}^2(\psi|h)/2$). The gap arises from the scale-mixture effect:
\[
Q_{1-\alpha}(\psi|\eta|) = Q_{1-\alpha}(|\eta|) \cdot (1 + O(\delta_\psi^2)),
\]
by the quantile perturbation lemma (\Cref{lem:quantile_sensitivity}), giving the stated bound.

Part~(iii) follows from the same perturbation argument applied to width. Part~(iv): vanilla CP's gap is controlled by $\mathrm{CV}(\sigma(X))$; LWCP with $w^*$ removes the $g(h)$ component, leaving only $\mathrm{CV}(\psi|h) \leq \delta_\psi$. \qed
\end{proof}

\begin{remark}[Interpreting $\delta_\psi$]
In practice, $\delta_\psi$ is estimable: fit $\hat{g}(h)$ (e.g., by regressing $|Y_i - \fhat(X_i)|^2$ on $h_i$), compute the residual $\hat\psi_i = |Y_i - \fhat(X_i)| / \sqrt{\hat{g}(h_i)}$, and estimate $\delta_\psi \approx \sup_{h \in \text{deciles}} \mathrm{CV}(\hat\psi_i : h_i \in \text{decile}(h))$. When $\delta_\psi \ll 1$, the scale family is approximately correct and LWCP's efficiency theory applies; when $\delta_\psi \approx 1$, heteroscedasticity is substantially orthogonal to leverage and LWCP+ is recommended.
\end{remark}

\subsection{High-Dimensional Asymptotics}\label{app:high_dim}

The preceding theory assumes $p$ fixed, $n \to \infty$. We now develop results for the proportional regime $p/n_1 \to \gamma \in (0, 1)$, which is relevant for modern applications.

\begin{proposition}[Leverage distribution under proportional asymptotics]\label{prop:mp_leverage}
Let $X_1, \ldots, X_{n_1} \stackrel{iid}{\sim} \mathcal{N}(0, \Id_p)$ with $p/n_1 \to \gamma \in (0,1)$. The leverage scores $h_i = (\bH)_{ii}$ satisfy:
\begin{enumerate}[label=(\roman*)]
\item $\E[h_i] = p/n_1 = \gamma + o(1)$.
\item $\Var(h_i) = \gamma(1-\gamma)/n_1 \cdot (1 + o(1))$ \textup{(}from the diagonal variance of Wishart projections\textup{)}.
\item $\mathrm{CV}(H) = \sqrt{(1-\gamma)/(\gamma \cdot n_1)} \cdot (1 + o(1))$.
\item $\max_i h_i \leq \gamma + 2\sqrt{\gamma/n_1}\sqrt{\log n_1}$ with probability $\geq 1 - 2/n_1$ \textup{(}sub-exponential tail\textup{)}.
\end{enumerate}
For general $X_i$ with sub-Gaussian rows $\|X_i\|_{\psi_2} \leq K$, the same bounds hold with constants depending on $K$.
\end{proposition}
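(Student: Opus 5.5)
The plan is to identify the exact finite-$n_1$ law of a single leverage score, read parts (i)--(iii) off that law, and then obtain (iv) from a tail bound plus a union bound over $i \le n_1$.

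\textbf{Step 1 (exact law).} Write $\bX_1 = \bU_1\bSigma_1\bV_1^\top$. For Gaussian $\bX_1$ with iid $\mathcal{N}(0,\Id_p)$ rows, the column space of $\bX_1$ is a Haar-uniform $p$-dimensional subspace of $\R^{n_1}$, by left-rotation invariance. Hence $\bH$ is a uniformly random rank-$p$ projection. Its diagonal entry is $h_i = e_i^\top \bH e_i \stackrel{d}{=} \|P_p u\|^2$, where $u$ is uniform on $S^{n_1-1}$ and $P_p$ is a fixed rank-$p$ coordinate projection. Representing $u = g/\|g\|$ with $g \sim \mathcal{N}(0,\Id_{n_1})$ gives $h_i \sim \mathrm{Beta}(p/2,(n_1-p)/2)$.

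\textbf{Step 2 (moments, parts (i)--(iii)).} Part (i) follows either from the Beta mean or, more directly, from $\sum_i h_i = \tr(\bH) = p$ together with exchangeability of the $h_i$. For part (ii), take the Beta variance
\[
\frac{ab}{(a+b)^2(a+b+1)}, \qquad a = p/2,\; b = (n_1-p)/2,
\]
and expand it in $\gamma = p/n_1$ to leading order in $1/n_1$. The resulting expression has the form $c\,\gamma(1-\gamma)/n_1\,(1+o(1))$. This normalization step is where I expect the real difficulty. A careful evaluation of the Beta variance with these parameters gives $c=2$, not the $c=1$ that part (ii) asserts. So I would first recheck which convention makes the constant exactly $\gamma(1-\gamma)/n_1$, for example the effective half-degrees of freedom, or the centred/standardized design of \Cref{alg:lwcp}. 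If no convention reproduces $c=1$, I would record that the stated variance is correct only up to this absolute constant, which is all that \Cref{prop:safe_default_main} and \Cref{prop:high_dim_main} use. Part (iii) is then immediate: $\mathrm{CV}(H) = \sqrt{\Var(h_i)}/\E[h_i]$, with the constant inherited from (ii).

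\textbf{Step 3 (maximum, part (iv)).} Write $h_i = A/(A+B)$ with $A \sim \chi^2_p$ and $B \sim \chi^2_{n_1-p}$ independent. Apply Laurent--Massart to each: $A \le p + 2\sqrt{pt} + 2t$ and $B \ge (n_1-p) - 2\sqrt{(n_1-p)t}$, each holding with probability at least $1-e^{-t}$. Dividing, the leading deviation of $h_i$ above $\gamma$ is $2\sqrt{pt}/n_1 = 2\sqrt{\gamma t/n_1}$. The remaining terms are of lower order, $O(t/n_1)$, and are absorbed for $n_1$ large. Choosing $t = \log n_1$ and union-bounding over the $n_1$ indices would then leave a failure probability that I still need to show is at most $2/n_1$. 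As stated, each index contributes $2e^{-t} = 2/n_1$, so the union bound gives a total of order $2$, not $2/n_1$. To reach $2/n_1$, I would have to either take $t = 2\log n_1$, which changes the constant in front of $\sqrt{\log n_1}$, or replace the union bound with a sharper joint argument for $\max_i h_i$. I would first try the joint argument, since it is the only route that keeps the stated constant $2$ and the stated probability $2/n_1$ together.

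\textbf{Step 4 (sub-Gaussian extension).} For sub-Gaussian rows, Haar invariance is lost, so the exact law is unavailable. I would instead write $h_i = X_i^\top (\bX_1^\top\bX_1)^{-1} X_i$ and use Sherman--Morrison to decouple $X_i$ from $\bX_{-i}$. Then Hanson--Wright applies to the quadratic form, and a Marchenko--Pastur bound controls the trace of $(\bX_{-i}^\top\bX_{-i})^{-1}$. This yields the same bounds with $K$-dependent constants.
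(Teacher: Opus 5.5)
Your Steps 1--2 follow the paper's own route: it too identifies $h_i\sim\mathrm{Beta}(p/2,(n_1-p)/2)$ and reads (i)--(iii) off that law. Your factor-of-two objection is correct and is not a convention issue. The paper's proof writes $\Var(h_i)=p(n_1-p)/(n_1^2(n_1+2)/2)$, which is exactly your $2\gamma(1-\gamma)/n_1\,(1+o(1))$, and then drops the $2$. Centering does not rescue $c=1$: for the centered design $h_i\stackrel{d}{=}(1-1/n_1)\,\mathrm{Beta}(p/2,(n_1-1-p)/2)$, which has the same leading variance. So state (ii) as $2\gamma(1-\gamma)/n_1\,(1+o(1))$ and (iii) as $\sqrt{2(1-\gamma)/(\gamma n_1)}\,(1+o(1))$, rather than searching for a normalization.

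The genuine gap is Step 3, in two places.

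\textbf{The division step.} It drops same-order terms. With $A=p+\delta_A$ and $B=n_1-p-\delta_B$ one gets $h_i-\gamma\approx\bigl((1-\gamma)\delta_A+\gamma\delta_B\bigr)/n_1$. The $\chi^2_{n_1-p}$ fluctuation is therefore not lower order. Laurent--Massart gives the leading deviation $2\sqrt{\gamma(1-\gamma)t/n_1}\,(\sqrt{\gamma}+\sqrt{1-\gamma})$, which exceeds $2\sqrt{\gamma t/n_1}$ whenever $\gamma<1/2$.

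\textbf{The planned joint argument.} The ``sharper joint argument'' you intend to look for cannot exist, because (iv) is false for $\gamma<1/2$.
\begin{enumerate}
\item Under the exact Beta law, the threshold $\gamma+2\sqrt{\gamma\log n_1/n_1}$ lies $\sqrt{2\log n_1/(1-\gamma)}$ standard deviations above the mean. Hence $\Prob(h_1>\text{threshold})=n_1^{-1/(1-\gamma)+o(1)}$.
\item Distinct diagonal entries of a Haar projection are asymptotically independent at this deviation scale. Their $2\times2$ compression is matrix-Beta, with log-density at $\gamma\Id+D$ equal to $-n_1\tr(D^2)/(4\gamma(1-\gamma))$ up to constants and higher-order terms, so the two diagonal entries decouple.
\item A second-moment argument then shows the union bound is essentially tight: $\Prob(\max_i h_i>\text{threshold})=n_1^{-\gamma/(1-\gamma)+o(1)}\gg 2/n_1$.
\end{enumerate}
At the paper's own defaults $n_1=300$, $p=30$, the threshold is $\approx 0.187$ and $\Prob(\mathrm{Beta}(15,135)>0.187)\approx 1.3\times 10^{-3}$. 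The expected number of exceedances is then $\approx 0.39$, so the maximum exceeds the threshold with probability about $0.3$, not at most $2/300$. For $\gamma\ge 1/2$ the stated pairing does hold for large $n_1$, for example via the exact Beta tail.

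The paper's own argument for (iv) does not establish it either. It uses $h_i\le 1-\lambda_{\min}(\bX_{-i}^\top\bX_{-i})/\lambda_{\max}(\bX_1^\top\bX_1)$ with Marchenko--Pastur, which only yields the constant bound $1-\bigl((1-\sqrt\gamma)/(1+\sqrt\gamma)\bigr)^2+o(1)$.

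What your route does prove, taking $t=2\log n_1$ and the union bound, is
$\max_i h_i\le p/n_1+2\sqrt{2\gamma(1-\gamma)\log n_1/n_1}\,(\sqrt\gamma+\sqrt{1-\gamma})+O(\log n_1/n_1)$ with probability at least $1-2/n_1$. Note also that $\gamma$ must mean $p/n_1$ here, since $\max_i h_i\ge p/n_1$ deterministically.
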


\begin{proof}
(i)~By symmetry, $\E[h_i] = \tr(\bH)/n_1 = p/n_1$. (ii)~Write $h_i = \|P_{V_i} e_i\|^2$ where $V_i = \text{span}(\bX_{-i})^{-\perp}$ and $P_{V_i}$ is the projection onto the $p$-dimensional column space. The variance follows from the Wishart distribution of $\bX_1^\top\bX_1$: since $\bH$ is the hat matrix of an isotropic Gaussian design, $h_i$ has a Beta$(p/2, (n_1-p)/2)$ distribution marginally (see \cite{chatterjee1986influential}), giving $\Var(h_i) = p(n_1-p)/(n_1^2(n_1+2)/2) \approx \gamma(1-\gamma)/n_1$ for large $n_1$. (iii)~Follows from (i) and (ii). (iv)~The maximum leverage is controlled by $\max_i h_i \leq 1 - \lambda_{\min}(\bX_{-i}^\top\bX_{-i})/\lambda_{\max}(\bX_1^\top\bX_1)$; Marchenko--Pastur concentration gives the stated bound. \qed
\end{proof}

\begin{theorem}[LWCP conditional gap in the proportional regime]\label{thm:high_dim_gap}
Under \Cref{asm:linear} with $g(h) = 1+h$ (homoscedastic noise, prediction variance $\sigma^2(1{+}h)$), $X_i \stackrel{iid}{\sim} \mathcal{N}(0, \bSigma)$, and $p/n_1 \to \gamma \in (0, 1)$:
\begin{enumerate}[label=(\roman*)]
\item \textbf{Vanilla CP's persistent gap:}
\[
\Delta_{\rm Vanilla} = \Theta\!\left(\sqrt{\frac{\gamma(1-\gamma)}{n_1}}\right).
\]
This is $\Theta(1/\sqrt{n_1})$ at fixed $\gamma$---smaller than the $\Theta(1)$ gap at fixed $p$ (\Cref{thm:persistent_gap})---because leverage concentration reduces the scale heterogeneity.
\item \textbf{LWCP's residual gap:}
\[
\Delta_{\rm LWCP} = O(1/\sqrt{n_2}) + O(\gamma/\sqrt{n_1}).
\]
The $O(\gamma/\sqrt{n_1})$ term arises from estimation error $\|\hat\beta - \beta^*\| = O(\sqrt{p/n_1}) = O(\sqrt{\gamma})$, which is non-negligible in the proportional regime.
\item \textbf{Gap ratio:} $\Delta_{\rm LWCP}/\Delta_{\rm Vanilla} \leq C/\sqrt{1{+}\gamma}$ for a universal constant $C < 1$. LWCP strictly reduces the conditional coverage gap at all aspect ratios $\gamma \in (0,1)$.
\end{enumerate}
\end{theorem}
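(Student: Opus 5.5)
The plan is to reduce all three parts to the scale-mixture machinery already built: the quantile-sensitivity \Cref{lem:quantile_sensitivity}, the persistent-gap computation in the proof of \Cref{thm:persistent_gap}, and the leverage moment bounds of \Cref{prop:mp_leverage}. Throughout I interpret $g(h)=1+h$ as the effective prediction-variance factor, so the test score $|Y_{n+1}-\fhat(X_{n+1})|$ given $h$ is approximately $\sigma\sqrt{1+h}\,|\eta|$, by \Cref{prop:variance_decomp}. I take $\Delta$ to be the spread of conditional coverage between typical low- and high-leverage points (e.g., quantiles of $F_h$ at fixed levels). Under this reading the extreme-leverage range is replaced by the typical range of $H$.

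\textbf{Part (i).} As in \Cref{thm:persistent_gap}, vanilla CP's quantile converges to $q_{\rm mix}=Q_{1-\alpha}(\sigma\sqrt{1+H}|\eta|)$. The conditional coverage at $h$ is then $F_{|\eta|}\bigl(q_{\rm mix}/(\sigma\sqrt{1+h})\bigr)$. A first-order expansion around $\bar h=\E H$ gives a coverage difference of $f_{|\eta|}(q_\alpha)\,q_\alpha\,(h_2-h_1)/(2(1+\bar h))\,(1+o(1))$. This supplies both the upper and the lower bound, since $f_{|\eta|}(q_\alpha)q_\alpha$ is a fixed positive constant. The typical spread $h_2-h_1$ between fixed quantiles of $H$ is of order $\mathrm{sd}(H)=\sqrt{\gamma(1-\gamma)/n_1}$ by \Cref{prop:mp_leverage}(ii). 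This yields $\Theta(\sqrt{\gamma(1-\gamma)/n_1})$. For the lower bound I also need $H$ not to be degenerate at scale $\mathrm{sd}(H)$. I would get this from the Beta$(p/2,(n_1-p)/2)$ marginal, which is asymptotically normal after standardization.

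\textbf{Part (ii).} I follow Steps 1--3 of the proof of \Cref{thm:efficiency}, but track $p$. The scores are $R_i^{w^*}=|\sigma\sqrt{1+h_i}\eta_i - X_i^\top(\hat\beta-\beta^*)|\cdot(1+h_i)^{-1/2}$. Conditional on $\cD_1$, the estimation term is already part of the $\sqrt{1+h}$ factor under Gaussian design. The residual non-identical-distribution error therefore comes only from replacing the exact conditional law by the scale family. I would bound that error by $O(\gamma/\sqrt{n_1})$ as follows. The relative error has size $\|\hat\beta-\beta^*\|\cdot\|X_i\|\cdot\mathrm{CV}$-type fluctuations. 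Combining $\mathrm{sd}(H)\lesssim\sqrt{\gamma/n_1}$ with the $\sqrt{\gamma}$ estimation factor gives $\gamma/\sqrt{n_1}$. The quantile estimation term is $O(1/\sqrt{n_2})$ by DKW, exactly as in \Cref{prop:cond_gap}.

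\textbf{Part (iii).} I divide the bounds from (i) and (ii). The factor $1/\sqrt{1+\gamma}$ is meant to come from the $1/(1+\bar h)$ denominator in the vanilla expansion, with $\bar h=\gamma$. LWCP's score is exactly stabilized with respect to this factor, so it does not carry it. The constant $C<1$ would come from comparing the leading constants, namely $f_{|\eta|}(q_\alpha)q_\alpha/2$ against the residual constants.

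\textbf{Main obstacle.} I expect part (iii) to be the hardest step. Its statement mixes rates: the LWCP bound contains an $O(1/\sqrt{n_2})$ term that need not be smaller than $\sqrt{\gamma(1-\gamma)/n_1}$. A universal $C<1$ therefore requires assuming $n_2\gtrsim n_1/(\gamma(1-\gamma))$, or comparing only the population (infinite-$n_2$) gaps. My first attempt would state the ratio for the population gaps with $n_2\to\infty$ first. I would then verify that the LWCP residual constant in (ii) is strictly smaller than the vanilla leading constant, since $\gamma\le\sqrt{\gamma(1-\gamma)}$ fails near $\gamma=1$. If this comparison fails, I would weaken the claim to a ratio below $1$ on compact subsets of $(0,1)$.
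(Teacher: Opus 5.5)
Your part (i) follows the paper's argument: vanilla's conditional coverage $F_{|\eta|}(q_{\rm mix}/(\sigma\sqrt{1+h}))$ varies over the leverage distribution at the scale of $\mathrm{sd}(H)$, which \Cref{prop:mp_leverage} puts at $\sqrt{\gamma(1-\gamma)/n_1}$. The problems are in (ii) and (iii).

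In (ii), the claim that conditional on $\cD_1$ the estimation term is ``already part of the $\sqrt{1+h}$ factor'' is false. Conditioning on $\cD_1$ freezes $b=\hat\beta-\beta^*$. The extra variance $\sigma^2 h$ appears only after integrating over $\beps_1$ given $\bX_1$ and $X_i$. At that point the calibration scores are no longer conditionally independent, because they share $b$. Your score formula also writes the noise as $\sigma\sqrt{1+h_i}\,\eta_i$ and then subtracts $X_i^\top b$. That contradicts the homoscedastic reading you adopted: with both pieces the prediction variance exceeds $\sigma^2(1+h_i)$, so $(1+h)^{-1/2}$ no longer stabilizes it. Finally, you get the rate $\gamma/\sqrt{n_1}$ by multiplying $\mathrm{sd}(H)$ by $\|b\|=O(\sqrt{\gamma})$, with no mechanism behind the product. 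The $\|X_i\|$ factor in your heuristic, of order $\sqrt{p}$ for isotropic design, would in fact make the bound grow with $p$.

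The paper argues differently. After normalization the perturbation $X_i^\top b/\sqrt{1+h_i}$ has variance $\sigma^2 h_i/(1+h_i)$. LWCP's residual heterogeneity is the variation of this share over the leverage distribution. Since $\frac{d}{dh}\frac{h}{1+h}=(1+h)^{-2}$, this gives $\Delta_{\rm LWCP}=O(1/\sqrt{n_2})+O\bigl(\sqrt{\gamma(1-\gamma)/n_1}/(1+\gamma)^2\bigr)$. This bound, not the $O(\gamma/\sqrt{n_1})$ in the statement, is what the paper carries into (iii).

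That $(1+\gamma)^{-2}$ damping, applied to a residual of the same $\mathrm{sd}(H)$ order as vanilla's gap, is what your (iii) is missing. The mechanism you propose points the wrong way. A $1/(1+\bar h)$ factor in $\Delta_{\rm Vanilla}$ sits in the denominator of $\Delta_{\rm LWCP}/\Delta_{\rm Vanilla}$, so it multiplies the ratio by $(1+\gamma)$ rather than producing $1/\sqrt{1+\gamma}$. With your own rates the ratio is of order $(1+\gamma)\sqrt{\gamma/(1-\gamma)}$, which is unbounded as $\gamma\to 1$. No comparison of leading constants can then give a universal $C<1$, and falling back to compact subsets of $(0,1)$ does not prove the statement.

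In the paper, both gaps scale with $\mathrm{sd}(H)$ and LWCP's carries the extra $(1+\gamma)^{-2}$. The ratio is therefore $C_2/(C_3(1+\gamma)^2)$, or of order $(1+\gamma)^{-1}$ if vanilla's own $1/(1+\gamma)$ is kept, hence at most $C/\sqrt{1+\gamma}$.

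Your worry about the quantile term is legitimate and applies to the paper too. Its term $C_1/(C_3\sqrt{n_2\gamma(1-\gamma)/n_1})$ vanishes only if $n_2/n_1\to\infty$, and $C_2/C_3<1$ is asserted rather than checked. A rigorous version needs your ``$n_2\to\infty$ first'' device and an explicit constant comparison, built on the $\mathrm{sd}(H)/(1+\gamma)^2$ residual.

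One further point, shared with the paper: calibration and test leverages are out-of-sample. Their law is $h(X)=u/(1-u)$ with $u\sim\mathrm{Beta}(p/2,(n_1+1-p)/2)$, not the in-sample Beta of \Cref{prop:mp_leverage}. Non-degeneracy still holds, but the mean becomes about $\gamma/(1-\gamma)$ and the $\gamma$-dependence of the spread changes.
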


\begin{proof}
(i)~Under the Marchenko--Pastur law, leverage scores have mean $\gamma$ and variance $\gamma(1-\gamma)/n_1$. The scale heterogeneity $\mathrm{CV}(\sqrt{1{+}H}) \approx \mathrm{CV}(H)/(2(1{+}\gamma)) = \Theta(\sqrt{(1-\gamma)/(\gamma n_1)})$. By \Cref{thm:persistent_gap}, the gap is $\Theta(f_{|\eta|}(q_\alpha) \cdot q_\alpha \cdot \mathrm{CV}(\sqrt{1{+}H})) = \Theta(\sqrt{\gamma(1-\gamma)/n_1})$.

(ii)~LWCP's variance-stabilized scores are $R_i^{w^*} = |Y_i - \fhat(X_i)|/\sqrt{1+h_i}$. The score perturbation from estimation error is $|X_i^\top(\hat\beta - \beta^*)|/\sqrt{1+h_i}$, which has conditional variance $\sigma^2 h_i/(1+h_i)$. In the proportional regime, $h_i$ concentrates around $\gamma$, so this perturbation has approximately constant variance $\sigma^2\gamma/(1+\gamma)$ across calibration points, contributing only $O(\mathrm{std}(H)/(1+\gamma)^2) = O(\sqrt{\gamma(1-\gamma)/n_1}/(1+\gamma)^2)$ to the conditional gap (via the variation of $h_i/(1+h_i)$ across the leverage distribution). Combined with the $O(1/\sqrt{n_2})$ quantile estimation error:
\[
\Delta_{\rm LWCP} = O(1/\sqrt{n_2}) + O\!\left(\frac{\sqrt{\gamma(1-\gamma)/n_1}}{(1+\gamma)^2}\right).
\]

(iii)~The gap ratio is:
\[
\frac{\Delta_{\rm LWCP}}{\Delta_{\rm Vanilla}} \leq \frac{C_1/\sqrt{n_2} + C_2\sqrt{\gamma(1-\gamma)/n_1}/(1+\gamma)^2}{C_3\sqrt{\gamma(1-\gamma)/n_1}} = \frac{C_1}{C_3\sqrt{n_2\gamma(1-\gamma)/n_1}} + \frac{C_2}{C_3(1+\gamma)^2}.
\]
The first term vanishes as $n \to \infty$. The second term is $O(1/(1+\gamma)^2) < 1$ for all $\gamma > 0$. Since $(1+\gamma)^2 \geq 1+\gamma$, we obtain the stated bound $C/\sqrt{1+\gamma}$ (with $C = C_2/C_3$). Experimentally, the ratio is $0.27$ at $\gamma = 0.9$ (\Cref{tab:high_dim}), consistent with the bound. \qed
\end{proof}

\begin{remark}[LWCP is more beneficial at moderate $\gamma$]\label{rem:moderate_gamma}
Counterintuitively, LWCP's \emph{relative} improvement increases with $\gamma$ (up to a point). At $\gamma = 0.01$, leverages are nearly uniform and vanilla CP is already adequate ($\hat\eta \approx 0.3$). At $\gamma = 0.3$, leverages are heterogeneous ($\hat\eta \approx 1.5$), giving LWCP a substantial advantage. At $\gamma \to 1$, the OLS estimator degrades and ridge regularization is needed (\Cref{thm:ridge_lwcp}). The sweet spot for LWCP is $\gamma \in [0.1, 0.5]$.
\end{remark}

\begin{proposition}[Width parity in the proportional regime]\label{prop:width_parity_highdim}
Under the conditions of \Cref{thm:high_dim_gap}:
\[
\frac{\E[|\hat\cC^{w^*}|]}{\E[|\hat\cC^1|]} = 1 + C_\alpha \cdot \frac{\gamma(1-\gamma)}{4(1+\gamma)^2 n_1} + O(n_1^{-3/2}).
\]
For Gaussian errors with $\alpha = 0.1$: $C_\alpha \approx -0.85 < 0$, so LWCP is slightly \emph{narrower} than vanilla CP, by a factor that increases with $\gamma$.
\end{proposition}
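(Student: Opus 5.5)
The proof specializes the non-asymptotic width expansion of \Cref{thm:width_finite} to the proportional regime. The only new input is the value of $\rho^2 = \mathrm{CV}^2(\sqrt{g(H)})$ with $g(h) = 1+h$, which comes from the leverage moments of \Cref{prop:mp_leverage}. \Cref{thm:width_finite} gives
\[
\frac{\E[|\hat\cC^{w^*}|]}{\E[|\hat\cC^1|]} = 1 + C_\alpha \rho^2 + O(\rho^3) + O(1/\sqrt{n_2}).
\]
So it suffices to show $\rho^2 = \gamma(1-\gamma)/(4(1+\gamma)^2 n_1)\,(1+o(1))$. Then $\rho^3 = O(n_1^{-3/2})$, and the quantile-estimation term must either be absorbed or handled as described below.

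\textbf{Step 1 (delta method for $\rho^2$).} Write $G = \sqrt{1+H}$ and expand around $\bar h = \E[H] = \gamma + o(1)$:
\[
G = \sqrt{1+\bar h}\Bigl(1 + \frac{H-\bar h}{2(1+\bar h)} - \frac{(H-\bar h)^2}{8(1+\bar h)^2} + \cdots\Bigr).
\]
Hence $\mu_G = \sqrt{1+\bar h}\,(1 + O(\Var H))$ and $\Var(G) = \Var(H)/(4(1+\bar h)) + O(\E|H-\bar h|^3)$. Dividing gives
\[
\rho^2 = \frac{\Var(H)}{4(1+\gamma)^2}\,(1+o(1)).
\]
Substituting $\Var(H) = \gamma(1-\gamma)/n_1\,(1+o(1))$ from \Cref{prop:mp_leverage}(ii) yields the stated coefficient. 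The $O(n_1^{-3/2})$ remainder requires the third central moment of $H$ to be $O(n_1^{-3/2})$. This holds because $h_i$ is Beta$(p/2,(n_1-p)/2)$ for isotropic Gaussian design, whose centered moments scale as $n_1^{-k/2}$. For general $\bSigma$, affine invariance of leverage reduces to the isotropic case.

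\textbf{Step 2 (substitution).} I will plug this $\rho^2$ into \Cref{thm:width_finite}. For Gaussian $\eta$ at $\alpha = 0.1$, Step 4 of that proof gives $C_\alpha = (1-q_\alpha^2)/2$ with $q_\alpha = z_{0.95} \approx 1.645$, so $C_\alpha \approx -0.85$. The sign claim follows. The claim that the narrowing factor increases with $\gamma$ is to be read through $\gamma(1-\gamma)/(1+\gamma)^2$, which is increasing on $(0, \sqrt2-1]$. I will either state this monotonicity range honestly or interpret the claim for fixed $p$ scaling.

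\textbf{Main obstacle.} The obstacle is the error bookkeeping, not the algebra. \Cref{thm:width_finite} carries an $O(1/\sqrt{n_2})$ finite-sample term, which dominates $n_1^{-1}$. To reach the stated $O(n_1^{-3/2})$ remainder, I would compare the \emph{population} quantities, $\sigma q_\alpha\mu_G$ against $q_{\rm mix}$, as in Steps 1–3 of that proof. The $\E$ in the statement is then read as the limiting calibration quantile, or $n_2 \gg n_1^3$ is assumed. A second concern is the estimation error $X^\top(\hat\beta - \beta^*)$, which is non-negligible when $\gamma > 0$. Under the model $g(h) = 1+h$ read as total prediction variance, both widths share this error. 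The argument should therefore be carried out conditionally on $\cD_1$, using the exact Gaussian representation from \Cref{thm:gaussian_recovery}: scores are $\sigma\sqrt{1+h_i}\,|Z_i|$, which is an exact scale mixture. In that form the expansion of \Cref{lem:quantile_sensitivity} applies with no extra estimation term.
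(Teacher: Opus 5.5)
Your core route is the paper's. Its whole proof substitutes $\rho^2\approx\gamma(1-\gamma)/(4(1+\gamma)^2n_1)$ from \Cref{prop:mp_leverage} into \Cref{thm:width_finite}, and your delta method correctly derives that $\rho^2$ from those moments. Your worry about the $O(1/\sqrt{n_2})$ term is fair, since the paper simply drops it. For a ratio of \emph{expected} widths, however, the calibration error enters only at $O(\rho^2/n_2)$. The $O_p(n_2^{-1/2})$ fluctuation of $\qhat$ averages out, and the $O(1/n_2)$ relative bias of the two quantiles agrees up to $O(\rho^2)$, so no $n_2\gg n_1^3$ is needed.

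Two of your own additions are wrong. First, $\frac{d}{d\gamma}\frac{\gamma(1-\gamma)}{(1+\gamma)^2}=\frac{1-3\gamma}{(1+\gamma)^3}$, so the displayed correction grows only on $(0,1/3]$; $\sqrt2-1$ is the maximizer of $\gamma(1-\gamma)/(1+\gamma)$. Second, the conditioning fix fails. Given $\cD_1$, $b:=\hat\beta-\beta^*$ is fixed and a calibration residual is $\varepsilon_i-X_i^\top b$. For Gaussian noise and design this is \emph{exactly} $\mathcal N(0,\sigma^2+b^\top\bSigma b)$ marginally over $X_i$, so the vanilla scores are exactly half-normal, with no scale-mixture inflation at all. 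The law $\mathcal N(0,\sigma^2(1+h_i))$ in the proof of \Cref{thm:gaussian_recovery} holds only after integrating out $\beps_1$, and then the calibration scores are dependent through the shared $\hat\beta$. Under neither conditioning are they iid copies of $\sigma\sqrt{1+h_i}\,|Z_i|$ with $Z_i$ independent of $h_i$.

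One can check that the leading-order ratio still has the same form. The coupling of $X^\top b$ with $h(X)$ gives $\E[(X^\top b)^2\mid h(X),\cD_1]\approx\sigma^2h(X)$, the same first-order dependence as the scale family. The leftover discrepancy is a kurtosis term common to both scores, which cancels in the ratio. You would have to supply that argument; the simpler route is to posit the scale family for the total prediction error, as the paper implicitly does.

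Beyond that, the displayed constants are only as good as the two results you cite without checking, and both are off. \Cref{prop:mp_leverage} concerns in-sample diagonals $\bH_{ii}$, but the widths involve the out-of-sample leverage $h(X_{n+1})=X_{n+1}^\top(\bX_1^\top\bX_1)^{-1}X_{n+1}$. By Sherman--Morrison, $h/(1+h)$ is an in-sample leverage of an $(n_1+1)$-row design, so $h/(1+h)\sim\mathrm{Beta}(p/2,(n_1-p+1)/2)$ and $\E h\to\gamma/(1-\gamma)$, not $\gamma$. Moreover, the Beta law you invoke has variance $2p(n_1-p)/(n_1^2(n_1+2))\approx2\gamma(1-\gamma)/n_1$, twice the value in \Cref{prop:mp_leverage}(ii). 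Fed through your own delta method, the correct moments give $\rho^2\approx\gamma/(2(1-\gamma)n_1)$, which is in fact increasing on all of $(0,1)$.

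Likewise, Step~2 of the proof of \Cref{thm:width_finite} mis-expands the mixture CDF. The correct expansion is $F_{(1+\delta)|\eta|}(t)=F_{|\eta|}(t)+\rho^2\bigl(tf_{|\eta|}(t)+\tfrac12t^2f'_{|\eta|}(t)\bigr)+O(\rho^3)$, so $C_\alpha=1+q_\alpha f'_{|\eta|}(q_\alpha)/(2f_{|\eta|}(q_\alpha))$. For Gaussian $\eta$ at $\alpha=0.1$ this is $1-q_\alpha^2/2\approx-0.35$, not $-0.85$; the sign, and so the ``narrower'' conclusion, survives. Your substitution therefore reproduces the statement only the way the paper's proof does, by taking both inputs at face value.
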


\begin{proof}
Substitute $\rho^2 = \mathrm{CV}^2(\sqrt{1{+}H}) \approx \gamma(1-\gamma)/(4(1+\gamma)^2 n_1)$ (from \Cref{prop:mp_leverage}) into \Cref{thm:width_finite}. \qed
\end{proof}

\subsection{Leverage in Generalized Linear Models}\label{app:glm}

LWCP extends naturally to generalized linear models (GLMs) via the \emph{working leverage}. Consider a GLM with response $Y_i \mid X_i \sim \text{EF}(\mu_i, \phi)$, link $g(\mu_i) = X_i^\top\beta$, and variance function $V(\mu_i) = \Var(Y_i \mid X_i)/\phi$.

\begin{definition}[Working leverage]\label{def:working_leverage}
For a fitted GLM with working weights $W_i = (g'(\hat\mu_i))^{-2}/V(\hat\mu_i)$ and $\bW = \diag(W_1, \ldots, W_{n_1})$, the \emph{working hat matrix} is $\bH_W = \bW^{1/2}\bX(\bX^\top\bW\bX)^{-1}\bX^\top\bW^{1/2}$ and the \emph{working leverage} is $h_i^W = (\bH_W)_{ii}$.
\end{definition}

\begin{proposition}[LWCP for GLMs]\label{prop:glm_lwcp}
Let $\fhat$ be a fitted GLM on $\cD_1$, and define nonconformity scores $R_i = |Y_i - \hat\mu(X_i)| \cdot w(h^W(X_i))$ for $i \in \cD_2$.
\begin{enumerate}[label=(\roman*)]
\item \textbf{Marginal coverage} $\geq 1-\alpha$ holds exactly, since $h^W(\cdot)$ is $\cD_1$-measurable.
\item Under canonical link and the working model, $\Var(\hat\mu(x) - \mu(x) \mid x) \approx \phi \cdot h^W(x)$, so $w(h) = (1{+}h^W)^{-1/2}$ approximately stabilizes prediction variance.
\item For logistic regression, $h^W_i = \hat{p}_i(1-\hat{p}_i) \cdot X_i^\top(\bX^\top\hat\bW\bX)^{-1}X_i$, which upweights observations with extreme predicted probabilities---precisely those with highest prediction uncertainty.
\end{enumerate}
\end{proposition}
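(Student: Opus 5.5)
Part (i) follows from the same measurability argument used for \Cref{thm:coverage_plus} and \Cref{thm:approx}. The working weights $W_i$, the matrix $\bX^\top\hat\bW\bX$, and hence the map $x \mapsto h^W(x)$ are computed from the GLM fitted on $\cD_1$ alone. This holds for a test point as well, since $h^W(x) = \hat W(x)\, x^\top(\bX_1^\top\hat\bW\bX_1)^{-1}x$ with $\hat W(x)$ a function of $\hat\mu(x)$. So $x \mapsto |y-\hat\mu(x)|\,w(h^W(x))$ is a fixed $\cD_1$-measurable score function. Conditional on $\cD_1$, the scores for $i\in\cD_2\cup\{n+1\}$ are then exchangeable, and the rank-uniformity argument of \Cref{thm:coverage} applies verbatim. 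Note that $y$ may now be discrete or bounded; this matters only for the upper bound, which we do not claim.

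For part (ii), I would use the standard delta-method and Fisher-information argument. The MLE satisfies $\hat\beta-\beta^* \approx (\bX_1^\top\bW\bX_1)^{-1}\bX_1^\top\bW\mathbf{z}$, where $\mathbf{z}$ is the working response, so $\Cov(\hat\beta)\approx\phi(\bX_1^\top\bW\bX_1)^{-1}$. Writing $\hat\mu(x)=\ell^{-1}(x^\top\hat\beta)$ (I rename the link $\ell$ to avoid a clash with the scale function $g$ of \Cref{asm:linear}) and linearizing gives
\[
\Var(\hat\mu(x)-\mu(x)\mid x)\approx (d\mu/d\eta)^2\,\phi\, x^\top(\bX_1^\top\bW\bX_1)^{-1}x .
\]
Under the canonical link, $d\mu/d\eta=V(\mu)$ and $W=V(\mu)$. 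The factor $(d\mu/d\eta)^2 = V\cdot W$ then reduces to $W(x)$ up to the variance-function normalization, yielding $\phi\, V(\mu(x))\, h^W(x)$. In the statement's units, where the prediction error is measured on the Pearson scale, this is $\phi\, h^W(x)$. Adding the independent new-observation noise $\phi V(\mu(x))$ mirrors \Cref{prop:variance_decomp}, giving total Pearson-scale variance $\phi(1+h^W(x))$. Hence $(1+h^W)^{-1/2}$ stabilizes it, exactly as in \Cref{thm:gaussian_recovery}.

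Part (iii) is direct substitution. For logistic regression with canonical logit link, $V(\mu)=\mu(1-\mu)$ and $\ell'(\mu)=1/(\mu(1-\mu))$, so $W_i=\hat p_i(1-\hat p_i)$. The diagonal of $\bH_W$ is then $W_i X_i^\top(\bX^\top\hat\bW\bX)^{-1}X_i$.

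The main obstacle is part (ii). I would state it honestly as a first-order asymptotic approximation ($n_1\to\infty$, $p$ fixed, consistency of $\hat\beta$ as in \Cref{asm:regularity}(a)), with errors $o_p(1/n_1)$. I would also be explicit that "stabilizes" refers to Pearson-standardized residuals; raw residuals carry the extra factor $V(\mu(x))$, which $h^W$ alone does not remove. I would also caution that the closing interpretive clause of (iii) is questionable: $\hat p(1-\hat p)$ is \emph{small} at extreme probabilities. For that reason I would rephrase the clause as a remark rather than prove it as stated.
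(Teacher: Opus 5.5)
Your proposal is correct and follows the paper's route: $\cD_1$-measurability of $h^W(\cdot)$ for (i), the IRLS/Fisher-information linearization of $\hat\mu(x)$ for (ii), and direct substitution of $W_i=\hat p_i(1-\hat p_i)$ for (iii). Your linearization $\Var(\hat\mu(x))\approx\phi\,V(\mu(x))\,h^W(x)$ is the right one. The paper's proof drops the $(d\mu/d\eta)^2$ factor to get $\phi\,h^W(x)/W(x)$, and reaches the same total $\phi V(\mu)(1+h^W)$ only through a second algebraic slip. So your Pearson-scale caveat for (ii) is warranted, as is your reservation about the ``extreme probabilities'' clause in (iii), which the paper's proof never addresses.
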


\begin{proof}
(i)~Identical to \Cref{thm:coverage}: $h^W$ is a function of $\bX_1, \bY_1$ (the training data), so working leverage scores are $\cD_1$-measurable.

(ii)~Under the GLM working model, the IRLS estimator satisfies $\hat\beta \approx (\bX^\top\bW\bX)^{-1}\bX^\top\bW\mathbf{z}$, where $\mathbf{z}$ is the working response. The linearized prediction variance is $\Var(\hat\mu(x)) \approx (g'(\mu))^{-2} \cdot x^\top(\bX^\top\bW\bX)^{-1}x \cdot \phi = \phi \cdot h^W(x) / W(x)$. Under the canonical link, $W(x) = V(\mu(x))$, so the prediction error variance (including noise) is $V(\mu(x))\phi + \phi \cdot h^W(x)/W(x) = \phi V(\mu)(1 + h^W(x))$.

(iii)~For logistic regression, $V(\mu) = \mu(1-\mu)$ and $g'(\mu) = 1/(\mu(1-\mu))$, giving $W_i = \hat{p}_i(1-\hat{p}_i)$. \qed
\end{proof}

\subsection{Gradient Leverage for Black-Box Predictors}\label{app:gradient_leverage}

For differentiable predictors beyond linear models, we define \emph{gradient leverage} and show it provides a principled basis for LWCP.

\begin{definition}[Gradient leverage]\label{def:gradient_leverage}
For a differentiable predictor $\fhat_\theta : \R^p \to \R$ with parameter $\hat\theta \in \R^d$ trained on $\cD_1$, define the gradient feature map $\phi_\theta(x) := \nabla_\theta \fhat_\theta(x) \in \R^d$ and the \emph{gradient leverage}:
\[
h^{\nabla}(x) := \phi_\theta(x)^\top \!\left(\sum_{i \in \cD_1} \phi_\theta(X_i)\phi_\theta(X_i)^\top + \lambda\Id\right)^{\!-1}\!\! \phi_\theta(x).
\]
\end{definition}

\begin{proposition}[Gradient leverage captures prediction variance]\label{prop:gradient_leverage}
Consider a predictor $\fhat_\theta$ trained by minimizing $\sum_{i \in \cD_1} \ell(Y_i, \fhat_\theta(X_i))$.
\begin{enumerate}[label=(\roman*)]
\item \textbf{Linearization.} Under a first-order Taylor expansion around $\hat\theta$:
\[
\fhat_{\hat\theta + \delta}(x) - \fhat_{\hat\theta}(x) \approx \phi_{\hat\theta}(x)^\top \delta.
\]
The influence of training perturbation $\delta$ on the prediction at $x$ is governed by $\phi_{\hat\theta}(x)$.
\item \textbf{Prediction variance.} Under the linearized model and homoscedastic noise $\Var(Y_i | X_i) = \sigma^2$, the prediction variance satisfies $\Var(\fhat_{\hat\theta}(x) - f^*(x) \mid x) \approx \sigma^2 h^\nabla(x)$, analogous to OLS.
\item \textbf{Coverage.} LWCP with $w(h^\nabla) = (1{+}h^\nabla)^{-1/2}$ preserves marginal coverage exactly (\Cref{thm:coverage}), since $h^\nabla$ is $\cD_1$-measurable.
\item \textbf{Relationship to NTK.} In the infinite-width limit, $\phi_\theta(x)$ converges to the NTK feature map, and $h^\nabla(x)$ becomes the kernel leverage $k_x^\top(\bK + \lambda\Id)^{-1}k_x$ with $\bK_{ij} = \phi_\theta(X_i)^\top\phi_\theta(X_j)$.
\end{enumerate}
\end{proposition}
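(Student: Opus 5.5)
The plan follows the four parts in order, treating the network as a linear model in $\theta$ near $\hat\theta$ and reusing the OLS facts already established (\Cref{prop:variance_decomp}, \Cref{prop:feature_leverage}, \Cref{thm:coverage}).

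\textbf{Part (i) (linearization).} This is a first-order Taylor expansion. If $\theta\mapsto\fhat_\theta(x)$ is twice continuously differentiable near $\hat\theta$, then
\[
\fhat_{\hat\theta+\delta}(x)=\fhat_{\hat\theta}(x)+\phi_{\hat\theta}(x)^\top\delta+O(\|\delta\|^2),
\]
with the constant given by a local bound on the Hessian. The influence of a perturbation $\delta$ on the prediction therefore enters only through $\phi_{\hat\theta}(x)$.

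\textbf{Part (ii) (prediction variance).} I would linearize the training problem for squared loss, $\ell(y,f)=(y-f)^2/2$. Let $\Phi\in\R^{n_1\times d}$ have rows $\phi_{\theta_0}(X_i)^\top$, evaluated at a reference point $\theta_0$ (the population minimizer, or the initialization in the lazy regime). Suppose the data follow the linearized model
\[
Y_i=\fhat_{\theta_0}(X_i)+\phi_{\theta_0}(X_i)^\top\theta^\dagger+\varepsilon_i .
\]
Then one ridge-regularized Gauss--Newton step gives
\[
\hat\theta-\theta_0\approx(\Phi^\top\Phi+\lambda\Id)^{-1}\Phi^\top(\mathbf{Y}-\fhat_{\theta_0}(\bX_1)),
\]
which is exactly ridge regression in the feature $\phi$. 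I then apply the computation in the proof of \Cref{thm:ridge_lwcp}, Part~2, in feature space. With homoscedastic noise, the variance of $\phi(x)^\top(\hat\theta-\theta^\dagger)$ is $\sigma^2\phi(x)^\top A^{-1}\Phi^\top\Phi A^{-1}\phi(x)$, where $A=\Phi^\top\Phi+\lambda\Id$. Since $A^{-1}\Phi^\top\Phi A^{-1}\preceq A^{-1}$, this is at most $\sigma^2h^\nabla(x)$, with equality as $\lambda\to0$ (the OLS case of \Cref{prop:variance_decomp}). I will state this as an approximate equality with an $O(\lambda)$ bias term and an $O(\|\hat\theta-\theta_0\|^2)$ linearization remainder. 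The remaining work is routine: replace $\phi_{\theta_0}$ by $\phi_{\hat\theta}$, at a cost equal to the drift of the gradient features.

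\textbf{Part (iii) (coverage).} This is immediate. $\hat\theta$ and $\lambda$ are fixed by $\cD_1$, so $x\mapsto w(h^\nabla(x))$ is a $\cD_1$-measurable function. Conditional on $\cD_1$, the scores on $\cD_2\cup\{n+1\}$ are therefore exchangeable, and \Cref{thm:coverage} applies verbatim, as in \Cref{prop:feature_leverage}(iii).

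\textbf{Part (iv) (NTK limit).} First, \Cref{prop:feature_leverage}(i) (the Woodbury/push-through identity) rewrites
\[
h^\nabla(x)=k_x^\top(\bK+\lambda\Id)^{-1}k_x, \qquad \bK_{ij}=\phi(X_i)^\top\phi(X_j),\quad (k_x)_i=\phi(X_i)^\top\phi(x).
\]
Second, I invoke the standard NTK result: in the infinite-width lazy regime, the empirical tangent kernel $\phi_\theta(x)^\top\phi_\theta(x')$ converges to a deterministic kernel and remains constant during training. Since $\bK+\lambda\Id$ is invertible with $\lambda>0$, continuity of $(\bK,k_x)\mapsto k_x^\top(\bK+\lambda\Id)^{-1}k_x$ transfers this convergence to the leverage.

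\textbf{Main obstacle.} The main obstacle is Part (ii), specifically making the linearization error rigorous: controlling $\|\hat\theta-\theta_0\|$ and the drift of the features for a genuinely nonlinear network. I would present Part (ii) as an exact statement under the linearized model, and defer the approximation quality to lazy-regime assumptions as in \Cref{rem:ntk}, rather than prove uniform bounds.
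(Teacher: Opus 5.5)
Parts (i) and (iii) coincide with the paper's proof: a direct Taylor expansion for (i), and for (iii) $\cD_1$-measurability of $h^\nabla$ followed by \Cref{thm:coverage}. Part (ii) follows the same linearize-then-ridge route as the paper, and is more careful. The paper simply asserts $\Var(\phi(x)^\top(\hat\theta-\theta^*))=\sigma^2h^\nabla(x)$. The linearized variance is actually the sandwich $\sigma^2\phi(x)^\top A^{-1}\Phi^\top\Phi A^{-1}\phi(x)=\sigma^2h^\nabla(x)-\sigma^2\lambda\,\phi(x)^\top A^{-2}\phi(x)$, so your inequality is the correct statement.

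One caveat on (ii): the deficit is $O(\lambda)$ only when $\Phi^\top\Phi$ is nonsingular ($d\le n_1$). For an overparameterized network, the component of $\phi(x)$ in $\ker\Phi$ contributes $\|P_{\ker\Phi}\phi(x)\|^2/\lambda$ to $h^\nabla(x)$ and nothing to the variance. So ``equality as $\lambda\to0$'' needs that full-rank hypothesis made explicit.

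The genuine gap is in Part (iv). The rewrite $h^\nabla(x)=k_x^\top(\bK+\lambda\Id)^{-1}k_x$ is not what Woodbury/push-through gives, and it is false. Woodbury yields
\[
\phi(x)^\top(\Phi^\top\Phi+\lambda\Id)^{-1}\phi(x)=\lambda^{-1}\bigl(k(x,x)-k_x^\top(\bK+\lambda\Id)^{-1}k_x\bigr),
\]
i.e.\ $\lambda^{-1}$ times the kernel-ridge (GP posterior) variance. Push-through instead gives $k_x^\top(\bK+\lambda\Id)^{-1}k_x=\phi(x)^\top(\Phi^\top\Phi+\lambda\Id)^{-1}\Phi^\top\Phi\,\phi(x)$, which carries an extra factor $\Phi^\top\Phi$.
\begin{itemize}
\item Already for $d=n_1=1$, $\Phi=a$, $\phi(x)=b$, the two quantities are $b^2/(a^2+\lambda)$ and $a^2b^2/(a^2+\lambda)$.
\item A test feature orthogonal to all training features has $k_x=0$. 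The stated formula then returns $0$ exactly where the true leverage attains its maximum $\|\phi(x)\|^2/\lambda$.
\end{itemize}

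Citing \Cref{prop:feature_leverage}(i) does not repair the step, because its proof contains the same incorrect display. Only the training-point form $h^\phi(X_i)=(\bK(\bK+\lambda\Id)^{-1})_{ii}$ there is right, and it differs from $k_{X_i}^\top(\bK+\lambda\Id)^{-1}k_{X_i}=(\bK(\bK+\lambda\Id)^{-1}\bK)_{ii}$. The paper's own proof of (iv) gives no valid derivation either: it identifies the $d\times d$ matrix $\sum_i\phi(X_i)\phi(X_i)^\top$ with the $n_1\times n_1$ matrix $\bK$. So (iv) cannot be proved in the form stated.

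Your continuity argument is the right tool, but apply it to the corrected expression. For $\lambda>0$ that expression is continuous in $(k(x,x),k_x,\bK)$. NTK convergence of these finitely many kernel evaluations therefore gives convergence of $h^\nabla(x)$ to the same expression built from the limiting NTK.
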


\begin{proof}
(i)~Direct Taylor expansion. (ii)~Under the linearized model, $\hat\theta - \theta^* \approx (\Phi^\top\Phi + \lambda\Id)^{-1}\Phi^\top\beps$, where $\Phi$ is the $n_1 \times d$ gradient matrix. Then $\Var(\phi_{\hat\theta}(x)^\top(\hat\theta - \theta^*)) = \sigma^2 h^\nabla(x)$. (iii)~\Cref{thm:coverage}. (iv)~When $\theta \to \theta_0$ (lazy training), $\phi_\theta(x) \to \phi_{\theta_0}(x)$, and $\sum_i \phi_{\theta_0}(X_i)\phi_{\theta_0}(X_i)^\top \to \bK$. \qed
\end{proof}

\begin{remark}[When gradient leverage differs from input-space leverage]\label{rem:gradient_vs_input}
For linear models, $\phi_\theta(x) = x$ and $h^\nabla = h$. For two-layer ReLU networks, $\phi_\theta(x) = [\sigma'(w_j^\top x) w_j; \sigma'(w_j^\top x)x]$, which depends on both $x$ and the learned first-layer weights---capturing prediction difficulty along learned feature directions, not just input-space geometry. Experimentally, feature-space leverage $h^{\rm NN}$ (which uses only last-layer features) correlates well with $h^\nabla$ for overparameterized networks (\Cref{app:exp_nonlinear}), explaining the strong performance of last-layer leverage in our MLP experiments.
\end{remark}

\subsection{Safe Default Weight for Arbitrary Predictors}\label{app:safe_default}

A key practical question is whether $w(h) = (1{+}h)^{-1/2}$ is a reasonable default for \emph{any} predictor, even when the error structure is decoupled from feature leverage. We formalize the ``do no harm'' property.

\begin{proposition}[Safe default guarantee]\label{prop:safe_default}
For \emph{any} predictor $\fhat$, \emph{any} data distribution $P$, and $w(h) = (1{+}h)^{-1/2}$:
\begin{enumerate}[label=(\roman*)]
\item \textbf{Marginal coverage} $\geq 1-\alpha$ holds exactly (\Cref{thm:coverage}).
\item \textbf{Width ratio:} $\E[|\hat\cC^w|] / \E[|\hat\cC^1|] = 1 + O(\mathrm{CV}^2(H))$. For $p/n_1 = \gamma$, this is $1 + O(\gamma(1{-}\gamma)/n_1)$---i.e., the expected width is within $O(1/n_1)$ of vanilla CP.
\item \textbf{Worst-case conditional gap increase:} $\mathrm{gap}^w - \mathrm{gap}^1 \leq C_\alpha \cdot \mathrm{CV}^2(H) + O(\mathrm{CV}^3(H))$. Under Gaussian design with $\gamma = p/n_1$: $\leq C_\alpha \gamma(1-\gamma)/(4(1{+}\gamma)^2 n_1) + O(n_1^{-3/2})$. For $\gamma = 0.1$, $n_1 = 300$: the maximum gap increase is $< 0.01$pp.
\end{enumerate}
\end{proposition}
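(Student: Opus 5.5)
Part (i) needs no new argument. The map $x \mapsto (1{+}h(x))^{-1/2}$ is $\cD_1$-measurable, so \Cref{thm:coverage} applies verbatim and gives marginal coverage $\geq 1-\alpha$ with no assumptions on $\fhat$ or $P$.

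For parts (ii) and (iii) I will reduce to a scale-mixture perturbation and reuse the machinery of \Cref{thm:width_finite} and \Cref{lem:quantile_sensitivity}. Write the absolute prediction error as $|Y-\fhat(X)|$. Define the multiplicative distortion $D := \sqrt{1+H}/\E[\sqrt{1+H}]$, and write $D = 1+\delta$ with $\E[\delta]=0$.

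\begin{itemize}
\item \emph{Vanilla CP} uses scores $S := |Y-\fhat(X)|$ with quantile $q_1 = Q_{1-\alpha}(S)$.
\item \emph{LWCP} uses scores $S/\sqrt{1+H}$. Its half-width at $x$ is $Q_{1-\alpha}(S/\sqrt{1+H})\sqrt{1+h(x)}$, so its expected half-width is $\E[\sqrt{1+H}]\,Q_{1-\alpha}(S/\sqrt{1+H})$.
\end{itemize}

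The first-order step uses the delta method on the smooth map $h\mapsto\sqrt{1+h}$, which gives $\mathrm{Var}(\delta) = \mathrm{CV}^2(\sqrt{1+H}) \le \mathrm{CV}^2(H)\cdot \bar h^2/(4(1+\bar h)^2)(1+o(1)) \le \mathrm{CV}^2(H)$. I will then Taylor-expand $F_{S/D}(t) = \E[F_{S\mid D}(tD)]$ to second order in $\delta$, exactly as in Step~2 of the proof of \Cref{thm:width_finite}. The linear term in $\delta$ is where the distribution-free claim is delicate. Because $S$ and $D$ may be dependent under an arbitrary $P$, the first-order term is $\E[\delta\, t f_{S\mid D}(t)]$. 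By Cauchy--Schwarz this is bounded by $t\,\mathrm{CV}(\sqrt{1+H})\,\|f_{S\mid D}(t)\|_{L^2}$, which gives only an $O(\mathrm{CV})$ correction.

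I expect this to be the main obstacle. The claimed $O(\mathrm{CV}^2)$ rate needs either $\mathrm{Cov}(\delta, f_{S\mid D}(t)) = 0$ (for instance, $S$ independent of $H$) or a smoothness assumption making $t\mapsto F_{S\mid D=d}(t)$ Lipschitz in $d$. My plan is as follows. First, state and use a mild regularity condition: the conditional density of $S$ given $H$ is bounded and bounded below near the quantile. This is the same "standard regularity" already invoked in the proof of \Cref{thm:approx}. Second, absorb the cross term by noting that the two quantiles $q_1$ and $Q_{1-\alpha}(S/D)$ are multiplied by expected normalizers whose ratio is itself $1+O(\mathrm{Var}(\delta))$ by Jensen. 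If the first-order cross term cannot be removed in full generality, I will state the width bound as $1+O(\mathrm{CV}(H))$ in general and recover $1+O(\mathrm{CV}^2(H))$ under $S\perp H$ or the homoscedastic-OLS structure.

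Invoking the implicit function theorem at $t=q$ then gives the width ratio $1+C_\alpha\mathrm{Var}(\delta)+O(\mathrm{Var}(\delta)^{3/2})$. The Gaussian-design rate follows by substituting \Cref{prop:mp_leverage}(iii) together with the delta-method bound $\mathrm{CV}^2(\sqrt{1+H}) \approx \gamma(1-\gamma)/(4(1+\gamma)^2 n_1)$, as in \Cref{prop:width_parity_highdim}.

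For (iii), I will compare conditional coverages at leverage $h$. For vanilla CP the coverage is $F_{S\mid h}(q_1)$; for LWCP it is $F_{S\mid h}(\hat q_w\sqrt{1+h})$. The difference is $f_{S\mid h}(\cdot)\,|\hat q_w\sqrt{1+h}-q_1|$. The supremum over $h$ in the effective support of $H$ is governed by the spread of $\sqrt{1+H}$ around its mean plus the quantile shift from (ii). Bounding the worst-case increase in the gap (rather than the gap itself) means only the second-order quantile shift plus the mismatch between normalizers enters, giving $C_\alpha\mathrm{CV}^2(H)+O(\mathrm{CV}^3(H))$ via \Cref{lem:quantile_sensitivity} with $Z=S$ and perturbation $\delta$. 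Plugging in $\gamma=0.1$ and $n_1=300$ gives the numerical bound.
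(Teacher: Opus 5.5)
Part (i) is fine. For (ii), your diagnosis is correct, but neither of your proposed repairs removes the cross term, and for arbitrary $P$ it cannot be removed.

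By scale equivariance, $\E[\sqrt{1+H}]\,Q_{1-\alpha}(S/\sqrt{1+H}) = Q_{1-\alpha}(S/D)$ exactly, so the width ratio is exactly $Q_{1-\alpha}(S/D)/Q_{1-\alpha}(S)$. There is no separate ratio of normalizers for Jensen to act on. The term $t\,\Cov(\delta, f_{S\mid D}(t))$ sits entirely inside $Q_{1-\alpha}(S/D)$.

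A bounded conditional density does not help either. You would need $d\mapsto f_{S\mid D=d}(t)$ to be Lipschitz, with an $O(1)$ constant, across the shrinking range of $D$. That is a structural restriction on $P$ and $\fhat$, and without it the term is first order. Since $\fhat$ is arbitrary and $h(\cdot)$ is $\cD_1$-measurable, you may take a predictor whose errors have conditional law $F_+$ on $\{D>1\}$ and $F_-$ on $\{D<1\}$. Then $F_{S/D}(t) = F_S(t) + t\,\E[\delta^+]\,(f_+(t)-f_-(t)) + O(\Var(\delta))$, so the ratio is $1 - \E[\delta^+](f_+(q)-f_-(q))/f_S(q) + O(\Var(\delta))$. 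In the proportional regime $\E[\delta^+]\asymp \mathrm{CV}(\sqrt{1+H}) \asymp n_1^{-1/2}$, so ``within $O(1/n_1)$ of vanilla CP for any $P$'' is false.

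Your fallback is therefore the provable statement, not a weakening you could later remove: first order in general, second order when $S$ is independent of $H$ or $S\mid H \stackrel{d}{=} \sqrt{1+H}\,Z$. The paper's proof reaches the stated rate only by asserting that \Cref{thm:width_finite} ``applies to any score distribution''. That expansion actually uses $S = \sigma G|\eta|$ with $\eta$ independent of $G$, which is exactly the structure you identified as missing.

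For (iii), the step that fails is your claim that bounding the \emph{increase} of the gap leaves only second-order terms. Your own formula gives a conditional-coverage difference of $f_{S\mid h}\,|\qhat_w\sqrt{1+h}-q_1|$. Here $\qhat_w\sqrt{1+h}\to Q_{1-\alpha}(S/D)\,D(h)$, whose dependence on $h$ is through the first-order factor $D(h)=1+\delta(h)$. This holds however well the global quantile is controlled, and subtracting gaps does not cancel it.

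Concretely, take $\fhat\equiv 0$ and $Y$ independent of $X$, so $S$ is independent of $H$. Vanilla CP then has conditional coverage $1-\alpha$ at every $h$. LWCP has coverage $1-\alpha+f_S(q)\,q\,\delta(h)+O(\delta(h)^2+\Var(\delta))$, so the gap grows by roughly $f_S(q)\,q\,\sup_h|\delta(h)|$, with the supremum taken over whatever leverage bins define the gap. That is of order $n_1^{-1/2}$ at fixed $\gamma$, not $O(\mathrm{CV}^2(H))=O(n_1^{-1})$. With half-normal $S$ and the leverage moments of \Cref{prop:mp_leverage} at $\gamma=0.1$, $n_1=300$, it is about half a percentage point at the decile level, against the claimed $<0.01$pp.

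This failure is intrinsic. LWCP rescales pointwise widths by $D(h)$, a first-order change, so it creates a first-order gap whenever vanilla CP is already conditionally calibrated. Neither your argument nor the paper's one-line ``scale-mixture effect'' justification can deliver (iii) as stated. What holds for arbitrary $P$ is a first-order bound, of order $\sup_h f_{S\mid h}(\cdot)\,q\,\sup_h|\delta(h)|$.

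Your delta-method estimate $\mathrm{CV}^2(\sqrt{1+H})\approx \mathrm{CV}^2(H)\,\bar h^2/(4(1+\bar h)^2)$ is right. It is also what the statement's Gaussian-design formula actually uses; the paper's proof writes $\mathrm{CV}^2(H)/4$ instead, which would give about $0.6$pp. But this estimate controls only the size of $\delta$, not the order at which $\delta$ enters.
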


\begin{proof}
(i)~\Cref{thm:coverage}. (ii)~The weight $w(h) = (1{+}h)^{-1/2}$ has $\mathrm{CV}^2(1/w(H)) = \mathrm{CV}^2(\sqrt{1{+}H}) \approx \mathrm{CV}^2(H)/4 = O(\gamma(1{-}\gamma)/n_1)$ by \Cref{prop:mp_leverage}. By \Cref{thm:width_finite} (which applies to any score distribution), the width ratio is $1 + C_\alpha \mathrm{CV}^2(\sqrt{1{+}H}) + O(\mathrm{CV}^3)$. (iii)~The conditional gap can increase only through the scale-mixture effect of $w$-reweighting. The perturbation to the score distribution is $\mathrm{CV}^2(1/w(H)) = O(\mathrm{CV}^2(H))$, which is $O(1/n_1)$. Numerically, at $\gamma = 0.1$, $n_1 = 300$: $\mathrm{CV}^2(H) \approx 0.9/(0.1 \cdot 300) = 0.03$, so $|C_\alpha| \cdot 0.03/4 < 0.01$pp. \qed
\end{proof}

\begin{remark}[Conservative default vs.\ oracle weight]
\Cref{prop:safe_default} shows that $(1{+}h)^{-1/2}$ is a ``free lunch'' default: the maximum possible cost (gap increase) is $O(1/n_1)$, while the potential benefit (gap decrease) is $\Theta(1)$ when heteroscedasticity is leverage-aligned. This asymmetry---bounded downside, unbounded upside---makes $(1{+}h)^{-1/2}$ a rational default even for black-box predictors. If the practitioner suspects the error structure is anti-aligned with leverage, the $\hat\eta$ diagnostic (\Cref{rem:diagnostic}) provides a pre-flight check.
\end{remark}

\subsection{Leverage Stability Under Collinearity}\label{app:collinearity}

When features are highly correlated, leverage scores become sensitive to small perturbations. We characterize this instability and show that ridge leverage provides a principled remedy.

\begin{proposition}[Leverage perturbation under collinearity]\label{prop:collinearity}
Let $\bX_1 = \bU\bSigma\bV^\top$ with condition number $\kappa = \sigma_1/\sigma_p$. For a perturbation $\tilde{\bX}_1 = \bX_1 + \mathbf{E}$ with $\|\mathbf{E}\|_F \leq \epsilon$:
\begin{enumerate}[label=(\roman*)]
\item \textbf{Leverage perturbation:} $\max_i |h_i - \tilde{h}_i| \leq O(\epsilon \kappa / \sigma_p^2)$.
\item \textbf{Ridge stabilization:} For ridge leverage with $\lambda > 0$, $\max_i |h_i^\lambda - \tilde{h}_i^\lambda| \leq O(\epsilon/(\sigma_p^2 + \lambda))$, which is $O(\epsilon/\lambda)$ when $\lambda \gg \sigma_p^2$.
\item \textbf{Coverage is unaffected:} Both exact and ridge leverage scores are $\cD_1$-measurable regardless of conditioning, so marginal coverage holds exactly. Collinearity affects only the efficiency of weight adaptation, not validity.
\end{enumerate}
\end{proposition}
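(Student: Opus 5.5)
Part~(iii) is immediate and I will do it first: whether computed from $\bX_1$ or from $\tilde\bX_1$, and with or without ridge, the map $x \mapsto h(x)$ depends on the training design alone. So $w(h(\cdot))$ is $\cD_1$-measurable, and the exchangeability argument of \Cref{thm:coverage} (equivalently Part~1 of \Cref{thm:ridge_lwcp}) applies verbatim. Conditioning plays no role there.

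For Part~(i), I would treat $h_i = X_i^\top M X_i$ with $M = (\bX_1^\top\bX_1)^{-1}$ and $\tilde M = (\tilde\bX_1^\top\tilde\bX_1)^{-1}$. The estimate proceeds in three steps.
\begin{itemize}
\item Bound the perturbation of the Gram matrix: $\|\tilde\bX_1^\top\tilde\bX_1 - \bX_1^\top\bX_1\| \le 2\sigma_1\epsilon + \epsilon^2$, using $\|\mathbf E\| \le \|\mathbf E\|_F \le \epsilon$.
\item Apply the resolvent identity $\tilde M - M = -\tilde M(\tilde\bX_1^\top\tilde\bX_1 - \bX_1^\top\bX_1)M$. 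Together with Weyl's inequality, which keeps $\lambda_{\min}(\tilde\bX_1^\top\tilde\bX_1) \ge (\sigma_p - \epsilon)^2 \gtrsim \sigma_p^2$ for $\epsilon \le \sigma_p/2$, this gives $\|\tilde M - M\| \lesssim \sigma_1\epsilon/\sigma_p^4$.
\item Conclude $|h_i - \tilde h_i| \le \|X_i\|^2\,\|\tilde M - M\|$. Writing $\sigma_1/\sigma_p^4 = \kappa/\sigma_p^3$, the result becomes $O(\epsilon\kappa/\sigma_p^2)$ once $\|X_i\|^2/\sigma_p$ is absorbed into the constant; this is legitimate because standardized rows and $\sigma_p \asymp \sqrt{n_1}$ make that factor bounded.
\end{itemize}
An alternative that avoids the fourth power is to work in SVD coordinates, $h(x) = \|\bSigma^{-1}\bV^\top x\|^2$, and use Wedin's $\sin\Theta$ theorem for $\bV$ together with the singular-value perturbation $|\tilde\sigma_j - \sigma_j| \le \epsilon$. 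I would keep this as a fallback if the constant bookkeeping in the direct route is awkward.

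Part~(ii) is the same computation with $M_\lambda = (\bX_1^\top\bX_1 + \lambda\Id)^{-1}$. Now $\lambda_{\min}$ is at least $\sigma_p^2 + \lambda$ for the unperturbed matrix, and at least $(\sigma_p - \epsilon)^2 + \lambda$ after perturbation. No lower bound on $\sigma_p$ is needed, because $\lambda$ alone keeps the inverse bounded by $1/\lambda$. The resolvent step gives $\|\tilde M_\lambda - M_\lambda\| \lesssim \sigma_1\epsilon/(\sigma_p^2+\lambda)^2$. One factor of $(\sigma_p^2+\lambda)^{-1}$ is absorbed against $\sigma_1\|X_i\|^2$ under the same normalization as in Part~(i), yielding $O(\epsilon/(\sigma_p^2+\lambda))$. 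When $\lambda \gg \sigma_p^2$ this is $O(\epsilon/\lambda)$.

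The main obstacle is the bookkeeping in (i) and (ii): the naive resolvent bound carries $\sigma_1$ and an extra power of $\sigma_p$. Matching the stated rates requires fixing a normalization for $\|X_i\|$ and for the scale of $\bX_1$ and making that explicit. I would state the result under ``$\|X_i\|^2 \le C\sigma_p$'' (or, for (ii), $\sigma_1\|X_i\|^2 \le C(\sigma_p^2+\lambda)$) and the small-perturbation condition $\epsilon \le \sigma_p/2$, and let the $O(\cdot)$ hide those constants.
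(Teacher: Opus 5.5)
Your argument is essentially the paper's. Both proofs bound the operator-norm perturbation of $(\bX_1^\top\bX_1)^{-1}$, and for (ii) of the ridge inverse via the resolvent identity, exactly as the paper does. Both then conclude with $|h_i-\tilde h_i|\le\|X_i\|^2\|\tilde M-M\|$ and obtain (iii) from $\cD_1$-measurability and \Cref{thm:coverage}. The only difference is that for (i) the paper cites Wedin's pseudo-inverse perturbation theorem, where you use the resolvent identity with Weyl's inequality.

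Making the normalization explicit is the right call, since the paper leaves the factor $\|X_i\|^2$ buried in its $O(\cdot)$. You can weaken it, though. Grouping the cross terms as $\tilde M\bX_1^\top$ and $\bX_1 M$ (each of norm $O(1/\sigma_p)$ when $\epsilon\le\sigma_p/2$) gives $\|\tilde M-M\|\lesssim\epsilon/\sigma_p^3$. For (i) you then only need $\|X_i\|^2\le C\sigma_1$, which holds under column standardization for fixed $p$ independently of $\kappa$, because $\sigma_1\gtrsim\sqrt{n_1}$. By contrast, the constant hidden in your appeal to $\sigma_p\asymp\sqrt{n_1}$ grows with $\kappa$.
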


\begin{proof}
(i)~The leverage $h_i = X_i^\top(\bX_1^\top\bX_1)^{-1}X_i$. By the matrix perturbation theory for pseudo-inverses (Wedin's theorem), $\|(\bX_1^\top\bX_1)^{-1} - (\tilde\bX_1^\top\tilde\bX_1)^{-1}\| \leq O(\epsilon\kappa/\sigma_p^4)$ when $\epsilon < \sigma_p/\kappa$. The leverage perturbation follows from $|h_i - \tilde{h}_i| \leq \|X_i\|^2 \cdot \|(\bX_1^\top\bX_1)^{-1} - (\tilde\bX_1^\top\tilde\bX_1)^{-1}\|$.

(ii)~Ridge leverage $h_i^\lambda = X_i^\top(\bX_1^\top\bX_1 + \lambda\Id)^{-1}X_i$. The ridge inverse satisfies $\|(\bX_1^\top\bX_1 + \lambda\Id)^{-1}\| \leq 1/\lambda$, and the perturbation bound follows from the resolvent identity $(\mathbf{A} + \lambda\Id)^{-1} - (\mathbf{B} + \lambda\Id)^{-1} = (\mathbf{A} + \lambda\Id)^{-1}(\mathbf{B} - \mathbf{A})(\mathbf{B} + \lambda\Id)^{-1}$.

(iii)~\Cref{thm:coverage}. \qed
\end{proof}

\begin{remark}[Practical implication]
For datasets with high collinearity ($\kappa > 100$), we recommend: (a)~apply ridge leverage with $\lambda$ chosen by cross-validation on the prediction task; (b)~verify stability by computing leverage with $\lambda$ and $2\lambda$---if the rank ordering changes substantially, increase $\lambda$; (c)~the $\hat\eta$ diagnostic remains valid with ridge leverage.
\end{remark}

\subsection{Fairness-Aware Coverage Analysis}\label{app:fairness}

Leverage-based weighting may have disparate impact if leverage correlates with protected attributes (e.g., race, gender). We formalize when this occurs and provide mitigation strategies.

\begin{definition}[Group-conditional coverage]\label{def:group_coverage}
For a partition $\cX = \cX_1 \cup \cdots \cup \cX_K$ (e.g., demographic groups), define group-conditional coverage: $\mathrm{cov}_k := \Prob(Y \in \hat\cC(X) \mid X \in \cX_k)$. The \emph{fairness gap} is $\Delta_{\rm fair} := \max_{j,k} |\mathrm{cov}_j - \mathrm{cov}_k|$.
\end{definition}

\begin{proposition}[Fairness properties of LWCP]\label{prop:fairness}
Under \Cref{asm:linear} with $w = w^*$:
\begin{enumerate}[label=(\roman*)]
\item \textbf{Marginal fairness:} If groups have identical leverage distributions ($H | X \in \cX_k \stackrel{d}{=} H$ for all $k$), then $\Delta_{\rm fair}^{\rm LWCP} \leq O(1/\sqrt{n_2})$---approximate group-equalized coverage.
\item \textbf{Leverage-correlated groups:} If groups have different leverage distributions (e.g., minority groups with higher mean leverage), LWCP \emph{reduces} the fairness gap relative to vanilla CP under the scale family. Specifically:
\[
\Delta_{\rm fair}^{\rm LWCP} \leq \Delta_{\rm fair}^{\rm Vanilla} \cdot \sqrt{1 - \eta_{\rm lev}^{\rm group}} + O(1/\sqrt{n_2}),
\]
where $\eta_{\rm lev}^{\rm group} \in [0,1]$ measures the fraction of between-group variance heterogeneity explained by leverage.
\item \textbf{Width disparity:} LWCP assigns wider intervals to high-leverage points. If group $k$ has higher mean leverage $\bar{h}_k > \bar{h}_j$, then $\E[|\hat\cC^w(X)| \mid X \in \cX_k] > \E[|\hat\cC^w(X)| \mid X \in \cX_j]$. However, this width disparity matches the true prediction difficulty: conditionally, both groups achieve the same coverage level.
\end{enumerate}
\end{proposition}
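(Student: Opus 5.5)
The plan is to reduce all three parts to the uniform conditional-coverage statement of \Cref{thm:efficiency} (Part~1) and its quantitative form \Cref{prop:cond_gap}, together with the vanilla-CP analysis of \Cref{thm:persistent_gap} and the misspecification bound of \Cref{thm:misspec}. The key observation is that group-conditional coverage is a mixture of leverage-conditional coverages. Write $c^w(h) := \Prob(Y \in \hat\cC^w(X) \mid h(X)=h)$. The first step is to check that, under \Cref{asm:linear}, $\eta$ is independent of $X$. Hence, given $h(X)=h$, the event $\{Y\in\hat\cC^w(X)\}$ has the same probability regardless of which group $X$ lies in. I will therefore write
\[
\mathrm{cov}_k = \E\bigl[c^w(H) \mid X\in\cX_k\bigr] = \int c^w(h)\, dF_{H\mid k}(h).
\]
Strictly, this needs the scale-family structure to hold within each group, i.e.\ $\sigma(x)$ depends on $x$ only through $h(x)$. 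I will state this explicitly as the reading of \Cref{asm:linear} used here.

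\textbf{Part (i).} By \Cref{prop:cond_gap}, $\sup_h |c^{w^*}(h) - (1-\alpha)| = O(1/\sqrt{n_2}) + O(\sqrt{p/n_1})$. Since each $\mathrm{cov}_k$ is an average of $c^{w^*}$, every $|\mathrm{cov}_k - (1-\alpha)|$ obeys the same bound, and the triangle inequality gives the bound on $\Delta_{\rm fair}$. Identical leverage distributions give the sharper statement $\mathrm{cov}_k = \mathrm{cov}_j$ exactly, and hence $\Delta_{\rm fair}=0$ asymptotically. I will absorb the $O(\sqrt{p/n_1})$ term, which is negligible for fixed $p$, into the stated rate. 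I will also note that the uniform bound already gives part (i) without the equal-distribution hypothesis.

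\textbf{Part (ii).} I will decompose the between-group coverage difference into a leverage-explained part and a residual. For vanilla CP, $c^1(h) = F_{|\eta|}(q_{\rm mix}/(\sigma\sqrt{g(h)}))$ is non-constant (\Cref{thm:persistent_gap}). Then $\Delta_{\rm fair}^{\rm Vanilla}$ is driven by the difference of $\E[c^1(H)\mid k]$ across groups, which linearizes as $f_{|\eta|}(q_\alpha)q_\alpha$ times the between-group variation of $\sqrt{g(H)}$. For LWCP, I will mimic the proof of \Cref{thm:misspec} with the partition $\{\cX_k\}$ in place of the conditioning on $h$. I will define $\eta_{\rm lev}^{\rm group}$ as the fraction of between-group variance of $\sigma^2(X)$ captured by $\sigma_h^2(H)$. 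By the law of total variance, the residual between-group heterogeneity left after dividing by $\sqrt{g(h)}$ is a $(1-\eta_{\rm lev}^{\rm group})$ fraction. \Cref{lem:quantile_sensitivity} then converts this CV ratio into the gap ratio $\sqrt{1-\eta_{\rm lev}^{\rm group}}$, with $O(1/\sqrt{n_2})$ from quantile estimation. Under the exact scale family, $\eta_{\rm lev}^{\rm group}=1$, so this part recovers (i).

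\textbf{Part (iii)} is direct. The width is $2\qhat_{w^*}\sqrt{g(h)}$, where $\qhat_{w^*}$ does not depend on the group. So $\E[|\hat\cC^{w^*}|\mid k] = 2\qhat_{w^*}\E[\sqrt{g(H)}\mid k]$. Here I must replace the assumption $\bar h_k > \bar h_j$ by first-order stochastic dominance of $F_{H\mid k}$ over $F_{H\mid j}$. Monotonicity of $g$ then gives the strict inequality; a mean comparison alone fails for nonlinear $g$, unless $g$ is affine and concave-root issues are ignored. The claim that both groups achieve the same coverage then follows from (i)/(ii).

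The main obstacle I expect is part (ii): making $\eta_{\rm lev}^{\rm group}$ precise and justifying the passage from a variance ratio to a gap ratio via the square root. This step is heuristic in \Cref{thm:misspec} as well. I will follow that argument's level of rigor, linearizing coverage in the score scale.
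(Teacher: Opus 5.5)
Your proposal is sound at the paper's own level of rigor, and for (ii) it is the paper's argument: strip the leverage component with $w^*$, leave only the residual heterogeneity $\mathrm{CV}(\psi\mid h,\cX_k)$, and convert a CV ratio into a gap ratio as in \Cref{thm:misspec}. You differ from the paper in (i) and (iii).

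For (i), the paper asserts that group-conditional coverage is $F_{|\eta|}(\qhat_{w^*}/\sigma)+O(\delta_h)$. Here $\delta_h$ is a between-group leverage-heterogeneity term that vanishes when the $F_{H\mid k}$ coincide. You instead average the $h$-conditional coverage over $F_{H\mid k}$ and apply the uniform bound of \Cref{prop:cond_gap}. Your route is cleaner, and it exposes what the paper's obscures. Under the exact scale family no $\delta_h$ term arises, and the equal-distribution hypothesis is superfluous. Indeed, if your mixture identity held exactly, equal leverage distributions would equalize group coverage for \emph{every} $w$, vanilla included. Also $\eta^{\rm group}_{\rm lev}=1$, so (ii) collapses to (i). The nontrivial content of (ii) appears only after passing to \Cref{asm:approx_scale}, which both you and the paper do implicitly.

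For (iii), the paper only notes that the half-width increases with $h$. You are right that $\bar h_k>\bar h_j$ does not suffice. Take $g(h)=1+h$, $H\mid k$ uniform on $\{a-b,a+b\}$ with $0<b<a$, and $H\mid j\equiv a-\epsilon$. Strict concavity of $\sqrt{1+h}$ gives $\E[\sqrt{1+H}\mid k]<\E[\sqrt{1+H}\mid j]$ for small $\epsilon>0$. So your stochastic-dominance hypothesis, or $\sqrt{g}$ affine in $h$, is genuinely needed. The condition is on $\sqrt{g}$, not on $g$, so your parenthetical about affine $g$ is muddled: $g=1+h$ is precisely the case the concave root breaks.

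Three things to tighten.

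\textbf{The mixture identity.} Its noise part needs no extra reading of \Cref{asm:linear}: $\varepsilon\mid X$ already depends on $X$ only through $h(X)$. But the identity $\mathrm{cov}_k=\int c^{w}\,dF_{H\mid k}$ still fails at finite $n_1$. The residual is $\varepsilon-X^\top(\hat\beta-\beta^*)$, and the estimation term depends on the direction of $X$, not only on $h(X)$. The identity holds only up to an $O(\sqrt{p/n_1})$ error for bounded covariates, so ``$\mathrm{cov}_k=\mathrm{cov}_j$ exactly'' must be weakened. For (i) you can bypass the identity by averaging the $x$-conditional coverage over $X\in\cX_k$. Its deviation from $1-\alpha$ is controlled by the same two terms.

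\textbf{The rate.} Absorbing $O(\sqrt{p/n_1})$ into $O(1/\sqrt{n_2})$ is valid only when $n_2=O(n_1)$. Otherwise you must carry the term; the statement silently drops it too.

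\textbf{The group split in (ii).} The leverage-explained and residual parts of $\E[\sigma^2(X)\mid X\in\cX_k]$ can be correlated across groups, so the group-level split is not automatically additive. You must define $\eta^{\rm group}_{\rm lev}$ through an orthogonal decomposition before the factor $\sqrt{1-\eta^{\rm group}_{\rm lev}}$ can follow.
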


\begin{proof}
(i)~Under the scale family, LWCP's scores $R_i^{w^*} \approx \sigma|\eta_i|$ are approximately iid. Group-conditional coverage is $F_{|\eta|}(\qhat_{w^*}/\sigma) + O(\delta_h)$, where $\delta_h$ measures between-group leverage heterogeneity. When $H | \cX_k \stackrel{d}{=} H$, $\delta_h = 0$.

(ii)~Vanilla CP's group-conditional coverage depends on the group-specific scale mixture, creating persistent gaps proportional to between-group leverage differences (\Cref{thm:persistent_gap}). LWCP's variance stabilization removes the leverage-dependent component, leaving only the residual $\mathrm{CV}(\psi|h, \cX_k)$ component.

(iii)~LWCP's half-width at $x$ is $\qhat_w/w(h(x)) \propto \sqrt{1+h(x)}$. This is a \emph{feature} rather than a bug: wider intervals for harder-to-predict points ensure equalized coverage. The alternative (constant width) gives \emph{lower} coverage for high-leverage subpopulations. \qed
\end{proof}

\begin{remark}[Mondrian LWCP for strict group fairness]\label{rem:mondrian_lwcp}
When strict group-conditional coverage is required, combine LWCP with Mondrian conformal prediction \citep{vovk2005algorithmic}: calibrate a separate $\qhat_w^{(k)}$ for each group $k$. This achieves exact group-conditional validity at the cost of using fewer calibration points per group ($n_2/K$). The leverage weighting within each group still provides within-group conditional coverage improvements.
\end{remark}

\section{Supplementary Experiments}\label{app:supp_experiments}

This section provides additional empirical evidence addressing specific methodological concerns: controlled runtime benchmarking (\Cref{app:runtime_rigorous}), high-dimensional stress tests (\Cref{app:high_dim_exp}), weight function comparison across models (\Cref{app:weight_comparison}), head-to-head evaluation of LWCP+ vs.\ Studentized CP (\Cref{app:lwcp_plus_vs_stud}), and feature preprocessing ablations (\Cref{app:preprocessing}).

\subsection{Rigorous Runtime Benchmarking}\label{app:runtime_rigorous}

\paragraph{Methodology.} All timing experiments use the following controlled setup:
\begin{itemize}[nosep]
\item \textbf{Hardware:} Apple M2 Pro, 16 GB RAM, macOS 14.
\item \textbf{Software:} Python 3.11, scikit-learn 1.4, NumPy 1.26 (Accelerate BLAS), SciPy 1.12.
\item \textbf{Protocol:} Each method is timed over 100 independent replications (excluding data generation). We report the \emph{median} wall-clock time to reduce sensitivity to GC pauses and background processes. Cold-start overhead (JIT compilation, import) is excluded by running 5 warm-up iterations.
\item \textbf{Hyperparameter grids:} Quantile methods (CQR, CQR-GBR) use default hyperparameters from the original implementations (100 trees, max depth 3 for GBR). Studentized CP uses 100-tree RF. LWCP+ uses 10-tree RF. \textbf{No hyperparameter tuning is performed} for any method---we use the defaults from original papers to ensure fair comparison.
\end{itemize}

\begin{table}[!htb]
\centering
\caption{Rigorous Runtime Comparison ($n_1{=}300$, $n_2{=}500$, $p{=}30$). Median of 50 reps. ``Fit'' includes auxiliary model training (SVD for LWCP, RF for baselines). ``Score'' = scoring + quantile computation. OLS fit (${\approx}0.5$ms) is shared across all methods and excluded.}
\label{tab:runtime_rigorous}
\small
\begin{tabular*}{\textwidth}{@{\extracolsep{\fill}} l rrr r @{}}
\toprule
Method & Fit & Score & Total & Speedup \\
\midrule
Vanilla CP     & ---    & 0.01ms & 0.01ms  & $>10^4{\times}$ \\
\textbf{LWCP}  & 0.3ms  & 0.02ms & 0.34ms  & $1538{\times}$  \\
\addlinespace
LWCP+ (10-tree RF) & 21ms & 0.02ms & 21ms  & $25{\times}$  \\
Studentized CP (10-tree) & 22ms & 0.02ms & 22ms & $24{\times}$ \\
CQR (100-tree RF)  & 522ms & 0.3ms & 523ms  & $1{\times}$    \\
\bottomrule
\end{tabular*}
\end{table}

\Cref{tab:runtime_rigorous} shows the breakdown. The ${>}1500{\times}$ speedup of LWCP over CQR is driven by two factors: (i)~LWCP's SVD ($0.3$ms at $n_1{=}300$, $p{=}30$) is orders of magnitude faster than fitting 100-tree ensembles; (ii)~LWCP's scoring is a simple vector operation ($O(n_2 p)$) vs.\ tree traversal ($O(n_2 \cdot n_{\rm trees} \cdot \text{depth})$). LWCP+ and Studentized CP with matched 10-tree RFs have nearly identical runtime (${\approx}21$ms), confirming that the leverage correction adds negligible overhead.

\paragraph{Scaling with $n$ and $p$.} \Cref{tab:runtime_scaling} shows how runtime scales.

\begin{table}[!htb]
\centering
\caption{Runtime Scaling: Score-Only Time (Median, 10 Reps). Vanilla CP and LWCP scale linearly in $n$; Studentized CP scales superlinearly due to RF fitting.}
\label{tab:runtime_scaling}
\small
\begin{tabular*}{\textwidth}{@{\extracolsep{\fill}} rr rrr @{}}
\toprule
$n$ & $p$ & Vanilla & LWCP & Stud.\ CP (10-tree) \\
\midrule
300    & 10  & 0.02ms  & 0.02ms  & 6ms \\
1{,}000 & 30  & 0.02ms  & 0.02ms  & 21ms \\
3{,}000 & 50  & 0.02ms  & 0.02ms  & 110ms \\
10{,}000 & 100 & 0.04ms  & 0.03ms  & 813ms \\
30{,}000 & 200 & 0.06ms  & 0.08ms  & 5{,}638ms \\
\bottomrule
\end{tabular*}
\end{table}

LWCP and Vanilla CP have \emph{identical} asymptotic scaling: both are $O(n_2)$ quantile operations after a shared $O(n_1 p^2)$ SVD. At $n{=}30{,}000$, LWCP ($0.08$ms) is $70{,}000{\times}$ faster than Studentized CP ($5.6$s). The gap widens with $n$ because Studentized CP's RF fitting is $O(n \cdot n_{\rm trees} \cdot p \cdot \text{depth})$.

\subsection{High-Dimensional Stress Tests}\label{app:high_dim_exp}

We evaluate LWCP across aspect ratios $\gamma = p/n_1$ from 0.05 to 0.9, beyond the default $\gamma = 0.1$.

\begin{table}[!htb]
\centering
\caption{High-Dimensional Stress Test (Textbook DGP $g(h)=1{+}h$, $n_1{=}300$, $n_2{=}500$, 50 Reps). $\gamma = p/n_1$.}
\label{tab:high_dim}
\small
\begin{tabular*}{\textwidth}{@{\extracolsep{\fill}} r cc cc c @{}}
\toprule
& \multicolumn{2}{c}{Vanilla CP} & \multicolumn{2}{c}{LWCP} & \\
\cmidrule(lr){2-3}\cmidrule(lr){4-5}
$\gamma$ ($p$) & Cov & Gap & Cov & Gap & $\hat\eta$ \\
\midrule
0.05 (15)  & .896 & 3.8pp & .896 & 3.9pp & 0.38 \\
0.10 (30)  & .898 & 4.3pp & .897 & 4.0pp & 0.27 \\
0.20 (60)  & .901 & 4.4pp & .903 & 3.6pp & 0.21 \\
0.30 (90)  & .896 & 4.9pp & .895 & 3.8pp & 0.18 \\
0.50 (150) & .903 & 5.3pp & .902 & 3.6pp & 0.16 \\
0.70 (210) & .904 & 6.6pp & .902 & 3.1pp & 0.18 \\
0.90 (270) & .903 & 12.4pp & .902 & 3.3pp & 0.28 \\
\bottomrule
\end{tabular*}
\end{table}

Three observations from \Cref{tab:high_dim}:

\textbf{(1)~Coverage is preserved at all $\gamma$.} Both vanilla CP and LWCP maintain marginal coverage ${\geq}0.895$ across all aspect ratios, confirming \Cref{thm:coverage}.

\textbf{(2)~LWCP's advantage grows dramatically with $\gamma$.} At $\gamma{=}0.05$ (nearly classical regime), LWCP and vanilla have comparable gaps ($3.8$ vs.\ $3.9$pp). At $\gamma{=}0.9$, LWCP achieves $3.3$pp vs.\ vanilla's $12.4$pp---a $3.8{\times}$ gap reduction. This confirms \Cref{thm:high_dim_gap}: the vanilla gap grows as $\Theta(\sqrt{\gamma(1-\gamma)/n_1})$ while LWCP's leverage correction stabilizes the gap near ${\approx}3$--$4$pp regardless of $\gamma$.

\textbf{(3)~The $\hat\eta$ diagnostic is non-monotone.} Somewhat surprisingly, $\hat\eta$ \emph{decreases} from $\gamma{=}0.05$ to $\gamma{=}0.5$ as the leverage distribution concentrates (\Cref{prop:mp_leverage}), then increases at $\gamma{=}0.9$ where the bulk density near $\gamma{=}1$ creates heavy tails. Despite low $\hat\eta$ at $\gamma{=}0.5$--$0.7$, LWCP's advantage remains substantial because the leverages, though concentrated, still induce systematic heteroscedasticity in the prediction residuals.

\begin{table}[!htb]
\centering
\caption{Ridge-LWCP in Overparameterized Regimes ($n_1{=}300$, $n_2{=}500$, $\lambda{=}1$, $g(h){=}1{+}h$, 20 Reps).}
\label{tab:high_dim_ridge}
\small
\begin{tabular*}{\textwidth}{@{\extracolsep{\fill}} r cc cc c @{}}
\toprule
& \multicolumn{2}{c}{Vanilla CP} & \multicolumn{2}{c}{Ridge-LWCP} & \\
\cmidrule(lr){2-3}\cmidrule(lr){4-5}
$\gamma$ ($p$) & Cov & Gap & Cov & Gap & $\hat\eta$ \\
\midrule
1.0 (300)   & .899 & 10.3pp & .898 & 3.0pp & 0.25 \\
1.5 (450)   & .902 & 5.8pp  & .903 & 2.7pp & 0.11 \\
2.0 (600)   & .908 & 5.2pp  & .909 & 4.0pp & 0.08 \\
5.0 (1500)  & .902 & 3.9pp  & .902 & 2.6pp & 0.04 \\
10.0 (3000) & .893 & 2.3pp  & .894 & 2.1pp & 0.03 \\
\bottomrule
\end{tabular*}
\end{table}

In the overparameterized regime (\Cref{tab:high_dim_ridge}), Ridge-LWCP preserves coverage at all $\gamma$. At $\gamma{=}1.0$, LWCP achieves a $3.4{\times}$ gap reduction ($3.0$pp vs.\ $10.3$pp), confirming that leverage-based heteroscedasticity persists in the interpolation regime. As $\gamma$ increases, the ridge penalty compresses the leverage spectrum ($\hat\eta$ falls from $0.25$ to $0.03$), shrinking both the vanilla gap and LWCP's advantage. At $\gamma{=}10$, the leverage scores are nearly uniform ($\hat\eta{=}0.03$) and LWCP provides negligible improvement---consistent with the bias-variance tradeoff in ridge regression at extreme overparameterization.

\subsection{Weight Candidate Comparison Across Models}\label{app:weight_comparison}

We compare four weight functions across three predictor types (OLS, RF, MLP) and three DGPs to test whether $(1{+}h)^{-1/2}$ is a robust default.

\begin{table}[!htb]
\centering
\caption{Conditional Gap (pp) Across Weight Functions, Models, and DGPs ($n_1{=}300$, $n_2{=}500$, $p{=}30$, 20 Reps).}
\label{tab:weight_comparison}
\small
\begin{tabular*}{\textwidth}{@{\extracolsep{\fill}} l cccc @{}}
\toprule
& \multicolumn{4}{c}{Weight function $w(h)$} \\
\cmidrule(lr){2-5}
Predictor & $1$ (vanilla) & $(1{+}h)^{-1/2}$ & $(1{+}h)^{-1}$ & $(1{+}h)^{-1/4}$ \\
\midrule
\multicolumn{5}{@{}l}{\emph{Textbook DGP} ($g(h) = \sqrt{1{+}h}$)} \\
OLS      & 4.6 & \textbf{3.9} & 19.6 & 5.5 \\
Ridge    & 4.9 & \textbf{4.0} & 19.5 & 5.3 \\
RF(50)   & 9.0 & 8.6 & 15.8 & \textbf{5.2} \\
\midrule
\multicolumn{5}{@{}l}{\emph{Polynomial DGP} ($g(h) = 1{+}h$)} \\
OLS      & 5.1 & \textbf{4.7} & 18.6 & 4.9 \\
Ridge    & 4.9 & \textbf{4.4} & 18.7 & 4.5 \\
RF(50)   & 9.5 & 8.4 & 15.1 & \textbf{5.1} \\
\midrule
\multicolumn{5}{@{}l}{\emph{Homoscedastic DGP} ($g(h) = 1$)} \\
OLS      & 3.9 & \textbf{3.5} & 20.0 & 6.0 \\
Ridge    & 4.3 & \textbf{3.8} & 20.3 & 5.7 \\
RF(50)   & 9.4 & 7.7 & 15.4 & \textbf{5.0} \\
\bottomrule
\end{tabular*}
\end{table}

\Cref{tab:weight_comparison} reveals three key findings. (i)~The $(1{+}h)^{-1/2}$ weight consistently achieves the best or near-best gap for OLS and Ridge across all DGPs, confirming it as a robust default. (ii)~The aggressive $(1{+}h)^{-1}$ weight catastrophically fails (${\approx}20$pp gap)---it \emph{overweights} high-leverage points, creating inverse heteroscedasticity. This highlights the importance of the ``safe default'' guarantee (\Cref{prop:safe_default}): $(1{+}h)^{-1/2}$ achieves gains without this failure mode. (iii)~For RF, the mild $(1{+}h)^{-1/4}$ weight unexpectedly outperforms because RF's residual structure is not purely leverage-driven; the heavy leverage correction of $(1{+}h)^{-1/2}$ partially misaligns with RF's localized error patterns. In the homoscedastic DGP, $(1{+}h)^{-1/2}$ still improves over vanilla because it correctly accounts for the $\sigma^2(1+h)$ prediction variance even when $g(h)=1$ (\Cref{prop:mismatch}, Part~2).

\subsection{LWCP+ vs.\ Lightweight Studentized CP}\label{app:lwcp_plus_vs_stud}

A fair question is whether LWCP+'s advantage over studentized CP persists when the scale estimator $\hat\sigma$ is equally lightweight. We compare LWCP+ (10-tree RF for $\hat\sigma$, leverage correction) against Studentized CP with a matched 10-tree RF (no leverage correction), and against the full 100-tree Studentized CP.

\begin{table}[!htb]
\centering
\caption{LWCP+ vs.\ Lightweight Studentized CP (Strong Heteroscedasticity: $g(h){=}(1{+}h)^2$, $p/n_1{=}0.3$, 20 Reps).}
\label{tab:lwcp_plus_vs_stud}
\small
\begin{tabular*}{\textwidth}{@{\extracolsep{\fill}} l ccc @{}}
\toprule
Method & Cov & Gap & MSCE (${\times}10^3$) \\
\midrule
Vanilla CP                & .905 & 8.9pp & 2.77 \\
\textbf{LWCP}             & .905 & 5.6pp & \textbf{2.13} \\
\addlinespace
Studentized CP (10-tree)  & .901 & 5.4pp & 2.40 \\
\textbf{LWCP+} (10-tree)  & .899 & \textbf{4.0pp} & 2.28 \\
\addlinespace
Studentized CP (100-tree) & .901 & 5.4pp & 2.40 \\
\textbf{LWCP+} (100-tree) & .899 & \textbf{4.0pp} & 2.28 \\
\bottomrule
\end{tabular*}
\end{table}

\Cref{tab:lwcp_plus_vs_stud} shows that under strong heteroscedasticity ($g(h) = (1{+}h)^2$), \textbf{LWCP+ achieves 26\% lower gap than Studentized CP at matched budget} ($4.0$pp vs.\ $5.4$pp). Notably, increasing the RF from 10 to 100 trees provides \emph{no improvement} for either method, confirming that the residual heterogeneity is dominated by the leverage structure rather than $\hat\sigma$ estimation error. LWCP (without any RF) already achieves $5.6$pp, competitive with 100-tree Studentized CP---demonstrating that leverage correction alone captures the dominant source of score heterogeneity. The training-test mismatch theory (\Cref{cor:mismatch}) explains Studentized CP's ceiling: $\hat\sigma$ trained on residuals with variance $\sigma^2(1-h)$ cannot recover test-side variance $\sigma^2(1+h)$ without the explicit $\sqrt{1+h}$ correction.

\begin{table}[!htb]
\centering
\caption{LWCP+ vs.\ Studentized CP Across $\hat\sigma$ Complexity (CPU Activity, $n{=}8192$, $p{=}21$, 10 Reps).}
\label{tab:sigma_complexity}
\small
\begin{tabular*}{\textwidth}{@{\extracolsep{\fill}} l c cc @{}}
\toprule
& Trees & Gap & MSCE (${\times}10^3$) \\
\midrule
\multicolumn{4}{@{}l}{\emph{Studentized CP}} \\
& 5    & 9.7pp  & 1.96 \\
& 10   & 9.7pp  & 1.92 \\
& 50   & 9.7pp  & 1.90 \\
& 100  & 9.9pp  & 1.97 \\
\midrule
\multicolumn{4}{@{}l}{\emph{LWCP+}} \\
& 5    & 9.2pp  & 1.80 \\
& 10   & \textbf{9.3pp}  & \textbf{1.77} \\
& 50   & 9.2pp  & 1.75 \\
& 100  & 9.5pp  & 1.82 \\
\bottomrule
\end{tabular*}
\end{table}

\Cref{tab:sigma_complexity} sweeps $\hat\sigma$ complexity on CPU Activity ($n{=}8192$, $p{=}21$, $\hat\eta = 2.64$). LWCP+ consistently achieves ${\approx}0.5$pp lower gap and ${\approx}8\%$ lower MSCE than Studentized CP at every complexity level. Strikingly, increasing RF complexity from 5 to 100 trees provides \emph{negligible improvement} for both methods (gap changes by $<0.3$pp), confirming that the score heterogeneity on this dataset is dominated by the leverage structure rather than local noise estimation errors. The $\sqrt{1{+}h}$ correction in LWCP+ captures this structural component that even a 100-tree RF cannot learn from training residuals alone.

\subsection{Feature Preprocessing Ablation}\label{app:preprocessing}

We expand the feature scaling analysis of \Cref{app:exp_scaling_feat} with PCA and whitening transformations, which are sometimes used to decorrelate features before computing leverage.

\begin{table}[!htb]
\centering
\caption{Feature Preprocessing Ablation (Textbook DGP $g(h){=}\sqrt{1{+}h}$, $n_1{=}300$, $p{=}30$, 20 Reps). ``Whiten'' = PCA rotation + variance normalization. ``PCA-$k$'' = projection onto top $k$ principal components.}
\label{tab:preprocessing}
\small
\begin{tabular*}{\textwidth}{@{\extracolsep{\fill}} l cccc @{}}
\toprule
Preprocessing & V Cov & V Gap & L Cov & L Gap \\
\midrule
None (raw features)    & .903 & 4.6pp & .903 & 3.9pp \\
StandardScaler         & .903 & 4.5pp & .903 & 3.8pp \\
Whitening              & .903 & 4.5pp & .903 & 3.8pp \\
PCA-20 (top 20 PCs)    & .899 & 4.1pp & .899 & 4.2pp \\
PCA-10 (top 10 PCs)    & .904 & 3.3pp & .905 & 3.3pp \\
\bottomrule
\end{tabular*}
\end{table}

\Cref{tab:preprocessing} reveals that \textbf{the choice among full-rank preprocessing is nearly irrelevant}: None, StandardScaler, and whitening produce virtually identical results (V~Gap $4.5$--$4.6$pp, L~Gap $3.8$--$3.9$pp) because they all preserve the Mahalanobis structure $h(x) = x^\top\hat\Sigma^{-1}x$ up to a constant. PCA with dimensionality reduction (PCA-$k$) modifies the effective leverage by projecting onto a subspace. At PCA-20, LWCP's advantage disappears (V~Gap ${\approx}$ L~Gap) because the reduced-rank leverage is more uniform. At PCA-10, both methods improve substantially (gap $3.3$pp) due to discarding noise dimensions, but LWCP provides no additional benefit---the residual heteroscedasticity is dominated by non-leverage sources. This confirms that LWCP's value is greatest in the full-rank setting and that the recommendation in \Cref{alg:lwcp} (column-standardize, then compute full-rank leverage) is the right default.

\begin{table}[!htb]
\centering
\caption{Preprocessing on CPU Activity (Real Data, $n{=}8192$, $p{=}21$, 10 Reps).}
\label{tab:preprocessing_real}
\small
\begin{tabular*}{\textwidth}{@{\extracolsep{\fill}} l cccc @{}}
\toprule
Preprocessing & V Cov & V Gap & L Cov & L Gap \\
\midrule
StandardScaler & .897 & 18.6pp & .897 & 18.2pp \\
Whitening      & .897 & 18.6pp & .897 & 18.2pp \\
PCA-10         & .899 & 20.0pp & .899 & 19.8pp \\
PCA-5          & .898 & 26.7pp & .898 & 26.7pp \\
\bottomrule
\end{tabular*}
\end{table}

On CPU Activity (\Cref{tab:preprocessing_real}), both StandardScaler and whitening yield identical results ($V$~Gap $18.6$pp, $L$~Gap $18.2$pp), confirming the affine invariance of leverage. LWCP provides a modest $0.4$pp improvement. The large gaps (${\approx}18$--$27$pp) arise because this dataset has \emph{non-linear} heteroscedasticity that is not fully captured by linear leverage scores ($\hat\eta{=}2.64$); the residual structure is dominated by non-leverage sources. PCA with aggressive dimensionality reduction ($k{=}5$) degrades performance substantially because it discards predictive features, increasing both the gap and the leverage concentration.

\end{document}